\pgfplotsset{compat=1.17}
\definecolor{DarkGray}{RGB}{90,90,90}
\date{\today}
\newcommand{\SNN}{\operatorname{SNN}}
\newcommand{\coloneqq}{:=}
\renewcommand{\phi}{\varphi}
\newcommand{\bbf}{\bold{f}}
\numberwithin{equation}{section}
\newcommand{\footremember}[2]{
	\footnote{#2}
	\newcounter{#1}
	\setcounter{#1}{\value{footnote}}
}
\newcommand{\footrecall}[1]{
	\footnotemark[\value{#1}]
}
\newif\ifexclude
\newcommand{\excludePart}[1]{\ifexclude \else #1 \fi}
\DeclareRobustCommand{\texttt}[1]{\textup{\ttfamily #1}}
\theoremstyle{plain}
\newtheorem{theorem}{Theorem}[section]
\newtheorem{lemma}[theorem]{Lemma}
\newtheorem{example}[theorem]{Example}
\newtheorem{corollary}[theorem]{Corollary}
\newtheorem{proposition}[theorem]{Proposition}
\theoremstyle{remark}
\newtheorem{remark}[theorem]{Remark}
\theoremstyle{definition}
\newtheorem{definition}[theorem]{Definition}
\begin{document}
\title{On the Universal Representation Property of Spiking Neural Networks}
\author{
  \begin{tabular}{cc}
\begin{tabular}{c}
  Shayan Hundrieser\footremember{twente}{\scriptsize \;\;Department of Applied Mathematics, University of Twente, Enschede, The Netherlands} \\[-1ex]
  \footnotesize{\href{mailto:s.hundrieser@utwente.nl}{s.hundrieser@utwente.nl}}
\end{tabular}&
\begin{tabular}{c}
  Philipp Tuchel\footremember{bochum}{\scriptsize \;\;Faculty of Mathematics, Ruhr University Bochum, Bochum, Germany} \\[-1ex]
  \footnotesize{\href{mailto:philipp.tuchel@ruhr-uni-bochum.de}{philipp.tuchel@ruhr-uni-bochum.de}}
\end{tabular}\\[4ex]
\begin{tabular}{c}
  Insung Kong\footrecall{twente} \\[-1ex]
  \footnotesize{\href{mailto:insung.kong@utwente.nl}{insung.kong@utwente.nl}}
\end{tabular}&
\begin{tabular}{c}
  Johannes Schmidt-Hieber\footrecall{twente} \\[-1ex]
  \footnotesize{\href{mailto:a.j.schmidt-hieber@utwente.nl}{a.j.schmidt-hieber@utwente.nl}}
\end{tabular}
\end{tabular}\vspace{0.2cm}
}

\date{}
\maketitle

\vspace{-0.5cm}

\begin{abstract}
\noindent 
Inspired by biology, spiking neural networks (SNNs) process information via discrete spikes over time, offering an energy-efficient alternative to the classical computing paradigm and classical artificial neural networks (ANNs). In this work, we analyze the representational power of SNNs by viewing them as sequence-to-sequence processors of spikes, i.e., systems that transform a stream of input spikes into a stream of output spikes. We establish the universal representation property for a natural class of spike train functions.  Our results are fully quantitative, constructive, and near-optimal in the number of required weights and neurons. The analysis reveals that SNNs are particularly well-suited to represent functions with few inputs, low temporal complexity or compositions of such functions. The latter is of particular interest, as it indicates that deep SNNs can efficiently capture composite functions via a modular design. As an application of our results, we discuss spike train classification. Overall, these results contribute to a rigorous foundation for understanding the capabilities and limitations of spike-based neuromorphic~systems.
\end{abstract}

\excludePart{
  \vspace{0.5cm}
  \noindent \textit{Keywords}: Neuromorphic Computing, Expressivity, Universal Approximation, Deep Neural Networks, Recurrent Neural Networks, Circuit Complexity
  
  }


%

%

\section{Introduction}
Brain-inspired (neuromorphic) computing is a broad and extremely active field \citep{PfeifferPfeil2018, FrenkelEtAl2023, braininspiredSurvey}. Interest in this domain is largely driven by the limitations of classical computers, specifically the high energy demand caused by the \emph{von Neumann bottleneck}. This bottleneck arises because memory and processing units are physically separated and must constantly exchange data; while processing speeds have increased exponentially, memory access latencies have not improved at the same pace \citep%
{hennessy2011computer}.

To address this, neuromorphic chips aim to more closely mimic the human brain by combining memory and processing within the same units (neurons), processing information in a distributed and localized manner. A milestone in the development of neuromorphic computing is the 2023 release of the IBM NorthPole chip that, in some scenarios, has been reported to significantly outperform comparable GPUs both in terms of processing speed and energy consumption \citep{NorthPolePaper}. 

\emph{Spiking neural networks} (SNNs)  offer significant energy efficiency. Unlike classical systems that require a steady current, SNNs operate on an event-driven basis, consuming energy only when a spike occurs. While current SNN performance still lags behind standard Artificial Neural Networks (ANNs), recent advancements aim to bridge this gap \citep{8891809, Yin2021, Stanojevic2024, zhu2024spikegpt}. ANNs have received tremendous attention from mathematicians in the past years and the advancement of neuromorphic computing asks for a similar program to unravel the properties of SNNs.

A central point in pursuing this topic concerns the decoding scheme: Should information be encoded in the relative frequency of spikes or in the precise timing of the spikes?
Empirical evidence in biological systems argues for the latter. 
 The argument rests on the processing-speed in biological systems: the primate brain can recognize objects within 100ms-150ms. If this requires 10 layers, then every layer has 10ms-15ms to process the signal. During such a short time span, a neuron can release at most two spikes, see \cite{THORPE2001715, TAHERKHANI2020253} for references. We therefore adopt the viewpoint that the coding is in the spiking times of the neurons, also referred to as the spike trains of the neurons. 

In this work, we consider feedforward SNNs consisting of an input layer, potentially multiple hidden layers, and an output layer. The input neurons receive input spike trains from the environment. All other neurons have an associated membrane potential that is formalized in \Cref{sec:SNN_formalization}. Spikes occur if the membrane potential of the neuron exceeds a fixed threshold value (integrate-and-fire). SNNs can therefore be viewed as functions mapping spike trains to spike trains. This work investigates which  spike train to spike train functions can be (efficiently) represented by SNNs.

\begin{figure}[t!]
    \centering
    \includegraphics[width=\linewidth]{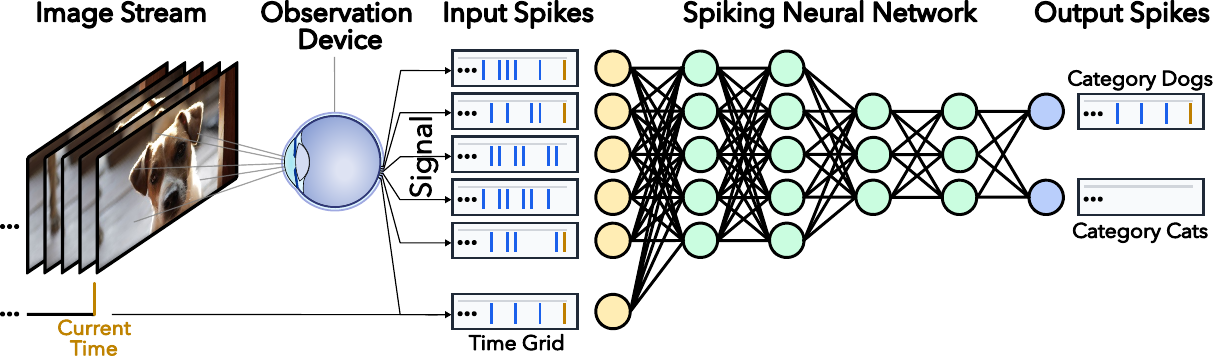}
    \caption{\textbf{Schematic depiction of a spiking neural network acting as a classifier}. An image stream is captured by a device (here represented by an eye) and encoded into input spike trains, where higher input signals increase the probability of a spike in the corresponding spike train.  These signals are processed by the SNN over a time grid. The resulting output spikes determine the class label, distinguishing between categories such as `Dogs' and `Cats'. In \Cref{sec:classification_spike_trains}, we provide an explicit architecture for this.
    }
    \label{fig:classifier_architecture}
\end{figure}

\subsection{Contributions: Expressiveness of SNNs}

Most of the existing mathematical theory on SNNs has been derived for time-to-first-spike
(TTFS) coded SNNs. TTFS assumes that the information is encoded in the time lag
until a neuron spikes for the first time, see \Cref{subsec:related_work}. In contrast, we work under the \emph{sequence-to-sequence (STS)} coding paradigm. This approach interprets SNNs as a system mapping input spike trains to output spike trains, thereby admitting the capacity to process heterogeneous temporal data. Our main contributions can be summarized as follows: %

\begin{itemize}

\item \textbf{Universal Representation of Temporal Functions:} We show that any spike train to spike train function,
that is \emph{causal} (the output at time $t$ only depends on the input up to time $t$) and exhibits some notion of \emph{finite memory},
can be \emph{exactly} represented by an SNN (see \Cref{thm:universal_representation_single_input} and \Cref{thm:function_representation} for the precise statements). The proof is constructive and provides insights into the expressiveness of feedforward SNNs. The result is a structural analog of the universal approximation theorem for ANNs \citep{cybenko1989approximation, hornik1991approximation, leshno1993multilayer, pinkus1999approximation}.

\item \textbf{Complexity Analysis and Lower Bounds:} We establish near-optimality of the required network size by providing nearly matching lower bounds. Specifically, for functions depending on the previous $m$ spikes and a single input, the construction requires $O(\sqrt{m})$ neurons (\Cref{thm:universal_representation_single_input}), whereas for $d\geq 2$ input neurons we require $2^{O(md)}$ neurons (\Cref{thm:function_representation}). Both bounds are optimal in a worst-case sense up to lower order terms 
(Theorems \ref{cor_lo_numpara_train} and \ref{cor_lo_numpara_functional}). This shows an interesting separation between  single-input and  multiple-input SNNs. Refined results are obtained under additional sparsity assumptions on the functions. 

\item \textbf{Compositional Functions can Avoid the Curse of Dimensionality:} While the worst-case bound for the number of neurons is exponential in the dimension, we demonstrate that functions possessing a \emph{compositional structure} can be represented with significantly fewer neurons (\Cref{thm:function_representation_composite}). This result is inspired by analogous results for ANNs \citep{10.1214/19-AOS1875, 10.1214/20-AOS2034} and offers theoretical intuition regarding which functions are ``simple'' or ``difficult'' for SNNs to learn, confirming that SNNs are particularly efficient at representing compositional functions composed of short memory or low-dimensional functions.

\item \textbf{Classification and Examples:} We showcase various relevant functions which are representable by SNNs (\Cref{sec:examples_functions}). This includes among others Boolean logical gates, periodic functions, spike amplifiers, and memory mechanisms. 
Furthermore, we discuss classification tasks, demonstrating how SNNs could be used to classify spike trains (\Cref{sec:classification_spike_trains}).  

\end{itemize}
 Collectively, these results confirm that the membrane potential dynamics of a spiking neuron naturally induce a form of temporal memory. %
This reinforces the potential of neuromorphic systems to overcome the von Neumann bottleneck by integrating memory and computation within the neuron itself.

\subsection{Related work on (Spiking) Neural Networks}\label{subsec:related_work}

The previously considered time-to-first-spike (TTFS) coded SNNs are functions that map vectors with non-negative entries to vector with non-negative entries. Distinctive features of this SNN model are that every input neuron can spike at most once and that every neuron has its own learnable delay parameter that models the travel time of the signal between different neurons. Computation of Boolean functions for a specific class of TTFS coded SNNs has been considered in \cite{MAASS19971659}. This model also takes the refractory period after a spike into account. Given that every input node can emit at most one spike, one wonders whether it is possible to configure the SNN weights in such a way that the output spikes if and only if the Boolean function returns one (the output is not TTFS coded). While all weights can be chosen to be one (\cite{MAASS19971659}, p.\ 1663) only the delay parameters have to be adjusted. It is shown that for specific Boolean functions, one SNN unit is sufficient.   For the same task, the number of hidden units in an artificial neural network needs to grow polynomially in the input size. Under a local linearity assumption, \cite{MaasUAP1997} shows that single-spike SNNs can arbitrarily well approximate ANNs with clipped ReLU activation function $\sigma_{\gamma}(x)=\min(\gamma, \max(x,0)),$ where the upper bound $\gamma>0$ is a hyperparameter. This implies then that this class of SNNs satisfies the universal approximation property. These results are also summarized in \cite{NIPS1996_f93882cb}. \cite{MAASS199926} show that the VC dimension of a TTFS coded SNN is quadratic in the number of learnable delays. \cite{schmitt1999vc} derives the refined VC dimension bound $O(WL\log(WL))$ with $W$ the number of parameters and $L$ the depth.

In a recent line of work, \cite{singh2023expressivity} proposes to use an encoder to map the inputs to the first spiking times of the SNN and a decoder to map them back to the outputs. It is shown that this specific class of SNNs generates continuous piecewise linear functions and that they can represent deep ReLU networks with one input.  \cite{2024arXiv240404549N} demonstrate how such SNNs with only positive weights can approximate standard function classes with optimal approximation rates. The restriction to non-negative weights seems to cause a logarithmic growth of the metric entropy of deeper SNNs, see \cite{2024arXiv240404549N}, Remark 20. 

\cite{2025arXiv250518023N} considers a temporal dynamic where the activations and the potential are updated in each time step. Via an encoder and decoder this is then transformed into a vector-to-vector map. They also prove a universal approximation theorem and study the number of regions on which these SNNs are piecewise constant. Similarly, \cite{2025arXiv250414015D, Fishell2025.10.31.685901} study the complexity of this class of SNNs through the lens of a new concept called `causal pieces'. \cite{2025arXiv250302013F} provides an overview of the mathematical results on this TTFS model class. 

It is important to highlight the main differences between TTFS coded SNNs and our approach. TTFS coded SNNs are functions whereas in this work, we think of SNNs as operators that map spike trains to spike trains. As a consequence we obtain a universal representation theorem instead of a universal approximation theorem. While  TTFS coding relies heavily on the learnable delay parameters, the only parameters in our approach are the network weights and firing happens if the membrane potential exceeds a given threshold. This models the behavior of cortical integrate-and-fire neurons. Because of the possibility to use gradients, more recent papers such as \cite{2024arXiv240404549N} focus instead on SNNs with ReLU type activations. 

Gelenbe neural networks are another abstraction of SNNs. Distinctive properties are integer-valued potentials and spiking of the neurons at random times. The signal is assumed to be in the rate/intensity at which the neurons spike. For these networks, the universal approximation property has been derived in \cite{GelenbeEtAl1999,GelenbeEtAl1999b}. 

While not directly linked to this article, a considerable amount of work has been devoted to the learning of SNNs, see e.g. 
\cite{TAHERKHANI2020253} for a summary. Moreover, an excellent introductory overview of SNNs is provided by \cite{2023arXiv230310780M}.

\subsection{Notation}

We introduce some basic mathematical notation here. We set $\RR_+\coloneqq (0, \infty)$, $\RRpbar \coloneqq [0, \infty]$,  $\NN \coloneqq \{1, 2, \dots\}$, and $\NN_0 \coloneqq \{0\} \cup \NN$. 
Furthermore, for $n\in \NN$ we write $[n]\coloneqq \{1, \dots, n\}$ and, by extension,  write $[\infty] \coloneqq \NN$. The cardinality of a set  $A$ is denoted by $|A|\in \NN_0 \cup \{\infty\}$, and we write $2^{A}$ to denote its power set. For a real number $a\in \RR$ we write $a_+ \coloneqq \max(0, a)$. Given two sets $A,B$ in $\RR^d$ we denote by $A\oplus B \coloneqq \{a + b: a\in A, b\in B\}$ their Minkowski sum (sumset). The indicator function is denoted by $\mathds{1}$. Given sequences $(a_n)_{n\in \NN},(b_n)_{n\in \NN}\in (0,\infty)$, we write $b_n = O(a_n)$ (resp. $b_n = \Omega(a_n)$) if there exists a universal constant $C>0$ such that $b_n \leq C a_n$ (resp.\ $b_n \geq C a_n$) for all $n\in \NN$. Throughout the whole paper, we use the convention that the logarithm is considered to be base 2, i.e., $\log(x) = \log_2(x)$ for $x>0$.

\section{Spiking Neural Networks (SNNs)}\label{sec:SNN_formalization}

We first formalize the spiking mechanism in terms of the membrane potential of a single neuron (Section \ref{subsec:membrane_potential}) and subsequently explain how multiple neurons can be combined to form a feedforward SNN (Section \ref{subsec:feed_forward_SNN}). Finally, we provide a perspective to recast an SNN as a recurrent artificial neural network, which involves Heaviside and clipped ReLU activation functions (Section \ref{subsec:connection_SNN_ANN}). %
\begin{definition}\label{def:spike_train}
    A \emph{spike train} $\calI\subseteq (0, \infty)$ is a countable set of points (modeling spike times) such that  any bounded time interval contains only finitely many elements, i.e.,  $|\calI\cap [0, t]|< \infty$ for every $t\in (0, \infty)$. Equivalently, $\calI \subseteq (0, \infty)$ is a spike train if it has no accumulation points. Assuming $\calI \neq \emptyset$, we further define for $j\in \{1, 2, \dots, |\calI|\}$, $$\calI_{(j)}:= \tau_j \quad \text{ where } \calI = \{\tau_1, \tau_2, \dots \} \text{ with } \tau_1 < \tau_2 < \dots %
    $$ as the $j$-th spike of $\calI$. The class of all spike trains is denoted by $\mathbb{T}.$ 
\end{definition}

\subsection{Neuronal spiking model}\label{subsec:membrane_potential}

For a description of the spiking mechanism, we consider one output (postsynaptic) neuron receiving signal from $d$ input (presynaptic) neurons; see Figure~\ref{fig:snn_CUnit_model} for a schematic representation. The inputs emit spikes at certain times which are then weighted with real-valued coefficients $w_1, \dots, w_d$, each associated with one edge, and contribute to a certain membrane potential of the output neuron. Positive weights correspond to excitatory connections, and negative weights correspond to inhibitory connections. However, the membrane potential cannot attain negative values.

\begin{figure}[t!]
    \centering
    \includegraphics[width=\textwidth, trim = {1mm 1mm 1mm 1mm},clip]{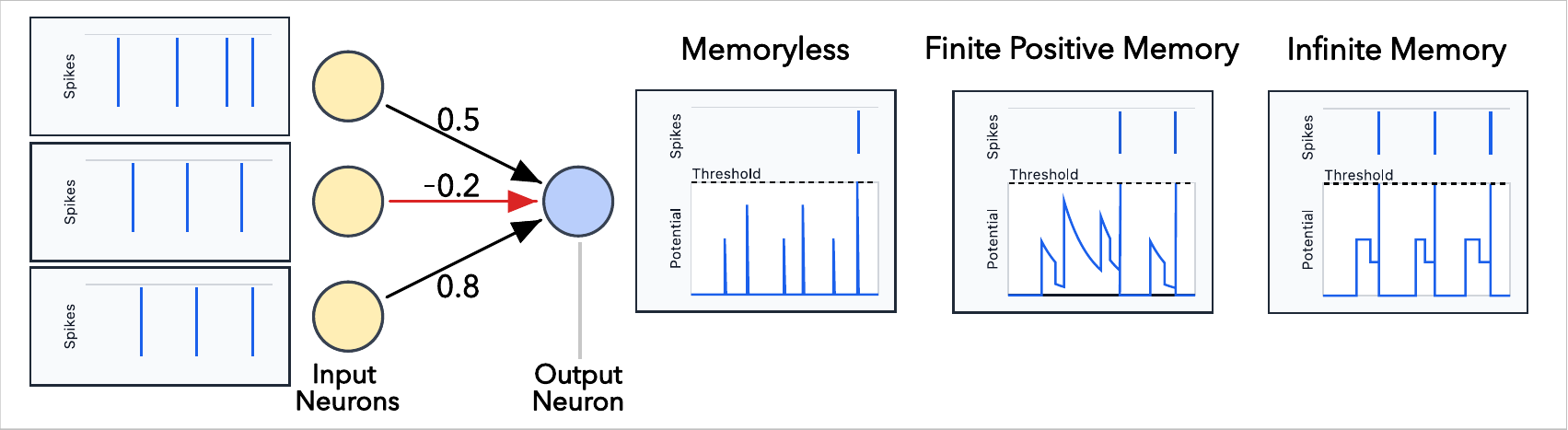}
    \caption{\textbf{%
    A neuron receiving signal from three input neurons and its potential.} The spike trains of all neurons and the membrane potential of the output neuron are displayed for no memory ($h = 0$), finite positive memory ($h \in (0, \infty)$), and infinite memory ($h = \infty$). Each input neuron emits spikes that are weighted and stored inside the membrane potential at the output neuron. A postsynaptic spike occurs when the membrane potential strictly exceeds a threshold value after which the potential is reset to zero. %
    } 
    \label{fig:snn_CUnit_model}
\end{figure}

For each presynaptic neuron $N_j$, with $j\in \{1, \dots, d\}$, denote by $\calI_j\subseteq (0, \infty)$ the corresponding spike train. 
The membrane potential $P\colon [0, \infty) \to [0, \infty)$ is recursively defined as follows.  
For $t = 0$, we set $P(0) \coloneqq 0$. For $t >0$ and presynaptic spikes $\calI_1, \dots, \calI_d$, we define $$\prev{t} \coloneqq \prev{t}(\calI_1, \dots, \calI_d,t) := \max\{s \in (\{0\} \cup \calI_1\cup \dots \cup \calI_d), s<t\}$$ 
as the last presynaptic spike time before $t.$ If no presynaptic spike occurred, this value is set to zero. In particular, since $\calI_1, \dots, \calI_d$ are spike trains, $\prev{t}$ is well-defined and satisfies $\prev{t} <t$ for every $t>0$. 
 Subsequently, the \emph{membrane potential} $ {P}\colon [0, \infty) \to [0, \infty)$ and the \emph{output spike train} $\calS\in \TT$ are defined via the equations 
\begin{align}\label{eq:potential_definition}
    P(t) :=& \bigg(  \mathds{1}\big({P}(\prev{t})\leq 1\big)  {P}(\prev{t})e^{-\frac{t-\prev{t}}h}   + \sum_{j : t \in \calI_j} w_j\bigg)_{+},\quad \calS \coloneqq  \big\{s\in (0, \infty) \colon P(s)>1\big\}.
\end{align}
The definition of $P(t)$ is recursive, as it depends on the function value at $t^{-}<t.$ If the memory coefficient $h$ is large, then the potential decays slowly between input spikes, meaning that spikes from the distant past still have a significant impact on the current potential. Conversely, if $h$ is small, then the potential decays quickly and only recent spikes have a significant impact on the potential. To extend the formalism to $h = 0$ and $h = \infty$, we use the convention that 
\begin{align*}
    \exp\Big(-\frac{x}{h}\Big) := \begin{cases}
        0, &\text{if } h = 0 \text{ and } x>0,\\
        1, &\text{if } h = \infty \text{ or } h= x = 0. \end{cases}
\end{align*}

The above construction is formally well-defined as confirmed in \Cref{prop:potential_well_defined} below, and implies that an output spike can only occur at the same time as an input spike, we refer this to as \emph{domination}. 

\begin{definition}[Domination]\label{def:domination}
    A spike train $\calI\subseteq (0, \infty)$ is said to be dominated by another spike train $\calI'\subseteq (0, \infty)$ if $\calI \subseteq \calI'$, and this is denoted as $\calI \dom \calI'$. Further, given a spike train $\calI$, we denote by $\TTT{\calI}$ the set of all spike trains that are dominated by $\calI$. 
\end{definition}

\begin{proposition}\label{prop:potential_well_defined}
The membrane potential $P\colon [0, \infty) \to [0,\infty)$ in \eqref{eq:potential_definition} is well-defined. Further, $\calS$ in \eqref{eq:potential_definition} is dominated by $\bigcup_{i \in [d]} \calI_i$ and, thus, is a spike train by \Cref{def:spike_train}.
\end{proposition}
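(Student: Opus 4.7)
The plan is to construct $P$ explicitly by induction over the ordered presynaptic spike times, verify the construction is the unique solution of the recursion \eqref{eq:potential_definition}, and then obtain the domination claim from a short upper bound on the right-hand side of \eqref{eq:potential_definition}.

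For the construction, since each $\mathcal{I}_i$ is a spike train, the union $\mathcal{J}\coloneqq\{0\}\cup\bigcup_{i\in[d]}\mathcal{I}_i$ is locally finite in $[0,\infty)$ and can be enumerated as $0=s_0<s_1<s_2<\cdots$ (finitely or infinitely many points). Every $t>0$ lies in a unique interval $(s_k,s_{k+1}]$ (setting $s_{k+1}=\infty$ if the enumeration terminates), and $\prev{t}=s_k<t$, so \eqref{eq:potential_definition} has no circular dependence. I would then define $P(s_k)$ inductively, starting from $P(s_0)=P(0)=0$ and using \eqref{eq:potential_definition} with $\prev{s_{k+1}}=s_k$, and for $t\in(s_k,s_{k+1})$ set $P(t)=(\mathds{1}(P(s_k)\le 1)\,P(s_k)\,e^{-(t-s_k)/h})_+$ (the weight sum in \eqref{eq:potential_definition} being empty for such $t$). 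A direct check, invoking the conventions for $h\in\{0,\infty\}$ and treating $s_{k+1}=\infty$ as the case of pure decay past the last spike, shows that $P$ satisfies \eqref{eq:potential_definition} everywhere; conversely, an induction along $\mathcal{J}$ forces every solution to coincide with this $P$, giving both existence and uniqueness.

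For the domination claim, let $s\in\mathcal{S}$, i.e.\ $P(s)>1$. From \eqref{eq:potential_definition},
$$P(s)=\Bigl(\mathds{1}\bigl(P(\prev{s})\le 1\bigr)\,P(\prev{s})\,e^{-(s-\prev{s})/h}+\sum_{j:\,s\in\mathcal{I}_j} w_j\Bigr)_+,$$
and the first summand is bounded by $\mathds{1}(P(\prev{s})\le 1)\,P(\prev{s})\le 1$ since $e^{-(s-\prev{s})/h}\le 1$. If $s$ belonged to none of the $\mathcal{I}_j$, then the weight sum would vanish and $P(s)\le 1$ would follow, contradicting $P(s)>1$. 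Thus $s\in\bigcup_{i\in[d]}\mathcal{I}_i$, proving $\mathcal{S}\dom\bigcup_{i\in[d]}\mathcal{I}_i$; as the right-hand side is a finite union of spike trains and hence locally finite, $\mathcal{S}$ is itself a spike train. I expect no conceptual obstacle here: the only care needed concerns the boundary conventions for $h$ and the possibility that $\mathcal{J}$ has only finitely many elements.
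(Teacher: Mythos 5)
Your proposal is correct and takes essentially the same route as the paper: both enumerate the ordered spike times of $\{0\}\cup\bigcup_{i\in[d]}\mathcal{I}_i$, observe that $\prev{t}$ always points to an earlier element of this locally finite set so the recursion is well-founded, and define $P$ inductively along these times. Your domination argument — bounding the decay term by $\mathds{1}(P(\prev{s})\le 1)\,P(\prev{s})\le 1$ so that $P(s)>1$ forces a nonempty weight sum — is the same bound the paper uses inside its induction step, just stated as a standalone pointwise observation.
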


The proof of \Cref{prop:potential_well_defined} is given in Section \ref{app:proofs_section2}. Notably, 
replacing $P(s)>1$ in \eqref{eq:potential_definition} by $P(s)\geq 1$ constraints the expressiveness of the SNN when the interarrival time of the input spikes is large, see Remark \ref{rmk:strict_threshold}. %
The above spiking mechanism deviates from biological neurons in the sense that, immediately after a spike occurred, the neuron can build up potential and spike again. Incorporating such a refractory period likely delimits the expressiveness of SNNs and further complicates the mathematical analysis. 

\subsection{Feedforward SNNs}\label{subsec:feed_forward_SNN}

To formalize feedforward SNNs we introduce the SNN (computational) unit. This recasts the nodewise computation of an output spike train from the input spike trains as a function. %

\begin{definition}[SNN computational unit] \label{def_snn_CUnit}    
    For memory coefficient $h\in [0,\infty]$ and weight vector $\mathbf{w} = (w_1,\ldots,w_p)^{\top} \in \RR^p$, we define the \textit{SNN (computational) unit} $\phi_{h,\mathbf{w}}: \mathbb{T}^p \to \mathbb{T}$ as  
    $$\phi_{h,\mathbf{w}}(\calI_1, \dots, \calI_p) := \calS
    $$
    with the output spike train $\calS$ defined by \eqref{eq:potential_definition}.
\end{definition}

Several SNN units can be arranged into an SNN layer. For a given input dimension $p_{\operatorname{in}}$ and output dimension $p_{\operatorname{out}}$,
the SNN layer is characterized by a weight matrix $W := (\mathbf{w}_1, \ldots, \mathbf{w}_{p_{\operatorname{out}}})^{\top} \in \RR^{p_{\operatorname{out}} \times p_{\operatorname{in}}}$,
where the $i$-th component of $\mathbf{w}_j$ represents the weight of the connection from the $i$-th input neuron to the $j$-th output neuron.

\begin{definition}[SNN layer]
    For $h \in [0,\infty]$ and 
    a weight matrix $W := (\mathbf{w}_1, \ldots, \mathbf{w}_{p_{\operatorname{out}}})^{\top} \in \RR^{p_{\operatorname{out}} \times p_{\operatorname{in}} }$,
    we define the \textit{SNN layer} $\boldsymbol{\psi}_{h,W}: \mathbb{T}^{p_{\operatorname{in}}} \to \mathbb{T}^{p_{\operatorname{out}}}$ as
    $$\boldsymbol{\psi}_{h,W} (\calI_1, \dots, \calI_{p_{\operatorname{in}}}) := 
    \Big(\phi_{h,\mathbf{w}_1}(\calI_1, \dots, \calI_{p_{\operatorname{in}}}),\ldots,\phi_{h,\mathbf{w}_{p_{\operatorname{out}}}}(\calI_1, \dots, \calI_{p_{\operatorname{in}}})\Big).$$
\end{definition}

\begin{remark}[Excitatory and inhibitory neurons]
    In neuroscience, neurons are often classified as either excitatory or inhibitory based on the type of effect they have on their postsynaptic targets. Excitatory neurons increase the likelihood of the postsynaptic neuron firing an action potential, while inhibitory neurons decrease this likelihood. In our framework, this distinction can be modeled by assigning to each neuron either exclusively positive or exclusively negative outgoing weights. To simplify the notation, however, we do not impose this constraint in the formal definition of SNNs. Nonetheless, this constraint can be easily incorporated by at most doubling the number of neurons in a network. 
\end{remark}

Stacking $L$ SNN layers on top of each other yields then a feedforward SNN with $L-1$ hidden layers. In analogy with feedforward ANNs, the neurons after the $\ell$-th SNN layer are jointly referred to as $\ell$-th hidden layer for $\ell=1,\ldots,L-1.$ The input neurons are referred to as input layer or $0$-th layer and the output neurons as output layer or $L$-th layer. The \textit{architecture} of a feedforward SNN is given by a pair $(L,\mathbf{p})$, where $L$ denotes the number of stacked SNN layers and $\mathbf{p}=(p_0,p_1,\ldots,p_L)$ is the width vector, consisting of the number of input neurons $p_0,$ the number of output neurons $p_{L}$ as well as the number of neurons $p_1,\ldots,p_{L-1}$ in each of the corresponding $L-1$ hidden layers. For $\ell \in [L]$, the $(\ell - 1)$-st layer is connected to the $\ell$-th layer by the SNN layer
$\boldsymbol{\psi}_{h,W_{\ell}}: \mathbb{T}^{p_{\ell-1}} \to \mathbb{T}^{p_{\ell}}$. 

The total number of parameters in this SNN architecture is $\sum_{\ell=1}^L p_{\ell-1} p_{\ell}$.
We additionally impose the constraint that among these parameters at most $s \leq \sum_{\ell=1}^L p_{\ell-1} p_{\ell}$ are non-zero.
This means that there are at most $s$ weights between neurons. 
To formalize this, we write $\|A\|_0$ to denote the number of non-zero entries of a matrix $A.$

\begin{definition}[Feedforward SNN]\label{def:feedforwardSNN}
    For memory coefficient $h \in [0,\infty]$, $L \in \mathbb{N}$ SNN layers, width vector $\mathbf{p}=(p_0,p_1,\ldots,p_L)$ and maximum number of non-zero parameters (sparsity) $s \in \mathbb{N}$, we define the class of feedforward SNNs (with fixed architecture) as 
    \begin{align*}
    \SNN_h(L, \mathbf{p}, s) 
    := \bigg\{ \bbf: \mathbb{T}^{p_{0}} \to \mathbb{T}^{p_{L}},\bbf(\calI_1, \dots, \calI_{p_0}) :=   \boldsymbol{\psi}_{h, W_L} \circ \ldots \circ \boldsymbol{\psi}_{h, W_1}(\calI_1, \dots, \calI_{p_0}),& \\
    W_{\ell} \in \RR^{p_{\ell} \times p_{\ell-1}}, \sum_{\ell=1}^L \|W_{\ell}\|_0 \leq s & \bigg\}.
    \end{align*}
\end{definition}

These SNN classes satisfy the composition property
\begin{align} \label{property_comp}
    \begin{split}
    &\Big\{\bbf' \circ \bbf : \, \bbf \in \SNN_h\big(L,(p_0,\ldots,p_{L}),s\big), \, 
    \bbf' \in  \SNN_h\big(L',(p_{L},\ldots,p_{L+L'}),s'\big)\Big\} \\
    &= \SNN_h\big(L+L',(p_0,\ldots,p_{L+L'}),s+s'\big).
    \end{split}
\end{align}
As a collection of functions on spike trains, this space is not convex (in any traditional sense) and does not admit any additive or linear structure. Moreover, given the definition of feedforward SNNs, and noting that the output spikes of an SNN unit are  dominated by its input, we conclude an analogous property for feedforward SNNs: an output spike can only occur if an input spike occurred at the same time.

\begin{corollary}\label{cor:inputPotential}
    For any feedforward SNN $\bbf = (f_1,\ldots, f_k)$ with $d$ inputs and $k$ output,
    \begin{align*}
       \textstyle \bigcup_{i \in [k]} f_i(\calI_1, \dots, \calI_d)\dom \bigcup_{j \in [d]} \calI_j \quad \text{ for all } \;\;\calI_1, \dots, \calI_d \in \TT.
    \end{align*}
\end{corollary}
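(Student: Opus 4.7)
The statement is essentially a layerwise propagation of the single-unit domination property already contained in Proposition \ref{prop:potential_well_defined}, so the plan is a short induction on the depth $L$ of the feedforward SNN.

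First I would record two elementary facts about the domination relation $\dom$: it is nothing but set inclusion on subsets of $(0,\infty)$, so it is transitive, and if $\calS_1,\ldots,\calS_m \dom \calI$, then $\bigcup_{i\in[m]} \calS_i \dom \calI$. These are immediate from Definition \ref{def:domination} but should be stated once to avoid repetition.

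Next I would establish the layer version of Proposition \ref{prop:potential_well_defined}: for any SNN layer $\boldsymbol{\psi}_{h,W}\colon \TT^{p_{\mathrm{in}}}\to \TT^{p_{\mathrm{out}}}$ with output $(\calS_1,\ldots,\calS_{p_{\mathrm{out}}}) = \boldsymbol{\psi}_{h,W}(\calI_1,\ldots,\calI_{p_{\mathrm{in}}})$, one has
\begin{equation*}
   \bigcup_{j\in[p_{\mathrm{out}}]} \calS_j \;\dom\; \bigcup_{i\in[p_{\mathrm{in}}]} \calI_i.
\end{equation*}
This is immediate: Proposition \ref{prop:potential_well_defined} applied to each unit $\phi_{h,\mathbf{w}_j}$ yields $\calS_j \dom \bigcup_{i\in[p_{\mathrm{in}}]} \calI_i$, and the two elementary facts above give the claim.

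Now I would induct on $L$. Write $\bbf = \boldsymbol{\psi}_{h,W_L}\circ\cdots\circ\boldsymbol{\psi}_{h,W_1}$ and let $\calJ^{(\ell)} = (\calJ^{(\ell)}_1,\ldots,\calJ^{(\ell)}_{p_\ell})$ denote the spike trains produced after the $\ell$-th layer, with $\calJ^{(0)} = (\calI_1,\ldots,\calI_d)$. Set $U_\ell \coloneqq \bigcup_{i\in[p_\ell]} \calJ^{(\ell)}_i$. The previous step applied to the $\ell$-th layer gives $U_\ell \dom U_{\ell-1}$ for every $\ell\in[L]$, and transitivity yields $U_L \dom U_0$, which is exactly the claim of Corollary \ref{cor:inputPotential}.

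There is no substantive obstacle here; the only subtlety is making sure the single-unit result is applied with the correct inputs at each layer (namely the outputs of the previous layer, which are themselves spike trains by Proposition \ref{prop:potential_well_defined}, so the construction of $\boldsymbol{\psi}_{h,W_\ell}$ is indeed well-defined at every depth). If desired, the well-definedness can also be included in the induction hypothesis, so that the statement proved at step $\ell$ is ``$\calJ^{(\ell)}$ is a tuple of spike trains with $U_\ell \dom U_0$''.
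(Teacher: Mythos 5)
Your proposal is correct and matches the paper's (implicit) argument: the paper derives Corollary \ref{cor:inputPotential} exactly by propagating the single-unit domination from Proposition \ref{prop:potential_well_defined} layer by layer, using that domination is set inclusion and hence transitive and stable under unions. Your write-up merely makes this induction explicit, which is fine.
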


\subsection{Interpreting SNNs as ANNs with State-Space Block}\label{subsec:connection_SNN_ANN}

A natural question in the study of SNNs is how they relate to artificial neural networks (ANNs). In the following, we discuss how SNNs can be interpreted as a certain class of recurrent ANNs with specific activation functions and state-space blocks, see also \citep{casco-rodriguez2025on}. This connection deviates from the approach by \citet{STANOJEVIC202374}, who consider a conversion from ANNs to SNNs using time-to-first-spike coding.

To derive the correspondence, we need to restrict the input spike trains to a regular grid, i.e., we consider only spike trains dominated by $\calN_{\delta} \coloneqq \{n \delta: n \in \mathbb{N}\}$ for some $\delta>0$. Under this assumption,  we can encode the input spike trains $\calI_1, \dots, \calI_d$ via the respective sequential input vector $(\bx^{(t)})_{t \in \mathbb{N}}$ with
$\bx^{(t)} := (\mathds{1}(t\delta \in \calI_j))_{j \in [d]}$. This allows us to represent an SNN layer using a recurrent ANN with a state-space block consisting of two hidden neurons. One hidden neuron has the (strict) Heaviside activation function $\mathds{1}(x>1)$, and the other hidden neuron uses the clipped ReLU activation function $x \mathds{1}(0 \leq x \leq 1)$, see Figure \ref{SNN_RNN_equ} for a schematic depiction. The Heaviside neuron models the output spike train, whereas the clipped ReLU neuron encodes the potential; see Appendix \ref{app:SNNasANNs} for details.

\begin{figure}[t!]
    \centering
    \includegraphics[width = 1\linewidth]{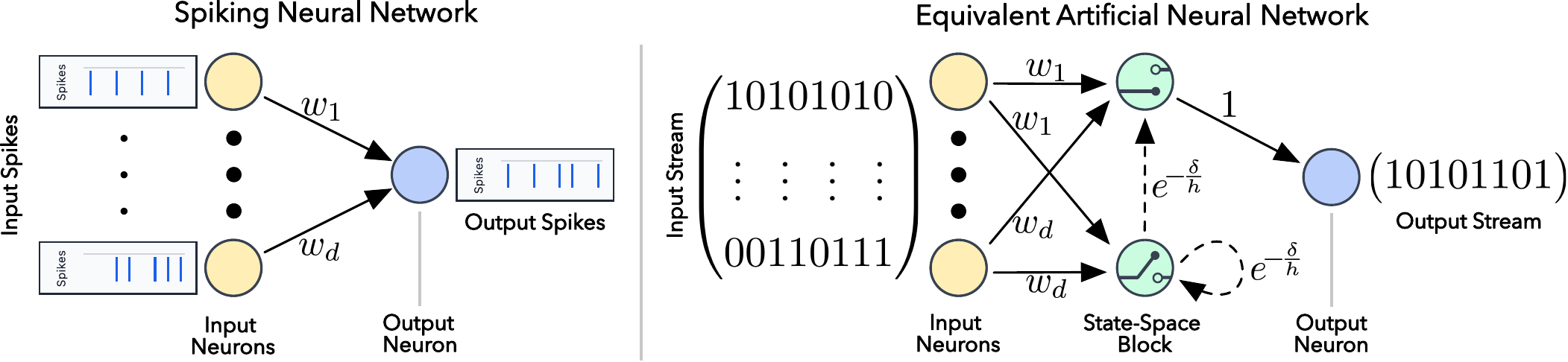}
    \caption{\textbf{SNN computational unit  and equivalent  recurrent ANN with state-space block.} In both SNN and ANN, the yellow and blue neurons denote input and output neurons, respectively. For the SNN unit, the input consists of spike trains $\calI_1, \dots, \calI_d$ which are all dominated by $\calN_{\delta}$. 
    For the ANN, the input consists of an input stream of binary vectors, $(\bx^{(t)})_{t \in \mathbb{N}}$ which encodes the input spike trains via  $\bx^{(t)} := (\mathds{1}(t\delta \in \calI_j))_{j \in [d]}$. 
    The state-space block of the ANN consists of two hidden green neurons, one hidden neuron has activation function $ \mathds{1}(x>1)$ and generates the output spike train and the other hidden neuron has activation function $ x \mathds{1}(0 \leq x \leq 1)$ and encodes the potential. The output of the latter neuron is delayed by the time unit $\delta$ (dashed connections). %
    }
    \label{SNN_RNN_equ}
\end{figure}

While this argument is for SNNs with input spike times contained in a regular grid, the possibility for asynchronicity of input spike trains provides SNNs with additional flexibility and representational power. 

In the degenerate case of no memory $h = 0$, the membrane potential immediately drops to zero after a spike occurred. The output spikes depend only on the input spikes at the same time. In this case, the conversion shows that a feedforward SNN is then exactly equivalent to an ANN with the (strict) Heaviside activation function $x \mapsto \mathds{1}(x > 1),$ the same network weights, and no biases. 
See \citet{khalife2024neural} and \citet{kong2025expressivity} for the expressivity of ANNs with the Heaviside activation function.

\section{Universal Representation of Functions of Spike Trains}\label{sec:functional_representation}

This section analyzes the representational power of Spiking Neural Networks (SNNs). We specifically focus on functions that map $d$ input spike trains to a single output spike train and satisfy some natural constraints that are introduced below. Explicit representation theorems are provided in \Cref{subsec:Univ_Rep1} for a single input ($d = 1$) and in \Cref{subsec:Univ_Rep2Plus} for multiple inputs ($d \geq 2$). Building on these results, we then discuss implications for the representation of compositional functions in \Cref{subsec:composite_functions}.

\begin{definition}\label{def:properties}
    Let $F\colon \TT^d \to \TT$ be a function of $d\in \NN$ input spike trains. 
    \begin{enumerate}
        \item[$(i)$] The function $F$ is called \emph{\inputdominated} if for any $\calI_1, \dots, \calI_d\in\TT$ it holds that
        \[          F(\calI_1, \dots, \calI_d)\dom \textstyle\bigcup_{i \in [d]}\displaystyle\calI_i. \]
        \item[$(ii)$] The function $F$ is called \emph{causal} if for all $t>0$ and $\calI_1, \dots, \calI_d\in \TT$ it holds that
        \begin{align*}
            F(\calI_1, \dots, \calI_d) \cap [0,t] = F({\calI}_1\cap [0,t], \dots, {\calI}_d\cap [0,t]) \cap [0,t].
        \end{align*}
        \item[$(iii)$]
        The function $F$ has the \emph{monotone scaling property}        
        if for any strictly 
        increasing bijection $\phi:(0,\infty) \to (0,\infty)$ and any 
        $\calI_1,\dots,\calI_d\in\TT$ it holds that
        \[
          F(\phi(\calI_1), \dots, \phi(\calI_d)) 
          = \phi(F(\calI_1, \dots, \calI_d))
        \]
    \end{enumerate}
\end{definition}

Intuitively, an \inputdominated\ function can only have an output spike if an input spike occurred at that time. Causality ensures that the output at time $t$ depends only on input spike that have occurred up to time $t$ but not on any future input spikes. While the term causal has various meanings across different fields of mathematics, we chose it because of the close similarity with causal systems in control theory and the recently introduced related notion of `causal pieces' \citep{2025arXiv250414015D}. 

The output of an \inputdominated, causal function can be very sensitive to perturbations of the input spike times. To delimit this behavior, we impose the monotone scaling property which entails that the output is independent of the size of the gaps between consecutive input spikes. 
What matters is whether a neuron spikes before, at the same time, or after another neuron -- and this remains invariant under monotone transformations. %

These properties are natural in the context of SNNs. Indeed, any feedforward SNN $f$ with memory $h\in [0, \infty]$ is \inputdominated\ and causal. Further, if $h = \infty$, then $f$ satisfies the monotone scaling property. This follows from the fact that the potential is constant between spikes, and thus only the ordering of the spike timings matters. For $h < \infty$, the potential depends on the precise spike timings, and thus  in general $f$ does not satisfy the monotone scaling property. Remarkably though, even for the regime of finite $h\in (0, \infty)$, SNNs can still exactly represent functions satisfying the monotone scaling property. 

\subsection{Single-Input Universal Representation} \label{subsec:Univ_Rep1}

We first focus on \inputdominated, causal functions $F\colon \TT \to \TT$ of a single input satisfying the monotone scaling property. As a consequence of this property, such a function is fully determined by the output for the spike train $\calI = \NN$. Indeed, for any $\calI \in \TT$, let $\pi \colon \calI \to \NN$ be the ordering function defined by $\pi(\calI_{(j)}) = j$ for $j \in [|\calI|]$, then %
\begin{align}\label{eq:single_input_reduction}
    F(\calI)  = \pi^{-1}\big(F(\pi(\calI))\big) = \pi^{-1}\big(F(\NN) \cap [|\calI|]\big).
\end{align}
Given that the function is fully determined by $F(\NN)$, we first turn our attention to the special cases when $F(\NN)$ is finite or periodic. In both cases, $F(\NN)$ is fully characterized by finitely many spikes. 

\begin{definition}[Finite and periodic functions]
    An \inputdominated, causal function $F\colon \TT \to \TT$ satisfying the monotone scaling property is called $m$-\emph{finite} for $m\in \NN$ if $F(\NN) = F([m])$. %
    Further, $F$ is called $m$-\emph{periodic} for $m\in \NN$ if 
    \begin{align*}
        F(\NN) = F([m])\oplus \{0,m, 2m ,\ldots\},
    \end{align*}
    with $\oplus$ denoting the Minkowski sum (sumset). Further, if $F$ is $m$-finite (resp.\ $m$-periodic), then it is said to be $r$-sparse for $r\in [m]$ if $|F([m])| \leq r$. The class of all functions which are $m$-finite (resp.\ $m$-periodic) and $r$-sparse is denoted by $$\FF_{\mathrm{fin}}(m,r)\quad \ \big(\text{resp.} \ \ \FF_{\mathrm{per}}(m,r)\big).$$
\end{definition}
The following theorem states that both finite and periodic \inputdominated, causal functions satisfying the monotone scaling property can be exactly represented by a feedforward SNNs with finitely many neurons and positive memory. %

\begin{theorem}[Single-input universal representation theorem]\label{thm:universal_representation_single_input}
    Let $m\in \NN, r \in [m]$ with $h\in (0,\infty]$. Then, there exists an architecture $(L,\bp)$ with $L = O(1+\log(m))$, $\|\bp\|_1 = O(\sqrt{m})$ and a positive integer $s = O(\sqrt{m} + r)$ such that for any $F \in \FF_{\mathrm{fin}}(m,r)$ there exists a feedforward SNN $f \in \SNN_{h}(L,\bp,s)$ with memory $h$ such that
    \begin{align}\label{eq:universal_representation_single_input}
        f(\calI) = F(\calI), \quad \text{for all } \calI \in \TT.
    \end{align}
    If $m = 4^{q}$ for $q\in \NN_0$, and possibly $h = 0$ if $m = 1$, then \eqref{eq:universal_representation_single_input} also holds for $F\in \FF_{\mathrm{per}}(m,r)$.  %
\end{theorem}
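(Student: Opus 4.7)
The plan is to exploit the monotone scaling property to reduce the problem to a numerical one: by \eqref{eq:single_input_reduction} it suffices to construct an SNN whose firing pattern on the canonical input $\calI = \NN$ matches $F(\NN)$, provided the SNN itself depends on $\calI$ only through the ordering of its spikes. For $h = \infty$ this scale invariance is automatic since the membrane potential is constant between spikes. For $h \in (0, \infty)$ I would identify a \emph{scale-invariant firing regime} directly from the dynamics in \eqref{eq:potential_definition}: at the $n$-th input spike $t_n$, write $W_n$ for the total weight of upstream neurons firing at time $t_n$ and note that the decay factor $e^{-(t_n - t_{n-1})/h}$ belongs to $(0, 1)$. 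The neuron then certainly fires when $W_n > 1$, and certainly does not fire when $P(t_{n-1}) + W_n \leq 1$; both conditions are insensitive to the precise gap $t_n - t_{n-1}$. The basic primitive is a \emph{weight-$1$ halver}, an SNN unit with a single incoming weight equal to $1$: after one source spike its potential stores the value $1$ (no fire, since $P > 1$ is strict), and at the next source spike it reaches $P = e^{-g/h} + 1 > 1$ and fires, after which the reset repeats the cycle. Halving thus occurs scale-invariantly for any $h \in (0, \infty]$, and cascading $\ell$ halvers yields a neuron firing at every $2^\ell$-th input.

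I would then apply a baby-step giant-step decomposition. Set $m_1 = \lceil\sqrt{m}\,\rceil$, $m_2 = \lceil m/m_1 \rceil$ and write each target position $k \in [m]$ uniquely as $k = (i-1)m_2 + j$ with $i \in [m_1]$, $j \in [m_2]$. Using the halver cascades together with scale-invariant Boolean gates (AND, OR, NOT, each implemented by weights that push $W_n$ strictly above or below $1$) I would build two families of selectors with $O(\sqrt{m})$ neurons and $O(\log m)$ depth: \emph{phase neurons} $P_1, \ldots, P_{m_2}$, where $P_j$ fires exactly at positions $\equiv j \pmod{m_2}$, and \emph{block neurons} $B_1, \ldots, B_{m_1}$, where $B_i$ fires exactly at the $m_2$ consecutive positions of block $i$. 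For every row $i$ with a non-empty set $J_i := \{j : (i-1)m_2 + j \in F([m])\}$ I introduce a row-selector neuron that sums $\{P_j\}_{j \in J_i}$ using $|J_i|$ weights and AND-combines the result with $B_i$, and the final output neuron ORs the $m_1$ row-selectors together. Counting weights yields $O(\sqrt{m})$ for the selectors, $\sum_i |J_i| = r$ for the phase sums, and $O(\sqrt{m})$ for the ANDs and the final OR, giving the claimed $s = O(\sqrt{m} + r)$.

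The restriction $m = 4^q$ in the periodic case is precisely what makes $m_1 = m_2 = 2^q$ a power of two, so that every selector is realized by clean halver cascades of period dividing $m$; the construction then automatically reproduces $F([m])$ on every subsequent block. In the finite case the same selectors are used but block neurons are instantiated only for $i \in [m_1]$, so no spike can be emitted beyond position $m$. The degenerate case $m = 1$, $h = 0$ in the periodic setting reduces to a single pass-through unit with weight $>1$. The hardest part of the proof, and the main technical obstacle, is maintaining the scale-invariant firing regime through every layer: each gate must be engineered so that (i) when it is supposed to fire, the sum of weights of the currently active inputs strictly exceeds $1$, and (ii) when it is supposed to stay silent, the maximum possible accumulated potential, obtained in the adversarial limit $g \to 0$ between adjacent inputs, never reaches $1$. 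Propagating this invariant through the halver cascades, the Boolean combinations realizing the phase and block selectors, and the final AND/OR assembly is the heart of the argument; once achieved, correctness on $\NN$ combined with scale invariance yields $f(\calI) = F(\calI)$ for every $\calI \in \TT$.
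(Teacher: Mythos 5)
Your overall architecture --- the $\sqrt{m}\times\sqrt{m}$ (baby-step/giant-step) decomposition of positions, the weight-$1$ halver as the basic scale-invariant primitive (this is exactly the paper's \texttt{ODD/EVEN} unit), cascades of halvers to build the periodic phase selectors, and the $O(\sqrt{m}+r)$ weight count --- matches the paper's proof closely. The one place where your construction genuinely diverges, and where it breaks, is the block neurons $B_i$. You require $B_i$ to fire at \emph{all} $m_2$ consecutive positions of block $i$ so that a memoryless \texttt{AND} with the phase neuron $P_j$ can pick out position $(i-1)m_2+j$. But halver cascades only downsample: they produce trains with one spike per period, never a contiguous run of $m_2$ spikes. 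Moreover, since a neuron's potential resets to zero after it fires, a neuron that fires at $m_2$ consecutive input positions must receive fresh super-threshold input at each of them; a sustained block indicator cannot be ``latched'' and has to be assembled from roughly $m_2$ delayed copies of a block-start spike (the paper's \texttt{REPEAT}$_{m_2}$ function), which costs $\Omega(m_2^2)=\Omega(m)$ neurons per block and $\Omega(m^{3/2})$ in total, blowing the $O(\sqrt{m})$ budget.

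The paper avoids this entirely: its block neurons spike only \emph{once}, at the first position of their block; the representation weights then select which (block, phase) pairs lie in $F([\overline m])$; and a final delay stage applies \texttt{CEIL}$_1$ with the corresponding phase train as the dominating reference, which moves each selected block-start spike forward to the correct offset $j$ inside its block. This delay/\texttt{CEIL}$_1$ step is the idea missing from your proposal; without it (or an equivalent mechanism) the \texttt{AND} of a once-per-block spike with a phase train is empty except when $j=1$. The remaining parts of your sketch --- reduction to the ordering of spikes, the scale-invariant firing regime valid for all $h\in(0,\infty]$, the role of $m=4^q$ in the periodic case, and the degenerate case $m=1$ --- are consistent with what the paper does, though they are asserted rather than verified gate by gate.
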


The proof is fully constructive and detailed in \Cref{subsec:proofs_universal_representation_single_input}. In particular, a feasible assignment of the SNN parameters that ensure representation can be explicitly derived from the represented function. The number of neurons scales with $\sqrt{m}$ rather than $m$, whereas the number of weights scales with $r + \sqrt{m}$. Thus, fewer output spikes of the function enable a representation by a more sparsely connected SNN. 

Complementary to the upper bounds we also derive a lower bound showing that the number of non-zero network parameters $s$ cannot be improved by more than an $O(\log(m+1))$-factor when $r \geq \Omega(\sqrt{m}).$

\begin{theorem}[Lower bound for the number of SNN parameters for single-input universal representation] \label{cor_lo_numpara_train}
     Let $m \in \NN$, $r \in [m]$, $L \in \NN, \bp \in \NN^{L+1}, s\in \NN$ and $h\in (0,\infty]$.     
     If every $F \in \FF_{\mathrm{fin}}(m,r)$ can be represented by an element of $\SNN_{h}(L,\bp,s)$, then     
    $$s \geq \frac{r}{5 \log(m+1)}.$$
    The analogous statement also holds for 
    $\FF_{\mathrm{per}}(m,r)$.
\end{theorem}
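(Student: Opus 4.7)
The plan is an information-theoretic counting argument, comparing the cardinality of $\FF_{\mathrm{fin}}(m,r)$ against the number of distinct input-to-output behaviors the class $\SNN_h(L,\bp,s)$ can realize on the single test spike train $\calI = [m]$.

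First, I would lower bound $|\FF_{\mathrm{fin}}(m,r)|$. Combining the monotone scaling property with $m$-finiteness (via \eqref{eq:single_input_reduction}), each $F \in \FF_{\mathrm{fin}}(m,r)$ is uniquely determined by the subset $F([m]) \subseteq [m]$ of size at most $r$, and any such subset yields a valid element of $\FF_{\mathrm{fin}}(m,r)$. Hence
\[
|\FF_{\mathrm{fin}}(m,r)| \;=\; \sum_{k=0}^r \binom{m}{k} \;\geq\; \sum_{k=0}^r \binom{r}{k} \;=\; 2^r,
\]
using $\binom{m}{k}\geq\binom{r}{k}$ for $k\leq r\leq m$.

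Next, I would upper bound the number of distinct behaviors on $[m]$ realizable within $\SNN_h(L,\bp,s)$. If $f \in \SNN_h(L,\bp,s)$ represents some $F\in\FF_{\mathrm{fin}}(m,r)$, then $f([m]) = F([m])$ already determines $F$, so it suffices to bound
\[
N(m,s) \;:=\; |\{f([m]) : f \in \SNN_h(L,\bp,s)\}| \;\leq\; (m+1)^{5s}.
\]
Two ingredients drive this bound. First, any $f$ can be replaced by its active subnetwork containing only the at most $2s$ neurons incident to a non-zero weight, so up to relabeling the number of admissible layered topologies is polynomial in $s$. Second, for a fixed topology the map $\RR^s \ni \mathbf{w}\mapsto f([m])\in 2^{[m]}$ is piecewise constant, with regions delimited by piecewise-linear equations of the form ``the potential at neuron $v$ at time $j$ equals the threshold $1$''. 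By input-dominance (\Cref{cor:inputPotential}) each active neuron processes at most $m$ input spikes and thus contributes at most $m$ such equations, for a total of $O(ms)$ piecewise-linear equations in $s$ variables. A Warren-type counting argument, propagated layer by layer to handle the non-linear reset and positive-part operations, bounds the number of cells by $(cm)^s$ for an absolute constant $c$, and combining with the polynomial-in-$s$ topology factor yields the stated $(m+1)^{5s}$ after tuning constants.

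Combining these two steps, $2^r \leq |\FF_{\mathrm{fin}}(m,r)| \leq N(m,s) \leq (m+1)^{5s}$, so $r \leq 5s\log(m+1)$, i.e., $s \geq r/(5\log(m+1))$ (the case $s\geq r$ is trivially covered). For the periodic class $\FF_{\mathrm{per}}(m,r)$, each $F$ is still uniquely determined by $F([m])$ (the first ``period''), so the same reduction and counting bound apply verbatim. The main obstacle is the cell-count step: because the SNN update involves both a reset $\mathds{1}(P(\prev{t})\leq 1)$ and a positive-part $(\cdot)_+$, weight space is only piecewise-linearly (not hyperplanarly) stratified, and one must carefully propagate the combinatorial complexity across layers so that the base of the exponent remains linear in $m$ rather than blowing up with the depth.
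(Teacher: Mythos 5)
Your overall strategy coincides with the paper's: a counting argument comparing $|\FF_{\mathrm{fin}}(m,r)|\ge\sum_{k=0}^r\binom{m}{k}\ge 2^r$ (your lower-bound step is verbatim the paper's, including the reduction via \eqref{eq:single_input_reduction} and the inequality $\binom{m}{k}\ge\binom{r}{k}$) against an upper bound on the number of distinct outputs $f([m])$ realizable by $\SNN_h(L,\bp,s)$, followed by the same arithmetic. The paper's version of that upper bound is Theorem \ref{thm:SNN_expressivity}, which gives $(8es^2(m+1)^2)^s$ and then a case distinction on $s$ versus $(m+1)/5$.

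The gap is in your cell-counting step, which is exactly where the content of the theorem lives. You propose to stratify the \emph{joint} weight space $\RR^s$ by ``piecewise-linear equations of the form: the potential at neuron $v$ at time $j$ equals $1$'' and invoke a Warren-type bound. For neurons in layer $2$ and beyond this premise fails: the potential of such a neuron at a given input spike time is not a piecewise-linear (nor semi-algebraic of bounded description) function of the full weight vector, because it depends on the spike trains emitted by the previous layer, which themselves depend discontinuously on the earlier weights through the threshold/reset recursion in \eqref{eq:potential_definition}. A direct Warren/hyperplane-arrangement count in $\RR^s$ therefore does not apply, and ``propagating the combinatorial complexity across layers'' is not a routine adaptation — it is the entire difficulty. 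The paper resolves it by never working in the joint weight space: it counts realizable \emph{behaviors} layer by layer, conditioning on the function computed by layers $1,\dots,\ell$ (so the spike trains feeding layer $\ell+1$ are fixed), and then shows via the unfolded-potential formula (Lemma \ref{lem:meta_potential_formula}) and the identifiability lemma (Lemma \ref{lem:identifiability}) that a single unit's output on fixed inputs is determined by which cell of a genuine hyperplane arrangement — with at most $2(m+1)^2$ hyperplanes in that unit's own low-dimensional weight space — its weight vector occupies (Proposition \ref{thm:single_neuron_expressivity}). A secondary inaccuracy: the number of sparsity patterns/topologies on the active subnetwork is of order $s^{O(s)}$, not polynomial in $s$; this is harmless only because of the final case distinction $s\ge r$ versus $s<r\le m$, which you mention parenthetically but which must carry the weight of absorbing those factors into $(m+1)^{5s}$.
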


The proof relies on quantitatively upper bounding  the expressive power of feedforward SNNs for a fixed input spike train (\Cref{thm:SNN_expressivity}) and lower bounding the number of distinct functions in $\FF_{\mathrm{fin}}(m,r)$ and $\FF_{\mathrm{per}}(m,r)$. Details are provided in \Cref{subsec:proofs_cardinality_upper_bound}. In particular, for $r = m$, we require $s = \Omega(m/\log(m))$ weights.  

Let us now comment on multiple aspects of the universal representation theorem. %
\begin{enumerate}
    \item The construction is independent of the memory coefficient $h\in (0, \infty]$. Intuitively one may suspect that larger memory increases the expressive power. However, for monotone scaling SNNs, rescaling the time axis shows that the expressive power of SNNs is identical for all $h\in (0, \infty)$ and in particular no worse than that for $h = \infty$. 
    \item %
    All network weights in the SNN construction are integer valued, which appears to be necessary in some instances (see \Cref{rmk:strict_threshold}) to ensure that the SNN satisfies monotone scaling. This observation could have interesting consequences for learning. If all input spikes occurred in some bounded interval, then the weights could also be chosen as real-valued without changing the SNN output. Insofar, integer weights ensure full representative power of an SNN unit for large interarrival times. We return to this point in \Cref{rmk:strict_threshold}. 
    \item Depth is necessary as it is impossible to represent certain \inputdominated, causal, monotone scaling functions which are $3$-finite with a \emph{shallow}, i.e., single hidden layer, feedforward SNN (\Cref{prop:onelayer_counter}). However, for any $m\in \NN$ and $h \in (0, \infty)$, all $m$-finite functions $F\in \FF_{\mathrm{fin}}(m,m)$ can be represented on the first $m$-input spikes by a shallow SNN with $m$ hidden neurons as long as the interarrival time between consecutive input spikes are sufficiently homogeneous (\Cref{prop:onelayer_positive}). 
    \item The number of negative weights in the construction in \Cref{thm:universal_representation_single_input} scales with order $\sqrt{m}$, whereas the number of positive weights is proportional to $r+\sqrt{m}$. For $r=\Omega(m),$ the gap in the rate is $\Omega(\sqrt{m}).$ Interestingly, also in biological neural networks more excitatory neurons (positive weights) than inhibitory neurons (negative weights) occur, with the typical ratio $4:1$ for cortical neurons \citep{Markram2004, Lehmann2012, Sahara4755}. In \Cref{thm:log_T_neg_weights} we construct an  SNN with $m$ output neurons and  $O(m (1+\log^2 m))$ total neurons and non-zero weights, of which only $2\lceil\log(m)\rceil$ are negative such that for any finite input spike train $\calI$ with cardinality $m$, the $j$-th output neuron $N_j$ for $j\in [m]$ spikes precisely at $\calI_{(j)}$. This implies that $m$-finite functions (on spike trains with cardinality $m$) can also be implemented with SNNs with similar architectural constraints. 
    Conversely, at least two negative weights are necessary to ensure that certain outputs are representable, see \Cref{lem:two_negative_weights}.
    \item The current construction in \Cref{thm:universal_representation_single_input} is optimized for SNNs with no upper bound on the out-degree (number of descendants of a  node in the underlying graph). %
    Under the additional assumption that every neuron is connected to at most $q\in \NN$, $q\geq 4$,  neurons in the next layer the required number of neurons to represent $m$-finite or $m$-periodic functions changes to $O(\max(m/q, \sqrt{m}))$  (\Cref{thm:universal_representation_spike_trains_bounded_out_degree}). %
    In particular, by \Cref{cor_lo_numpara_train} and \Cref{prop:connection_neuron_lower_bound} the neuron count for general $q$ cannot be improved up to logarithmic terms in $m$.%
\end{enumerate}

\subsection{Multiple-Input Universal Representation} \label{subsec:Univ_Rep2Plus}

We now focus on multi-input functions $F\colon \TT^d \to \TT$ with $d \geq 2$ (although the following is still sensible for $ d=1$) which are \inputdominated, causal and satisfy the monotone scaling property. Unlike the single-input case, such functions are not fully determined by $F(\NN, \dots, \NN)$, as this only describes the output if all input spikes happen at the same time. To ensure finite representability using SNNs, we therefore impose the following assumption on the behavior of the function, which restricts the dependence of the output spike at time $t$ to only a bounded number of past input spikes.

\begin{definition}[Markovian memory]
    An \inputdominated, causal function $F\colon \TT^d\to \TT$ satisfies the monotone scaling property has  \emph{Markovian memory} of order $m\in \NN$, or is said to be $m$-Markovian, if for all $t\in \NN$ and  $\calI_1, \dots, \calI_d\in \TTT{\NN}$ with  $\bigcup_{i = 1}^d \calI_i \cap (0,t] = [t]$ it~holds 
    \begin{align*}
        F(\calI_1, \dots, \calI_d)\cap\{t\} = \big(F(\calI_1 \cap (t-m,t], \dots, \calI_d \cap (t-m,t])\big)\cap\{t\}.
    \end{align*}
    If $m = 1$, the function is called \emph{memoryless}. For $r\in[2^{md}]$, the function $F$ is called $r$-\emph{sparse}~if $$\left|\left\{ \textstyle (\calI_1, \dots, \calI_d) \in \TTT{[m]}^d  \colon F(\calI_1, \dots, \calI_d) \neq \emptyset, \bigcup_{i \in [d]}\calI_i = [m'] \text{ for } m'\in [m]\right\}\right|\leq r.$$ The class of all $m$-Markovian functions $F$ which are $r$-sparse is denoted by $$\FF_{\mathrm{mm}}(d,m,r).$$ 
\end{definition}

Intuitively, the output of an $m$-Markovian function can only depend on the last $m$ input spikes from each input spike train to determine whether there is an output spike at time~$t$. This is natural as biological neurons typically have only limited memory, and thus the output at time $t$ cannot depend on arbitrarily distant input spikes. 

SNNs can also exactly represent causal, monotone scaling functions with bounded Markovian memory: 

\begin{theorem}
[Multiple-input universal representation theorem]\label{thm:function_representation}
Let $d,m \in \NN, r \in [2^{md}]$ with $h\in (0,\infty]$ or possibly $h =0$ if $m= 1$. Then, there exists an architecture $(L,\bp)$ and a positive integer $s$ satisfying $L = O(1 +\log(m))$,  $\|\bp\|_1 = O(dm^2+r)$, and $s = O(dm^2r)$ such that for any function $F\in \FF_{\mathrm{mm}}(d,m,r)$, there exists a feedforward SNN $f \in \SNN_h(L, \bp, s)$~with
    \begin{align}\label{eq:representation_causal_function}
        f(\calI_1, \dots, \calI_d) = F(\calI_1, \dots, \calI_d) \quad \text{ for all }\;\;\calI_1, \dots, \calI_d \in \mathbb{T}. 
    \end{align}
\end{theorem}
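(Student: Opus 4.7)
My plan is to combine monotone scaling, the single-input universal representation of Theorem~\ref{thm:universal_representation_single_input}, and the composition property~\eqref{property_comp}. Since $F$ satisfies monotone scaling, it is fully determined by its behavior on ``canonical'' inputs with $\bigcup_{i}\calI_i = [T]$ for some $T\in \NN\cup\{\infty\}$; on such inputs the $m$-Markovian hypothesis reduces the question of whether $F$ fires at event $t$ to evaluating a fixed Boolean function on the $dm$ bits encoding whether each of the $d$ inputs spiked at each of the last $m$ events, and the $r$-sparsity assumption bounds the number of accepting assignments by $r$. I therefore intend to realize $f$ as a three-stage composition
\[
  f \;=\; \mathrm{OR}\circ \mathrm{DET}\circ \mathrm{MEM},
\]
where $\mathrm{MEM}\colon \TT^d \to \TT^{dm}$ produces the $dm$ indicator spike trains, $\mathrm{DET}\colon \TT^{dm}\to \TT^{r}$ consists of $r$ pattern detectors (one per accepting assignment), and $\mathrm{OR}\colon \TT^{r}\to \TT$ is a single unit firing whenever any detector fires. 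Property~\eqref{property_comp} then assembles the individual depth/width/sparsity budgets into the global bounds.

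\paragraph{Building the three modules.}
The memory module is the technical core. I first build an \emph{event clock}: a single SNN unit with all weights strictly greater than $1$, which fires precisely at every event of $\bigcup_{i}\calI_i$. Then, for each pair $(i,j)\in[d]\times[m]$ I want a neuron $N_{i,j}$ that fires at event $t$ iff input $i$ fired at event $t-j+1$. A naive serial cascade of $m$ one-event delay circuits would have depth $O(m)$; to reach depth $O(\log m)$ I instead build delay-by-$2^k$ primitives for $k=0,\dots,\lceil\log m\rceil$ (using a doubling construction that can be obtained via the single-input Theorem~\ref{thm:universal_representation_single_input} applied to simple periodic patterns on the event stream, modulated by input $i$) and realize any $j\in[m]$ by composing at most $\lceil\log m\rceil$ of them. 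Summed over the $dm$ delays this accounts for $O(dm^2)$ memory neurons. The detector for an accepting assignment $P\in\{0,1\}^{dm}$ is one SNN unit with weight $+1$ on every memory output corresponding to $P_{i,j}=1$, a sufficiently large negative weight on every output with $P_{i,j}=0$, and a bias from the event clock that places the firing threshold exactly at the desired margin; it therefore fires at event $t$ iff the current memory state equals $P$. The OR unit finally collects the $r$ detectors with weight $2$ each. Putting it together gives $L=O(1+\log m)$, $\|\bp\|_1=O(dm^2+r)$ and $s = O(dm^2 r)$.

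\paragraph{Scale invariance and the main obstacle.}
The hardest step is ensuring that the constructed $f$ satisfies the monotone-scaling property as well, not only $f=F$ on canonical inputs. For $h=\infty$ this is automatic because the membrane potential is constant between events; for $h\in(0,\infty)$ the potential decays during inter-event gaps whose lengths are unbounded, so one could fear that different rescalings of the same event pattern yield different firing decisions. The delay-by-one primitive inside MEM is exactly where this tension bites: it must remember, across a possibly long inter-event interval, whether a specific input spiked at the previous event, and it must do so uniformly in $h$ and in the gap length. I plan to resolve this exactly as in the single-input construction, by choosing integer-valued weights together with a strict firing margin above the threshold~$1$ (cf.\ Remark~\ref{rmk:strict_threshold}), so that each local firing decision is determined by a clean majority of discrete spike counts rather than by the precise decayed potential, and the inhibitory event-clock connection acts as a ``refresh'' preventing stale potential from contaminating later decisions. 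Once every primitive is scale invariant, the whole $f$ is scale invariant and agreement with $F$ on canonical inputs extends, via $F\circ\phi=\phi\circ F$, to all of $\TT^d$. The degenerate memoryless case $m=1$ (in which $h=0$ is also admitted) requires no temporal delays and can be handled separately by a single-layer AND/OR circuit applied to the current inputs.
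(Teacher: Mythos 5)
Your top-level architecture is the same as the paper's: a memory stage producing the $dm$ indicator spike trains, a bank of $r$ pattern detectors realizing the disjunctive normal form of the induced Boolean function, and a final \texttt{OR} unit, assembled via the composition property \eqref{property_comp}. The detector and \texttt{OR} stages, the reduction to canonical inputs via monotone scaling, the role of integer weights and the strict threshold for scale invariance, and the separate treatment of $m=1$, $h=0$ all match the paper (Lemmas \ref{lem:memory_module} and \ref{prop:boolean_function_representation}).

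The one place where your route genuinely diverges is the internal realization of the memory module, and there your sketch has a concrete problem. You propose delay-by-$2^k$ primitives for $k=0,\dots,\lceil\log m\rceil$ and realize an arbitrary delay $j\in[m]$ by \emph{serially composing} the primitives selected by the binary expansion of $j$. In this framework a delay by $2^k$ events cannot be done in constant depth: any realization along the lines of the paper's machinery requires a clock that distinguishes $2^k$ phases, i.e.\ depth $\Omega(k)$ (the \texttt{CLOCK}$_{2^k}$ module has depth $2k$). Serially composing the primitives for the bits of $j$ therefore yields depth $\sum_{k\le \log m} \Omega(k)=\Omega(\log^2 m)$, which violates the claimed $L=O(1+\log m)$. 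Relatedly, the single-input theorem cannot supply these primitives directly: a delay relative to the union $\bigcup_i \calI_i$ is inherently a two-input operation, so ``periodic patterns modulated by input $i$'' is not something Theorem \ref{thm:universal_representation_single_input} produces. The paper sidesteps both issues by building a \emph{single shared} \texttt{CLOCK}$_{\overline m}$ of depth $O(\log m)$ on the event stream and then extracting every delay $j\in[m]$ in $O(1)$ additional layers via a translate gadget (weight $+1$ from the input-gated clock phase and from the phase shifted by $j-1$, weight $-1$ from the next phase; Lemma \ref{lem:translate}). Your construction would be repaired by adopting that shared-clock scheme; as written, the memory stage does not meet the stated depth bound. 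A minor further point: you should also verify the detectors' behavior at early events $t<m$, where not all memory slots are populated -- the paper handles this through the convention $F_{\mathrm{B}}(\mathbf{0})=0$ and the definition of $r$-sparsity over partial histories.
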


The proof is provided in \Cref{subsec:proofs_universal_representation_multiple_inputs} and is also entirely constructive. It relies on building a memory-mechanism (using $O(dm)$ neurons) within the SNN to store the relevant past input spikes, and subsequently representing a binary function with binary inputs (using $O(r)$ neurons) by essentially constructing Boolean operations. The latter is based on the classical sum-of-products (disjunctive normal form) representation of Boolean functions, and it exploits that SNNs can efficiently represent the \texttt{AND} and \texttt{OR} functions with an arbitrary number of inputs. 

Complementary to the representation result, we also derive a lower bound on the required number of network weights to represent all functions in $\FF_{\mathrm{mm}}(d,m,r)$. 

\begin{theorem}[Lower bound for the number of SNN parameters for multiple-input universal representation] 
\label{cor_lo_numpara_functional}
     Let $d,m \in \NN$ with $d \geq 2$, $r \in[2^{md}]$, $L \in \NN, \bp \in \NN^{L+1}, s\in \NN$ and $h\in (0,\infty]$.     
     If every $F \in \FF_{\mathrm{per}}(m,r)$ can be represented by an element of $\SNN_{h}(L,\bp,s)$, then
    $$s \geq \frac{r \land (2^{d}-1)^{m}}{5md}.$$  
\end{theorem}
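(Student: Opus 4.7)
The plan is to mirror the pigeonhole argument used for the single-input lower bound in Theorem~\ref{cor_lo_numpara_train}. (Throughout I interpret the hypothesis as being about $\FF_{\mathrm{mm}}(d,m,r)$, since $\FF_{\mathrm{per}}$ is defined only for single-input functions and the bound explicitly involves $d$.) The three-step strategy is: exhibit a rich subfamily $\{F_S\}_S$ of $\FF_{\mathrm{mm}}(d,m,r)$; construct a single probe input on which distinct $F_S$ yield distinct output trains; and invoke Theorem~3.3 to upper bound the number of outputs achievable by SNNs in $\SNN_h(L,\bp,s)$ on the probe, producing a contradiction unless $s$ is large.

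\textbf{Large family.} Set $r_{\mathrm{eff}} := r \wedge (2^d-1)^m$ and enumerate $r_{\mathrm{eff}}$ of the $(2^d-1)^m$ \emph{dense} patterns $P=(\calJ_1,\ldots,\calJ_d) \in \TTT{[m]}^d$ satisfying $\bigcup_i \calJ_i = [m]$ as $P_1,\ldots,P_{r_{\mathrm{eff}}}$. For each $S \subseteq [r_{\mathrm{eff}}]$, define $F_S$ on $\TTT{\NN}^d$ by
\begin{align*}
    t \in F_S(\calI_1,\ldots,\calI_d) \iff (\calI_1 \cap (t-m,t], \ldots, \calI_d \cap (t-m,t]) - (t-m) = P_i \text{ for some } i \in S,
\end{align*}
and extend to $\TT^d$ via the monotone scaling property. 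Because each $P_i$ carries a spike at the final time-step $m$, a shorter window cannot match any $P_j$, so $F_S$ has sparsity exactly $|S|$. Hence $F_S \in \FF_{\mathrm{mm}}(d,m,r)$, the map $S \mapsto F_S$ is injective, and $|\FF_{\mathrm{mm}}(d,m,r)| \geq 2^{r_{\mathrm{eff}}}$.

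\textbf{Probe input and pigeonhole.} Concatenate $P_1,\ldots,P_{r_{\mathrm{eff}}}$ on the integer time axis, separated by inactive gaps of length $m$, yielding a probe $(\calI_1^*,\ldots,\calI_d^*) \in \TT^d$ with $N \leq r_{\mathrm{eff}}\,m\,d$ total spikes. The gaps prevent any $m$-window from straddling two consecutive patterns, so $S \mapsto F_S(\calI_1^*,\ldots,\calI_d^*)$ remains injective. By hypothesis each $F_S$ is realized by some $f_S \in \SNN_h(L,\bp,s)$, and the resulting $2^{r_{\mathrm{eff}}}$ distinct outputs on the fixed probe are all achievable by elements of $\SNN_h(L,\bp,s)$. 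The SNN expressivity bound (Theorem~3.3) caps the number of distinct outputs on this probe by a quantity of the form $(N+1)^{Cs}$ for a universal $C$, whence
\begin{align*}
    r_{\mathrm{eff}} \leq Cs \log(N+1) \leq Cs \log(r_{\mathrm{eff}}\,m\,d + 1).
\end{align*}
Since $r_{\mathrm{eff}} \leq (2^d-1)^m \leq 2^{md}$, the right-hand logarithm is $O(md)$, and rearranging after absorbing the universal constants yields $s \geq r_{\mathrm{eff}}/(5md)$.

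\textbf{Main obstacle.} The delicate step is quantitative: verifying that the universal constant from Theorem~3.3 fits inside the denominator "$5$" and that $\log(r_{\mathrm{eff}}\,m\,d + 1)$ is uniformly absorbed into $c \cdot md$ across both regimes $r_{\mathrm{eff}} \leq md$ and $r_{\mathrm{eff}} > md$. The cap $(2^d-1)^m$ is precisely what controls the second regime and explains the form $r \wedge (2^d-1)^m$ in the conclusion. A secondary but routine check is the sparsity certification of each $F_S$, which hinges on the fact that every $P_i$ carries a spike at the last time-step $m$, preventing spurious triggers from shorter windows.
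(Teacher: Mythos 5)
Your overall strategy coincides with the paper's: build $2^{\,r\land(2^d-1)^m}$ distinct functions in $\FF_{\mathrm{mm}}(d,m,r)$ (you are right that this is the intended class; the paper's own proof works with $\FF_{\mathrm{mm}}(d,m,r)$ even though the statement says $\FF_{\mathrm{per}}$), and play that count off against the expressivity bound of Theorem~\ref{thm:SNN_expressivity}. Your family $\{F_S\}$ is essentially the content of Lemma~\ref{lem:cardinality_causal_functions_2}. The one structural difference is the probe: the paper evaluates the SNNs on the entire collection $\II$ of all $(2^d-1)^m$ dense input tuples of length $m$ (so $n=(2^d-1)^m$ and $T_{\operatorname{sum}}=(2^d-1)^m(m+1)^2$), whereas you concatenate $r_{\mathrm{eff}}$ selected patterns into a single long input ($n=1$, union of size $r_{\mathrm{eff}}m$). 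Both choices give $\log T_{\operatorname{sum}}=O(md)$, so the final arithmetic closes in the same two-case way ($s$ large versus $s$ small) that you defer and the paper carries out explicitly.

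One step is justified incorrectly, though its conclusion survives. You claim that the ``inactive gaps of length $m$'' prevent an $m$-window from straddling two consecutive patterns. Under the monotone scaling property, gaps containing no spike events are invisible: evaluating $F_S$ on your probe means rescaling so that the union of the inputs becomes $[r_{\mathrm{eff}}m]$, which deletes the gaps and places the blocks back to back, because the memory window of an $m$-Markovian function counts the last $m$ spike \emph{events}, not $m$ units of clock time. Straddling windows therefore do occur and can produce spurious output spikes. Injectivity of $S\mapsto F_S(\calI_1^*,\ldots,\calI_d^*)$ nevertheless holds for a reason you did not state: at the last event of the $i$-th block, i.e.\ at position $im$ of the rescaled union, the window $((i-1)m,\,im]$ is exactly $P_i$, so the output there equals $\mathds{1}(i\in S)$, and $S$ can be read off from the output at these designated positions irrespective of what happens at straddling positions. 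With that repair (or by simply dropping the gaps altogether), your argument is sound.
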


The proof of the lower bound is similar to the single-input case and based on a counting argument provided in \Cref{subsec:proofs_cardinality_upper_bound}. 
Let us now discuss some more aspects of the result.
\begin{enumerate}
    \item In the absence of any sparsity, we have $r=2^{md}$ which is substantially larger compared to $r\leq m$ in the single-input case. In conjunction with the lower bounds on the network connectivity, this highlights the optimality of the construction in \Cref{thm:function_representation} up to polynomial factors in $m$ and $d$ and   
    the presence of a \emph{curse of dimensionality} in the SNN representation of the class $\FF_{\mathrm{mm}}(d,m,r=2^{md}).$ However, it is not just the dimensionality $d$ which leads to large architectures but also the memory parameter $m$.  This is to be expected since for multiple inputs, the number of possible input spike patterns that can be realized within the memory window scales exponentially in $md$. Meanwhile for a single input, by the monotone scaling property, all input spike patterns with $m$ spikes are essentially equivalent and lead to similar outputs (up to the exact spiking time). One structural assumption which mitigates the curse, however, is that $F$ yields an output spike on a small fraction of input spike patterns. 
    \item In Theorem \ref{thm:function_representation}, the bound on the number of required network weights is of the same order as the number of neurons.  We believe that by allowing for more intricate connectivity patterns and at the expense of a significantly more involved proof, the number of neurons could be reduced further. The lower bound in \Cref{cor_lo_numpara_functional} combined with \Cref{prop:connection_neuron_lower_bound} however asserts that the number of neurons can be at best reduced to $\Omega(\sqrt{\frac{r \wedge ((2^d-1)^m-1)}{dm}})$ which still scales exponentially in $dm$ for maximal $r$.  %
    \item Similar to the single-input case, the construction in the proof of \Cref{thm:function_representation} does not depend on the choice of the memory parameter $h\in (0, \infty]$ and utilizes integer weights only. If one were to impose an upper bound on the interarrival time between consecutive spikes in the input spike trains, then the construction could also be carried out for real-valued weights without changing the representability result, see \Cref{rmk:strict_threshold}. 
		Notably, the use of real weights allows SNNs to be provable more expressive compared to SNNs with the same architecture and integer weights, 
		 see \Cref{thm:counterexample-integer-weights}. Characterizing the full representational power of SNNs with real-valued weights is an interesting direction for future research.
\end{enumerate}

We close this subsection with two remarks on how the framework of \inputdominated, causal, monotone scaling functions with Markovian memory can be utilized to model time-dependent and periodic functions. 
\begin{remark}[Time-dependent functions]\label{rmk:time_dependent_functions}
Functions satisfying the monotone scaling property are fully determined by the order at which the input spikes arrive. By incorporating an exogenous spike train, one can still obtain functions which depend on the actual timing of the spikes and not only on their ordering. To see  this, take a fixed spike train $\calD$ and consider
\begin{align*}
    (\calI_1, \dots, \calI_d) \mapsto F_{\calD}(\calI_1, \dots, \calI_d, \calD)
\end{align*} 
for some $m$-Markovian function $F_{\calD}\colon \TT^{d+1} \to \TT$. The output spike train of this map will depend on the actual timing of the input spike trains. This construction plays an important role in the proofs. For concrete examples, see the \texttt{IS-EQUAL}, and \texttt{CEIL}$_m$ function from \Cref{sec:examples_functions}, as well as the \texttt{CLASSIFIER}$_{\calD, \mfR}$ function in \Cref{sec:classification_spike_trains}.

\end{remark}

\begin{remark}[Periodic functions] By Lemma \ref{lem:finite_and_markovian}, the class of \inputdominated, causal, Markovian functions excludes periodic functions. Nevertheless, composing a function of finite Markovian memory with a periodic function can yield a function which is periodic from some spike onwards. For example, composing an
    $m_P$-periodic function $F_M\colon \TT^2\to \TT$ with an $m_M$-Markovian function $F_P\colon \TT\to \TT$, yields a function $F: \TT \to \TT, F(\calI) := F_M(\calI, F_P(\calI))$ which is $m_P$-periodic from the $(m_M +1)$-st overall spike onwards (Lemma \ref{lem:mm_periodic_composition}). %
\end{remark}

\subsection{Efficient Representation of Compositional Sparse Functions}\label{subsec:composite_functions}

As a consequence of the representation results for single and multiple inputs, we also obtain refined representation statements for compositions of monotone scaling functions. To this end, we consider for $\bd \coloneqq (d_0, d_1, \dots, d_q)$ with $q \in \NN$, $d_0 = d$, and $d_q = 1$ functions of the form
\begin{align*}
    F = F_q \circ F_{q-1} \circ \dots \circ F_1,
\end{align*}
with \inputdominated, causal, monotone scaling  functions $F_\ell \colon \TT^{{d_{\ell-1}}} \to \TT^{{d_{\ell}}}$ for $\ell = 1, \dots, q.$ According to the lower bound in \Cref{cor_lo_numpara_functional}, the number of required weights to represent a general function in $F_\ell$ grows at least with order $\Omega(d_{\ell} 2^{d_{\ell-1} m_\ell}/(d_{\ell-1} m_\ell))$ if $F_\ell$ is $m_\ell$-Markovian. Sparsity assumptions can reduce the number of network weights. However, sparsity is not preserved if one artificially enlarges the input dimension $d_{\ell-1}$ of $F_\ell$ by adding irrelevant input spike trains:

\begin{example}
Let $F\in \FF_{\textup{mm}}(2,m,r)$ for $m\in \NN, r\in [2^{2m}]$,  and define for $d\in \NN$ the function $G\colon \TT^{d+2} \to \TT, G(\calI_1, \dots, \calI_{d+2}) = F(\calI_1, \calI_2)$. Then, $G$ is not $m$-Markovian, since the output will depend on spikes in $\calI_1$ and $\calI_2$ which may not be among the last $m$ spikes of $\calI_1 \cup \dots \cup \calI_{d+2}$. Further, for any $\calI_1, \calI_2 \in \TTT{[m]}$ with $F(\calI_1, \calI_2) \neq \emptyset$, it follows for any $\calI_3, \dots, \calI_{d+2} \in \TTT{[m]}$ that $G(\calI_1, \calI_2, \dots, \calI_{d+2}) \neq \emptyset$, resulting in $G$ being at best $2^{md}$-sparse. 
\end{example}

To remedy this issue, which would suggest unnecessarily large architectures to represent functions with few relevant inputs, 
we assume that for each $\ell \in [q]$, there exist integers $t_{\ell-1}\in \{0, \dots, d_{\ell-1}\}$, $m_\ell \in \NN$, $r_\ell \in \NN_0$ with $r_\ell \leq m_{\ell}$ if $t_{\ell-1}\leq 1$ and $r_{\ell}\leq 2^{m_{\ell} t_{\ell-1}}$ if $t_{\ell-1}\geq 2$ such that $F_{\ell} = (F_{\ell, k})_{k\in [d_{\ell}]}$ and each component function satisfies
\begin{align*}
    F_{\ell, k}\in \FF(t_{\ell-1}, m_\ell, r_\ell) \coloneqq \begin{cases}
        \FF_{\mathrm{per}}(4^{\lceil\log_4(m_{\ell})\rceil}, r_\ell), & \text{ if } t_{\ell-1} = 0,\\
        \FF_{\mathrm{fin}}(m_{\ell}, r_\ell), & \text{ if } t_{\ell-1} = 1,\\
        \FF_{\mathrm{mm}}(t_{\ell-1}, m_\ell, r_\ell), & \text{ if } t_{\ell-1} \geq 2.
    \end{cases}
\end{align*}
 This means that each component only depends on at most $\max(t_{\ell-1},1)$ inputs, and that the output is characterized by the last $O(m_{\ell})$ time-points.  The corresponding function class of compositional functions is denoted by 
\begin{align*}
    \FF(q, \bd, \bt, \bm, \br) \coloneqq \Big\{F = F_q \circ \dots \circ F_1\;\;: \;\; &F_\ell = (F_{\ell, k})_{k\in [d_{\ell}]} \colon \TT^{{d_{\ell-1}}} \to \TT^{{d_{\ell}}},
   \\
    &F_{\ell, k}\in \FF(t_{\ell-1}, m_\ell, r_\ell) \text{ for all } k\in [d_{\ell}], \ell \in [q]\Big\},
\end{align*}
where $\bt \coloneqq (t_0, t_1, \dots, t_{q-1})\in \NN^q_0,  \bm \coloneqq (m_1, m_2, \dots, m_q)\in \NN^{q}$, $\br \coloneqq (r_1, r_2, \dots, r_{q})\in \NN_0^{q}$. In Sections \ref{sec:examples_functions} and \ref{sec:classification_spike_trains}, we illustrate how this compositional framework can be used to model relevant functions for spike train processing. 

\begin{theorem}[SNN representation of compositional functions]\label{thm:function_representation_composite}
Let $q\in \NN$, $\bd, \bt, \bm, \br$ be as above with $d_0 = d$ and $d_q = 1$ with $h \in (0, \infty]$. Then, there exist parameters $L = O(\sum_{\ell=1}^{q}1 + \log(m_\ell))$, $\bp\in \NN^L$, $s\in \NN$  with $\|\bp\|_1 = O\left(\sum_{\ell = 1}^q d_{\ell} \left[m_{\ell}^2(\max(1,t_{\ell})) + r_\ell  \right] \right)$ and %
    $
    s =  O\left(\sum_{\ell = 1}^q d_{\ell} m_{\ell}^2\max(1,t_{\ell}) r_\ell \right)
    $
such that for any function $F\in \FF(q, \bd, \bt, \bm, \br)$ there exists a feedforward SNN $f \in \SNN_h(L, \bp, s)$ with
    \begin{align*}%
        f(\calI_1, \dots, \calI_d) = F(\calI_1, \dots, \calI_d) \quad \text{ for all }\;\;\calI_1, \dots, \calI_d \in \mathbb{T}. 
    \end{align*}
\end{theorem}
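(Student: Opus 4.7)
The approach is fully constructive and rests on the composition property of SNN classes in~\eqref{property_comp}. The idea is to build a sub-SNN $\bbf^{(\ell)}:\TT^{d_{\ell-1}}\to\TT^{d_{\ell}}$ that exactly represents each layer $F_{\ell}$ of the compositional target, and then to concatenate these into $f = \bbf^{(q)}\circ\cdots\circ\bbf^{(1)}$. By~\eqref{property_comp}, the resulting feedforward SNN has depth, total width and sparsity obtained by summing the corresponding quantities of the $\bbf^{(\ell)}$, so it suffices to bound each of them and then sum over $\ell\in[q]$.

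To obtain $\bbf^{(\ell)}$, I first realize each scalar-valued component $F_{\ell,k}$ by an SNN $\bbf^{(\ell)}_k$. For $t_{\ell-1}\in\{0,1\}$, \Cref{thm:universal_representation_single_input} applies (with period $4^{\lceil\log_4 m_\ell\rceil}=O(m_\ell)$ if $t_{\ell-1}=0$) and yields an SNN of depth $O(1+\log m_\ell)$, width $O(\sqrt{m_\ell})$ and sparsity $O(\sqrt{m_\ell}+r_\ell)$. For $t_{\ell-1}\geq 2$, \Cref{thm:function_representation} supplies an SNN of depth $O(1+\log m_\ell)$, width $O(t_{\ell-1}m_\ell^2 + r_\ell)$ and sparsity $O(t_{\ell-1}m_\ell^2 r_\ell)$. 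Crucially, the depth of $\bbf^{(\ell)}_k$ depends only on $m_\ell$, which is shared across all $k\in[d_\ell]$. Since each $F_{\ell,k}$ is formally defined on $d_{\ell-1}$ coordinates but nontrivial in at most $\max(1,t_{\ell-1})$ of them, I embed the native $\bbf^{(\ell)}_k$ into an SNN on $\TT^{d_{\ell-1}}$ by placing zeros in the first-layer weight entries corresponding to the irrelevant inputs; this embedding does not affect the sparsity count. The edge case $t_{\ell-1}=0$ is handled by routing an arbitrary single input spike train (say the first) to $\bbf^{(\ell)}_k$, which by the monotone scaling property of the target periodic function yields the prescribed output.

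Next, I parallelize the $d_\ell$ components $\bbf^{(\ell)}_1,\ldots,\bbf^{(\ell)}_{d_\ell}$ into a single SNN $\bbf^{(\ell)}$ by forming, at each of the common $O(1+\log m_\ell)$ depth levels, the disjoint union of the hidden neurons of the individual $\bbf^{(\ell)}_k$ and assembling a block-diagonal weight matrix with no coupling across blocks; the $d_{\ell-1}$ inputs are broadcast to all blocks. The width is then $\sum_{k\in[d_\ell]}\mathrm{width}(\bbf^{(\ell)}_k) = O\bigl(d_\ell\bigl[m_\ell^2\max(1,t_{\ell-1})+r_\ell\bigr]\bigr)$ and the sparsity is $\sum_{k\in[d_\ell]}\mathrm{sparsity}(\bbf^{(\ell)}_k) = O\bigl(d_\ell\, m_\ell^2 \max(1,t_{\ell-1})\, r_\ell\bigr)$. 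Summing over $\ell\in[q]$ and invoking~\eqref{property_comp} on $\bbf^{(1)},\ldots,\bbf^{(q)}$ then yields the claimed bounds on $L$, $\|\bp\|_1$ and $s$.

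The main obstacle is careful bookkeeping in the parallel composition step: the representation theorems only provide \emph{upper bounds} on the depth of each $\bbf^{(\ell)}_k$, so different components could a priori have different numbers of SNN layers and fail to align when assembled into block structure. This is resolved either by re-inspecting the explicit constructions behind \Cref{thm:universal_representation_single_input} and \Cref{thm:function_representation}, which produce networks of identical depth once $m_\ell$ is fixed, or by padding shallower components with trivial pass-through SNN units (realizable by a single neuron with a sufficiently large positive weight so that each input spike triggers exactly one output spike). Such padding adds only $O(1)$ neurons per component and per padded layer and therefore does not disturb the asymptotic bounds. Once the block alignment is secured, the remainder of the argument is a routine application of the composition property and a sum over the layer-wise complexity contributions.
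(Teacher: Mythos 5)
Your proposal is correct and follows essentially the same route as the paper, which proves this theorem in one sentence as a direct consequence of Theorems \ref{thm:universal_representation_single_input} and \ref{thm:function_representation}: represent each component $F_{\ell,k}$ by the corresponding SNN, place the components of a layer in parallel (block-diagonally, with zero weights on irrelevant inputs), and stack the layers via the composition property \eqref{property_comp}. Your additional care about depth synchronization via \texttt{SKIP}-padding and the handling of the $t_{\ell-1}\in\{0,1\}$ cases fills in bookkeeping the paper leaves implicit, and your use of $t_{\ell-1}$ rather than $t_{\ell}$ in the width and sparsity bounds is in fact the indexing consistent with the definition of $\FF(q,\bd,\bt,\bm,\br)$.
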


The proof of \Cref{thm:function_representation_composite} is a direct consequence of Theorems \ref{thm:universal_representation_single_input} and \ref{thm:function_representation} by constructing for each component function $F_{\ell, k}$ an SNN according to the respective theorem's architecture and subsequently stacking the SNNs on top of each other. 

Let us summarize the key insights of Theorem \ref{thm:function_representation_composite}. %
\begin{enumerate}
    \item By the universal representation theorems, a function is efficiently representable if it has small memory or few inputs. \Cref{thm:function_representation_composite} shows that also the composition of functions with small memory and many inputs with a function that has large memory and few relevant inputs can be efficiently represented. 
    \item %
    A key conceptual benefit of the compositional framework is that periodic functions with a large period can be generated by composing periodic functions with a short period. In particular, this allows efficiently representing functions which depend on a rather long history of input spikes using functions which all admit small memory. For this to be possible, information from past spikes needs to be repeated and successively condensed into a few spikes so that they can be passed on to later components. This process may be interpreted as extracting and temporarily storing relevant features from the past spike patterns.
    \item Composing functions also allows processing information from many input spike trains by stacking functions on top of each other, where each component function only depends on a small subset of the inputs. As small SNNs can represent the standard Boolean logical operations (\Cref{subsec:expl:memoryless_functions}), an example of such a compositional structure are SNNs that emulate Boolean circuits.

\end{enumerate}

\section{Examples of Efficiently Representable Functions}
\label{sec:examples_functions}

In the following, we provide several examples for \inputdominated, causal, functions satisfying the monotone scaling property and admitting an efficient SNN representation. %
The representation of all these functions is detailed in Appendix \ref{app:proofs_section4}, some figures below already illustrate the constructions.

\subsection{Finite or Periodic Single-Input Functions}\label{subsec:expl:single_input_functions}
The following functions operate on a single-input spike train and are either finite or periodic. Specific constructions for SNNs representing these functions are illustrated in \Cref{fig_skip}.
\begin{itemize}
    \item \texttt{SKIP} function: Given an input spike train $\calI \in \mathbb{T}$, its output is $\calO \coloneqq \calI$. This function describes the identity function on spike trains, and can be implemented with any depth, which is useful to synchronize the depth of different building blocks within an SNN. It is $1$-periodic and $1$-sparse. 
    \item \texttt{ODD/EVEN} function: Given an input spike train $\calI \in \mathbb{T}$, this function outputs two spike trains $\calO_{1}^{2}, \calO_{2}^{2} \in \mathbb{T}$,  defined by 
    \begin{align*}
        \calO_{1}^{2} \coloneqq \{\calI_{(2n-1)} : n\in \NN, 2n-1 \leq |\calI|\}, \quad \calO_{2}^{2} \coloneqq \{\calI_{(2n)} : n\in \NN, 2n \leq |\calI|\}, 
    \end{align*}
    where $\calO_1^2$ (resp.\ $\calO_2^2$) denotes the output of the \texttt{ODD} (resp.\ \texttt{EVEN}) component. When we refer to the \texttt{ODD} and \texttt{EVEN} function separately, we mean the single-output spike train function with the former or the latter output from the display above, respectively.  
    Both the \texttt{ODD} and \texttt{EVEN} function have period of order $m = 2$ and are $1$-sparse. %
    \item \texttt{CLOCK}$_m$ function for $m = 2^q$ with $q\in \NN$: Given an input spike train $\calI \in \TT$, this operation outputs $m$ spike trains $\calO_1^m, \dots, \calO_m^m \in \mathbb{T}$, defined by
    \begin{align*}
        \calO_j^{m} \coloneqq \{\calI_{(m  k+ j)} : k\in \NN_0, mk+j \leq |\calI| \} \quad \text{ for } j\in [m].
    \end{align*}
    Each output is $m$-periodic and $1$-sparse. The function can be constructed using a binary tree of depth $q$ where each node represents am \texttt{ODD/EVEN} function, and all \texttt{ODD} outputs and \texttt{EVEN} output per layer are group together and sorted. The spike train realized at a given node in the tree can be exactly characterized, see \Cref{lem:odd_even_path}. %
        \item \texttt{SPIKE}$_m$ function for $m\in \NN$: Given an input spike train $\calI \in \mathbb{T}$, the output spike train is defined as $\calO \coloneqq\{\calI_{(m)}\}$ if $|\calI| \geq m$ and $\calO \coloneqq \emptyset$ else. In other words, $\calO$ contains only the $m$-th spike of the input spike train. This function is $m$-finite and $1$-sparse. 
        \item \texttt{REPRESENT}$_\br$ given a binary vector $\br = (r_1, \dots, r_m)\in \{0,1\}^m$ for $m \in \NN$: For an input spike train $\calI \in \TT$ the output spike train is given by 
		\begin{align*}
        		\calO \coloneqq \{\calI_{(k)} \colon k\in [m], k \leq |\calI|, r_k = 1\}.
        \end{align*}
        This outputs a spike train whose spiking pattern is determined by $\br$ on the first $m$ spikes of $\calI$. This function is $m$-finite and is sparse with index $\|\br\|_1$. 
\end{itemize}

\begin{figure}[t!]
    \centering
    \includegraphics[width=\linewidth]{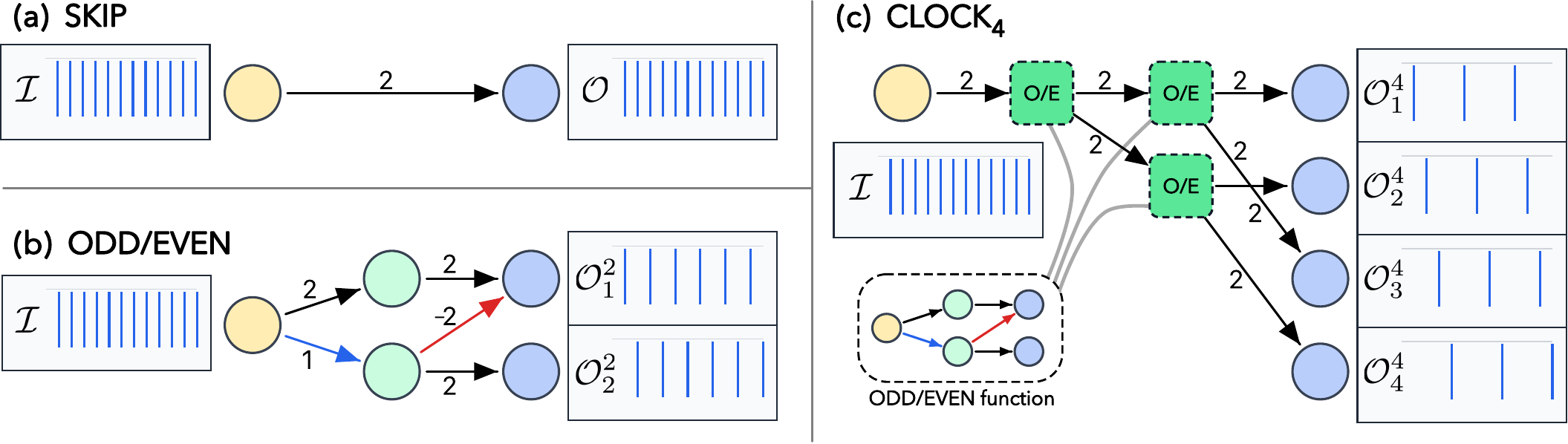}
    \caption{\textbf{SNNs implementing periodic functions}. Explicit construction via SNNs for $(a)$  \texttt{SKIP} function, $(b)$ \texttt{ODD/EVEN} function, and $(c)$ \texttt{CLOCK}$_4$ function with corresponding input and output spike trains. Input, hidden, and output neurons are depicted in yellow, green, and blue, respectively.
    }
    \label{fig_skip}
\end{figure}

\begin{remark}[On the relevance of the strict threshold]\label{rmk:strict_threshold}
    The SNN representation of the \texttt{ODD/EVEN} function crucially relies on the convention in \eqref{eq:potential_definition} that a spike is released if the membrane potential \emph{strictly} exceeds the threshold $1.$ Looking at the bottom hidden neuron (in green) in \Cref{fig_skip}$(b)$, this neuron receives weight $1$ from the input neuron and thus spikes at every second input spike. For an input spike train $\calI,$ the membrane potential of this neuron is $1$ at times $\calI_{(1)},\calI_{(3)},\ldots$ and therefore does not trigger a spike. At times $\calI_{(2k)}$ for $k \in \NN$ the membrane potential is $1 + e^{-(\calI_{(2k)} - \calI_{(2k-1)})/h} > 1$, resulting in a spike.  Choosing instead the threshold $\geq 1$, the weight would need to be chosen strictly smaller than one to avoid spiking at every input spike time, however for memory coefficient $h\in (0,\infty)$ and sufficiently large interarrival times between input spikes no output spike would be released. The issue could be mitigated by imposing an upper bound on the interarrival times.
\end{remark}

\subsection{Multiple-Input Memoryless Functions}\label{subsec:expl:memoryless_functions}
The following functions are memoryless and can be represented by SNNs with a bounded number of hidden neurons that is independent of the number of input neurons. In particular, the first three functions correspond to standard Boolean logical operations on spike trains, see \Cref{fig_logic} for explicit constructions, while the last two functions are useful for comparing spike trains. We also mention here that the \texttt{OR}- and \texttt{AND}-functions with arbitrarily many inputs can be represented by an SNN with a bounded number of neurons.  
\begin{itemize}
        \item \texttt{OR} function: For two inputs, the \texttt{OR} function is defined as $\TT^2\to \TT, (\calI, \calJ)\mapsto \calI \cup \calJ$. The $d$-component \texttt{OR} function is defined as $\TT^d\to \TT, (\calI_1, \dots, \calI_d)\mapsto  \bigcup_{j \in [d]}\calI_j$.
        \item \texttt{AND} function: For two inputs the \texttt{AND} function is defined as $\TT^2\to \TT, (\calI, \calJ)\mapsto \calI \cap \calJ$. The $d$-component \texttt{AND} function is defined as $\TT^d\to \TT, (\calI_1, \dots, \calI_d)\mapsto  \bigcap_{j \in [d]}\calI_j$.
        \item \texttt{MINUS} function: The \texttt{MINUS} function in terms of two input spike trains,  is defined as $\TT^2 \to \TT, (\calI, \calJ) \mapsto \calO \coloneqq \calI\backslash \calJ$, which removes all spikes from $\calJ$ within $\calI$. 
        \item \texttt{XOR} function: The \texttt{XOR} function is defined as $\TT^2 \to \TT, (\calI, \calJ) \mapsto \calO \coloneqq (\calI \setminus \calJ) \cup (\calJ \setminus \calI)$. The \texttt{XOR} outputs spikes if and only if one of the two input spike trains $\calI$ or $\calJ$ spike. 
        \item \texttt{IS-EQUAL} function: We define the \texttt{IS-EQUAL} function using three input spike trains as $\TT^3 \to \TT, (\calI, \calJ, \calD) \mapsto \big(\calI\cap \calJ \cap \calD\big) \cup \big((\calD \setminus \calI)\cap (\calD\setminus \calJ)\big)$. The output spikes if either both input spike trains $\calI$ and $\calJ$ simultaneously spike, or neither $\calI$ nor $\calJ$ spike but the dominating spike train $\calD$ does.
    \end{itemize}

\begin{figure}[t!]
    \centering
    \begin{subfigure}[b]{0.3\linewidth}
        \centering

\includegraphics[height=.6\linewidth, trim ={1mm 1mm 1mm 1mm},clip]{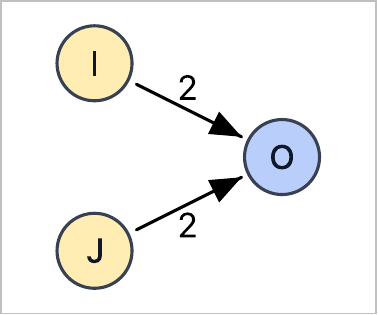}
        \caption{(a) \texttt{OR} function}
        \label{fig:or}
    \end{subfigure}
    \begin{subfigure}[b]{0.3\linewidth}
        \includegraphics[height=.6\linewidth, trim ={1mm 1mm 1mm 1mm},clip]{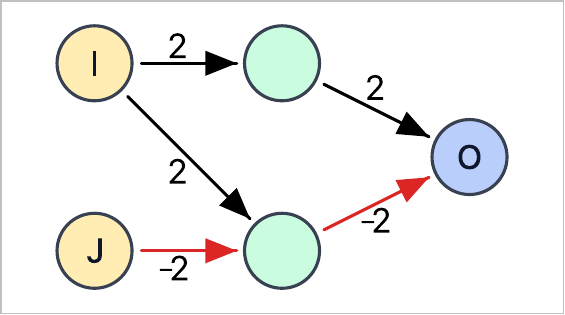}
        \caption{\quad \quad (b) \texttt{AND} function }
        \label{fig:and}
    \end{subfigure}
    \hspace{1cm}
    \begin{subfigure}[b]{0.3\linewidth}
    
    \includegraphics[height=.6\linewidth, trim ={1mm 1mm 1mm 1mm},clip]{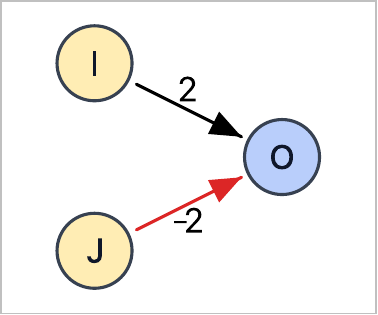}
        \caption{\!\!\!\!\!\!\!\!\!\!\!\!\!\!\!\!\!\!(c) \texttt{MINUS} function}
        \label{fig:minus}
    \end{subfigure}
    \caption{\textbf{SNNs implementing Boolean operations}. Explicit constructions via SNNs for the \texttt{OR}, \texttt{AND}, and \texttt{MINUS} function based on two inputs.}%
    \label{fig_logic}
\end{figure}

\subsection{Multiple-Input Compositional Functions with Bounded Memory} \label{subsec:expl:bounded_memory_functions}
All subsequent functions are defined in terms of some parameter $m\in \NN$ and are compositions of functions, each such function is either finite, periodic or Markovian with memory $O(m)$. %
\begin{itemize}
    \item \texttt{CEIL}$_m$ function: For two input spike trains $\calI, \calD \in \mathbb{T}$, the output is defined by $$\calO \coloneqq \{\calD_{(n)} : |(\calD_{(n-1)}, \calD_{(n)}] \cap \calI|\geq m, n \in [|\calD|-1]\}$$ if $|\calD|\geq 1$ and $\calO =\emptyset$ else,  with the convention that $\calD_{(0)} := 0$. Hence, there is an output spike at time $\calD_{(n)}$ if $\calI$ spikes at least $m$ times in the interval $(\calD_{n-1}, \calD_n]$ from $\calI$. This operation has Markovian memory $m+1$.
    \item \texttt{IS-APPROX-EQUAL}$_m$ function: Given three input spike trains $\calI, \calJ, \calD\in \TT$, the output $\calO$ is defined by the composition of the following operations for $k\in [m]$, 
    \begin{align*}
        \quad \quad \quad \quad \quad\calI_{k} &\coloneqq \texttt{CEIL}_k(\calI, \calD) \quad &
        \calJ_{k} &\coloneqq \texttt{CEIL}_k(\calJ, \calD), \quad \quad \quad \quad \quad \\
         \quad \quad \quad \quad \quad\calH_k &\coloneqq \texttt{IS-EQUAL}(\calI_{k}, \calJ_{k}, \calD)\quad 
        &\calO  &\coloneqq \texttt{AND}(\calH_1, \dots, \calH_m).\quad \quad \quad \quad \quad
    \end{align*}
    An output spike occurs at time $\calD_{(n)}$ if and only if $$\min\big(|\calI \cap (\calD_{(n-1)}, \calD_{(n)}]|,m\big) = \min\big(|\calJ \cap (\calD_{(n-1)}, \calD_{(n)}]|, m\big),$$  
    that is, both spike trains admit either the same number of spikes $\leq m$ or both spike trains exceed $m$ spikes for a time resolution determined by $\calD$. As a composition of operations with Markovian memory $m+1$ and $1$, this operation has again Markovian memory $m+1$.
    \item \texttt{DELAY}$_m$ function: For two input spike trains $\calI, \calD\in \TT$, with union $\calJ \coloneqq \calI \cup \calD$, the output spike train $\calO$ is defined by 
    \begin{align*}
        \calO \coloneqq \{J_{(n+m-1)} : n\in \NN, J_{(n)} \in \calI\}.
    \end{align*}
    This function delays every spike from $\calI$ by $m$ spikes with respect to the reference spike train $\calI \cup \calD$. This operation has Markovian memory $m$ as the output only depends on whether the spike that occurred $m-1$ positions before the current spike from $\calJ$ was from $\calI$ or not.
    \item \texttt{REPEAT}$_m$ function: For two input spike trains $\calI, \calD\in \TT$, with union $\calJ \coloneqq \calI \cup \calD$, the output spike train $\calO$ is defined by 
    \begin{align*}
        \calO \coloneqq \big\{J_{(n+k)} : n\in \NN, k\in \{0, \dots, m\}, J_{(n)} \in \calI\big\} = \bigcup_{k \in [m+1]} \texttt{DELAY}_k(\calI, \calJ).
    \end{align*}
    This function outputs a spike train that is dominated by $\calJ$ and dominates $\calI$, such that every input spike from $\calI$ is repeated exactly m times, and thus has Markovian memory $m+1$.
    \item \texttt{IF-THEN}$_{m}$ function with condition length $m\in \NN$: Given an input spike train $\calI\in\TT$, a binary vector $\br = (r_1, \dots, r_m)\in \{0,1\}^m$, and an exogenous spike train $\calD \in \TT$, the output spike train $\calO$ is defined by %
    \begin{align*}
         \calI^{\calD} \coloneqq \texttt{CEIL}_1(\calI, \calD),\qquad 
         \calR &\coloneqq \texttt{REPRESENT}_{\br}(\calD),\qquad 
         \calF\coloneqq \texttt{REPRESENT}_{\mathbf{1}_m}(\calD),\\
        &\!\!\!\!\!\!\!\!\!\!\!\!\!\!\!\!\!\calO \coloneqq  \texttt{SPIKE}_m(\texttt{IS-EQUAL}(\calI^{\calD}, \calR, \calF))
    \end{align*}
    where $\mathbf{1}_m= (1, \dots, 1) \in \{0, 1\}^m$. 
    The operation checks if for the first $m$ spikes of $\calD$, the ceiling counterpart of the input spike train $\calI$ with respect to $\calD$ coincides with the reference spike train $\calR$ defined by the binary vector $(r_1, \dots, r_m)$. Only if this condition is fulfilled, the output spikes once. This function is a composition of $m$-finite and an $m$-Markovian functions. %
    Notably, by attaching a \texttt{REPEAT}$_k$ function to the output, one can further amplify the classifier with $k$ additional output spikes. 
    \item \texttt{MEMORY}$_m$ function: Given $d$ input spike trains $\calI_1, \dots, \calI_d \in \TT$, this operation outputs $m \cdot d$ output spike trains $(\calO_{j}^{i})_{i\in [d]; j \in [m]}$ defined as
    \begin{align}
        \calO_{j}^{i} &\coloneqq \texttt{DELAY}_{j}\left(\calI_i, \textstyle \bigcup_{k=1}^{d} \calI_k\displaystyle\right) \quad \text{ for } i\in [d], j\in [m].
        \label{eq:123}
    \end{align}
    Specifically, $\calO_{j}^{i}$ outputs at a given time the $j$-th last spike from $\calI_i$ with respect to the union of all input spike trains. The \texttt{MEMORY}$_m$ function is crucial to prove the multiple-input universal representation theorem, see \Cref{fig:functional_representation_architecture} in \Cref{subsec:proofs_universal_representation_multiple_inputs} for a schematic depiction. %
\end{itemize}

\section{Classification of Spike Trains}\label{sec:classification_spike_trains}

In this section, the goal is to detect whether specific reference spike patterns occur in a given input spike train. If yes, then a classifier SNN should output a spike, and otherwise no spike should be produced. We believe that this is a crucial property for SNN-driven pattern recognition, temporal sequence processing, and decision-making. %

In the following, we first mathematically formalize the classification task and introduce the classifier function. We only focus on the single-input case. The extension to multiple-input spike trains is straightforward, see \Cref{rmk:multipleInputClassification}. Afterward, we show that the classifier function can be efficiently represented by SNNs and discuss for which classes the representation is simple/hard. 

We follow the framework of \inputdominated, causal, monotone scaling functions introduced in \Cref{sec:functional_representation}. However, to allow for time-dependent functions, that is, functions whose output at time $t$ may depend on the absolute time $t$ and not only on the relative timing of the input spikes, we additionally consider an exogenous spike train $\calD \in\TT$ which serves as a time reference or temporal grid as outlined in \Cref{rmk:time_dependent_functions}. For a given $m\in \NN$, we consider a (non-empty) \emph{class of reference patterns} $\mfR \subseteq \{0,1\}^m.$  Each element $R\in \mfR$ is a binary vector of length $m$ encoding a specific pattern of one's and zero's, encoding the presence/absence of a spike. The respective classifier function should then assess whether the most recent $m$ spikes from the input spike train correspond to one of the reference patterns in $\mfR$ and is defined as follows. 

\begin{definition}[Classifier function]
    Given $\calD\in\TT,$ a positive integer $m\leq |\calD|$, a class of reference patterns $\mfR \subseteq \{0,1\}^m,$ and an input spike train $\calI \in \TT$, the output $\calO$ of the \texttt{CLASSIFIER}$_{\calD, \mfR}$ function is defined by 
    \begin{align*}
       \calO \coloneqq \left\{\calD_{(n)} : \Big(\mathds{1}(\calD_{(n-m+i)} \in \calI^{\calD}) \Big)_{i\in [m]}\in \mfR, n \in [|\calD|], n\geq m \right\}, \qquad  \calI^\calD \coloneqq \texttt{CEIL}_1(\calI, \calD).
    \end{align*}
\end{definition}

The above function can be interpreted as follows. At each time point $D_{(n)}$ from the dominating spike train $\calD$, the function assesses the last $m$ spikes from the ceiling counterpart $\calI^\calD$ of the input spike train $\calI$ with respect to $\calD$. If the resulting binary vector indicating whether a spike occurred at these $m$ time points belongs to the reference class $\mfR$, then the function outputs a spike at time $D_{(n)}$. Otherwise, no spike is produced. As a function of both input $\calI$ and dominating spike train $\calD$, \texttt{CLASSIFIER}$_{\calD, \mfR}$ is \inputdominated, causal, satisfies the monotone scaling property, and has $m$-Markovian memory.

In the following, we show that this function can be efficiently represented by SNNs. %

\begin{theorem}[Classifier universal representation theorem]\label{thm:classifier_universal_representation}
Let $\calD\in \TT$, $m\in \NN$ with $m\leq |\calD|$, $r\in[2^{m}]$ and $h\in (0,\infty]$. Then, there exists an architecture $(L, \bp)$ and a positive integer $s$ satisfying $L = O(1+\log(m))$, $\|\bp\|_1 = O(m^2+r),$ and $s = O(m^2 +m r)$ such that for any $\mfR\subseteq {\{0,1\}^m}$ with $|\mfR| \leq r$, there exists an SNN $f\in \SNN_{h}(L,\bp,s)$ with memory $h$, called classifier SNN, such that
\begin{align*}
    \textup{\texttt{CLASSIFIER}}_{\calD, \mfR}(\calI)  = f(\calI, \calD) \quad \text{ for all } \calI \in \TT.
\end{align*}
\end{theorem}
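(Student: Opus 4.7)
The plan is to decompose $\texttt{CLASSIFIER}_{\calD,\mfR}$ into the building blocks introduced in Section~\ref{sec:examples_functions}, so that each term in the claimed bounds arises from an identifiable sub-block. A direct appeal to Theorem~\ref{thm:function_representation}, viewing $\texttt{CLASSIFIER}_{\calD,\mfR}$ as a two-input $m$-Markovian function with sparsity $r$, only yields $s = O(m^2 r)$; the improvement to $O(m^2 + mr)$ must come from sharing a single memory block across all $r$ pattern detectors.

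First, I would normalize the input to the time grid of $\calD$ by computing $\calI^\calD \coloneqq \texttt{CEIL}_1(\calI,\calD) \subseteq \calD$ and the complementary stream $\calI^{\calD,c} \coloneqq \texttt{MINUS}(\calD, \calI^\calD)$. Both steps have constant cost in neurons and weights. After this normalization, the task reduces to testing, at each position $n \geq m$ of $\calD$, whether the binary window $(\mathds{1}(\calD_{(n-m+i)} \in \calI^\calD))_{i\in[m]}$ belongs to $\mfR$.

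Next, I would install a shared memory block by applying $\texttt{MEMORY}_m$ to the pair $(\calI^\calD, \calI^{\calD,c})$, whose union is exactly $\calD$. With $\calO_j^i \coloneqq \texttt{DELAY}_j(\calI_i, \calD)$ as in the definition of $\texttt{MEMORY}_m$, one checks that at every tick $\calD_{(n)}$,
\begin{align*}
    \calD_{(n)} \in \calO_j^1 \;\iff\; \calD_{(n-j+1)} \in \calI^\calD, \qquad \calD_{(n)} \in \calO_j^2 \;\iff\; \calD_{(n-j+1)} \notin \calI^\calD,
\end{align*}
so the $2m$ memory outputs expose the full length-$m$ window at each tick of $\calD$: bit $i$ of the window is a $1$ iff $\calO_{m-i+1}^1$ spikes, and a $0$ iff $\calO_{m-i+1}^2$ spikes. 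A depth-$O(\log m)$ realization of the memory block contributes the $O(m^2)$ width and sparsity terms together with $O(\log m)$ to the depth. Moreover, since $\calO_m^1$ and $\calO_m^2$ cannot spike before position $m$ of $\calD$, spurious output spikes for $n<m$ are suppressed automatically.

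For pattern detection and aggregation, for each reference pattern $R=(r_1,\dots,r_m) \in \mfR$ the event that the current window equals $R$ is the conjunction of the $m$ signals $\{\calO_{m-i+1}^{2-r_i} : i \in [m]\}$, and is therefore realized by a single multi-input $\texttt{AND}$ neuron (Section~\ref{subsec:expl:memoryless_functions}). The final output is the $|\mfR|$-input $\texttt{OR}$ of the $r$ resulting detectors. Each $\texttt{AND}$ contributes $O(m)$ weights and the $\texttt{OR}$ contributes $O(r)$, delivering the $O(mr)$ sparsity term and the $O(r)$ width term. Composing the three stages via the composition property \eqref{property_comp} and synchronizing depths with $\texttt{SKIP}$ units yields the claimed architecture. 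The main obstacle will be realizing $\texttt{MEMORY}_m$ at depth $O(\log m)$ while keeping its weight count within $O(m^2)$, rather than using the naive sequential chain that would force depth $\Theta(m)$; this requires a parallel, tree-like arrangement of the $\texttt{DELAY}_j$ blocks in which partial delays are reused, an idea already implicit in the proof of Theorem~\ref{thm:function_representation}.
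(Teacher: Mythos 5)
Your proposal is correct and follows essentially the same route as the paper: normalize via $\texttt{CEIL}_1$, share a single $\texttt{MEMORY}_m$ block (Lemma \ref{lem:memory_module} already supplies the depth-$O(\log m)$, $O(m^2)$-weight realization you flag as the main obstacle), and finish with a memoryless sum-of-products detector, which is exactly what the paper's $F_{\mfR}$ combined with Lemma \ref{prop:boolean_function_representation} does. The only cosmetic difference is that you feed the complement stream $\texttt{MINUS}(\calD,\calI^{\calD})$ into the memory block and use the non-spiking of $\calO_m^{1},\calO_m^{2}$ to suppress outputs for $n<m$, whereas the paper feeds $\calD$ itself and checks $n\ge m$ inside the Boolean block; both yield the claimed bounds.
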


The classifier universal representation theorem states that a pattern class $\mfR$ with elements of length $m$ can be efficiently implemented if either the function itself or its complement contains only a few patterns. In particular, the index $r$ can be interpreted as a sparsity index for the classifier function. Notably, we also see that depth of the network is completely independent of the complexity of the class $\mfR$ and only logarithmically dependent on the length $m$ of the patterns in $\mfR$. Conversely, this suggests that deep SNNs are capable of classifying extremely long patterns efficiently.

The proof of \Cref{thm:classifier_universal_representation} is similar to the proof of the multiple-input universal representation theorem. It relies on first utilizing the \texttt{CEIL}$_1$ operation to construct $\calI^\calD$, then composing it with the memory module  \texttt{MEMORY}$_m$ to store the last $m$ spikes from the input spike train $\calI$, and then constructing a memoryless function which assesses whether the outputs of the \texttt{MEMORY}$_m$  correspond to a pattern in $\mfR$. Details are deferred to \Cref{subsec:proofs_universal_representation_classifiers}. 

To demonstrate the implications of the above representation result, we provide some examples for specific classes $\mfR$ and discuss the complexity of the resulting classifier SNNs.

\begin{example} We apply \Cref{thm:classifier_universal_representation} to specific classes of reference patterns $\mfR\subset \{0,1\}^m$. This ensures that in all considered cases, there exists a classifier SNN with depth of order $L = O(1+\log(m))$. Below we provide bounds for the width and the number of parameters.
    \begin{enumerate}
        \item (Singleton class) If $\mfR$ consists of a single pattern, then, $\|\bp\|_1 + s = O(m^2).$
        \item (Translational invariant class) Given $\mfR_0\subset \{0,1\}^m$ consider the translational invariant class $\mfR \coloneqq 
         \{ (R_{i+k \bmod m})_{i\in [m]} \colon R\in \mfR_0,\, k = 0,\dots,m-1 \}.$ Then, $|\mfR| \leq m \cdot |\mfR_0|,$ $\|\bp\|_1=O(m^2+m\cdot |\mfR_0|)$ and $s=O(m^2\cdot |\mfR_0|)$ parameters.
        \item (Ball environment class) Given $\mfR_0 \subset \{0,1\}^m$ and a radius parameter $b\in \{1, \dots, m\}$, consider the ball environment class $\mfR \coloneqq 
         \{ R' \in \{0,1\}^m \colon \sum_{i=1}^m |R_i - R_i'| \leq b \text{ for some } R\in \mfR_0\}.$ Then $|\mfR| \leq |\mfR_0|\sum_{j=0}^b \binom{m}{j} =:M,$ $\|\bp\|_1 = O(m^2+ M),$ and $s=O(mM).$  %
        \item (Complementary class) Let $\mfR = \{0,1\}^m \backslash \mfR_0$ for some pattern class $\mfR_0$. Then, the classifier SNN $f_{\mfR}$ can be realized by negating the output of the classifier SNN $f_{\mfR_0}$ using the \texttt{MINUS}-function (introduced in \Cref{subsec:expl:memoryless_functions}) whenever there is an input spike. Hence, efficient representation is possible if $\min(|\mfR|, 2^{d}- |\mfR|)$ is small. 
    \end{enumerate}
\end{example}

We conclude this section with three remarks on extending the classification framework.

\begin{remark}[Multiple-input spike train classification]\label{rmk:multipleInputClassification}
The classification framework can be naturally extended to multiple-input spike trains. In this case, the classifier function would assess whether the most recent $m$ spikes from each input spike train correspond to a multidimensional reference pattern in a given class $\mfR$. Moreover, the resulting classifier SNN would scale with order $L = O(1+\log(m))$ with $\|\bp\|_1 =  O(m^2 d +r)$  where $r\coloneqq |\mfR|$, see \Cref{rmk:multipleInputClassification_explicit}. %
In the worst-case the size of $\mfR$ (and its complement) scales as $\Omega(2^{md})$, which leads to an architecture that is exponential in $m$ and $d$. Combinatorial arguments suggest that this is unavoidable. However, under more structure in $\mfR$, say, if $\mfR$ is characterized by some compositional nature, more efficient SNN representations are  possible.
\end{remark}

\begin{remark}[Post-processing after classification]
The classifier SNN considered here spikes exactly once if it identifies that the input spike pattern of the most recent spikes belongs to the reference class $\mfR$. This output can be immediately used to trigger further processing in a downstream task. Via the \texttt{REPEAT}$_k$ function introduced in \Cref{subsec:expl:bounded_memory_functions}, the output can also be repeated $k\in \NN$ times to reinforce the classification decision. This approach would also reconcile the two perspectives described in the introduction how the brain may process information, that is, whether information is encoded in the timing of the spike trains or the frequency of spikes. On the one hand, for a spike train to be classified to lie in a specific class, the specific spiking times of the input spike train are relevant. On the other hand, once an input spike train is identified to belong to this class, an optimal classifier SNN spikes as often as possible. Meanwhile, the classifier should be able to stop spiking (and thus forget) after some time, which is captured by the fact that the classification output is only repeated finitely many times.
\end{remark}

\section{Proofs for Main Results}\label{sec:proofs_main_results}

This section is devoted to the proofs of the universal representation results and the corresponding lower bounds stated in \Cref{sec:functional_representation}. 
As the single-input universal representation theorem (\Cref{thm:universal_representation_single_input}) and the classifier result (\Cref{thm:classifier_universal_representation}) both utilize the multiple-input universal representation theorem (\Cref{thm:function_representation}), we first prove the latter. Afterward, we prove the lower bounds for single- and multiple-input functions (Theorems \ref{cor_lo_numpara_train} and \ref{cor_lo_numpara_functional}). 
To simplify the exposition, we modularize the proofs via technical lemmata, whose proofs are deferred to the appendix. 

\subsection{Proof of the Multiple-Input Universal Representation Theorem}\label{subsec:proofs_universal_representation_multiple_inputs}

We first show \Cref{thm:function_representation}. Prerequisites are SNN representations of the \texttt{MEMORY}$_m$-function from \Cref{subsec:expl:bounded_memory_functions} and of multivariate Boolean, i.e., memoryless functions; see \Cref{fig:functional_representation_architecture} for a schematic depiction of the full proof. The proofs to these lemmata are given in~\Cref{subsec:proof:constructions}.

\begin{lemma}[Memory function]\label{lem:memory_module}
    Let $d, m\in \NN$ and let $h\in (0,\infty]$. Then, there exists a feedforward SNN $f \in \SNN_h(L, \bp, s)$ with depth $L  = O(1+\log(m))$, $\|\bp\|_1 = O(dm^2)$ neurons, $s = O(dm^2)$ non-zero weights, $p_0 = d,$ and $p_L = dm $ such that
    \begin{align*}
        f(\calI_1, \ldots, \calI_d) = \textup{\texttt{MEMORY}}_m(\calI_1, \ldots, \calI_d) \quad \text{ for all } \calI_1, \ldots, \calI_d \in \TT.
    \end{align*} 
\end{lemma}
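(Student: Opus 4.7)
The construction proceeds in four stages; the guiding idea is to use $\texttt{CLOCK}_{M}$ (with $M := 2^{\lceil\log m\rceil}$, so $m \leq M < 2m$) as a phase separator on the joint reference train, after which every memory look-up becomes a constant-depth operation.

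First, a constant-depth $d$-ary \texttt{OR} produces the reference train $\calJ := \bigcup_{k=1}^d \calI_k$; feeding $\calJ$ into a $\texttt{CLOCK}_M$ sub-network, built as the binary $\texttt{ODD}/\texttt{EVEN}$ tree of depth $\log M = O(\log m)$ from \Cref{subsec:expl:single_input_functions}, yields the $M$ phase trains $\calJ_1,\dots,\calJ_M$, where $\calJ_k$ collects the $\calJ$-spikes at positions congruent to $k$ modulo $M$. Second, for each $(i,k)\in[d]\times[M]$ a single constant-depth \texttt{AND}-layer computes $\calI_{i,k} := \calI_i\cap\calJ_k$, recording at which phases each $\calI_i$-spike lies.

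Third -- the main step -- for each triple $(i,j,k)\in[d]\times[m]\times[M]$ I insert a constant-depth ``phase read-out'' gadget $G_{i,j,k}$ that fires at a $\calJ_k$-spike at time $t$ iff the most recent $\calJ_{k'}$-spike before $t$, with $k' := (k-j+1)\bmod M$, lies in $\calI_i$. The gadget is a latch-style SNN circuit on the three inputs $\calI_{i,k'}$, $\calJ_{k'}$, $\calJ_k$: integer weights on the first two make the membrane potential carry a single bit ``was the previous $\calJ_{k'}$-spike in $\calI_i$?'' between consecutive $\calJ_{k'}$-spikes, and this bit is then read out at the next $\calJ_k$-spike. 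The correctness across all interarrival times and every $h\in(0,\infty]$ follows from the strict-threshold convention and the fact that $e^{-\Delta/h}>0$ for every finite $\Delta$, essentially the same argument as in the $\texttt{ODD}/\texttt{EVEN}$ construction (\Cref{rmk:strict_threshold}), adapted to a latch. Finally, a constant-depth $M$-ary \texttt{OR}-layer combines $\bigcup_{k\in[M]}G_{i,j,k}$ into $\texttt{DELAY}_j(\calI_i,\calJ)$ for each $(i,j)$, giving the $dm$ output trains. Summing over the four stages yields depth $O(1+\log m)$ (dominated by $\texttt{CLOCK}_M$), width sum $O(dm^2)$ (dominated by the $dmM = O(dm^2)$ read-out gadgets) and sparsity $O(dm^2)$, as required.

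\textbf{Main obstacle.} The heart of the argument is the correctness of the phase read-out gadget $G_{i,j,k}$: the latch must store one bit reliably between two consecutive $\calJ_{k'}$-spikes no matter how large the intervening time gap is, and the read at the next $\calJ_k$-spike must trigger a fire exactly when the bit is on. This requires a careful case analysis of the membrane-potential dynamics under integer weights, leveraging both the strict threshold and the strict positivity of $e^{-\Delta/h}$ for every finite $\Delta$ and every $h\in(0,\infty]$; a naive real-weight latch would fail in some limiting regimes of $h$ and $\Delta$. A secondary point is the routing of $\calJ$ and of the phase trains $\calJ_k$ to the correct depths in each branch via $\texttt{SKIP}$-style neurons so that the overall architecture remains strictly feedforward and the depths of all incoming signals to each gadget are aligned.
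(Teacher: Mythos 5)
Your overall architecture coincides with the paper's: form the reference train by a $d$-ary \texttt{OR}, split it into $O(m)$ phase trains with a \texttt{CLOCK} built from an \texttt{ODD}/\texttt{EVEN} tree of depth $O(\log m)$, intersect each $\calI_i$ with each phase via \texttt{AND}, apply one constant-size ``translate'' gadget per triple $(i,j,\text{phase})$, and recombine with \texttt{OR}'s; the depth/width/sparsity accounting is also the same. The gap is in the one place you flag as the heart of the argument: the phase read-out gadget on the inputs $(\calI_{i,k'},\calJ_{k'},\calJ_k)$ is not realizable as described. A short case analysis shows why. For the readout to fire for arbitrarily large gaps and every $h\in(0,\infty]$, the weight on $\calJ_k$ must be exactly $1$ (if it exceeds $1$ you get false positives when the bit is $0$; if it is below $1$ you get false negatives as $e^{-\Delta/h}\to 0$). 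With that weight, a failed readout (bit $=0$) leaves the membrane potential at exactly $1$, which is \emph{not} reset before the next data window; if the next $\calJ_{k'}$-spike carries a data spike, the carried-over potential forces the constraint $1+a+b\le 1$ (no spurious spike at the $\calJ_{k'}$-time and correct storage) while the case of a clean history forces $a+b>0$ --- a contradiction. So no integer (or real) weight assignment on those three inputs works, and the missing ingredient cannot be recovered from them.

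The paper's fix (Lemma \ref{lem:translate}) is to take as third input not the source phase $\calJ_{k'}$ but the phase train immediately \emph{following} the readout phase, i.e.\ $T=\calI_{i,\ell}$, $S=\calJ_{\ell+j-1}$, $C=\calJ_{\ell+j}$ with weights $(1,1,-1)$: the $-1$ from $C$ zeroes the potential exactly once per period, right after each readout, which is what makes the latch sound for all gap sizes and all $h$. This choice also explains why the clock period is taken as $\overline m=2^{\lceil\log m\rceil+2}\ge m+2$ rather than your $M\in[m,2m)$: one needs the data phase $\ell$, the readout phase $\ell+j-1$, and the reset phase $\ell+j$ to be pairwise distinct modulo the period for every $j\in[m]$. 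With these two corrections your construction becomes the paper's proof; without them the central step does not go through.
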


\begin{lemma}[Memoryless Boolean function representation]\label{prop:boolean_function_representation}
    Let $d, r \in \NN$ and $h\in [0,\infty]$. Then, there exists a four-layer architecture $(4, \bp)$ and a positive integer $s$ satisfying $\|\bp\|_1 = O(d+r)$, and $s = O(dr)$ such that for any Boolean function $F: \{0,1\}^d \to \{0,1\}$ with $F(\mathbf{0}) = 0$ and $|\{\by \in \{0,1\}^d : F(\by)=1\}| = r$, there exists a feedforward SNN $f \in \SNN_h(4, \bp, s)$ such that for all $\calI_1, \ldots, \calI_d \in \TT$ it holds 
    \begin{align*}
        f(\calI_1, \ldots, \calI_d) = \Big\{t \in \textstyle \bigcup_{j\in [d]} \calI_j \displaystyle : F\big(
    \mathds{1}(t \in \calI_1), \mathds{1}(t \in \calI_2), \ldots, \mathds{1}(t \in \calI_d)
    \big) = 1\Big\}.
    \end{align*}
\end{lemma}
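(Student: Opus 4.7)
The plan is to realize $F$ in an SNN via its disjunctive normal form: list the satisfying assignments $\by^{(1)},\dots,\by^{(r)}\in\{0,1\}^d\setminus\{\mathbf{0}\}$, construct one ``match detector'' neuron per $\by^{(k)}$ that spikes at time $t$ iff the current input pattern $(\mathds{1}(t\in\calI_j))_{j\in[d]}$ equals $\by^{(k)}$, and then take their \texttt{OR} in the output layer. To keep the arithmetic integer and the construction independent of the memory parameter $h$, all excitatory weights will be $+2$ with carefully balanced inhibition; this matches the style of integer-weight constructions used elsewhere in the paper.

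\paragraph{Construction.}
The first layer contains a copy neuron $c_j$ with a single weight $+2$ from $\calI_j$ for each $j\in[d]$ (so $c_j$ fires iff $\calI_j$ does) together with a union neuron $u$ with weight $+2$ from every input (so $u$ fires at every input spike time), using $d+1$ neurons in total. The second layer contains, for each $j$, a relay $y_j$ of $c_j$ (weight $+2$ from $c_j$) and a ``negation'' $n_j$ with weight $+2$ from $u$ and $-2$ from $c_j$, which fires iff some input spikes at $t$ but $\calI_j$ does not, together with a relay $u'$ of $u$; this uses $2d+1$ neurons. The third layer contains $r$ match detectors: for each $k$, $m_k$ receives weight $+2$ from $y_j$ if $y^{(k)}_j=1$ and from $n_j$ if $y^{(k)}_j=0$, plus weight $-2(d-1)$ from $u'$. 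The fourth layer is a single output neuron with weight $+2$ from every $m_k$. The total width is $(d{+}1)+(2d{+}1)+r+1=O(d+r)$ and the number of non-zero weights is $2d+(3d{+}1)+r(d{+}1)+r=O(dr)$, matching the claimed bounds; the degenerate case $d=1$ is handled by the same construction with the $u'$-inhibition weight $-2(d-1)=0$ dropping out.

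\paragraph{Correctness and the main obstacle.}
At any input spike time $t$, both $u$ and $u'$ fire, and for each $j$ exactly one of $\{y_j,n_j\}$ fires, determined by whether $\calI_j$ spikes at $t$. The new contribution to the membrane potential of $m_k$ at such a $t$ is therefore $2A-2(d-1)$, where $A\in\{0,\dots,d\}$ counts coordinate agreements between the current input pattern and $\by^{(k)}$: the contribution is $+2$ for a perfect match, $0$ for exactly one mismatch, and strictly negative for $A\le d-2$. Since the spiking rule uses a strict threshold $>1$, this already gives the desired semantics in the memoryless case. The delicate point is robustness to $h\in(0,\infty)$: residual potentials up to $1$ can be carried across input spike times, and could in principle tip a single-mismatch case over threshold. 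The calibration $-2(d-1)$ is chosen so that every non-match contribution is nonpositive, and hence the clipping at $0$ in the potential recursion wipes out any accumulated residual; a perfect match contributes $+2$ on its own and thus can never be suppressed by past residuals. The same argument applied componentwise confirms that $c_j,u,y_j,n_j,u'$ realize their intended Boolean semantics uniformly in $h\in[0,\infty]$, so the output neuron spikes at exactly the times claimed by the lemma.
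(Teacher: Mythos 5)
Your construction is correct and is essentially identical to the paper's own proof: both realize the disjunctive normal form via a union (\texttt{OR}) neuron, per-coordinate copy/negation (\texttt{MINUS}) pairs in the second layer, one match-detector per satisfying assignment with excitatory weights $2$ and inhibitory weight $2-2d$ from the union relay, and a final \texttt{OR}, with the same $O(d+r)$ width and $O(dr)$ weight counts. One cosmetic imprecision: in the exactly-one-mismatch case the new contribution is $0$, so the clipping at $0$ does not ``wipe out'' the residual --- rather the residual decays and remains $\le 1$ (since a potential exceeding $1$ would have triggered a spike and been discarded), which is all your argument needs.
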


The SNN construction to prove \Cref{prop:boolean_function_representation} is inspired by the so-called \emph{sum-of-products} representation of Boolean functions and involves the implementation of \texttt{OR}- and \texttt{AND}-function with multiple inputs, as well as the \texttt{MINUS} function. In particular, the fact that SNNs can implement the \texttt{OR}- and \texttt{AND}-function with multiple inputs in constant depth is crucial to ensure that the overall function is represented with constant network depth.

We can now prove the multiple-input universal representation theorem. 
\begin{figure}
\centering
\includegraphics[width=\textwidth]{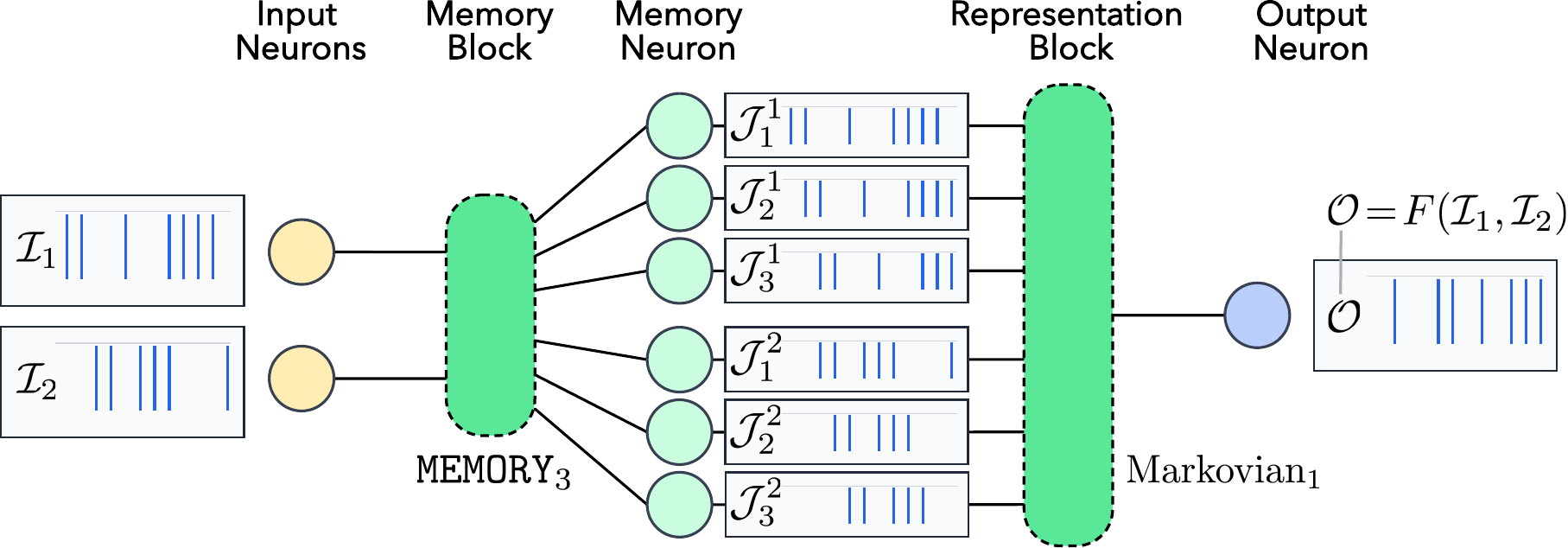}
\caption{\textbf{Schematic depiction of SNN architecture for representation of a function~$F$.} Input spike trains are first processed by a memory function (see \Cref{sec:examples_functions}) which outputs the input spike trains delayed by the appropriate index (in the figure, the index is~$3$). The delayed spike trains then serve as the input for a  Markovian$_1$-function, i.e., a~memoryless function, to produce the desired output spike train.}
\label{fig:functional_representation_architecture}
\end{figure}

\begin{proof}[Proof of Theorem \ref{thm:function_representation}]
    We first treat the case of $h\in (0, \infty]$, and discuss the memoryless case $h = 0$ at the end of the proof. Let $F\in \FF_{\mathrm{mm}}(d,m,r)$ be an $m$-Markovian function. %
      
    To show the assertion for $h \in (0, \infty]$, we denote the  spike trains for the $d$ inputs by $\calI_1, \ldots, \calI_d \in \TT$ and apply the function \texttt{MEMORY}$_m$ from Lemma \ref{lem:memory_module} to them. By \eqref{eq:123}, we end up with $dm$ neurons with spike trains $(\calJ_j^i)_{i\in[d], j\in[m]}$ where
    $$
    \calJ_j^i = \texttt{DELAY}_j\textstyle\left(\calI_i, \bigcup_{k\in [d]} \calI_k\right).
    $$
    Further, define the binary function $J_j^i(t) \coloneqq \mathds{1}(t \in \calJ_j^i)$ for $t\in (0,\infty)$. By causality and the monotone scaling property, we may, without loss of generality, assume that $\calI_1,\ldots,\calI_d\in\TTT{\NN}$, $\bigcup_{i=1}^d \calI_i = \NN$, and $t\in\NN$. Since $F$ is $m$-Markovian and \inputdominated, for any $t$, the indicator $\mathds{1}(t\in F(\calI_1,\ldots,\calI_d))$ depends only on the spike trains $(\calI_i\cap(t-m,t])_{i\in[d]}$, which are completely determined by the binary vector $(J_1^1(t), \ldots, J_m^1(t), \ldots, J_1^d(t), \ldots, J_m^d(t))$. Hence, there exists a Boolean function $F_\mathrm{B}: \{0,1\}^{md}\to \{0,1\}$ with $F_{\mrB}(\mathbf{0}) = 0$ such that for all $t\in (0, \infty)$ it holds
    \begin{align*}\label{eq:boolean_function_representation}
    t \in F(\calI_1, \ldots, \calI_d)
    \quad\iff\quad
    F_\mathrm{B}\big(J_1^1(t), \ldots, J_m^1(t), \ldots, J_1^d(t), \ldots, J_m^d(t)\big) = 1.
    \end{align*}
    We now use \Cref{prop:boolean_function_representation} (applied with input dimension $md$) to represent $F_\mathrm{B}$ with an SNN $f_\mathrm{B}$ with constant depth $4$, $O(md + r_\mathrm{B})$ neurons, and $O(md r_\mathrm{B})$ non-zero weights, where $r_\mathrm{B} := |\{y \in \{0,1\}^{md} : F_\mathrm{B}(y) = 1\}|$. Since $F \in \FF_{\mathrm{mm}}(d,m,r)$ is $r$-sparse, we have $r_\mathrm{B} \leq r$. Composing the memory module \texttt{MEMORY}$_m$ from \Cref{lem:memory_module} with the SNN $f_B$ which represents $F_B$ yields an SNN $f$ whose output neuron spikes at time $t$ if and only if $t \in F(\calI_1, \ldots, \calI_d)$.  Hence, $f(\calI_1, \ldots, \calI_d) = F(\calI_1, \ldots, \calI_d)$. By \Cref{lem:memory_module} and \Cref{prop:boolean_function_representation}, the  network $f$ has depth $L = O(1+\log(m))$, a total number of neurons $\|\bp\|_1 = O(dm^2 + r)$, and at most $s = O(dm^2 r)$ non-zero weights. 
    
    Finally, for the case $h = 0$ and $m= 1$,
    one can directly apply \Cref{prop:boolean_function_representation} to the input spike trains $\calI_1, \dots, \calI_d$ to obtain an SNN with the desired representation property.
\end{proof}

\subsection{Proof of the Single-Input Universal Representation Theorem}\label{subsec:proofs_universal_representation_single_input}

For the proof of \Cref{thm:universal_representation_single_input} we rely on the following lemma, which shows that the \texttt{SPIKE}$_1$ and \texttt{CEIL}$_1$ functions can be implemented by small feedforward SNNs.  

\begin{lemma}\label{lem:SPIKES_module}
For any $h\in (0, \infty]$, there exist feedforward SNNs $f\in \SNN_{h}(L_1, \bp_1, s_1)$ and $g\in \SNN_{h}(L_2, \bp_2, s_2)$ with bounded architecture, 
$\sum_{i \in [2]}L_i + \|\bp_i\|_1 + s_i = O(1)$, such that 
\begin{align*}
    f(\calI) = \textup{\texttt{SPIKE}}_{1}(\calI), \quad  g(\calI, \calD) = \textup{\texttt{CEIL}}_1(\calI,\calD) \quad  \text{for all } \calI, \calD \in \TT. 
\end{align*}
\end{lemma}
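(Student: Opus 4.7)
The plan is to exhibit explicit feedforward SNN constructions $f$ and $g$ realizing \texttt{SPIKE}$_1$ and \texttt{CEIL}$_1$, each of constant depth, width, and sparsity, and to verify correctness by tracing the membrane-potential dynamics event by event. Both constructions will use integer weights together with the strict-threshold condition $P(t)>1$, a combination that enforces the monotone scaling property and makes the output spike pattern insensitive to the exact interarrival times and to the memory coefficient $h\in(0,\infty]$.

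For \texttt{SPIKE}$_1$ the output neuron $O$ will receive a large excitatory weight directly from $\calI$, so that it certainly spikes at $\calI_{(1)}$, at which moment every other neuron is still in its initial zero state. An auxiliary hidden neuron will then feed strong inhibition into $O$, supplying a negative contribution at every subsequent input spike whose magnitude exactly cancels the excitation and keeps the reset potential below the strict threshold thereafter. The construction of such an auxiliary unit is the delicate part; the plan is to build it by combining one or two \texttt{EVEN}/\texttt{ODD}-type units of the kind depicted in Figure~\ref{fig_skip}, exploiting the carry-over of the integer-weight-$1$ membrane potential between spikes (which is available whenever $h>0$) to encode an ``already-saw-a-spike'' signal in a way that does not depend on the value of $h$ or on the gaps.

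For \texttt{CEIL}$_1(\calI,\calD)$ the plan is to use a hidden flag neuron $H$ with large positive weight from $\calI$ and large negative weight from $\calD$, so that $H$ fires on each $\calI$-spike not simultaneous with a $\calD$-spike and its potential is reset to zero whenever $\calD$ spikes. The output $O$ is then driven by $H$ and by $\calD$ with balanced weights so that a $\calD$-only event from a quiescent state yields $P_O=1$ and hence no output, while a $\calD$-spike preceded by at least one $H$-spike carries over a strictly positive residual potential that pushes $P_O$ strictly above $1$; simultaneous $\calI\cap\calD$ events are handled by an additional direct $\calI\to O$ excitation. The main obstacle, shared by both constructions, is to make them work uniformly across $h\in(0,\infty]$: for large $h$ one must prevent accumulated residuals from spuriously triggering the output at intermediate $\calI$-events, while for small $h$ one must guarantee that the decayed residual delivered to the downstream $\calD$-spike is still strictly positive. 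Both difficulties are resolved by leaning on the strict inequality $P>1$ (so that any strictly positive residual suffices) together with an additional auxiliary neuron that absorbs excess excitation inside each $\calD$-interval; the final verification is then a finite case analysis on the relative ordering of $\calI$- and $\calD$-spikes, which immediately yields the claimed $O(1)$ bounds on $L_i$, $\|\bp_i\|_1$, and $s_i$.
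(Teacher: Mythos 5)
There is a genuine gap, and it sits exactly where you flag ``the delicate part.'' For \texttt{SPIKE}$_1$ your plan needs an auxiliary inhibitory signal that fires at every input spike \emph{except the first}, i.e.\ the spike train $\{\calI_{(2)},\calI_{(3)},\dots\}$. One or two \texttt{ODD}/\texttt{EVEN} units cannot produce this: \texttt{EVEN} gives you $\{\calI_{(2)},\calI_{(4)},\dots\}$, and what remains to be covered, $\{\calI_{(3)},\calI_{(5)},\dots\}$, is ``\texttt{ODD} minus its first spike'' --- the same problem again. A single integer-weight unit cannot maintain a persistent ``already-saw-a-spike'' flag either: weight $1$ turns it into \texttt{EVEN} (it fires and resets at the second spike), weight $\geq 2$ fires always, and non-integer weights in $(0,1)$ either never fire for large gaps or fire spuriously for small ones --- this is precisely the obstruction discussed in \Cref{rmk:strict_threshold}. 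The paper's construction (\Cref{lem:drop_spike}$(i)$) instead builds \texttt{CLOCK}$_4$ and then uses the three-input unit of \Cref{lem:translate} (weights $1,1,-1$ with a dedicated \emph{reset} train) to obtain the four shifted clocks $\{\calI_{(r+1)}: r\equiv i \bmod 4\}$, whose union is the all-but-first train; the reset input is what prevents residuals from accumulating across periods, and the periodicity is what guarantees the ``at most one carried-over spike'' hypothesis of \Cref{lem:translate}. Your sketch has no substitute for this device.

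The same accumulation problem breaks your \texttt{CEIL}$_1$ construction as described. If two consecutive spikes of $\calI\setminus\calD$ occur with no $\calD$-spike in between, your flag neuron $H$ fires at both, and an output neuron receiving weight $1$ from $H$ reaches potential $1+e^{-\delta/h}>1$ at the second one --- a spurious output at an $\calI$-time, whereas \texttt{CEIL}$_1$ may only output at $\calD$-times. The proposed fix, ``an auxiliary neuron that absorbs excess excitation inside each $\calD$-interval,'' is not specified, and the natural candidates fail: an inhibitor that fires whenever $H$ fires a second time would also have to know that $H$ already fired, which is again the at-most-once-per-interval problem; an inhibitor that cancels all $H$-spikes destroys the legitimate residual needed at the next $\calD$-spike. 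The paper avoids this entirely by observing that \texttt{CEIL}$_1\in\FF_{\mathrm{mm}}(2,2,4)$ and invoking \Cref{thm:function_representation} (see \Cref{lem:ceil_snn}), whose memory module again rests on the \texttt{CLOCK}-plus-\Cref{lem:translate} mechanism. Your identification of the strict threshold and the carried-over positive residual as the key levers is correct, but without the clock decomposition and the explicit reset train the two constructions do not close.
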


The statement is a direct consequence of \Cref{lem:drop_spike}$(i)$ and \Cref{lem:ceil_snn}. We now proceed with the proof of \Cref{thm:universal_representation_single_input}. \Cref{fig:SNN_3_new} depicts the architecture of the constructed feedforward SNN.

\begin{figure}
    \centering
\includegraphics[width=\textwidth]{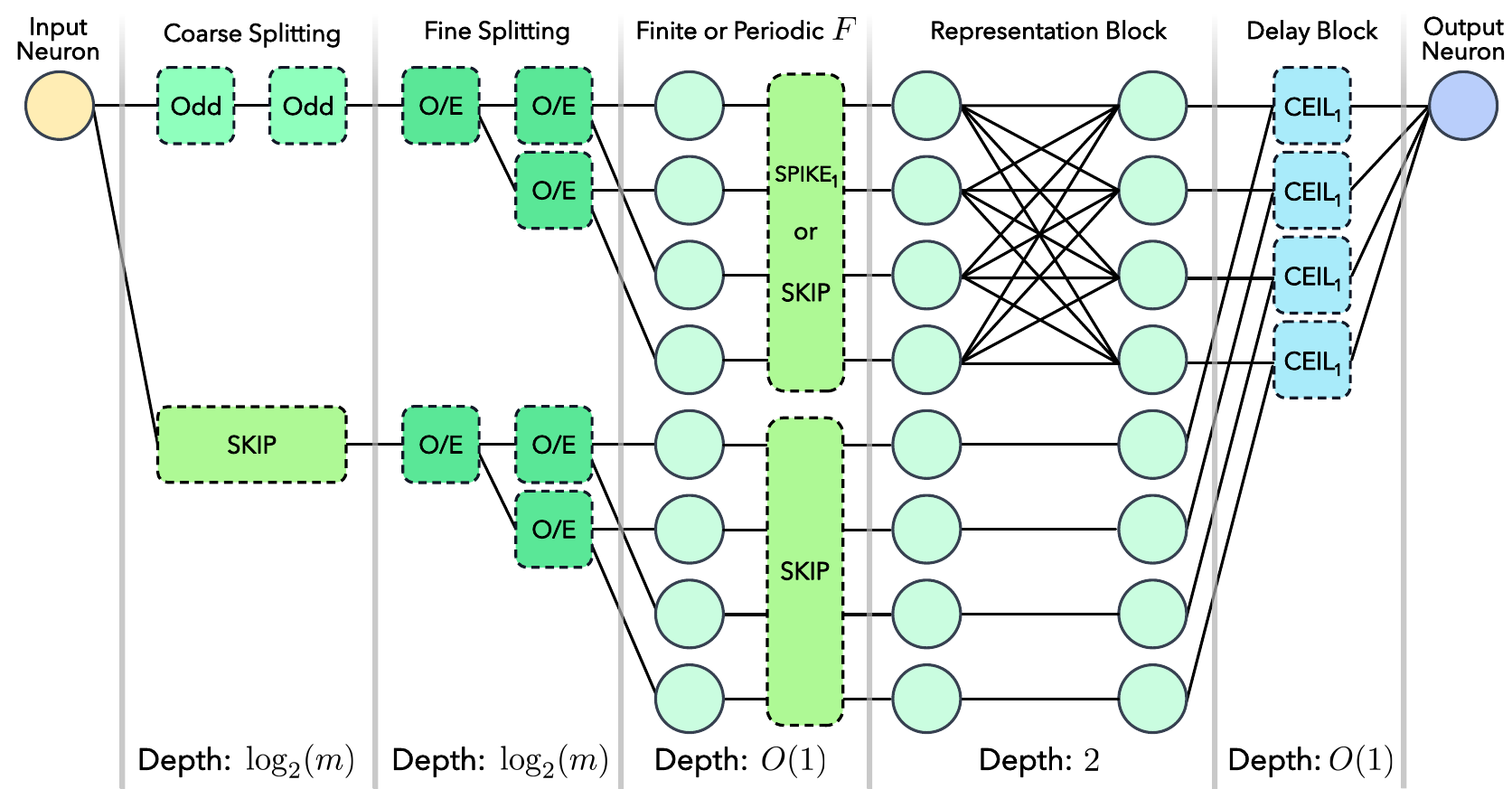}
\caption{\textbf{Architecture of the single-input SNN from proof of \Cref{thm:universal_representation_single_input} for $m = 16$.}  The architecture involves several blocks which corresponds to the different steps in the proof. If the function $F$ to be represented  is finite (resp.\ periodic), then the third block contains a componentwise \texttt{SPIKE}$_1$ (resp.\ \texttt{SKIP}) function. The exact representation of $F$ is encoded in the weights in the top representation block, some of which will be zero, all other depicted edges admit weight equals to $2$. The depth describes the neuron layers in each block. Note that in the first two blocks, since each \texttt{ODD/EVEN} module contains two layers of neurons, the depth is twice the number of modules.
}
\label{fig:SNN_3_new}
\end{figure}

\begin{proof}[Proof of \Cref{thm:universal_representation_single_input}]

    If $m= 1$ and $F$ is $1$-periodic, then one can use for $h\in [0,\infty]$ a SNN with a single output neuron that is directly connected to the input neuron with weight $2$ if $F(\NN) = \NN$  or weight $0$ if $F(\NN) = \emptyset$. 

    The construction for the SNN in the more general case consists of several different steps. %
    For the proof, we define $\overline m \coloneqq 4^{\lceil \log_4(m) \rceil} \geq m $. In particular, any function which is $m$-finite and $r$-sparse is in particular also $\overline m$-finite and $r$-sparse. In the periodic case, we assumed $m=4^q$, thus $\overline m = m$.  Further, without loss of generality, we can assume in the following that the input spike train $\calI$  fulfills $|\calI| = \infty$. Given that SNNs are causal, this implies that the same assertions apply if $\calI$ consists of only finitely many spikes. 

    \textit{Step 1: Coarse splitting of the input spike train.}
    We first apply the composition of $\log(\overline m)/2$ (which is an integer by definition of $\overline m$) \texttt{ODD} functions to the input neuron and denote the output neuron by $v_{1,1}$. In parallel, we attach enough \texttt{SKIP} connections to the input neurons until the depth is synchronized, the corresponding output neuron is denoted by $v_{1,2}$. The spike trains of $v_{1,1}$ and $v_{1,2}$ are denoted by $\calV_{1,1}$ and $\calV_{1,2}$, respectively. By construction,
    \begin{align*}
        \calV_{1,1} = \{ \calI_{(\sqrt{\overline m} \cdot k + 1)} : k\in \NN_0\}, \quad \calV_{1,2} = \calI. 
    \end{align*}
    This requires depth $O(\log(\overline m))$ and $\sqrt{\overline m}$ neurons and non-zero weights.

    \textit{Step 2: Fine splitting.} 
    We now apply a \texttt{CLOCK}$_{\sqrt{\overline m}}$ function to the neurons $v_{1,1}$ and $v_{1,2}$, which leads to $2\sqrt{\overline m}$ neurons $v_{2,1}, \dots, v_{2,2\sqrt{\overline m}}$ with spike trains $\calV_{2,1}, \dots, \calV_{2,2\sqrt{\overline m}}$, where
    \begin{align*}
        \calV_{2,j} &= \{ \calI_{(\overline m k + \sqrt{\overline m} (j-1) + 1)} \colon k \in \NN_0\}, \quad  \calV_{2,\sqrt{\overline m} + j} = \{ \calI_{(\sqrt{\overline m} \cdot k + j)} : k\in \NN_0 \}\quad \text{ for } j \in [\sqrt{\overline m}].
    \end{align*}
    For $j \in [\sqrt{\overline m }]$, each spike train $\calV_{2,j}$ only spikes once in a period of length $\overline m$,  while each spike train $\calV_{2,\sqrt{\overline m} + j}$ spikes $\sqrt{\overline m}$ times in a period of length $\overline m$.
    This requires depth $O(\log(\overline m)),$ as well as $O(\sqrt{\overline m})$ neurons and non-zero weights.

    \textit{Step 3: Finite or periodic functions.} If the function $F$ to be represented is $\overline m$-finite, then we apply the \texttt{SPIKE}$_1$ function from \Cref{lem:SPIKES_module} to each neuron $v_{2,1}, \dots, v_{2,\sqrt{\overline m}},$ denote the output neurons by $v_{3,1}, \dots, v_{3,\sqrt{\overline m}}$ and their spike trains by $\calV_{3,1}, \dots, \calV_{3,\sqrt{\overline m}}$. 
    For the other neurons $v_{2, \sqrt{\overline m}+1}, \dots, v_{2,2\sqrt{\overline m}}$, we attach \texttt{SKIP} connections and denote the spike trains by  $\calV_{3,\sqrt{\overline m}+1}, \dots, \calV_{3,2\sqrt{\overline m}}$. 
    Meanwhile, if the function to be represented is $\overline m$-periodic, then we instead attach \texttt{SKIP} connections to all neurons $v_{2,1}, \dots, v_{2,2\sqrt{\overline m}}$, and keep the notation identical. Notably, for periodic functions this step could also be omitted entirely. 
    
    The construction gives spike trains $\calV_{3,1}, \dots, \calV_{3,2\sqrt{\overline m}}.$  For $j\in [\overline m]$,
    \begin{align*}
        \calV_{3,j} &= \begin{cases}
        \{ \calI_{(\sqrt{\overline m} (j-1) + 1)} \}, & \text{ if } F \text{ is $\overline m$-finite}, \\
        \{ \calI_{(\overline m k + \sqrt{\overline m} (j-1) + 1)} : k\in \NN_0 \}, & \text{ if } F \text{ is $\overline m$-periodic},
        \end{cases} \quad \text{ and } \quad 
        \calV_{3,\sqrt{\overline m}+j} = \calV_{2,\sqrt{\overline m}+j},
    \end{align*}
    and it needs $O(1)$ layers and $O(\sqrt{\overline m})$ neurons and non-zero weights.
    
    \textit{Step 4: Representation block.} We now compose the neurons $v_{3,1}, \dots, v_{3,\sqrt{\overline m}}$ with another layer of neurons to obtain $v_{4,1}, \dots, v_{4,\sqrt{\overline m}}$. The weight $w_{ji}$ between neuron $v_{3,i}$ and neuron $v_{4,j}$ is set equal to $2$ if  $(i-1)\sqrt{\overline m}+j \in F([\overline m])$ and otherwise set to zero. This ensures that the spike train $\calV_{4,j}$ of neuron $v_{4,j}$ for $j \in [\sqrt{\overline m}]$ is given by 
    \begin{align*}
        \calV_{4,j} = \begin{cases}
        \{ \calI_{(\sqrt{\overline m}(i-1)+1)} :i \in [\sqrt{\overline m}],  (i-1)\sqrt{\overline m}+j \in F([\overline m]) \}, & \text{ if } F \text{ is $\overline m$-finite}, \\
        \{ \calI_{(\overline m k + \sqrt{\overline m}(i-1)+1)} :k \in \NN_0, i \in [\sqrt{\overline m}],  (i-1)\sqrt{\overline m}+j \in F([\overline m]) \}, & \text{ if } F \text{ is $\overline m$-periodic}.
        \end{cases}
    \end{align*}
    In parallel, we attach a \texttt{SKIP} connection to each neuron $v_{3,\sqrt{\overline m}+1}, \dots, v_{3,2\sqrt{\overline m}}$ and denote the spike trains by $\calV_{4,\sqrt{\overline m}+1}, \dots, \calV_{4,2\sqrt{\overline m}}$. 
    
    Overall, the representation block involves a single layer, $O(\sqrt{\overline m})$ neurons and $O(\sqrt{\overline m}+r)$ non-zero weights where $r$ denotes the sparsity index of $F$.

    \textit{Step 5: Delay block.} We now aim to shift the spike trains of $v_{4,1}, \dots, v_{4,\sqrt{\overline m}}$ to the correct position, as they can currently only spike whenever $\calV_{1,1}$ spikes. To this end, we apply to each pair $v_{4,j}$ and $v_{4,\sqrt{\overline m}+j}$ for $j\in [\sqrt{\overline m}]$ the \texttt{CEIL}$_1$ function from \Cref{lem:SPIKES_module}, where the respective inputs are selected as $\calI \coloneqq \calV_{4,j}$ and $\calD \coloneqq \calV_{4,\sqrt{\overline m}+j}$. We denote the resulting output neuron by $v_{5,j}$ with spike train $\calV_{5,j}$, which by construction is given for each $j \in [\sqrt{\overline m}]$ by 
    \begin{align*}
        \calV_{5,j} = \begin{cases}
        \{ \calI_{(\sqrt{\overline m}(i-1)+j)} :i \in [\sqrt{\overline m}],  (i-1)\sqrt{\overline m}+j \in F([\overline m]) \}, & \text{ if } F \text{ is $\overline m$-finite}, \\
        \{ \calI_{(\overline m k + \sqrt{\overline m}(i-1)+j)} :k \in \NN_0, i \in [\sqrt{\overline m}], (i-1)\sqrt{\overline m}+j \in F([\overline m]) \}, & \text{ if } F \text{ is $\overline m$-periodic}.
        \end{cases}
    \end{align*}
    This step requires $O(1)$ layers, $O(\sqrt{\overline m})$ neurons and $O(\sqrt{\overline m})$ non-zero weights.

    \textit{Step 6: Output. }
    Consider a single output neuron $o$ by attaching weight $2$ from each neuron $v_{5,j}$ for $j\in [\sqrt{\overline m}]$ to $o$. The spike train $\calO$ of $o$ is given by 
    \begin{align*}
        \calO = \begin{cases}
        \{ \calI_{(t)} : t\in F([\overline m]) \}, & \text{ if } F \text{ is $\overline m$-finite}, \\
        \{ \calI_{(\overline m k + t)} : k\in \NN_0, t\in F([\overline m]) \}, & \text{ if } F \text{ is $\overline m$-periodic},
        \end{cases}
    \end{align*}
    which equals $F(\calI)$ by construction. Collectively, this construction requires $O(\log(\overline m)) = O(1+\log(m))$ layers, $O(\sqrt{\overline m}) = O(\sqrt{m})$ neurons and $O(\sqrt{\overline m} + r) = O(\sqrt{m} + r)$ non-zero weights. This concludes the proof.
\end{proof}

\subsection{Proof of Classifier Universal Representation Theorem}\label{subsec:proofs_universal_representation_classifiers}

\begin{proof}[Proof of \Cref{thm:classifier_universal_representation}]

    We first rewrite \texttt{CLASSIFIER}$_{\calD, \mfR}$ into a composition of simpler functions. 
    Let $\calI\in \TT$ be the input spike train and let $\calD$ be the exogenous dominating spike train. In the following, we first show that there exists a $1$-Markovian function $F_{\mfR}:\TT^{2m} \to \TT$, fully determined by $\mfR$, such that    
    \begin{equation}
        \texttt{CLASSIFIER}_{\calD, \mfR}(\calI) = F_{\mfR}\Big(\texttt{MEMORY}_m(
        \texttt{CEIL}_1\big(\calI, \calD\big), \calD)\Big) \quad \text{ for all }\calI \in \TT.\label{eq:class_comp}
    \end{equation}
    To define $F_{\mfR}$, note that the output of the memory function $\texttt{MEMORY}_m(\calI^{\calD}, \calD)$, with $\calI^{\calD}\coloneqq \texttt{CEIL}_1(\calI, \calD)$,  comprises $2m$ output spike trains  $\calV_j^{i}$ for $i\in[2]$ and $j\in[m]$ where, by domination $\calI^{\calD}\dom \calD$, it holds for all $n\in \NN$ that
    \begin{align*}
        \sum_{j \in [m]} \mathds{1}(\calD_{(n)} \in \calV_j^{2}) \begin{cases}
        \leq m-1 & \text{ if } n \leq m-1, \\
        = m &\text{ if } n \geq m.
        \end{cases}
    \end{align*}
Moreover, for all $n\in \NN$, $n\geq m$ and $j\in[m]$ the following equivalence is met, 
    $$\calD_{(n)} \in \calV_j^{1} \quad \Longleftrightarrow \quad \calD_{(n-m+j)} \in \calI^{\calD}.$$
    The function $F_{\mfR}$ is then defined as a function acting on the $2m$ input spike trains $\calJ_j^{i}$ for $i\in[2]$ and $j\in[m]$ via
    \begin{align*}
        \mathds{1}\left(t \in  F_{\mfR}(\calJ_1^{1}, \ldots, \calJ_m^{1}, \calJ_1^{2}, \ldots, \calJ_m^{2})\right) = \mathds{1}\left(
        (\mathds{1}( t \in \calJ_j^{1}))_{j \in [m]} \in \mfR \right) \cdot \mathds{1}\left(\textstyle\sum_{j \in [m]} \mathds{1}(t \in \calJ_j^{2}) =m\right)
    \end{align*} 
    for all $t \in (0, \infty)$. In particular, note that $F_{\mfR}$ is $1$-Markovian since the output at time $t$ only depends on the input spike trains at time $t$, and that it is $|\mfR|$-sparse. 
    Combining the above three displays, we conclude that \eqref{eq:class_comp} is satisfied. 

    For the representation of the $\texttt{CLASSIFIER}_{\calD, \mfR}$-function with a feedforward SNN we first apply a feedforward SNN representing the  $\texttt{CEIL}_1$-function to the two input neurons. In parallel, we add sufficiently many \texttt{SKIP}-connections to the neuron with spike train $\calD$. By \Cref{lem:SPIKES_module} this architecture has $O(1)$ depth, $O(1)$ neurons, and $O(1)$ non-zero weights. Next, we attach the memory module $\texttt{MEMORY}_m$ from \Cref{lem:memory_module}, which has $O(1+\log(m))$ layers, $O(m^2)$ neurons, and $O(m^2)$ non-zero weights. Finally, we apply the function $F_{\mfR}$ to the output of the memory module. By \Cref{prop:boolean_function_representation}, $F_{\mfR}$ can be represented by a feedforward SNN with $O(1)$ depth, $O(m+ |\mfR|)$ neurons, and $O(m |\mfR|)$ non-zero weights. Combined, this yields a feedforward SNN representing $\texttt{CLASSIFIER}_{\calD, \mfR}$ with $O(1+\log(m))$ depth, $O(m^2 + |\mfR|)$ neurons, and $O(m^2 + m |\mfR|)$ non-zero weights.
    \end{proof}

    \begin{remark}\label{rmk:multipleInputClassification_explicit}
    Our proof also naturally extends to the multivariate setting with $d$ input spike trains. In this case, the classifier function would be characterized for a set $\mfR \subseteq \{0,1\}^{m\times d}$ by 
    \begin{align*}
        \texttt{CLASSIFIER}_{\calD, \mfR}(\calI_1, \ldots, \calI_d) = F_{\mfR}\left(\texttt{MEMORY}_m(\texttt{CEIL}_1(\calI_1, \calD), \ldots, \texttt{CEIL}_1(\calI_d, \calD), \calD)\right)
    \end{align*}
    where $F_{\mfR}:\TT^{2 m d} \to \TT$ is a $1$-Markovian function defined analogously as in the proof of \Cref{thm:classifier_universal_representation}. In particular,  the \texttt{MEMORY}$_m$ function needs depth $O(1+\log(m))$ with $O(m^2d)$ neurons and non-zero weights (\Cref{lem:memory_module}), while $F_{\mfR}$ can be represented by a feedforward SNN with $O(1)$ depth, $O(m d + |\mfR|)$ neurons, and $O(m d |\mfR|)$ non-zero weights (\Cref{prop:boolean_function_representation}). 
    \end{remark}

\subsection{Proof for the Complexity Upper Bound of SNN Computational Units}\label{subsec:proofs_cardinality_upper_bound}

To prove the lower bounds in Theorems  \ref{cor_lo_numpara_train} and \ref{cor_lo_numpara_functional}, we first derive an upper bound on the number of distinct functions that a class of feedforward SNNs with fixed architecture can represent over a given set of input spike trains. A comparison with lower bounds on the number of distinct functions within the function classes $\FF_{\mathrm{fin}}(m,r)$, $\FF_{\mathrm{per}}(m,r)$, and $\FF_{\mathrm{mm}}(d,m,r)$, yields then lower bounds on the number of required SNN network parameters to represent all functions within these classes.%

\begin{theorem}[Expressiveness of feedforward SNNs] \label{thm:SNN_expressivity}
    Consider a finite collection of $d$-tuples of spike trains $\bbI\subseteq \TT^d$ and set $n\coloneqq |\bbI|$. Enumerate its elements by $\bbI = \left\{ \smash{(\calI^{(i)}_1, \dots, \calI^{(i)}_{d})}, i \in [n] \right\},$ define $T_i := |\bigcup_{j\in[d]} \smash{\calI^{(i)}_j}|$, and set 
    $T_{\operatorname{sum}} := \sum_{i\in [n]} (T_i+1)^2$. 
    Then, for any $h\in [0, \infty]$, $L \in \mathbb{N}$, $\mathbf{p} := (d,p_1,\ldots,p_L) \in \mathbb{N}^{L+1}$, and $s \in \mathbb{N}$ with $s \geq \sum_{\ell=1}^{L} (p_{\ell-1} \vee p_{\ell})$, 
    it holds 
    \begin{align*}
       \big|\left\{\bbf\colon \bbI \to \mathbb{T}^{p_L} \colon \bbf \in \SNN_h(L,\mathbf{p}, s)  \right\}\big|
       \leq \big(8 e s^2 T_{\operatorname{sum}}\big)^{s}.
    \end{align*}
\end{theorem}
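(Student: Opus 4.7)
The plan is to decompose the count into a discrete contribution from the choice of sparsity pattern and a continuous contribution counting distinct functions that arise as the $s$ non-zero weights range over $\mathbb{R}^s$. For the discrete part, the hypothesis $s \geq \sum_{\ell=1}^L (p_{\ell-1} \vee p_\ell)$ together with $p_{\ell-1} p_\ell \leq (p_{\ell-1} \vee p_\ell)^2$ and $\max_\ell (p_{\ell-1} \vee p_\ell) \leq s$ yields
\[
\sum_{\ell=1}^L p_{\ell-1} p_\ell \;\leq\; s \cdot \max_\ell (p_{\ell-1} \vee p_\ell) \;\leq\; s^2,
\]
so at most $s^2$ candidate edges exist and there are at most $\sum_{j=0}^s \binom{s^2}{j} \leq (es)^s$ sparsity patterns by Sauer's lemma.

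Next, fix a sparsity pattern. By \Cref{cor:inputPotential}, on each input $i$ every neuron's spike train is a subset of $\bigcup_{j \in [d]} \calI^{(i)}_j$, which has $T_i$ elements. Consequently the SNN function restricted to $\bbI$ is encoded by a finite family of binary indicators, one per triple (input $i$, neuron $v$, candidate time $t \in \bigcup_j \calI^{(i)}_j$), whose values jointly determine every internal and output spike train. The key geometric fact is that inside any open region of $\mathbb{R}^s$ where all these indicators are constant, the membrane potential $P$ at each of the triples above is an affine function of the weights incident to the corresponding neuron: within such a region the set of past contributing spikes, the reset events, and the associated exponential decay factors in \eqref{eq:potential_definition} are all frozen. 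Hence the boundaries on which an indicator can flip are locally hyperplanes in $\mathbb{R}^s$.

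I would count the cells of this (recursively generated) arrangement layer by layer. Conditional on all spike indicators at layers below $\ell$ being fixed, the indicators at layer $\ell$ are determined by threshold tests of affine forms in the $s_\ell \coloneqq \|W_\ell\|_0$ weights of that layer. Enumerating (layer-$\ell$ neuron, input $i$, observation time, most-recent-reset time)-quadruples yields at most $N_\ell \leq c \, p_\ell \sum_{i=1}^n (T_i+1)^2 \leq c \, s \, T_{\operatorname{sum}}$ such tests for a universal constant $c$, where the factor $(T_i+1)^2$ captures both the $T_i$ observation times and the $T_i$ possible reset positions per observation time. Sauer's lemma then bounds the number of layer-$\ell$ extensions by $(eN_\ell/s_\ell)^{s_\ell}$. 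Since $s_\ell \geq 1$ and $\sum_\ell s_\ell = s$, the product over layers satisfies $\prod_\ell (eN_\ell/s_\ell)^{s_\ell} \leq \prod_\ell (eN_\ell)^{s_\ell} \leq (ecs\,T_{\operatorname{sum}})^s$. Multiplying by $(es)^s$ and absorbing constants (choosing $c$ so that $e^2 c \leq 8e$) delivers the claimed bound $(8es^2\,T_{\operatorname{sum}})^s$.

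The principal obstacle is that, globally, the decision surfaces are piecewise linear rather than linear, so a one-shot Warren--Milnor argument in $\mathbb{R}^s$ is not available: the hyperplane arrangement governing layer $\ell$'s decisions depends on the outcomes of prior-layer decisions. The layer-wise recursion circumvents this by freezing all earlier weights before invoking Sauer's lemma at each layer, which restores global affine dependence in the remaining weights. A secondary subtlety, and the source of the quadratic $(T_i+1)^2$ factor rather than a linear $T_i$ one, is that each membrane-potential observation must be indexed both by its time and by the position of the most recent reset, since the reset epoch itself varies with the weights and thus contributes another $T_i$-ary choice to the count $N_\ell$.
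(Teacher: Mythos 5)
Your proposal is correct and follows essentially the same route as the paper: a decomposition over sparsity patterns (contributing the $s^{2s}$-type factor), a layer-by-layer induction that freezes earlier layers to restore affine dependence on the current layer's weights, the key observation that each membrane-potential value is a linear form in the incident weights indexed by an (observation time, last-reset time) pair — which is exactly the paper's Lemma \ref{lem:meta_potential_formula}, Lemma \ref{lem:identifiability}, and the source of the $(T_i+1)^2$ factor — and a hyperplane-arrangement/Sauer count of the resulting ternary sign patterns as in Proposition \ref{thm:single_neuron_expressivity}. The only looseness is in tracking the absolute constants (the tests are ternary, so each potential vector contributes two hyperplanes), but this is absorbed exactly as you indicate.
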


If $\II$ consists of a single spike train $(d = 1)$ with $m\in \NN$ spikes, the bound simplifies to $\big(8 e s^2 (m+1)^2\big)^{s}$. This confirms that the number of distinct functions grows at most polynomially in $m$ and exponentially in $s$. Likewise, if $\II$ contains multiple spike trains, the expressiveness scales polynomially in $n$ and the total number of spikes among $\calI_1, \dots, \calI_d$, but it scales exponentially in the number of weights. 
The proof of \Cref{thm:SNN_expressivity} is detailed in \Cref{subsec:proof:SNN_expressivity}. 

We now prove the lower bound on the required number of SNN parameters for single-input universal representation.

\begin{proof}[Proof of \Cref{cor_lo_numpara_train}]
By the monotone scaling property, it suffices to consider the input spike train $\calI := [m]$.
For every $\calO \subseteq [m]$ with $|\calO| \leq r$, there exists $F \in \FF_{\mathrm{fin}}(m,r)$  such that $F(\calI) = \calO$.
This means that the cardinality of possible output spike trains of feedforward SNNs in $\SNN_{h}(L,\bp,s)$ with input $\calI$ should be at least $ \sum_{k=0}^r \binom{m}{k}$.  
    By applying Theorem~\ref{thm:SNN_expressivity} with $d=p_L=1$, $\bbI = \{ \calI \}$ and $T_{\operatorname{sum}} = (m+1)^2$, we get
    $$\big(8 e s^2 (m+1)^2\big)^{s} \geq \left|\big\{f(\calI): f \in \SNN_h(L, \mathbf{p}, s)\big\}\right| \geq \left|\big\{\calO \subset [m], |\calO| \leq r \big\}\right| = \sum_{k=0}^r \binom{m}{k} \geq 2^r,$$
    where we used $\tbinom{m}{k} \geq \tbinom{r}{k}$ for the last inequality. This yields $s\in \NN$, and since $8e\leq 21.75<25=5^2$, we have 
        $s \geq \frac{r}{\log(8e s^2 (m+1)^2)} \geq \frac{r}{2 \log(5 s (m+1))}.$ 
    We thus arrive, since $r\leq m$, at
    \begin{align*}
        s &\geq \frac{m+1}{5} \mathds{1}\left(s\geq \frac{m+1}{5} \right)  + \frac{r}{2 \log\big(5 s (m+1)\big)} \mathds{1}\left(1 \leq s < \frac{m+1}{5} \right)\\
        &\geq \frac{m+1}{5} \mathds{1}\left(s\geq \frac{m+1}{5} \right)  + \frac{r}{4 \log (m+1)} \mathds{1}\left(1 \leq s < \frac{m+1}{5} \right)\\
        &\geq \frac{m+1}{5} \land \frac{r}{4 \log (m+1)}\geq \frac{r}{5 \log (m+1)}.     
    \end{align*} 
    The assertion for $\FF_{\mathrm{per}}(m,r)$ can be obtained in the same way.
\end{proof}

\begin{proof}[Proof of \Cref{cor_lo_numpara_functional}]
We consider the set of input spike trains 
    $$\II := \Big\{(\calI_1, \ldots, \calI_d) \in \TTT{[m]}^d: \textstyle\bigcup_{j=1}^d \calI_j = [m]\Big\}.$$ 
If $(\calI_1, \ldots, \calI_d) \in \II$, 
then for each $t \in [m]$ at least one of the input spike trains $\calI_1,\ldots, \calI_d$ should include $t$, and hence the number of possible values of $(\mathds{1}(t \in \calI_j))_{j \in [d]}$ is $2^d-1$.
This yields $|\II| = (2^d-1)^m$.
    By applying Theorem \ref{thm:SNN_expressivity} with $d_0=d$, $d_L=1,$ and
    $T_{\operatorname{sum}} = \sum_{\calI \in \bbI} (m+1)^2 = (2^d-1)^m (m+1)^2,$
    we get
    $$\Big|\big\{f\colon \bbI \to \mathbb{T}, f \in \SNN_h(L,\mathbf{d}, s)  \big\}\Big| \leq \big(8 e s^2 (2^d-1)^m (m+1)^2 \big)^{s}.$$
    Let $a(r) := r \land (2^{d}-1)^{m}$. According to Lemma \ref{lem:cardinality_causal_functions_2}  below  we have 
    $$\Big|\big\{\FF : \II \to \TT, (\calI_1, \ldots, \calI_d) \mapsto F(\calI_1, \ldots, \calI_d) \;\colon\;  F \in \FF_{\mathrm{mm}}(d,m,r)\big\}\Big| \geq 2^{a(r)}.$$
    Hence, for universal representation it is necessary that 
    $s \log(8e s^2 (2^d-1)^m (m+1)^2) \geq a(r)$, which implies $s\in \NN$ by $d \geq 2$. 
    We thus conclude, since $\log a(r) \leq dm$, that
    \begin{align*}
        s &\geq \frac{a(r)}{\sqrt{8e}(m+1)} \mathds{1}\left( s \geq \frac{a(r)}{\sqrt{8e}(m+1)} \right)
        + \frac{a(r)}{\log\big(8e s^2 (2^d-1)^m (m+1)^2 \big)} \mathds{1}\left( 1\leq s < \frac{a(r)}{\sqrt{8e}(m+1)} \right) \\
&\geq \frac{a(r)}{\sqrt{8e}(m+1)} \mathds{1}\left( s \geq \frac{a(r)}{\sqrt{8e}(m+1)} \right)
        + \frac{a(r)}{md + 2\log a(r)} \mathds{1}\left( 1\leq s < \frac{a(r)}{\sqrt{8e}(m+1)} \right) \\
        &\geq \frac{a(r)}{\sqrt{8e}(m+1)}
        \land \frac{a(r)}{md + 2\log a(r)}\geq \frac{a(r)}{5md}.      &\qedhere
    \end{align*}
\end{proof}

\begin{lemma}\label{lem:cardinality_causal_functions_2}
    For $d,m \in \NN$ with $d \geq 2$, the input spike train class
    $\II := \{(\calI_1, \ldots, \calI_d) \in \TTT{[m]}^d: \bigcup_{j=1}^d \calI_j = [m]\},$
    and any     
    $r \in [2^{md}]$, 
    $$\Big|\big\{F : \II \to \TT, (\calI_1, \ldots, \calI_d) \mapsto F(\calI_1, \ldots, \calI_d), F \in \FF_{\mathrm{mm}}(d,m,r)\big\}\Big| \geq 2^{r} \land 2^{(2^{d}-1)^{m}}.$$ 
\end{lemma}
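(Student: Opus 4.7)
The plan is to exhibit at least $N \coloneqq 2^{r \land (2^d-1)^m}$ functions in $\FF_{\mathrm{mm}}(d,m,r)$ whose restrictions to $\II$ are pairwise distinct. Write $k \coloneqq r \land (2^d-1)^m = r \land |\II|$ and fix an arbitrary subset $B \subseteq \II$ with $|B|=k$. For every $S \subseteq B$ I will construct a function $F_S \in \FF_{\mathrm{mm}}(d,m,r)$ so that the map $S \mapsto F_S|_{\II}$ is injective, yielding $2^k = N$ distinct restrictions.

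The function $F_S$ is defined by pattern-matching on the last $m$ spikes. Given $(\calJ_1, \dots, \calJ_d) \in \TT^d$, set $\calJ \coloneqq \bigcup_{i \in [d]} \calJ_i$ with ordered elements $\calJ_{(1)} < \calJ_{(2)} < \dots$, and for each $n \in \{m, m+1, \dots\} \cap [|\calJ|]$ define the \emph{canonical pattern at position $n$} as $\Phi_n(\calJ_1, \dots, \calJ_d) \coloneqq (\calI_1^{\ast}, \dots, \calI_d^{\ast})$ with $\calI_i^{\ast} \coloneqq \{k \in [m] : \calJ_{(n-m+k)} \in \calJ_i\}$. By construction $\bigcup_i \calI_i^{\ast} = [m]$, so $\Phi_n(\calJ_1, \dots, \calJ_d) \in \II$. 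I then set
\[
F_S(\calJ_1, \dots, \calJ_d) \coloneqq \{\calJ_{(n)} : n \geq m,\; \Phi_n(\calJ_1, \dots, \calJ_d) \in S\}.
\]

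Next, I verify the required properties of $F_S$. The function is \inputdominated because the output is contained in $\calJ$; causality and the monotone scaling property follow since $\Phi_n$ uses only the ordinal structure of the input spikes. For the $m$-Markovian property, fix $t \in \NN$ and $(\calI_1, \dots, \calI_d) \in \TTT{\NN}^d$ with $\bigcup_i \calI_i \cap (0,t] = [t]$; then $t \in F_S(\calI_1, \dots, \calI_d)$ iff $t \geq m$ and $\Phi_t(\calI_1, \dots, \calI_d) \in S$, and this pattern depends only on the spike configuration in $\{t-m+1, \dots, t\}$. Applying $\Phi_m$ instead to the truncated inputs $(\calI_i \cap (t-m,t])_{i \in [d]}$ yields the identical element of $\II$, so the Markovian identity holds. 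For $r$-sparsity, observe that if $(\calI_1, \dots, \calI_d) \in \TTT{[m]}^d$ satisfies $\bigcup_i \calI_i = [m']$ with $m' \in [m]$ and $F_S(\calI_1, \dots, \calI_d) \neq \emptyset$, then necessarily $m' = m$, in which case $\Phi_m(\calI_1, \dots, \calI_d) = (\calI_1, \dots, \calI_d)$; the collection of such inputs therefore equals $S$, of cardinality at most $k \leq r$.

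Distinctness on $\II$ is straightforward: for $S_1 \neq S_2 \subseteq B$, pick $(\calI_1, \dots, \calI_d) \in S_1 \triangle S_2 \subseteq \II$; since $\calJ = [m]$ and $\Phi_m(\calI_1, \dots, \calI_d) = (\calI_1, \dots, \calI_d)$, exactly one of $F_{S_1}, F_{S_2}$ evaluated on this input returns $\{m\}$ while the other returns $\emptyset$. The main obstacle in the argument is the $m$-Markovian verification: the defining identity is phrased only for integer-grid inputs with saturated support on $(0,t]$, so one has to check carefully that $\Phi_n$ is insensitive to truncation of the input to the window $(t-m,t]$. The remaining properties are bookkeeping.
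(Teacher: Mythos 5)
Your proposal is correct and follows essentially the same route as the paper: both arguments identify each function in $\FF_{\mathrm{mm}}(d,m,r)$ with the set of input patterns in $\II$ on which it fires at time $m$, and count the admissible sparse patterns to get $2^{r}\land 2^{(2^d-1)^m}$. The only difference is that you explicitly construct the pattern-matching function $F_S$ realizing a given firing set and verify its Markovianity and sparsity, whereas the paper asserts this existence step without detail (and counts all $\binom{(2^d-1)^m}{\le r}$ patterns rather than subsets of a fixed $k$-element set, which yields the same bound).
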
  

The proof of \Cref{lem:cardinality_causal_functions_2} is deferred to \Cref{subsec:app:proof_cardinality}. 

We finally show that lower bounds on the number of required nonzero weights in an SNN can be directly translated to lower bounds on the number of required neurons.

\begin{lemma}\label{prop:connection_neuron_lower_bound}     
     Given a feedforward SNN with architecture $(L,\bp= (p_0,\ldots,p_L))$ and number of (non-zero) network weights $s$.
\begin{enumerate}
    \item If $L \in \{1,2\}$, then $\|\bp\|_1 > s/(p_0+p_L)$.
    \item If $L \geq 3$, then $\|\bp\|_1 > \sqrt{2s}$. %
    \item If the number of outgoing edges from each neuron (out-degree) is upper bounded by $r$, then $\|\bp\|_1 > s/r$.
    \end{enumerate}
\end{lemma}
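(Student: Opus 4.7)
My plan is to treat the three claims separately; each reduces to a short counting argument that first bounds $s$ in terms of the widths $p_\ell$, and then uses $p_\ell \in \NN$ (so $p_\ell \geq 1$) to upgrade the resulting $\geq$ into the strict $>$.

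For part (iii), I would first observe that every non-zero weight is an outgoing edge of some neuron in one of the layers $0,1,\dots,L-1$, so the out-degree constraint immediately yields $s \leq r\sum_{\ell=0}^{L-1} p_\ell$. Since $p_L \geq 1$ contributes to $\|\bp\|_1$ but not to the right-hand side, I get $\|\bp\|_1 \geq 1 + \sum_{\ell=0}^{L-1} p_\ell > s/r$.

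For part (i), I would split by $L$. For $L=1$, the only weight matrix has at most $p_0 p_L$ entries, so by AM--GM $s \leq p_0 p_L \leq (p_0+p_L)^2/4$; for $s \geq 1$ this is strictly less than $(p_0+p_L)^2$, and the $s=0$ case is trivial. For $L=2$, the total number of weights is bounded by $p_0 p_1 + p_1 p_L = p_1(p_0+p_L)$, giving $p_1 \geq s/(p_0+p_L)$; adding the strictly positive contribution $p_0+p_L$ to $p_1$ produces the claim $\|\bp\|_1 = p_0+p_1+p_L > s/(p_0+p_L)$.

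For part (ii) with $L\geq 3$, the plan is to expand $\|\bp\|_1^2 = \sum_{\ell=0}^L p_\ell^2 + 2\sum_{i<j} p_i p_j$ and note that $\sum_{\ell=1}^L p_{\ell-1}p_\ell$ picks out only the \emph{consecutive} pairs among the cross terms, so $2s \leq 2\sum_{\ell=1}^L p_{\ell-1}p_\ell \leq 2\sum_{i<j} p_i p_j = \|\bp\|_1^2 - \sum_\ell p_\ell^2$. Since $p_\ell\geq 1$ forces $\sum_\ell p_\ell^2 \geq 1$, this rearranges to $\|\bp\|_1^2 > 2s$, i.e., $\|\bp\|_1 > \sqrt{2s}$. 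No step here is deep; the only recurring subtlety is turning a weak inequality into a strict one, which is always rescued by the fact that the widths are positive integers, so either an excluded layer ($p_L$) or a non-consecutive cross-pair contributes at least one unit.
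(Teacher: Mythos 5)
Your proposal is correct and follows essentially the same route as the paper: in each case you bound $s$ by the total number of available weights and compare this to $\|\bp\|_1$ (or $\|\bp\|_1^2$), using positivity of the widths to obtain strictness. The only cosmetic difference is that you are slightly more explicit about where the strict inequality comes from (AM--GM for $L=1$, the squared terms $\sum_\ell p_\ell^2$ for $L\geq 3$), which the paper leaves implicit.
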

\begin{proof}
Assertion $(i)$ follows from $s \leq p_0 p_1 \leq (p_0+p_1)^2$ for $L=1$ and $s \leq p_0 p_1 + p_1 p_2 < (p_0+p_2)(p_0+p_1+p_2)$ for $L=2$. Assertion $(ii)$ is a consequence of $2s \leq 2p_0 p_1 + \ldots + 2p_{L-1} p_L < (p_0+\ldots+p_L)^2.$ Finally, Assertion $(iii)$ follows from    
    $s \leq p_0 r + p_1 r + \ldots + p_{L-1} r < r (p_0+\ldots+p_L)$.
\end{proof}

\appendix
\allowdisplaybreaks

\section{Proofs on Properties of Spiking Neural Networks}

\subsection{On the Definition of the Membrane Potential}\label{app:proofs_section2}

\begin{proof}[Proof of \Cref{prop:potential_well_defined}]
    Define $\calI \coloneqq \calI_1 \cup \dots \cup \calI_d$ and %
    let $\calI_{(|\calI|+1)}\coloneqq \infty$ if $|\calI| < \infty$. %
    
    We prove both claims at once by mathematical induction on the intervals $[0,\calI_{(k)})$. First consider $k = 1$. For $t = 0\in [0, \calI_{(1)})$ we know by definition that $P(0)=0$ and $\calS\cap \{0\} = \emptyset$. For $t\in (0, \calI_{(1)})$, we have $\prev{t} = 0$ and hence $ P(t) = ( \mathds{1}(P(0)\leq 1) P(0)\exp(-t/h))_+ = 0$ which confirms that $P$ is well-defined on $[0, \calI_{(1)})$ and that $\calS\cap [0, \calI_{(1)}) = \emptyset$,  proving the base case. 

    Now, suppose that the claim holds on $[0, \calI_{(k)})$ for some $k\in [|\calI|]$. We need to show that it also holds for $t\in [0, \calI_{(k+1)})$. To this end, first let $t= \calI_{(k)}$ which implies $\prev{t} = \calI_{(k-1)}$. Then, by the induction hypothesis, $P(\prev{t}) = P(\calI_{(k-1)})$ is well-defined, and thus %
    \begin{align*}
        P(\calI_{(k)}) &= \left( \mathds{1}( P(\calI_{(k-1)})\leq 1) P(\calI_{(k-1)})\exp\big(-(\calI_{(k)}-\calI_{(k-1)})/h\big)   + \sum_{j : \calI_{(k)} \in \calI_j} w_j\right)_{+},
    \end{align*}
    is also well-defined. Moreover, for $t\in (\calI_{(k)}, \calI_{(k+1)})$, we have $\prev{t} = \calI_{(k)}$, and hence 
    \begin{align*}
        P(t) &= \mathds{1}(P(\calI_{(k)})\leq 1) P(\calI_{(k)})\exp\big(-(t-\calI_{(k)})/h\big)
    \end{align*}
    is well-defined. In particular, if $P(\calI_{(k)}) >1$, then $\calI_{(k)} \in \calS$ and thus $P(t) = 0$ for $t\in (\calI_{(k)}, \calI_{(k+1)})$, consequently, $\calS \cap [\calI_{(k)}, \calI_{(k+1)}) \subseteq \{\calI_{(k)}\}$. Meanwhile, if $P(\calI_{(k)})\leq 1$, then $\calI_{(k)} \not\in \calS$ and $P(t) = P(\calI_{(k)}) \exp(-(t-\calI_{(k)})/h)\leq P(\calI_{(k)})\leq 1$, implying that $\calS\cap [\calI_{(k)}, \calI_{(k+1)}) = \emptyset$. This concludes the induction step and the proof. 
\end{proof}

\subsection{Casting SNNs as Recurrent ANNs with State-Space Block}\label{app:SNNasANNs}

 For the representation of the SNN layer in terms of an equivalent recurrent ANN with state block we make the following conventions. In the SNN layer, the input consists of spike trains $\calI_1, \dots, \calI_{p_{\operatorname{in}}}$ which are all dominated by $\calN_{\delta}$. 
For the ANN, the different input spike trains are encoded as a sequential input vector $(\bx^{(t)})_{t \in \mathbb{N}}$ with
    $$\bx^{(t)} := (\mathds{1}(t\delta \in \calI_j))_{j \in [p_{\operatorname{in}}]}\in \{0,1\}^{p_{\operatorname{in}}}, \quad t\in \NN.$$     
Conversely, the sequential input vector $(\bx^{(t)})_{t \in \mathbb{N}}$ can be used to reconstruct the input spike trains via $$\calI_j = \{\delta t: t \in \mathbb{N}, x_j^{(t)} = 1\} \quad \text{for }j \in [p_{\operatorname{in}}].$$

To model an SNN layer $\boldsymbol{\psi}_{h,W}: \mathbb{T}^{p_{\operatorname{in}}} \to \mathbb{T}^{p_{\operatorname{out}}}$ with $h\in [0,\infty]$ and weight matrix $W := (\mathbf{w}_1, \ldots, \mathbf{w}_{p_{\operatorname{out}}})^{\top} \in \RR^{p_{\operatorname{out}} \times p_{\operatorname{in}} }$, we introduce a nonlinear state-space blocks. This state-space block consists of $2p_{\operatorname{out}}$ hidden neurons, where for each output neuron, one hidden neuron encodes the potential and the other models the output spike train.  Concretely, it is parametrized by matrices $W'\in \RR^{2p_{\operatorname{out}}\times p_{\operatorname{in}}}, V\in \RR^{2p_{\operatorname{out}}\times 2p_{\operatorname{out}}}$, $U\in \RR^{p_{\operatorname{out}}\times 2p_{\operatorname{out}}}$. Given a sequence of $p_{\operatorname{in}}$-dimensional input vectors $(\bx^{(t)})_{t=1}^{\infty}$, the activations of the values of the hidden neurons vector $\bh^{(t)}$ and the output neurons vector $\bo^{(t)}$ are updated according to 
\begin{align*}
    \bh^{(t)} &:= \bsigma\Big( W'  \bx^{(t)} + V \bh^{(t-1)}\Big),\qquad 
    \bo^{(t)} := U \bh^{(t)}
\end{align*}
where $\bh^{(0)} := \mathbf{0}_{2p_{\operatorname{out}}}$ and the activation function $\bsigma: \mathbb{R}^{2p_{\operatorname{out}}} \to \mathbb{R}^{2p_{\operatorname{out}}}$ acts component-wise and is defined for $i \in [2p_{\operatorname{out}}]$ by 
\begin{align*}
    \sigma_i(z) := \begin{cases}
    \mathds{1}(z>1), & \text{if } i \leq p_{\operatorname{out}}, \\
    z \mathds{1}(0 \leq z \leq 1), & \text{if } i > p_{\operatorname{out}}.
    \end{cases}
\end{align*}   
To imitate the behavior of the SNN layer, we set the parameters as 
\begin{align*}
    W' := \begin{pmatrix}W\\
        W
    \end{pmatrix} ,& \quad
    V := \begin{pmatrix}
    0_{p_{\operatorname{out}}\times p_{\operatorname{out}}} & e^{-\delta/h} I_{p_{\operatorname{out}}} \\
    0_{p_{\operatorname{out}}\times p_{\operatorname{out}}} & e^{-\delta/h} I_{p_{\operatorname{out}}}
    \end{pmatrix}, \quad 
    U := \begin{pmatrix}
    I_{p_{\operatorname{out}}} & 0_{p_{\operatorname{out}}\times p_{\operatorname{out}}}
    \end{pmatrix}.
\end{align*}
We now prove that the $j$-th entry of the output neuron vector $\bo^{(t)}$ of the state-space block models equals one if and only if there is a spike in the $j$-th output neuron of the SNN layer at time $t\delta$. To see this, note that the $j$-th entry for $j \in [p_{\operatorname{out}}]$ of the  hidden neuron activation vector $h^{(t)}$ at time $t$ is given by  
\begin{align*}
    h^{(t)}_{j} &= \begin{cases}
    \mathds{1}(r_j^{(t)} > 1), & j \in [p_{\operatorname{out}}],\\
    r_j^{(t)} \mathds{1}(0\leq r_j^{(t)} \leq 1), & j \in [p_{\operatorname{out}}]+p_{\operatorname{out}},
    \end{cases}
    \quad  r_j^{(t)} := \bw_j^{\top} \bx^{(t)} + e^{-\delta/h} h^{(t-1)}_{j+p_{\operatorname{out}}}.
\end{align*}
Hence, $h^{(t)}_{j+p_{\operatorname{out}}}$ for $j \in [p_{\operatorname{out}}]$ equals the neuronwise potential among each computational unit $P(t)$ in \eqref{eq:potential_definition}. Finally, we get 
$$\bo^{(t)} = (h^{(t)}_{j})_{j\in [p_{\operatorname{out}}]} = \mathds{1}\Big(t\delta \in \phi_{h,\mathbf{w}}(\calI_1, \ldots, \calI_{p_{\operatorname{in}}})\Big).$$
Altogether, this confirms that the SNN layer can be represented by the above recurrent ANN with state-space block. Feedforward SNNs are defined by compositions of SNN layers, while 
recurrent neural networks are defined by compositions of nonlinear state-space blocks. 
We thus confirm that 
any feedforward SNN with $(L,\mathbf{p})$ architecture with $\mathbf{p} := (p_0,p_1,\ldots,p_L)$ can be represented by recurrent neural networks with $L$ state-space blocks, and for $\ell \in [L]$, the $\ell$-th state-space block contains $2p_\ell$ hidden units.

\section{Extensions of the  Single-Input Universal Representation Theorem}\label{sec:representation_spike_trains}

In the following, we derive variations of the single-input universal representation theorem (\Cref{thm:universal_representation_single_input}), by restricting to shallow SNNs, proving the necessity of negative weights, and deriving the universal representation theorem with bounded out-degree.

\subsection{Universal Representation of Shallow Networks}

Regarding the expressive power of shallow SNNs, i.e., one-hidden layer SNNs ($L=2$), we first show that there exist input-output spike train pairs that cannot be represented by shallow SNNs. This represents a fundamental limitation of shallow SNNs, which is different from the classical universal approximation theorems for ANNs. %
Complementary to this result, we then provide sufficient conditions on the input spike train that allow shallow SNNs to represent all dominated output spike trains. Note that this is equivalent to the representation of $m$-finite functions for input spike trains restricted to consist of only $m$~spikes. 

\begin{proposition}\label{prop:onelayer_counter}
    For any finite $h\in (0, \infty)$ there exist two spike trains $\calI, \calO\in \TT$ with $\calO\dom \calI$ such that no shallow SNN $f \in \SNN_h(2,(1, \ell, 1), s)$ with $\ell,s \in \NN$ satisfies $f(\calI) = \calO$. 
\end{proposition}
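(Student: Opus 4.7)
The plan is to give an explicit counterexample. Fix $h\in(0,\infty)$ and choose $\calI=\{t_1,t_2,t_3\}\in\TT$ with $0<t_1<t_2<t_3$ such that, writing $\beta\coloneqq e^{-(t_2-t_1)/h}$ and $\gamma\coloneqq e^{-(t_3-t_2)/h}$, one has $\gamma\leq\beta/(1+\beta)$; this is achievable by taking $t_3-t_2$ sufficiently large relative to $t_2-t_1$ (e.g.\ $t_1=0,t_2=h,t_3$ large). I claim $\calO\coloneqq\{t_1\}\dom\calI$ is not realizable by any $f\in\SNN_h(2,(1,\ell,1),s)$.

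The main step is to enumerate the possible spike patterns of a single hidden SNN unit on input $\calI$ as a function of its incoming weight $v\in\RR$. A direct case analysis of the recursion in \eqref{eq:potential_definition} reveals exactly three admissible patterns: the empty set (for $v\leq 1/(1+\beta)$, including all $v\leq 0$), the singleton $\{t_2\}$ (for $1/(1+\beta)<v\leq 1$), and the full set $\{t_1,t_2,t_3\}$ (for $v>1$). Patterns containing $t_3$ but not $t_2$ are ruled out precisely by the hypothesis $\gamma\leq\beta/(1+\beta)$, which ensures that the membrane potential at $t_3$ cannot strictly exceed $1$ without having already done so at $t_2$. Patterns containing $t_1$ while missing $t_2$ or $t_3$ are forbidden by the post-spike reset: once a unit with $v>1$ spikes at $t_1$, its potential restarts and again reaches $v>1$ at $t_2$ (and then at $t_3$), forcing spikes at every $t_i$.

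Now suppose for contradiction that some shallow SNN with input weights $v_1,\ldots,v_\ell$ and output weights $w_1,\ldots,w_\ell$ satisfies $f(\calI)=\{t_1\}$. The classification above partitions the hidden units into $S_1\coloneqq\{j:v_j>1\}$, $S_2\coloneqq\{j:1/(1+\beta)<v_j\leq 1\}$, and units silent on $\calI$. Since only units in $S_1$ spike at $t_1$, the output spike at $t_1$ forces $\sum_{j\in S_1}w_j>1$. Exactly the same units $S_1$ (and no others) spike at $t_3$, so \eqref{eq:potential_definition} yields
\[
    P_{\mathrm{out}}(t_3) \;=\; \Big(\mathds{1}\!\big(P_{\mathrm{out}}(t_2)\leq 1\big)\, P_{\mathrm{out}}(t_2)\,\gamma \;+\; \sum_{j\in S_1}w_j\Big)_+ \;\geq\; \sum_{j\in S_1}w_j \;>\; 1,
\]
using that the decayed term from $P_{\mathrm{out}}(t_2)$ is non-negative irrespective of whether the output spiked at $t_2$. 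Hence $t_3\in f(\calI)$, contradicting $f(\calI)=\{t_1\}$.

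The delicate part is the exhaustive pattern classification: the threshold $\gamma\leq\beta/(1+\beta)$ is tight, and relaxing it would admit a fourth hidden pattern $\{t_3\}$ whose units (equipped with negative output weights) could in principle cancel the contribution of $S_1$ at $t_3$ and defeat the argument. Once the three-pattern classification is established, the final contradiction is a one-line consequence of the non-negativity of the membrane potential together with the spike-condition at $t_1$.
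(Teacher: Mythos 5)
Your proof is correct and follows essentially the same route as the paper: the same three-spike counterexample with $\calO=\{t_1\}$, the same exhaustive classification of hidden-unit spike patterns into $\emptyset$, $\{t_2\}$, and $\{t_1,t_2,t_3\}$ (the paper pins the gap so that $\gamma=\beta/(1+\beta)$ exactly, whereas you only need the inequality, a harmless mild generalization), and the same final contradiction via non-negativity of the output potential forcing a spike at $t_3$. No gaps.
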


\begin{proof}
    Consider the input spike train $\calI = \{\tau_1,\tau_2,\tau_3\}$ with spiking times $\tau_1 = 1, \tau_2 = 2, \tau_3 = 3 + h\ln(1+e^{-1/h}),$ and define $\phi_1 = e^{-(\tau_2-\tau_1)/h} = e^{-1/h}$ and $\phi_2 = e^{-(\tau_3-\tau_2)/h} = \frac{e^{-1/h}}{1+ e^{-1/h}}$. Let the output spike train be $\calO \coloneqq \{\tau_1\}$.

    To show the assertion denote by $H_1,\ldots, H_\ell$ the neurons in the hidden layer. Each neuron is characterized by the respective weight $w_i$. If $w_i >1$, then the spike train of $H_i$ equals $\calI$. Moreover, if $w_i \in ( 1/(1+\phi_1),1]$, then $H_i$ spikes only at $\tau_2$, since then 
    \begin{align*}
        P(\tau_1) = w_i\leq 1, \quad P(\tau_2) = w_i(1+ \phi_1) > 1, \quad P(\tau_3) = w_i\leq 1,
    \end{align*}
    where $P$ is the membrane potential of $H_i$. Finally, if $w_i \leq 1/(1+ \phi_1)$, then $H_i$ does not spike at all, since 
    \begin{align*}
        P(\tau_1) &= \left(w_i\right)_+\leq 1, \quad
        P(\tau_2) = (w_i)_+(1+ \phi_1) \leq \frac{1+\phi_1}{1+\phi_1} = 1,\\
        P(\tau_3) &= (w_i)_+ (1+ \phi_2 +  \phi_1\phi_2) = (w_i)_+ \left(1+ \phi_1\frac{1+\phi_1}{1+ \phi_1}\right) = (w_i)_+ \left( 1+ \phi_1\right) \leq 1.
    \end{align*}
    Hence, all hidden neurons either admit the spike train $\{\tau_1,\tau_2,\tau_3\}$ or $\{\tau_2\}$ or do not spike at all. In particular, for an output spike at time $\tau_1$, all neurons with spike train $\{\tau_1,\tau_2,\tau_3\}$ must have output weights that accumulate a value greater than one. However, this leads to a spike at time $\tau_3$, which cannot be compensated by the neurons with spike train $\{\tau_2\}$, as they do not contribute to the potential at time $\tau_3$. 
\end{proof}

We now derive a positive statement in the case that the arrival times of the input spike trains are sufficiently homogeneous. 

\begin{proposition}\label{prop:onelayer_positive}
    Let $h\in(0,\infty]$ and $\calI\in\TT$ be a spike train with $m:=|\calI|\in\NN$ spikes. For $m\ge2$ and $h\in (0,\infty)$, set
    \[
        \delta_{\min}:=\min_{k=2,\ldots,m}\big(\calI_{(k)}-\calI_{(k-1)}\big)>0,\qquad
        \delta_{\max}:=\max_{k=2,\ldots,m}\big(\calI_{(k)}-\calI_{(k-1)}\big)>0,\qquad
        q:=e^{-\delta_{\min}/h}.
    \]
    If $h\in (0,\infty)$ (for $h=\infty$, no assumption is made), assume that
    \begin{align}\label{eq:condition_homogenous_spikes}
    	 \delta_{\max}-\delta_{\min}
        <
        h\log\Big(\frac{1-q^{m}}{1-q^{m-1}}\Big).
    \end{align}
    Then, for every output spike train $\calO\dom\calI$, there exists a shallow SNN $f\in\SNN_h(2,(1,m,1),2m)$ such that $f(\calI)=\calO$.
    In particular, \eqref{eq:condition_homogenous_spikes} is met if $\delta_{\max}\le -h\log(1-1/m)$ or in case of constant gap  size $\calI_{(k+1)}-\calI_{(k)}\equiv\delta$ for all $k \in [m-1]$ for some fixed $\delta >0$. 
\end{proposition}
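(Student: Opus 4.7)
The plan is to build, for any prescribed output $\calO\dom\calI$, a shallow SNN via a two-stage construction. \emph{Stage~1:} choose the input-to-hidden weights $w_1,\ldots,w_m$ so that each hidden neuron $H_k$ fires for the \emph{first} time at $\tau_k$ (later spikes of $H_k$ are tolerated). \emph{Stage~2:} solve a triangular linear system for the hidden-to-output weights $o_1,\ldots,o_m$ so that the output neuron receives an aggregated impulse equal to $2\mathds{1}(\tau_j\in\calO)$ at every time $\tau_j$.

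For Stage~1, set $A_k:=\sum_{i=1}^k e^{-(\tau_k-\tau_i)/h}$ with $A_0:=0$, yielding the recursion $A_k=1+\phi_k A_{k-1}$ for $\phi_k:=e^{-(\tau_k-\tau_{k-1})/h}$. A direct induction shows that before any spike of $H_k$, its membrane potential at time $\tau_j$ equals $w_k A_j$ for $j\le k$, so picking any $w_k\in(1/A_k,1/A_{k-1}]$ (with $1/A_0:=\infty$) forces the first spike of $H_k$ to be at $\tau_k$. Writing $\sigma_k\subseteq\{\tau_k,\ldots,\tau_m\}$ for the full spike train of $H_k$, the matrix $M\in\{0,1\}^{m\times m}$ with $M_{kj}:=\mathds{1}(\tau_j\in\sigma_k)$ is upper triangular with unit diagonal, hence invertible. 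In Stage~2, set $o_1:=2\mathds{1}(\tau_1\in\calO)$ and recursively $o_j:=2\mathds{1}(\tau_j\in\calO)-\sum_{k<j} M_{kj} o_k$ for $j\ge 2$, which solves $M^\top o=(2\mathds{1}(\tau_j\in\calO))_{j=1}^m$.

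Because $H_{j-1}$ spikes at $\tau_{j-1}$, the output neuron's most recent input time prior to $\tau_j$ is exactly $\tau_{j-1}$. A short induction on $j$ then yields $P_o(\tau_j)\in\{0,2\}$: when $\tau_j\in\calO$ the aggregated impulse $+2$ forces $P_o(\tau_j)\ge 2>1$ and a subsequent reset; when $\tau_j\notin\calO$ the impulse is $0$ and the carry-over is either a decayed $0$ (previous non-spike case) or $0$ after reset (previous spike case). This shows the output spikes exactly on $\calO$, while using at most $m$ input-to-hidden and $m$ hidden-to-output non-zero weights, i.e.\ at most $2m$ in total.

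The main technical obstacle is Stage~1's feasibility: the interval $(1/A_k,1/A_{k-1}]$ must be nonempty for every $k\in\{2,\ldots,m\}$, which is equivalent to $A_{k-1}(1-\phi_k)<1$. Using $A_{k-1}\le(1-q^{k-1})/(1-q)$ (the maximum attained when all preceding gaps equal $\delta_{\min}$) together with $\phi_k\ge\phi_{\min}:=e^{-\delta_{\max}/h}$, the binding case $k=m$ reduces to a ratio inequality on $\phi_{\max}/\phi_{\min}=e^{(\delta_{\max}-\delta_{\min})/h}$ that is implied by~\eqref{eq:condition_homogenous_spikes}, and monotonicity of the corresponding bound in $k$ controls all smaller $k$ at once. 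For the two ``in particular'' sub-claims: constant gaps give a vanishing left-hand side and a strictly positive right-hand side in~\eqref{eq:condition_homogenous_spikes}; and under $\delta_{\max}\le -h\log(1-1/m)$ one has $e^{\delta_{\max}/h}\le m/(m-1)$, so the required inequality $e^{\delta_{\max}/h}\cdot q<(1-q^m)/(1-q^{m-1})$ reduces, after clearing denominators, to $\sum_{i=1}^{m-1}q^i<m-1$, which holds strictly whenever $q<1$, i.e.\ $\delta_{\min}>0$.
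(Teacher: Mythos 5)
Your proof is correct, and Stage~1 (choosing $w_k\in(1/A_k,1/A_{k-1}]$ so that $H_k$ first fires at $\calI_{(k)}$, with feasibility reduced to $A_{k-1}(1-\phi_k)<1$ via the bound $A_{k-1}\le\sum_{i=0}^{k-2}q^i$ and $\phi_k\ge e^{-\delta_{\max}/h}$) is essentially identical to the paper's argument, including the treatment of the two ``in particular'' cases. Where you genuinely diverge is Stage~2. The paper never computes the later (uncontrolled) spikes of the hidden neurons: it assigns output weight $\pm 2^k$ to $H_k$ according to whether $\calI_{(k)}\in\calO$, and the geometric domination $2^k-\sum_{j<k}2^j=2$ makes the output potential land in $\{0\}\cup[2,\infty)$ for \emph{every} possible pattern of extra hidden spikes. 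You instead observe that once the $w_k$ are fixed the full hidden spike trains $\sigma_k$ are determined, encode them in a unit upper-triangular matrix $M$, and solve $M^{\top}o=(2\mathds{1}(\calI_{(j)}\in\calO))_j$ by forward substitution so that the aggregated impulse at each $\calI_{(j)}$ is \emph{exactly} $2\mathds{1}(\calI_{(j)}\in\calO)$; the same reset/clipping induction then closes the argument. Your route buys exact, bounded-magnitude control of the output potential at the cost of having to know $\sigma_k$ explicitly and of producing real-valued, instance-dependent output weights; the paper's route is oblivious to the hidden spike pattern and yields explicit integer weights, at the cost of exponentially growing magnitudes. Two small points you should make explicit: (a) concluding that $w_k\le 1/A_{k-1}$ prevents spikes at \emph{all} times $\calI_{(j)}$ with $j<k$ (not just $j=k-1$) uses the full monotonicity $A_1<\cdots<A_m$, which does follow since you impose the nonemptiness condition for every $k$, but it deserves a sentence; (b) the case $h=\infty$ (where $q=1$ and your bound $A_{k-1}\le(1-q^{k-1})/(1-q)$ degenerates) should be noted separately --- there $\phi_k\equiv 1$, so $A_{k-1}(1-\phi_k)=0<1$ holds with no assumption, matching the statement.
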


\begin{proof}
    First fix $h\in (0,\infty)$. For $m=1$, the construction involves choosing all weights as either $2$ or $0$ if $|\calO|=1$ or $0$, respectively. For  $m\ge2$, fix $k\in [m]$. Consider a hidden neuron $H$ which such that the connection from input to $H$ is weighted with $w\in \RR$.
    Define $A_1(w):=(w)_+$ and, for $\ell\ge2$,
    \[
        A_\ell(w):=(w)_++e^{-(\calI_{(\ell)}-\calI_{(\ell-1)})/h}A_{\ell-1}(w).
    \]
    This equals the potential of $H$ at time $\calI_{(\ell)}$ provided $H$ has not spiked before.
    Since $\delta_{\min}\le \calI_{(\ell)}-\calI_{(\ell-1)}\le \delta_{\max}$, we have $e^{-(\calI_{(\ell)}-\calI_{(\ell-1)})/h}\le q$ and $1-e^{-(\calI_{(\ell)}-\calI_{(\ell-1)})/h}\le 1-e^{-\delta_{\max}/h}$.
    Hence, $A_\ell(w)\le w\sum_{i=0}^{\ell-1}q^i$ for all $\ell \ge 1$ by a simple induction. Moreover, a straight-forward computation shows by definition of $q$ that Assumption \eqref{eq:condition_homogenous_spikes} is equivalent to  
    $1-e^{-\delta_{\max}/h} < \frac{1-q}{1-q^m}$ and to $(1-e^{-\delta_{\max}/h})\sum_{i=0}^{m-1}q^i<1$.    
    Thus, for $2\le j\le m$,
    \[
        A_j(w)-A_{j-1}(w)
        =w-\bigl(1-e^{-(\calI_{(j)}-\calI_{(j-1)})/h}\bigr)A_{j-1}(w)
        \ge w-(1-e^{-\delta_{\max}/h})\,w\sum_{i=0}^{m-1}q^i
        >0.
    \]
    Therefore, $A_1(w)<\cdots<A_m(w)$. Further, for each fixed $j\in [m]$, the map $w\mapsto A_j(w)$ is continuous, strictly increasing, and satisfies $\lim_{w\searrow 0} A_j(w)= 0$ and $\lim_{w\nearrow \infty} A_j(w)= \infty$. Hence, by the intermediate value theorem there exists a weight $w_k'>0$ such that $A_k(w_k')=1$ while $A_j(w_k')< 1$ for all $j<k$. 
    Hence, by continuity $w\mapsto A_j(w)$ and strict monotonicity, there exists $w_k>w_k'$, with $A_k(w_k)>1$ while $A_j(w_k)<1$ for all $j<k$. For this choice of $w_k$, the neuron $H$ spikes at $\calI_{(k)}$ for the first time. Repeating this construction for $k=1,\ldots,m$ yields $m$ hidden neurons $H_1,\ldots,H_m$ such that $H_k$ first spikes at $\calI_{(k)}$. The same conclusion applies for $h = \infty$ by choosing $w_k \in (\frac{1}{k}, \frac{1}{k-1}]$ without using any assumption on the gap size.
    
    To obtain the output spike train $\calO\dom\calI$, we use the following construction. If $\calI_{(k)}\in\calO$, we connect $H_k$ to the output neuron with weight $2^k$ and otherwise, if $\calI_{(k)}\not\in\calO$, we consider weight $-2^k$. We now prove by induction that this produces $\calO$. For the base case, i.e.,  $k = 1$, note that weight $2$ between neuron $H_1$ and the output neuron triggers an output spike while weight $-2$ results in potential equal to $0$ and does not trigger a spike. For the induction step, assume that the output neuron spikes on $(0, \calI_{(k)})$ exactly at each time points  $\calI_{(j)}$ for $j\in [k-1]$ with  $\calI_{(j)}\in \calO$. If $\calI_{(k)} \in \calO$, then, at time $\calI_{(k)}$, the output neuron receives weights at least $2^k + \sum_{j=1}^{k-1} -2^j = 2^k -2^k + 2 = 2$, resulting in a spike of the output neuron at $\calI_{(k)}$. If $\calI_{(k)} \not\in \calO$, then the output neuron has potential $P(\calI_{(k)})\leq (-2^k + \sum_{j=1}^{k-1} 2^j)_+ = (-2^k + 2^k - 2)_+ = (-2)_+ = 0$ and no output spike is released. This proves the claim by mathematical induction. 

    Finally, if $\delta_{\max}\le -h\log(1-1/m)$, then $(1-e^{-\delta_{\max}/h})m < 1$. Hence, since for any $q\in(0,1)$ it holds that $\sum_{i=0}^{m-1}q^i < m$, we get $(1-e^{-\delta_{\max}/h})\sum_{i=0}^{m-1}q^i < 1$ and thus \eqref{eq:condition_homogenous_spikes} is satisfied. Moreover, if $\calI_{(k+1)}-\calI_{(k)}\equiv\delta$ for all $k\in [m-1]$ and some fixed $\delta>0$, then the assumption holds because the right-hand side of \eqref{eq:condition_homogenous_spikes} is positive.
\end{proof}

\subsection{On the Necessity of Negative Weights in the Universal Representation Theorem}

It is of genuine interest to wonder whether universal representation can hold with only excitatory neurons, that is, only non-negative network weights. We provide a negative answer in Lemma \ref{lem:two_negative_weights}, showing that at least two negative weights are necessary. As a second result, \Cref{thm:log_T_neg_weights} implies that for one input neuron and $T$ output neurons, universal representation can be achieved with at most $ 2\lceil \log(T)\rceil$ negative weights and $O(\log^2(m))$ layer and $O(m\log^2(m))$ neuron and weights. 

\begin{lemma}[Necessity of two negative weights]\label{lem:two_negative_weights}
    Let $h\in (0, \infty)$ and consider the spike train $\calI$ (consisting of $3$ spikes) and output spike train $\calO= \{\calI_{(1)}\}$ from Proposition~\ref{prop:onelayer_counter}. Then, any SNN with memory $h$ mapping $\calI$ to $\calO$ has at least two negative weights.
\end{lemma}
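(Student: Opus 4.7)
The plan is to track by induction on the feedforward depth the set of possible spike trains reachable at each neuron, and to show that the target output $\calO=\{\tau_1\}$ cannot appear unless at least two negative weights are used. Throughout, by \Cref{cor:inputPotential} every neuron's spike train is a subset of $\calI=\{\tau_1,\tau_2,\tau_3\}$, so we always deal with one of eight possible patterns.

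First I would handle the case of zero negative weights. Take any neuron whose presynaptic neurons all have trains in $\mathcal{S}_0:=\{\emptyset,\{\tau_2\},\calI\}$ and whose incoming weights are non-negative, and let $(C_1,C_2,C_3)$ be the total weighted contributions at $(\tau_1,\tau_2,\tau_3)$. Only $\calI$-trains contribute at $\tau_1$ and at $\tau_3$, which forces $C_1=C_3$; and $C_2\ge C_1$ because $\{\tau_2\}$-trains add non-negatively at $\tau_2$. A direct computation of the potential dynamics (mirroring the layer-one analysis in the proof of \Cref{prop:onelayer_counter}) then shows that the resulting train lies again in $\mathcal{S}_0$. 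Induction on the feedforward depth (starting from the input neuron) yields the invariant that, with only non-negative weights, every neuron has a spike train in $\mathcal{S}_0$, and in particular $\{\tau_1\}\notin\mathcal{S}_0$ is unreachable.

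For the case of exactly one negative weight, say into some neuron $N^{*}$, every neuron outside the directed subtree rooted at $N^{*}$ still has only non-negative incoming weights, hence by the previous step its train lies in $\mathcal{S}_0$; in particular this applies to all presynaptics of $N^{*}$. A short case analysis of the potential dynamics at $N^{*}$, split according to which train $s\in\mathcal{S}_0$ the source of the negative weight has, shows that $N^{*}$ cannot realize $\{\tau_1\}$ nor $\{\tau_1,\tau_2\}$; hence $N^{*}$ lies in the enlarged set $\mathcal{S}^{**}:=\mathcal{S}_0\cup\{\{\tau_3\},\{\tau_1,\tau_3\},\{\tau_2,\tau_3\}\}$. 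All neurons downstream of $N^{*}$ receive only non-negative weights, so it remains to verify the closure property: if all presynaptics of a neuron have trains in $\mathcal{S}^{**}$ and the incoming weights are non-negative, then so does the neuron itself. The critical inequality is $C_3\ge C_1$, which holds because every train in $\mathcal{S}^{**}$ containing $\tau_1$ also contains $\tau_3$, while $\{\tau_3\}$ and $\{\tau_2,\tau_3\}$ only contribute at $\tau_3$. This immediately rules out $\{\tau_1\}$ (which would require $C_1>1$ but $P(\tau_3)\le C_2\phi_2+C_3\le1$, forcing $C_3\le1$) and likewise $\{\tau_1,\tau_2\}$. Induction through the downstream layers then gives that every neuron, including the output, has spike train in $\mathcal{S}^{**}$, which does not contain $\{\tau_1\}$.

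The main obstacle is the case analysis at $N^{*}$, where one must verify for each of the three possible trains of the negative-weight source that the potential updates (with the post-spike resets $\mathds{1}(P(\prev{t})\le1)$) preclude both $\{\tau_1\}$ and $\{\tau_1,\tau_2\}$. This uses the specific spacing $\tau_3-\tau_2=1+h\log(1+e^{-1/h})$ from \Cref{prop:onelayer_counter}, which guarantees $\phi_2=e^{-1/h}/(1+e^{-1/h})<1$ and makes the comparisons between the pre- and post-reset potentials strict enough for the closure argument to propagate through the whole network.
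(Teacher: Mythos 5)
Your proposal is correct and follows essentially the same route as the paper: the same invariant set $\{\emptyset,\{\tau_2\},\{\tau_1,\tau_2,\tau_3\}\}$ under purely non-negative weights, and the same reduction of the single-negative-weight case to a source spiking only at $\tau_2$. The only cosmetic difference is that you propagate an enlarged forward-invariant set through the layers, whereas the paper argues backward along a path of neurons spiking at $\tau_1$ to force an output spike at $\tau_3$; both rest on the same observation that the weighted contribution at $\tau_3$ dominates that at $\tau_1$ (note your parenthetical should read $C_3\le P(\tau_3)\le 1$, a lower bound on $P(\tau_3)$, rather than the upper bound you wrote).
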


\begin{proof}
We begin by proving the following claim: In  a single-input feedforward SNN with depth $L \in \NN$, non-negative weights, and input spike train $\calI$, every neuron $N$ has spike train $\calS_N\in\{\emptyset,\{\calI_{(2)}\},\{\calI_{(1)},\calI_{(2)},\calI_{(3)}\}\}$. We prove the assertion by mathematical induction on the SNN layers $\ell \in [L]$. The base case for $\ell = 1$ was already shown in the proof of Proposition~\ref{prop:onelayer_counter}. Now assume the claim holds for all layers up to $\ell-1\ge 1$, and let $N$ be a neuron in layer $\ell$ connected to $n\in \NN$ neurons in layer $\ell-1$ whose incoming weights $v_j$ are all non-negative. Define
    \[
        A := \{j \in [n] \colon \calS_j = \{\calI_{(1)},\calI_{(2)},\calI_{(3)}\}\},\quad
        B := \{j \in [n] \colon \calS_j = \{\calI_{(2)}\}\},\quad
        a := \sum_{j\in A} v_j,\quad
        b := \sum_{j\in B} v_j.
    \]
    Let $P$ be the membrane potential of $N$. Then, $P(\calI_{(1)}) = a$, so $\calI_{(1)} \in \calS_N$ if and only if $a > 1$. Further, 
    upon defining $\phi_1 = e^{-(\calI_{(2)}-\calI_{(1)})/h}$ and $\phi_2 = e^{-(\calI_{(3)}-\calI_{(2)})/h}$, it holds 
    $P(\calI_{(2)}) = \big( \mathds{1}(P(\calI_{(1)})\le 1)\, P(\calI_{(1)})\phi_1 + a + b \big)_+$ and $P(\calI_{(3)}) = \big( \mathds{1}(P(\calI_{(2)})\le 1)\, P(\calI_{(2)})\phi_2 + a \big)_+$.   We now distinguish two cases.

    \emph{Case 1:} If $a\leq 1$, it follows that $P(\calI_{(1)})=a\le 1$, so no spike is triggered at time $\calI_{(1)}$ and \(P(\calI_{(2)}) = a(1+\phi_1) + b\). If $P(\calI_{(2)})\le 1$, then it follows from $1 + \phi_2 + \phi_1 \phi_2 = 1 + \phi_1$ (see Proposition~\ref{prop:onelayer_counter}) and the following computation that $\calS_N=\emptyset$ and 
    \begin{align*}
        P(\calI_{(3)}) &= P(\calI_{(2)})\phi_2 + a = a(1+\phi_1)\phi_2 + b\phi_2 + a = a(1+\phi_1) + b\phi_2\\
        &\le a(1+\phi_1)+b = P(\calI_{(2)})\le 1.
    \end{align*}
   Else, if $P(\calI_{(2)})>1$, then the neuron $N$ spikes at $\calI_{(2)}$. As a result, the potential resets, hence $P(\calI_{(3)})=a\le 1$ and it follows that $\calS_N=\{\calI_{(2)}\}$.

    \emph{Case 2:} If $a>1$, it follows that    
    $P(\calI_{(1)})=a>1$, as well as $P(\calI_{(2)})=(a+b)_+ = a+b>1$, and $P(\calI_{(3)})=a>1$, asserting that $\calS_N=\{\calI_{(1)},\calI_{(2)},\calI_{(3)}\}$. 
    
    This proves the preliminary claim. In particular, no SNN with only non-negative weights can map $\calI$ to $\calO=\{\calI_{(1)}\}$, so any such SNN must have at least one negative weight.

    Now assume, for a contradiction, that there existed an SNN with input $\calI$ and output $\calO$ and exactly one negative weight. Let this weight be the weight from neuron $N_1$ to neuron $N_2$. Since the above claim remains valid for general $a\in\RR$ --  note that we only used $b\ge 0$ -- the same three spike patterns $\{\emptyset,\{\calI_{(2)}\},\{\calI_{(1)},\calI_{(2)},\calI_{(3)}\}\}$ occur for $N_2$ if $N_1$ spikes at all three times or never spikes. Hence, we may assume that $N_1$ has spike train $\{\calI_{(2)}\}$.  Since the output neuron spikes at time $\calI_{(1)}$, there exists a directed path from the input neuron to the output neuron such that at each neuron on this path the total incoming weight from its presynaptic neurons that spike at time $\calI_{(1)}$ exceeds $1$. As $N_1$ only spikes at time $\calI_{(2)}$ it follows that along this path no neuron receives a unique negative contribution at time $\calI_{(1)}$ or $\calI_{(3)}$, and every neuron on the path spikes at time $\calI_{(1)}$ and $\calI_{(3)}$. Hence, the output neuron also spikes at time $\calI_{(3)}$, a contradiction. Concluding,  at least two negative weights are required to realize $\calO$.
\end{proof}

In the following, we prove that we can always find a network with only $2\lceil \log(m) \rceil$ negative weights that takes $m$ input spikes and can represent simultaneously all possible outputs with a single spike.

\begin{theorem}\label{thm:log_T_neg_weights}
    For every $h\in (0, \infty]$ and $m\in \NN$,
    there exists a feedforward SNN $f\in \SNN_{h}(L, \bp, s)$ with one input neuron, $m$ output neurons, memory coefficient $h\in (0,\infty]$, $L = O(1+\log^2(m))$ and  $\|\bp\|_1 +s = O(m(1+\log^2(m)))$ that uses at most $2\lceil \log(m) \rceil$ negative weights and satisfies the following: For any input spike train $\calI \in \TT$ with $|\calI| = m$, the $m$ output spike trains are $\{\calI_{(1)}\}, \{\calI_{(2)}\}, \ldots, \{\calI_{(m)}\}$. 
\end{theorem}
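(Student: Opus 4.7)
The plan is to prove Theorem \ref{thm:log_T_neg_weights} by induction on $m=2^{L}$, where $L=\lceil\log m\rceil$. The base case $m=1$ uses a single output neuron connected to the input by a weight-$2$ synapse, which fires exactly once at $\calI_{(1)}$ and uses no negative weights. For the inductive step from $m/2$ to $m$, I assume a network $f_{m/2}$ satisfying the theorem with at most $2\lceil\log(m/2)\rceil$ negative weights and extend it to $f_m$ by adding exactly two new negative weights, an $O(\log m)$ increase in depth, and $O(m)$ new neurons and positive weights.

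The doubling step proceeds as follows. First, $\mathrm{EVEN}(\calI)$ is computed with a single weight-$1$ threshold neuron (no negative weights), while $\mathrm{ODD}(\calI)=\calI\setminus\mathrm{EVEN}(\calI)$ is computed by a $\mathrm{MINUS}$ neuron contributing the first new negative weight. Feeding $\mathrm{ODD}(\calI)$, which has exactly $m/2$ spikes at the odd positions of $\calI$, into the recursive sub-network $f_{m/2}$ yields $m/2$ single-spike output neurons $N^{\text{odd}}_{k}=\{\calI_{(2k-1)}\}$, which become the odd-indexed outputs $N_{2k-1}$ of $f_m$. For the even-indexed outputs $N_{2k}=\{\calI_{(2k)}\}$, I use the observation that $\calI_{(2k)}$ is the input spike of $\calI$ immediately following $\calI_{(2k-1)}$. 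Using a $\mathrm{REPEAT}_1$-style construction (cf.\ \Cref{subsec:expl:bounded_memory_functions}) built with only positive weights, I produce, for each $k$, a channel carrying the two-spike signal $\{\calI_{(2k-1)},\calI_{(2k)}\}$. A final filtering stage then removes the spurious $\calI_{(2k-1)}$ from every channel, implemented as a shared sub-circuit contributing the second new negative weight.

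The main obstacle is this final filtering step: a naive per-channel implementation would require one inhibitory synapse per channel, yielding $m/2$ new negative weights per induction step and blowing the total budget up to $\Theta(m)$. To stay within the bound $2L$, the filter must be implemented as a single shared sub-circuit in which the only inhibitory input is a single $\mathrm{MINUS}$ edge from $\mathrm{ODD}(\calI)$ into a suppressor neuron whose output is then fanned out to the $m/2$ channels through purely positive synapses. Verifying that this architecture simultaneously suppresses the odd-indexed spikes on all channels while preserving per-channel identity, uniformly for every $h\in(0,\infty]$ and every input timing, is the key technical step, and relies on the strict-threshold convention highlighted in \Cref{rmk:strict_threshold} together with integer weights chosen to match the EVEN-like decay structure. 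Summing over the $L=\lceil\log m\rceil$ induction levels then yields exactly $2L$ negative weights, depth $O(L^{2})=O(\log^{2}m)$, and $O(mL^{2})=O(m(1+\log^{2}m))$ total neurons and weights, as claimed.
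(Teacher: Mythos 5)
Your high-level budget (two negative weights per halving level, $\lceil\log m\rceil$ levels) matches the paper's, but the architecture you propose for spending those two weights does not work, and the gap is exactly at the step you flag as ``the key technical step.'' In this model, inhibition is a property of the \emph{incoming} synaptic weight at the postsynaptic neuron: to cancel the spurious spike at $\calI_{(2k-1)}$ on channel $k$, that channel's neuron must receive a negative contribution to its membrane potential at that time, i.e.\ a negative incoming weight. A shared ``suppressor'' neuron whose output is fanned out to the $m/2$ channels ``through purely positive synapses'' can only \emph{increase} their potentials; it cannot suppress anything. So either each channel gets its own inhibitory edge ($m/2$ negative weights per level, the blow-up you are trying to avoid) or the filtering does not happen. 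The secondary claim that a $\texttt{REPEAT}_1$-style two-spike channel $\{\calI_{(2k-1)},\calI_{(2k)}\}$ can be built ``with only positive weights'' is also unsubstantiated: the paper's \texttt{DELAY}/\texttt{REPEAT} implementations (Lemmas \ref{lem:translate} and \ref{lem:delay_repeat}) all rely on inhibitory reset edges, and without a reset the potential accumulates across the $m$ input spikes for adversarial interarrival times (this is precisely the obstruction behind Proposition \ref{prop:onelayer_counter}).

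The paper's proof (via Lemma \ref{lem:set_operations}) avoids the problem by never creating multi-spike channels that need per-channel inhibition. Each singleton $\{\calI_{(k)}\}$ with $k$ in the upper half of indices is carved out of the odd-position train $\calO$ by a chain of \texttt{EVEN} and \texttt{OR} operations only --- both implementable with non-negative weights (\texttt{EVEN} is the positive-weight output of the \texttt{ODD/EVEN} unit, and \texttt{OR} is a single positive fan-in). Whenever the target's rank in the current set is odd, so that \texttt{EVEN} would discard it, one first \texttt{OR}s in a marker train ($\calE_1$ or $\calO_1$) that inserts exactly one earlier element and flips the parity of the target's rank. The only negative weights are the two \texttt{MINUS} operations per halving level: one to form the odd-position spikes and one to strip off the already-extracted upper half before recursing on the lower half. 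If you want to salvage your induction, you would need to replace your filtering stage by an argument of this kind; as written, the proof has a genuine gap.
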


\Cref{thm:log_T_neg_weights} implies that 
by attaching another layer of a single output neuron to the above SNN, with weight $0$ or $2$, we can represent any output spike train $\calO \dom \calI$ with at most $2\lceil \log(m) \rceil$ negative weights. This construction shows that it is possible to use fewer negative weights than in \Cref{thm:universal_representation_single_input}, while the overall network has depth $O(\log^2(m))$ and $O(m\log^2(m))$ neurons and weights. The proof relies on the following lemma which provides a blueprint for the construction of the SNN in \Cref{thm:log_T_neg_weights}. In particular, by composing suitable functions it utilizes negative weights remarkably efficient. 

\begin{lemma}\label{lem:set_operations}
    Let $m\in \NN$ and recall the \texttt{SKIP}, \texttt{EVEN}, \texttt{OR}, and \texttt{MINUS} function from \Cref{sec:examples_functions}. Then, there exists a compositional function $\bF \colon \TT\to \TT^m$, $\bF = \bF_q \circ \bF_{q-1} \circ \cdots \circ \bF_1$ 
    with $\bF_i\colon \TT^{k_i} \to \TT^{k_{i+1}}$ for some $k_i \in \NN$, where $\bF_i = (\bF_{i, j})_{j=1}^{k_{i+1}}$ with $\bF_{i,j} \in \{\textup{\texttt{SKIP}}, \textup{\texttt{EVEN}}, \textup{\texttt{OR}}, \textup{\texttt{MINUS}}\}$. The function $\bF$ outputs all singleton spike trains of an input $\calI\in \TT$ with $|\calI| = m$, i.e., 
    \begin{align*}
        \bF(\calI) = (\{\calI_{(1)}\}, \{\calI_{(2)}\}, \ldots, \{\calI_{(m)}\}).
    \end{align*}
    Moreover, the composition has at most $2\lceil \log(m) \rceil$ \texttt{MINUS} functions, $q = O(\log^2(m))$ compositions, and $\sum_{i=1}^q k_{i+1}=O(m\log^2(m))$.
\end{lemma}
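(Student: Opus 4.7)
I proceed by strong induction on $m$, targeting the recurrence $T(m) \le T(m/2) + 2$ for the total number of \texttt{MINUS} components, which unrolls to $T(m) \le 2\lceil \log m\rceil$. The base case $m=1$ uses $\bF = \texttt{SKIP}$, requiring no \texttt{MINUS}.

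For the inductive step, set $E \coloneqq \texttt{EVEN}(\calI)$ (requiring no \texttt{MINUS}) and $O \coloneqq \calI \setminus E = \texttt{MINUS}(\calI,E)$, which costs one \texttt{MINUS}. The $m$ singletons of $\calI$ partition as the singletons of $E$, namely $\{\calI_{(2k)}\}$, and the singletons of $O$, namely $\{\calI_{(2k-1)}\}$. A direct recursion applied independently to $E$ and $O$ would yield only $T(m) \le 2T(m/2) + 1$, giving the useless bound $T(m) = O(m)$. The decisive step is to run the two sub-recursions in lockstep and merge each pair of parallel \texttt{MINUS} components into one, using the disjointness identity
\[
(X_E \cup X_O) \setminus (Y_E \cup Y_O) = (X_E \setminus Y_E) \cup (X_O \setminus Y_O),
\]
valid whenever $X_E, Y_E \subseteq E$ and $X_O, Y_O \subseteq O$, owing to $E \cap O = \emptyset$. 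A single merged \texttt{MINUS} then realizes both sub-recursion \texttt{MINUS} steps simultaneously, giving the tight recurrence $T(m) \le T(m/2) + 2$. Final separation of the merged intermediate spike trains into the $m$ individual singleton outputs is performed by \texttt{OR} and \texttt{SKIP} routing, exploiting that every singleton lies in exactly one of $E$ or $O$ so that the two branches' outputs can be disentangled without incurring any further \texttt{MINUS} operations.

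\textbf{Main obstacle.} The principal challenge is that \texttt{EVEN} does not commute with disjoint unions: since the spikes of $E$ and $O$ interleave in time within $\calI$, one generally has $\texttt{EVEN}(X_E \cup X_O) \ne \texttt{EVEN}(X_E) \cup \texttt{EVEN}(X_O)$. The lockstep recursion therefore cannot merge the $E$- and $O$-computations throughout; it must alternate \texttt{MINUS}-merged stages with \texttt{EVEN}-separated stages, inserting \texttt{OR}/\texttt{SKIP} synchronization at every transition. Keeping careful track of which intermediate strand carries $E$-information versus $O$-information is the delicate bookkeeping that inflates the depth to $q = O(\log^2 m)$ and the total width to $\sum_i k_{i+1} = O(m \log^2 m)$, compared to a naive $O(\log m)$-depth recursion tree.
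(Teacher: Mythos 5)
Your proposal has a genuine gap at its central step: the ``lockstep merge'' of the two sub-recursions' \texttt{MINUS} operations cannot be undone with the operations you are allowed to use. A \texttt{MINUS} component outputs a \emph{single} spike train, so after computing $(X_E\cup X_O)\setminus(Y_E\cup Y_O)=(X_E\setminus Y_E)\cup(X_O\setminus Y_O)$ you hold one merged train, but each sub-recursion must next apply \texttt{EVEN} to \emph{its own} part. You claim the parts ``can be disentangled without incurring any further \texttt{MINUS} operations'' via \texttt{OR}/\texttt{SKIP} routing, but \texttt{OR} and \texttt{SKIP} are monotone (they never remove a spike), so they cannot split a union into its disjoint constituents, and \texttt{EVEN} selects by rank \emph{within the merged train}, where the interleaving of the $E$- and $O$-parts is data-dependent --- this is exactly the non-commutativity you yourself flag as the ``main obstacle.'' Your proposed remedy, alternating merged-\texttt{MINUS} stages with separated-\texttt{EVEN} stages, presupposes the separation it is supposed to produce. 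Without a working disentangling step the recurrence $T(m)\le T(m/2)+2$ is unjustified, and the naive recursion you correctly reject ($T(m)\le 2T(m/2)+1$, i.e.\ $\Omega(m)$ many \texttt{MINUS}) is what actually remains.

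The paper's proof avoids this entirely with a different idea: it never merges parallel \texttt{MINUS} operations. Instead, after one \texttt{MINUS} to form the odd-ranked spikes, it isolates each individual singleton $\{\calI_{(k)}\}$ in the upper half of indices using \emph{only} \texttt{EVEN}, \texttt{OR}, and \texttt{SKIP}: before each \texttt{EVEN}, an \texttt{OR} with a fixed precomputed anchor train ($\calE_1$ or $\calO_1$) inserts exactly one spike below $\calI_{(k)}$ whenever the current rank of $\calI_{(k)}$ is odd, forcing it to even rank so that \texttt{EVEN} retains it while halving the set. A second \texttt{MINUS} then strips the processed upper half, and the procedure iterates $\lceil\log m\rceil$ times, giving $2\lceil\log m\rceil$ \texttt{MINUS} in total. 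The insight you are missing is that \texttt{OR} with a suitable auxiliary spike train can steer rank parity, so singleton extraction needs no subtraction at all; that is what breaks the $\Omega(m)$ barrier, not sharing of \texttt{MINUS} gates across branches.
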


    \begin{figure}[t]
    	\centering
\includegraphics[width=\textwidth]{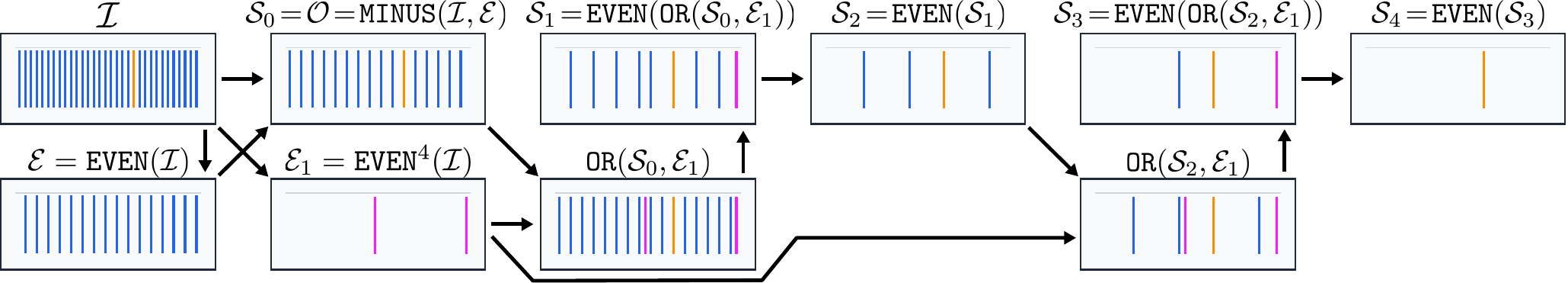}
\caption{\textbf{Procedure from proof of \Cref{lem:set_operations} with respective spike trains to produce $\{\calI_{(k)}\}$ for $k = 21$ and $m = 32$.} The $k$-th spike $\calI_{(k)}$ is depicted in orange, the spikes of $\calE_1$ are depicted in magenta, while all other spikes are shown in blue. The arrows describe how the spike trains are defined in terms of each other. Only a single \texttt{MINUS}-function, jointly with multiple \texttt{SKIP}, \texttt{EVEN}, \texttt{OR} functions, is used to produce spike train $\calS_{4}=\{\calI_{(k)}\}$. 
}\label{fig:neg_functional_representation_architecture}
    \end{figure}

\begin{proof}
    The proof is constructive. Let $\ell := \lceil \log(m) \rceil$. In a first step, we describe how to obtain each of the singletons $\{\calI_{(k)}\}$ for the second-half-indices $k\in \{2^{\ell -1}+1, \ldots, m\}$ using only one \texttt{MINUS} operation. The second step consists of using another \texttt{MINUS} operation to obtain the first-half-indices via $\{\calI_{(1)}, \ldots, \calI_{(2^{\ell -1})}\}=\texttt{MINUS}(\calI, \bigcup_{k=2^{\ell -1}+1}^m \calI_{(k)}).$ After that, we apply the procedure from the first step to the first-half-indices. Repeating the procedure $\lceil \log(m) \rceil$ times gives all singleton spikes while only using $2\lceil \log(m)\rceil$ \texttt{MINUS} functions in~total.
    
    Fix an index $k\in \{2^{\ell-1}+1, \ldots, m\}.$ We first construct the spike train $\{\calI_{(k)}\}$ if $k$ is odd, an exemplary visual illustration of the procedure is detailed in \Cref{fig:neg_functional_representation_architecture}.  
    Apply \texttt{EVEN} to $\calI$ once to obtain the even indices $\calE=\{\calI_{(2)}, \calI_{(4)}, \ldots\}$ and set $\calO:=\texttt{MINUS}(\calI, \calE)=\{\calI_{(1)},\calI_{(3)},\ldots\}$ which are the odd indices (here we use the \texttt{MINUS} operation). By applying the \texttt{EVEN} function $\ell-1$ times to the input spike train $\calI$ we obtain the spike train
    \[
      \calE_1:=\texttt{EVEN}^{\,\ell-1}(\calI)=
      \begin{cases}
        \{\calI_{(2^{\ell-1})}\}, & m<2^{\ell},\\
        \{\calI_{(2^{\ell-1})},\calI_{(2^{\ell})}\}, & m = 2^{\ell}.
      \end{cases}
    \]
    The spike trains $\calE_1$ and $\calO$ are forwarded in the function composition via $\ell-1$ \texttt{SKIP} functions which makes them accessible at subsequent layers. Starting from $\calO$, we now construct a sequence of spike trains containing $\calI_{(k)}.$ The idea is that if $k$ is at a position with even index in the ordered spike train, applying the \texttt{EVEN} function halves the set of indices and keeps $k$. If $k$ is at a position with odd index, we first use the \texttt{OR} function with $\calE_1$ to move $k$ to a position with even index and then apply the \texttt{EVEN} function. This procedure will eventually lead to the singleton $k$. To make this precise, let $\text{rank}_\calS(x)$ be the position of $x\in \calS$ in the increasing list of a finite ordered set $\calS\dom\calI$. Define $\calS_0:=\calO$ and for $t=0,1,2,\ldots, \ell-2$,
    \[
      \calS_{t+1}:=
      \begin{cases}
        \texttt{EVEN}(\calS_t), & \text{if }\text{rank}_{\calS_t}(\calI_{(k)})\text{ is even},\\
        \texttt{EVEN}(\texttt{OR}(\calS_t,\calE_1)), & \text{if }\text{rank}_{\calS_t}(\calI_{(k)})\text{ is odd},
      \end{cases}
    \]
     where $\text{rank}_{\calS_t}(\calI_{(k)})$  is well-defined as we see below.    
    Recall that we consider here the case that $k$ is odd. By induction on $t\in \{0,1,\ldots, \ell-2\}$, we now prove that
    \begin{align*}
        &\calI_{(k)}\in \calS_t, \quad \calI_{(2^{\ell-1})}\notin \calS_t \quad\text{ and } \quad L_t:=|\{x\in \calS_t:\ x \le \calI_{(2^{\ell-1})}\}| \text{ is a power of two.}
    \end{align*}
    In particular, this implies that $\text{rank}_{\texttt{OR}(\calS_t, \calE_1)}(\calI_{(2^{\ell-1})})$ is odd for $L_t \ge 2$. For $t=0$, we have $\calI_{(2^{\ell-1})}\notin \calS_0$, $L_0=2^{\ell-2}$, and $\calI_{(k)}\in \calS_0$, which shows the base case. For the induction step, if $\text{rank}_{\calS_t}(\calI_{(k)})$ is even, then \texttt{EVEN} keeps exactly the even-ranked elements, so $\calI_{(k)}\in \calS_{t+1}$ and $L_{t+1}=L_t/2$. If $\text{rank}_{\calS_t}(\calI_{(k)})$ is odd, then exactly one new element smaller than $\calI_{(k)}$ is added to $\texttt{OR}(\calS_t, \calE_1)$, namely $\calI_{(2^{\ell-1})}$, so $\text{rank}_{\texttt{OR}(\calS_t, \calE_1)}(\calI_{(k)})$ is even and applying the \texttt{EVEN} function keeps $\calI_{(k)}$. Moreover, $\calI_{(2^{\ell-1})}$ has the odd rank $L_t+1$ in $\texttt{OR}(\calS_t, \calE_1)$ and is therefore removed by \texttt{EVEN}. Hence, $L_{t+1}=\lfloor(L_t+1)/2\rfloor=L_t/2$ since $L_{t}\ge 2$ for all $t\in \{0,\ldots, \ell-3\}$ and because $L_t$ is a power of two by induction hypothesis. Thus, in each step $\calI_{(k)}$ remains and $L_t$ halves. %
    Since also $|\calS_{t+1}|\le\lfloor(|\calS_t|+2)/2\rfloor$, it follows that after the $\ell - 2$ iterations we reach $2\le |\calS_t|\le 3$ and $\text{rank}_{\calS_t}(\calI_{(k)}) = 2$ or $\text{rank}_{\calS_t}(\calI_{(k)}) = 3$. If $\text{rank}_{\calS_t}(\calI_{(k)})=2$, then $\texttt{EVEN}(\calS_t)=\{\calI_{(k)}\}$. Otherwise, set $\calS_{t+1}:=\texttt{EVEN}(\texttt{EVEN}(\texttt{OR}(\calS_t,\calE_1)))$ which gives the correct element since $\text{rank}_{\texttt{OR}(\calS_t,\calE_1)}(\calI_{(k)}) = 4$. This ends the case for $k$ being odd.  
    
    Now assume that $k$ is even. The argument is similar, but we have to pay attention to the fact that $\calI_{(2^{\ell-1})}$ might remain in the set after an $\texttt{EVEN}$ operation which leads to an additional invariant in the induction. If $k= m = 2^\ell$, then repeating the $\texttt{EVEN}$ operation, we get $\{\calI_{(m)}\}$. Otherwise, we first apply $(\ell-2)$-times the \texttt{EVEN} function to the odd spikes $\calO,$
    \[
      \calO_1 := \texttt{EVEN}^{\ell-2}(\calO) =
      \begin{cases}
        \{\calI_{(2^{\ell-1}-1)}\}, & m < 2^\ell-1,\\[2mm]
        \{\calI_{(2^{\ell-1}-1)},\calI_{(2^\ell-1)}\}, & m \ge 2^\ell-1.
      \end{cases}
    \]
    Set $\calT_0 := \calE$ and for $t=0,1,2,\ldots$ define
    \[
      \calT_{t+1} :=
      \begin{cases}
        \texttt{EVEN}(\calT_t), &
          \text{if }\text{rank}_{\calT_t}(\calI_{(k)})\text{ is even},\\[1mm]
        \texttt{EVEN}\big(\texttt{OR}(\calT_t,\calO_1)\big), &
          \text{if }\text{rank}_{\calT_t}(\calI_{(k)})\text{ is odd and }
          \calI_{(2^{\ell-1})}\in \calT_t,\\[1mm]
        \texttt{EVEN}\big(\texttt{OR}(\calT_t,\calE_1)\big), &
          \text{if }\text{rank}_{\calT_t}(\calI_{(k)})\text{ is odd and }
          \calI_{(2^{\ell-1})}\notin \calT_t.
      \end{cases}
    \]
    We prove by induction on $t\in \{0,1,\ldots, \ell-2\}$ that $\text{(a) } \calI_{(k)}\in \calT_t$,
    \begin{align*}
      \text{(b) } L_t:=|\{x\in \calT_t: x \le \calI_{(2^{\ell-1})}\}|  \text{ is a power of two},\quad \text{(c) } |\{ \calI_{(2^{\ell-1}-1)},\calI_{(2^{\ell-1})}\}\cap \calT_t| \le 1.
    \end{align*}
    For $t=0$, $\calT_0=\calE$ contains $\calI_{(k)}$, $L_0=2^{\ell-2}$ is a power of two, and (c) holds because $\calI_{(2^{\ell-1}-1)}$ is not in $\calE$ showing the base case. Assume (a), (b) and (c) hold for some $t\in \NN$ for the induction step.
    
    \emph{Case 1:} $\text{rank}_{\calT_t}(\calI_{(k)})$ is even. Then \texttt{EVEN} keeps exactly the even-ranked elements, so $\calI_{(k)}\in \calT_{t+1}$ and (a) holds. Further, $L_{t+1}=L_t/2$ is again a power of two and since we had at most one of $\calI_{(2^{\ell-1}-1)},\calI_{(2^{\ell-1})}$ in $\calT_t$, the same is true in $\calT_{t+1}$.
    
    \emph{Case 2:} $\text{rank}_{\calT_t}(\calI_{(k)})$ is odd and $\calI_{(2^{\ell-1})}\in \calT_t$. By (c) this implies $\calI_{(2^{\ell-1}-1)}\notin \calT_t$. In $\texttt{OR}(\calT_t,\calO_1)$ exactly one new element $\le \calI_{(2^{\ell-1})}$ is added, namely $\calI_{(2^{\ell-1}-1)}$. Since $\calI_{(2^{\ell-1}-1)}<\calI_{(2^{\ell-1})}<\calI_{(k)}$, the rank of $\calI_{(k)}$ in $\texttt{OR}(\calT_t,\calO_1)$ is even, therefore applying the \texttt{EVEN} function keeps $\calI_{(k)}$ and (a) holds for $t+1$. Exactly one of $\calI_{(2^{\ell-1}-1)}$ and $\calI_{(2^{\ell-1})}$ has even rank in $\texttt{OR}(\calT_t,\calO_1)$, so exactly one is left after applying the \texttt{EVEN} function, proving (c) for $t+1$. Moreover, $L_{t+1}=\left\lfloor \frac{L_t+1}{2}\right\rfloor = \frac{L_t}{2}$ (since $L_t\ge 2$ for $t<\ell-2$), because $L_t$ is a power of two.

    \emph{Case 3:} $\text{rank}_{\calT_t}(\calI_{(k)})$ is odd and $\calI_{(2^{\ell-1})}\notin \calT_t$. In $\texttt{OR}(\calT_t,\calE_1)$ exactly one new element $\le \calI_{(2^{\ell-1})}$ is added, namely $\calI_{(2^{\ell-1})}$. Since $\calI_{(2^{\ell-1})}\le \calI_{(k)}$, the rank of $\calI_{(k)}$ in $\texttt{OR}(\calT_t,\calE_1)$ is even, so \texttt{EVEN} keeps $\calI_{(k)}$ and (a) holds. As before, one of the two points $\calI_{(2^{\ell-1}-1)}$ and $\calI_{(2^{\ell-1})}$ in $\texttt{OR}(\calT_t,\calE_1)$ is removed by \texttt{EVEN}, and we still have at most one such point in $\calT_{t+1}$, proving (c). Finally, again $L_{t+1}=\lfloor(L_t+1)/2\rfloor=L_t/2$, so (b) holds. By the same argument as in the odd case, we reach $\{\calI_{(k)}\}$ in $O(\log(m))$ step.
    
    Therefore, besides the single initial \texttt{MINUS} to form $\calO$, every $\{\calI_{(k)}\}$ with $k\in\{2^{\ell-1}+1,\ldots,m\}$ is obtained using only \texttt{EVEN} and \texttt{OR} functions, as well \texttt{SKIP} functions to use the inputs from past layers from. 
    
    Next, to iterate the above procedure, set $\calA_0 := \calI$ and define
    \[
      \calU_0:=\texttt{OR}\big(\{\calI_{(2^{\ell-1}+1)}\},\ldots,\{\calI_{(m)}\}\big),\qquad
      \calA_1 := \texttt{MINUS}(\calA_0, \calU_0).
    \]
    On $\calA_1$ we repeat the above procedure to generate all $\{\calI_{(k)}\}$ for $k\in \{2^{\ell-2}+1, \ldots, 2^{\ell-1}\}$ using only one further \texttt{MINUS} (to extract the spikes at odd positions). Iterating
    \[
      \calA_{i+1} := \texttt{MINUS}\Big(\calA_i, \texttt{OR}\big(\{\calI_{(2^{\ell-i-2}+1)}\},\ldots,\{\calI_{(2^{\ell-i-1})}\}\big)\Big),
    \]
    we obtain all singletons $\{\calI_{(k)}\}$ using at most $2\ell= 2\lceil \log(m) \rceil$ applications of \texttt{MINUS} because in each step, we need one to create odd indices and one to create $\calA_{i+1}$.
    
    The above full procedure takes $O(\log m)$ iterations, and each iterate uses for each individual singleton $O(\log m)$ component functions, so the total number of compositions is $q = O(\log^2(m))$. In the $i$-th iteration, we create at most $m$ singletons, and they need to be forwarded via \texttt{SKIP} operations through at most $\log(m)$ blocks (each block containing the composition of $O(\log(m))$ functions). Hence, the total number of basic operations (including \texttt{SKIP} operations), that is the total number of component functions $\bF_{i,j}$, is bounded by $O(m\log^2(m))$.
\end{proof}

\begin{proof}[Proof of Theorem \ref{thm:log_T_neg_weights}]
    All the operations \texttt{EVEN}, \texttt{OR}, \texttt{SKIP} and \texttt{MINUS} can be implemented by SNNs of bounded size (see Section \ref{sec:examples_functions} or \Cref{app:proofs_section4}) with a single layer. Noting that each use of \texttt{MINUS} requires exactly one negative weight and that all other operations use only non-negative weights, the result follows directly from Lemma \ref{lem:set_operations}. Moreover, in the SNN implementation, $q$ corresponds to the number of layers and the total number of component functions $\bF_{i,j}$ is of the same order as the total number of neurons and non-zero weights, so the claimed bounds follow.
\end{proof}

\subsection{Structural constraint: Limited out-degree}

In this section, we work under the constraint that each neuron can be connected to at most $q\in \NN$ other neurons. Under this assumption, we get the following:

\begin{proposition}[Universal representation with bounded out-degree]\label{thm:universal_representation_spike_trains_bounded_out_degree} 
    Let $m\in \NN, r \in [m],$ $h\in (0,\infty]$, and $q\in \NN$, $q\ge 4$ be the maximum out-degree. Then, there exists an architecture $(L,\bp)$ with $L = O(1+\log(m))$, $\|\bp\|_1 = O(\max\{m/q,\sqrt{m}\})$ and a positive integer $s = O(\max\{m/q,\sqrt{m}\} + r)$ such that for any $F \in \FF_{\mathrm{fin}}(m,r)$ there exists a feedforward SNN $f \in \SNN_{h}(L,\bp,s)$ with memory $h$ and maximum out-degree $q$ such that
    \begin{align}\label{eq:SingleInputRepresentation}
         f(\calI)=F(\calI), \quad \text{for all } \calI \in \TT.
    \end{align}
    If $m = 4^{k}$ for $k\in \NN_0$, and possibly $h = 0$ if $m = 1$, the claim also holds for $F\in \FF_{\mathrm{per}}(m,r)$.
\end{proposition}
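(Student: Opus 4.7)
The plan is to modify the six-step construction from the proof of Theorem \ref{thm:universal_representation_single_input}. In that construction, only the representation block (Step~4) is where out-degree can exceed $O(1)$: there each of the $\sqrt{\overline m}$ neurons $v_{3,i}$ may connect to up to $\sqrt{\overline m}$ neurons $v_{4,j}$. All other building blocks (\texttt{ODD/EVEN}, \texttt{CLOCK}, \texttt{SPIKE}$_1$, \texttt{SKIP}, \texttt{CEIL}$_1$) are implemented via binary-tree-style compositions, so every neuron inside them has out-degree at most $2\le q$; hence they pose no issue.

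I would replace the square-root block size by a parameter $b := 2^{\lfloor \log_2 \min(q, \sqrt{\overline m})\rfloor}$, so that $b$ is a power of $2$ with $b/2 < \min(q,\sqrt{\overline m})$ and $b\le q\wedge \sqrt{\overline m}$, and set $a := \overline m/b$, a power of $2$ satisfying $a\ge \overline m/q$. The assumption $q\ge 4$ guarantees $b\ge 2$ (and is compatible with the periodic case where $\overline m = 4^k$). In Step~1, compose $\log(b)$ \texttt{ODD} functions to obtain $v_{1,1}$ spiking at positions $1, b+1, 2b+1,\ldots$, and \texttt{SKIP}s to produce $v_{1,2}=\calI$. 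In Step~2, apply \texttt{CLOCK}$_a$ to $v_{1,1}$, yielding neurons $\calV_{2,i}$ for $i\in[a]$ with spikes at positions $\{\overline m k + (i-1)b+1 : k\in\NN_0\}$, and \texttt{CLOCK}$_b$ to $v_{1,2}$, yielding neurons $\calV_{2,a+j}$ for $j\in[b]$ with spikes at positions $\{bk+j : k\in\NN_0\}$. Step~3 (\texttt{SPIKE}$_1$ or \texttt{SKIP}) is unchanged.

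Step~4 (representation block) is the key modification: connect $v_{3,i}$ to $v_{4,j}$ with weight $2$ whenever $(i-1)b+j\in F([\overline m])$. Now each source $v_{3,i}$ has out-degree at most $b\le q$, and the total number of weights in this block is bounded by $|F([\overline m])|\le r$. Steps~5 and 6 proceed as before, with $b$ instances of \texttt{CEIL}$_1$ pairing $v_{4,j}$ with $v_{4,a+j}$, and a single output neuron receiving $b\le q$ incoming edges (so each $v_{5,j}$ has out-degree $1$). Correctness follows line by line from the original argument with $b$ replacing $\sqrt{\overline m}$ throughout.

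Counting gives depth $L = O(\log \overline m) = O(1+\log m)$, width $\|\bp\|_1 = O(a+b) = O(\overline m/q + q)$, and $s = O(a + b + r) = O(\overline m/q + q + r)$. When $q\le \sqrt{m}$ we have $m/q\ge \sqrt{m}\ge q$ so the dominant term is $m/q$; when $q\ge \sqrt{m}$ we may simply take $b=\sqrt{\overline m}$ and recover the bound $O(\sqrt{m})$ from Theorem \ref{thm:universal_representation_single_input}. Combining both regimes yields $\|\bp\|_1 = O(\max(m/q,\sqrt{m}))$ and $s = O(\max(m/q,\sqrt{m})+r)$, as claimed. The main subtlety is purely bookkeeping — verifying that $a$ and $b$ can simultaneously be chosen as powers of $2$ dividing $\overline m$ (trivial since $\overline m = 4^{\lceil \log_4 m\rceil}$ is a power of $2$) and that rounding $q$ down to the nearest power of $2$ costs at most a factor of $2$ in the final bounds; no new SNN gadgets are needed.
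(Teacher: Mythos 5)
Your proposal is correct and follows essentially the same route as the paper: replace the $\sqrt{\overline m}\times\sqrt{\overline m}$ coarse/fine splitting by an $(\overline m/b)\times b$ splitting with $b$ a power of two close to $\min(q,\sqrt{\overline m})$, so that the representation block has out-degree at most $q$ and the counts become $O(\max(m/q,\sqrt m))$. The only minor inaccuracy is your claim that the auxiliary gadgets have out-degree at most $2$ --- the paper points out that the \texttt{ODD/EVEN} and \texttt{CEIL}$_1$ modules need out-degree up to $4$, which is exactly why $q\ge 4$ is assumed; this does not affect the validity of your argument.
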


\begin{proof}
    The proof modifies the construction in the proof of \Cref{thm:universal_representation_single_input} that is visualized in \Cref{fig:functional_representation_architecture}.  We rewire the network so that approximately $m/q$ neurons enter the representation block (see \Cref{fig:functional_representation_architecture}), each firing at distinct points only once for finite functions and once in a period $2^{\lfloor \log(q)\rfloor}\leq q$ for periodic functions. The second layer of the representation block consists of $2^{\lfloor \log(q)\rfloor}$ neurons which ensures the out-degree constraint. The connection weights for the modified representation block are determined by $F$, the neurons in the second layer can only spike once within a period $2^{\lfloor \log(q)\rfloor}$, and are subsequently delayed by a $2^{\lfloor \log(q)\rfloor}$-periodic \texttt{CLOCK} module, so the representation of $F$ is ensured.  
    
    Let $\overline m \coloneqq 4^{\lceil \log_4(m) \rceil}\ge m$ as in the proof of \Cref{thm:universal_representation_single_input}. Assume first that $q$ is a power of $2$ and $q\le 2\sqrt{\overline m}$ (if $q>2\sqrt{m}$ we use the architecture from \Cref{thm:universal_representation_single_input}, which already satisfies the out-degree constraint).
    Starting from the input spike train $\calI$, we replace the coarse/fine splitting block by a deeper version. We construct $\overline m/q$ neurons $v_{1,1},\ldots, v_{1,\overline m/q}$ by applying the \texttt{ODD/EVEN} module $\log(q)$ times (instead of $\frac{1}{2}\log(\overline m)$ times) while always keeping the odd output, and then another $\log(\overline m/q)$ layers of \texttt{ODD/EVEN} functions where we keep both outputs. Denoting the spike train of $v_{1,i}$ by $\calV_{1,i}$, this yields %
    $$
      \calV_{1,i} = \big\{\calI_{(\overline m k + (i-1)q + 1)} : k\in \NN_0\big\},
      \quad i\in\big[\overline m/q\big].
    $$

    In parallel, we construct the \texttt{CLOCK}$_{q}$ module consisting of $q$ neurons $u_1,\ldots,u_q$ by keeping a copy of the input for the first $\log(\overline m/q)$ layers and then applying \texttt{ODD/EVEN} blocks $\log(q)$ times, each time keeping both outputs. Writing $\calU_i$ for the spike train of $u_i$, we obtain
    $$
      \calU_i = \big\{\calI_{(\overline m k + i)} : k\in \NN_0\big\},
      \quad i\in[q].
    $$
    This is the coarse- and fine splitting step, see Figure \ref{fig:functional_representation_architecture}. From this point on, the finite/periodic modification and the representation block are constructed analogously to the proof of \Cref{thm:universal_representation_single_input}, with the only difference that the last layer in the representation block now has $q$ neurons (so that the representation block has $O(m)$ weights and each neuron has out-degree at most $q$). The remaining parts of the argument are identical to those in the proof of \Cref{thm:universal_representation_single_input}. This construction creates the output spike train $F(\calI)$ and uses $O(\frac{\overline m}{q}+q) = O(m/q)$ neurons and well as
    $O(\frac{\overline m}{q}+q + r) = O(m/q +r)$
non-zero weights since $q \leq \sqrt{\overline m}$. Note that we impose $q\geq 4$ to ensure that the \texttt{ODD/EVEN} and \texttt{CEIL}$_1$ function can be implemented with maximum out-degree $4$, see the proofs of \Cref{lem:periodic_finite_functions} and \Cref{lem:ceil_snn}. 
    
    If $q$ is not a power of $2$, we can nonetheless apply the construction above for $q' \coloneqq 2^{\lfloor \log(q) \rfloor}\leq q$. The maximum out-degree of the resulting SNN architecture is bounded by $q'\leq q$ and fulfills universal representation of single-input functions. Moreover, the number of neurons is bounded by $O(\max(\frac{m}{q'}, \sqrt{m}))= O(\max(\frac{m}{q}, \sqrt{m}))$ while the number of non-zero weights is bounded by $O(\max(\frac{m}{q'}, \sqrt{m}) + r) = O(\max(\frac{m}{q}, \sqrt{m}) + r)$.
\end{proof}

\section{Explicit Constructions for Efficiently Representable Functions}\label{app:proofs_section4}

In this section, we provide explicit constructions of SNNs for the functions from  \Cref{sec:examples_functions}. %

\subsection{Constructions to Finite or Periodic Single-Input Functions}

We first construct SNNs for the \texttt{SKIP}, \texttt{ODD/EVEN}, \texttt{CLOCK}$_m$ functions. Later in this subsection, we provide a construction for \texttt{SPIKE}$_m$. 

\begin{lemma}[\texttt{SKIP}, \texttt{ODD/EVEN}, \texttt{CLOCK}$_m$ SNNs]\label{lem:periodic_finite_functions}
    Let $h\in (0,\infty]$. Then the following holds. 
    \begin{enumerate}
        \item For any $L\in \NN$, there exists $f \in \SNN_h(L, (1,1,\ldots,1,1), L)$ such that $f(\calI) = \texttt{SKIP}(\calI)$ for any input spike train $\calI \in \mathbb{T}$.
        \item There exists $\bbf \in \SNN_h(2, (1,2,2), 2)$ such that $\bbf(\calI) = (\texttt{ODD}(\calI), \texttt{EVEN}(\calI))\in \TT^2$ for any input spike train $\calI \in \mathbb{T}$.
        \item For any $m = 2^q$ with $q\in \NN_0$ there exists $\bbf \in \SNN_h(2q, (1,2,2,4,4,\ldots,2^q,2^q), 5\cdot 2^q)$ such that $\bbf(\calI) = \texttt{CLOCK}_m(\calI)\in \TT^{m}$ for any input spike train $\calI \in \mathbb{T}$.
    \end{enumerate}
\end{lemma}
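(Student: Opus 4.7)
My plan is to establish each of the three statements by explicit construction, exploiting the strict-threshold convention and the exponential decay of the membrane potential in~\eqref{eq:potential_definition}.

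For part~(i), I will build a depth-$L$ chain of single neurons, each connected to the next by weight~$2$. Since $2>1$, every incoming spike immediately drives the postsynaptic potential past the threshold, triggers an output spike, and resets the potential. By induction on the layer index, the spike train of the $\ell$-th neuron coincides with $\calI$ for every $\ell \in [L]$, yielding $f(\calI)=\texttt{SKIP}(\calI)$.

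For part~(ii), the EVEN output is realized by a hidden neuron with weight~$1$ from the input. By the strict-threshold convention (cf.\ \Cref{rmk:strict_threshold}), the first input spike raises its potential to exactly $1$ and does \emph{not} trigger firing, while the second input spike yields potential $1+e^{-\Delta/h}>1$ for any $h\in(0,\infty]$, which exceeds the threshold and resets the potential. An inductive analysis shows that this hidden neuron fires precisely at the even-indexed inputs, and a weight-$2$ connection forwards this to the EVEN output. The ODD output is then produced by a complementary mechanism, exploiting that $\texttt{ODD} = \texttt{SKIP}\setminus\texttt{EVEN}$: a ``passthrough'' hidden neuron of weight~$2$ emits every input spike, and its signal is combined with a negatively signed contribution from the EVEN hidden neuron so that spikes at even indices are cancelled while those at odd indices survive. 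The argument is uniform in $h\in(0,\infty]$, including the $h=\infty$ case where no decay occurs.

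For part~(iii), I will apply the ODD/EVEN module from (ii) recursively along a balanced binary tree of depth~$q$. The root module splits the input into $(\calO_1^2,\calO_2^2)$; at level $k \geq 2$, the ODD/EVEN module is applied in parallel to each of the $2^{k-1}$ streams produced at level~$k-1$, doubling the number of streams. After $q$ levels, the $2^q = m$ leaves correspond, up to a bit-reversal relabeling made precise via \Cref{lem:odd_even_path}, to the outputs $\calO_1^m, \ldots, \calO_m^m$. Each level contributes depth~$2$ and its widths double, giving the stated architecture $(1,2,2,4,4,\ldots,2^q,2^q)$ of depth $L=2q$. The nonzero weight count sums to $\sum_{k=1}^{q} 2\cdot 2^{k-1} = 2^{q+1}-2 \leq 5\cdot 2^q$, matching the sparsity bound.

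The main obstacle will be part~(ii): constructing the ODD output within the tight sparsity budget while ensuring correctness uniformly in $h\in(0,\infty]$ and across arbitrary interarrival times, since any slack in the cancellation between the SKIP-like and EVEN-like hidden signals must neither spuriously fire at even indices nor suppress firings at odd indices. Once (ii) is secured, part~(iii) follows by a straightforward inductive composition of ODD/EVEN modules, and part~(i) is immediate from the weight-$2$ propagation argument.
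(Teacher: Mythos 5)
Your proposal is correct and follows essentially the same route as the paper: a weight-$2$ chain for \texttt{SKIP}, the weight-$1$ strict-threshold neuron for \texttt{EVEN} combined with a weight-$2$ passthrough and an inhibitory cancellation for \texttt{ODD}, and a depth-$q$ binary tree of \texttt{ODD/EVEN} modules (proved by induction on $q$) for \texttt{CLOCK}$_m$. The only minor discrepancy is your weight count for part~(iii), which tallies $2$ weights per \texttt{ODD/EVEN} module rather than the $5$ the construction actually uses ($2+1$ incoming plus $2,-2,2$ outgoing), but the corrected total $5(2^q-1)$ still satisfies the stated bound $5\cdot 2^q$.
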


\begin{proof}
    
    To verify Assertion $(i)$ consider an SNN with one input neuron, $L-1$ hidden neurons, and one output neuron, each connected in a chain. A construction for $L =1$ is shown in \Cref{fig_skip}$(a)$. All weights are assigned the value $2.$ This ensures that any input spike is transmitted to the output neuron. No membrane potential is built up in any neuron.

    For Assertion $(ii)$, a figure of the SNN is shown in \Cref{fig_skip}$(b)$. We first attach two neurons $N_1$ and $N_2$ to the input neuron with respective weights $2$ and $1$. At every odd spike $\calI_{(2k-1)}$ of the input, the membrane potential of neuron $N_2$ reaches value exactly $1$ and thus does not spike. At the subsequent even spike $\calI_{(2k)}$ of the input, the membrane potential equals $1 + \exp\left(-(\calI_{(2k)} - \calI_{(2k-1)})/h\right) > 1$ and thus neuron $N_2$ spikes. Hence, $N_2$ spikes every second spikes. We then attach a neuron with weight $2$ from $N_1$ and weight $-2$ from $N_2$ to obtain the \texttt{ODD} output (this results in a spike train where the even spikes are subtracted from all input spikes) and a \texttt{SKIP} connection from $N_2$ to obtain the \texttt{EVEN} output. The \texttt{ODD} output spikes at all odd input spikes, $\calI_{(2k-1)}$ for $k \in \NN$, and the \texttt{EVEN} output spikes at even input spikes, $\calI_{(2k)}$ for $k \in \NN$.
    
   Assertion $(iii)$ is proved by induction on the exponent $q\in \NN_0.$ The cases $q=0$ and $q=1$ correspond to the \texttt{SKIP} function and the \texttt{ODD/EVEN} function, both of which have already been established.
    For the induction step, we assume that the claim holds for some $q\in \mathbb{N}$ with $m = 2^q$, and we will show that it then also holds for $q+1$. 
    By the induction hypothesis, there exists an SNN $\bbf_q \in 
    \SNN_h(2q,(1,2,2,4,4,\ldots,2^q,2^q),5\cdot 2^q)$ such that the claim holds. 
    Given an input spike train $\calI$, the output neurons are denoted by $O^m_1, \dots, O^m_{2^q}$ and the corresponding output    
    spike trains are denoted by $\calO_1^m, \dots, \calO_{2^q}^m\subseteq (0, \infty)$, respectively, with 
    \begin{align}\label{eq:clock_induction_hypothesis}
        \calO^m_i := \{\calI_{(r)} : r \equiv i \mod 2^q, r\in \NN_0, r \leq |\calI|\}.
    \end{align}
    We now construct another SNN $\bg_{q+1} \in \SNN_h(2,(2^q, 2^{q+1}, 2^{q+1}), 5 \cdot 2^q)$ with $m = 2^q$ input neurons, corresponding input spike trains $\calO_1^m, \dots, \calO_{2^q}^m$, and $2m = 2^{q+1}$ output neurons $O^{2m}_1, \dots, O^{2m}_{2^{q+1}}$ with spike trains  $\calO_1^{2m}, \dots, \calO_{2^{q+1}}^{2m}$ defined as in \eqref{eq:clock_induction_hypothesis}.

    To this end, we apply an \texttt{ODD/EVEN} SNN to each of the $2^q$ inputs, which results in $2 \cdot 2^{q}$ neurons in the first and the second layer, and $5 \cdot 2^{q}$ non-zero weights. 
    Inserting a spike train $\calO_j^m$ into an \texttt{ODD/EVEN} SNN we obtain two output neurons with spike trains
    \begin{align*}
        \{\calI_{(r)} : r \equiv j \mod 2^{q+1}, r\in \NN_0, r \leq |\calI|\}&=\calO^{2m}_{j},\\
        \{\calI_{(r)} : r \equiv j + 2^q \mod 2^{q+1}, r\in \NN_0, r \leq |\calI|\}&=\calO^{2m}_{j+2^q}.
    \end{align*}
    Thus, $\bg_{q+1}:(\calO_1^m, \dots, \calO_{2^q}^m)\to (\calO_1^{2m}, \dots, \calO_{2^{q+1}}^{2m}).$ By \eqref{property_comp}, $\bbf_{m+1} := \bg_{m+1} \circ \bbf_{m} \in \SNN_h(2m+2,(1,2,2,4,4,\ldots,2^{m+1},2^{m+1}),5\cdot 2^{m+1})$, completing the proof. \qedhere

\end{proof}

One way to explain the construction of the \texttt{CLOCK} module is to think of a rooted binary tree of depth $m$ where each node is replaced by an \texttt{ODD/EVEN} function. The next lemma characterizes how to reach a specific output spike train by following a path in this tree based on the binary expansion of the desired output index.

\begin{lemma}[Bit-path characterization of the \texttt{ODD/EVEN}-tree]\label{lem:odd_even_path}
    Let $\calI \in \TT$. Consider the depth-$q$ binary tree obtained by consecutively applying \texttt{ODD} or \texttt{EVEN} functions to the outputs at every neuron.  Specifically, take a neuron $N$ of the tree at depth $\ell\in [q]$ and define a binary vector $\bb \coloneqq \{0,1\}^\ell$ by following the path from the root to $N$ where an \texttt{ODD} function is coded as $b_\ell=0$ and an \texttt{EVEN} function is coded as $b_\ell=1$. Then, the spike train of $N$ is
    \[
        \Big\{\calI_{(r)} : r-1 \equiv \textstyle \sum_{t=1}^{\ell} b_t 2^{t-1}\!\! \mod 2^{\ell}, r \leq |\calI|\Big\}.
    \]
    In particular, for each $j=1+\sum_{\ell=1}^{q} b_\ell \, 2^{\ell-1}\in [2^q]$, with $b_\ell\in\{0,1\}$ for all $\ell \in [q]$, there exists a unique output neuron with spike train $\{\calI_{(r)} : r \equiv j \mod 2^q, r \leq |\calI|\}$ that can be reached by following the path determined by the bits $b_1, \dots, b_q.$ 
\end{lemma}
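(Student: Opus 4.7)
The plan is to prove the statement by induction on the depth $\ell$ of the neuron $N$ in the binary tree, which is the natural way to unfold a recursive construction built from iterated \texttt{ODD/EVEN} blocks.

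For the base case $\ell=1$, one directly checks the two possibilities. If $b_1=0$ (\texttt{ODD}), then by the definition in Section~\ref{subsec:expl:single_input_functions}, the spike train of $N$ is $\{\calI_{(r)} : r \text{ odd}\}$, matching $r-1\equiv 0 \pmod{2}$. If $b_1=1$ (\texttt{EVEN}), the spike train is $\{\calI_{(r)} : r \text{ even}\}$, matching $r-1\equiv 1 \pmod{2}$.

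For the induction step, assume the claim at depth $\ell-1$ and consider a neuron $N$ at depth $\ell$ with path $\bb=(b_1,\dots,b_\ell)$. Let $s:=\sum_{t=1}^{\ell-1}b_t 2^{t-1}$. By the induction hypothesis, the parent neuron $N'$ has spike train
\[
\calJ=\{\calI_{(r)} : r-1\equiv s \pmod{2^{\ell-1}},\ r\leq|\calI|\},
\]
so the $k$-th spike of $\calJ$ (ordered increasingly in time) is exactly $\calI_{(s+1+(k-1)2^{\ell-1})}$, since all these indices are distinct and correctly ordered. Now apply either \texttt{ODD} or \texttt{EVEN} to $\calJ$: if $b_\ell=0$, the resulting spike train selects $k=1,3,5,\dots$, yielding indices $r$ with $r-1\equiv s \pmod{2^{\ell}}$; if $b_\ell=1$, it selects $k=2,4,6,\dots$, yielding $r-1\equiv s+2^{\ell-1} \pmod{2^\ell}$. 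In both cases, the resulting residue equals $\sum_{t=1}^{\ell}b_t 2^{t-1}$, which is exactly what the lemma asserts.

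The ``in particular'' statement then follows from the bijection between $\{0,1\}^q$ and $[2^q]$ given by $\bb\mapsto 1+\sum_{\ell=1}^{q}b_\ell 2^{\ell-1}$, which shows that each residue class mod $2^q$ is realized by exactly one root-to-leaf path. The only mildly subtle step is the bookkeeping showing that the $k$-th spike of $\calJ$ in time-order has input index $s+1+(k-1)2^{\ell-1}$, but this is immediate from the fact that $r\mapsto \calI_{(r)}$ is strictly increasing; no real obstacle is expected.
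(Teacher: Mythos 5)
Your proof is correct and follows essentially the same route as the paper's: induction on the depth, with the key bookkeeping step identifying the $k$-th spike of the intermediate spike train as $\calI_{(s+1+(k-1)2^{\ell-1})}$ and then tracking how \texttt{ODD} and \texttt{EVEN} update the residue modulo the doubled period. The only difference is cosmetic indexing (you step from depth $\ell-1$ to $\ell$, the paper from $\ell$ to $\ell+1$).
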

\begin{proof}
    We prove the statement by induction on the depth $\ell \in [q]$. For the base case, $\ell = 1$, note that after one application of the \texttt{ODD/EVEN} function, one output contains precisely the spikes at the odd positions of $\calI$, i.e., $\{\calI_{(1)},\calI_{(3)},\ldots \}=\{\calI_{(r)} : r-1 \equiv 0 \mod 2, r \leq |\calI|\}.$ The other output contains precisely the spikes at the even positions, i.e., $\{\calI_{(2)},\calI_{(4)},\ldots \}=\{\calI_{(r)} : r-1 \equiv 1 \! \mod 2, r \leq |\calI|\}.$ This shows the claim for $\ell=1.$

    For the induction step, assume the claim holds for some $\ell\in\{1,\ldots,q-1\}$. Fix bits $b_1,\ldots,b_\ell$ and let
    \[
        i_\ell \coloneqq 1 + \sum_{t=1}^{\ell} b_t 2^{t-1}.
    \]
    By the induction hypothesis, there exists a neuron at depth $\ell$ with spike train $\calI(b_1,\ldots,b_\ell):=\{\calI_{(r)} : r-1 \equiv i_\ell-1 \mod 2^{\ell}, r \leq |\calI|\}.$ For $k=0,1,\ldots$, define $r_k := i_\ell + k\,2^{\ell}$. Apply now an \texttt{ODD/EVEN} function to $\calI(b_1,\ldots,b_\ell).$ The \texttt{ODD} function outputs $\{\calI_{(r_k)}: k \text{\ even}\}$, whereas the \texttt{EVEN} function outputs $\{\calI_{(r_k)}: k \text{\ odd}\}$ (since we start at index $k=0$). Consequently, for the \texttt{ODD} child ($b_{\ell+1}=0$) and $k=0,1,\ldots$ we have
    \[
        r_{2k} - 1 = (i_\ell-1) + (2k)\,2^{\ell} \equiv i_\ell - 1 \! \mod 2^{\ell+1},
    \]
    and for the \texttt{EVEN} child ($b_{\ell+1}=1$) and $k=0,1,\ldots$ we have
    \[
        r_{2k+1} - 1 = (i_\ell-1) + (2k+1)2^{\ell} \equiv (i_\ell - 1) + 2^{\ell}\! \mod 2^{\ell+1}.
    \]
    In either case, the new residual modulo $2^{\ell+1}$ is
    \[
        i_{\ell+1}-1 \coloneqq \Big(1 + \sum_{t=1}^{\ell+1} b_t 2^{t-1}\Big) - 1
        = (i_\ell - 1) + b_{\ell+1} 2^{\ell},
    \]
    verifying the claim for depth $\ell+1$. This completes the induction.
    Taking $\ell=q$ gives that the neuron along the path $b_1,\ldots,b_q$ has spike train $\{\calI_{(r)}: r-1\equiv j-1 \ \mathrm{mod}\ 2^q,\ r\le|\calI|\}$ with $j:=1+\sum_{t=1}^q b_t2^{t-1}$, i.e.\ it outputs exactly the spikes with $r\equiv j\ \mathrm{mod}\ 2^q$.
\end{proof}

\begin{lemma}[\texttt{SPIKE}$_m$ SNNs]\label{lem:drop_spike}
Let $h\in (0,\infty]$. Then, the following holds.
\begin{enumerate}
     \item There exists $f_1 \in \SNN_h(4, (1,2,2,4,4, 5, 1), 36)$ such that $f(\calI) = \texttt{SPIKE}_1(\calI)$ for any input spike train $\calI \in \mathbb{T}$. 
    \item For $m\in \NN,$ there exists an SNN $f_m\in \SNN_{h}(L, \bp, s)$ with architecture satisfying $L= O(1+\log(m))$ and $\|\bp\|_1 +s = O(\sqrt{m})$ such that $f_m(\calI) = \texttt{SPIKE}_m(\calI)$ for any input spike train $\calI \in \mathbb{T}$. %
\end{enumerate}
\end{lemma}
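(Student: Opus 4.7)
The plan is to prove the two parts separately. Part~(i) will be handled by an explicit bounded-size construction, while Part~(ii) is obtained by recursively reducing $\texttt{SPIKE}_m$ to $\texttt{SPIKE}_\ell$ with $\ell=O(\sqrt m)$ using the \texttt{CLOCK}$_b$-module from \Cref{lem:periodic_finite_functions}.

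For Part~(i), the first four hidden layers (of widths $2,2,4,4$) should implement two successive \texttt{ODD}/\texttt{EVEN}-modules, so that at the end of layer~$4$ one has available the four \texttt{CLOCK}$_4$-periodic streams $F_1=\{\calI_{(4k+1)}:k\ge 0\}$, $F_2=\{\calI_{(4k+3)}:k\ge 0\}$, $F_3=\{\calI_{(4k+2)}:k\ge 0\}$ and $F_4=\{\calI_{(4k+4)}:k\ge 0\}$. The crucial structural observation is that $\calI_{(1)}$ is the first spike of $F_1$ and precedes every spike of $F_2,F_3,F_4$, whereas every later spike of $F_1$ is preceded by at least one spike from each of these three streams. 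In the fifth layer (width $5$) I would add auxiliary neurons whose integer weights are tuned so that their combined membrane-potential trace records whether any of $F_2,F_3,F_4$ has already fired, and the single output neuron in layer~$6$ combines these with an excitatory connection from $F_1$ and calibrated inhibitory connections from the auxiliaries so that the strict threshold~$1$ is exceeded exactly at $\calI_{(1)}$. Using integer weights together with the strict-threshold rule, as in the \texttt{ODD}/\texttt{EVEN}-construction from \Cref{lem:periodic_finite_functions}, will ensure that the behavior is independent of $h\in(0,\infty]$, as required by the monotone scaling property.

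For Part~(ii), set $b=2^{\lceil\log \sqrt m\rceil}$ and write $m=b(\ell-1)+j$ with $j\in[b]$ and $\ell=\lceil m/b\rceil$. Because the outputs of \texttt{CLOCK}$_b$ satisfy $\calO_j^b=\{\calI_{(b\ell'+j)}:\ell'\ge 0\}$, one has $\calI_{(m)}=(\calO_j^b)_{(\ell)}$ and hence $\texttt{SPIKE}_m(\calI)=\texttt{SPIKE}_\ell(\calO_j^b(\calI))$. Moreover, only the single stream $\calO_j^b$ is required, so by \Cref{lem:odd_even_path} one can realize it through a single root-to-leaf path in the \texttt{ODD}/\texttt{EVEN}-tree, using $O(\log m)$ neurons, $O(\log m)$ non-zero weights and depth $O(\log m)$. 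Iterating the reduction yields the recursions $T(m)=T(\lceil m/b\rceil)+O(\log m)$ and $L(m)=L(\lceil m/b\rceil)+O(\log m)$, which bottom out at the bounded-size construction from Part~(i) and solve by geometric summation to $T(m)=O(\sqrt m)$ and $L(m)=O(1+\log m)$.

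The main obstacle is pinning down the explicit weight assignments in layers~$5$ and~$6$ of Part~(i). Although the existence of such an SNN is formally implied by \Cref{thm:universal_representation_single_input} applied to the $1$-finite, $1$-sparse function $\texttt{SPIKE}_1$, that theorem itself invokes the present lemma via \Cref{lem:SPIKES_module}, so the construction must be carried out by hand and its correctness verified by tracing the membrane potentials of every neuron against the first several input spikes and checking that the resulting spike pattern is insensitive to the decay constant~$h$.
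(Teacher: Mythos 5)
Your Part (ii) is correct and takes a genuinely different route from the paper: the paper simply invokes \Cref{thm:universal_representation_single_input} (\texttt{SPIKE}$_m$ is $m$-finite and $1$-sparse), whereas you reduce $\texttt{SPIKE}_m$ to $\texttt{SPIKE}_\ell$ along a single root-to-leaf path of the \texttt{ODD}/\texttt{EVEN}-tree. The reduction $\texttt{SPIKE}_m(\calI)=\texttt{SPIKE}_\ell(\calO_j^b(\calI))$ is valid, and your recursion $T(m)=T(\lceil m/b\rceil)+O(\log m)$ actually solves to $O(\log m)$, not $O(\sqrt m)$ — so your construction is in fact \emph{sharper} than the stated lemma (indeed a single path of length $\lceil\log m\rceil$ followed by one $\texttt{SPIKE}_1$ already suffices, no recursion needed). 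This buys a self-contained argument that does not route through the large single-input representation theorem.

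Part (i), however, has a genuine gap: the entire content of that assertion is the explicit weight assignment in the last two layers, and you leave it unspecified. Worse, the mechanism you sketch — auxiliary neurons whose ``membrane-potential trace records whether any of $F_2,F_3,F_4$ has already fired'' — does not work as described. A two-input latch (excitatory weight from a trigger stream and from $F_1$) fails because the residual potential deposited by an $F_1$-spike that did \emph{not} trigger an output persists and, combined with the next trigger spike, pushes the potential strictly above $1$ at the \emph{wrong} time (e.g.\ at $\calI_{(4)}$ rather than $\calI_{(5)}$); after that spurious spike the potential resets and the intended suppression at $\calI_{(5)}$ is lost. The paper resolves exactly this with the three-input gadget of \Cref{lem:translate}, where each neuron $N_i$ receives weights $1,1,-1$ from $O_i,O_{i+1},O_{i+2}$ (cyclically): the third, inhibitory ``clear'' input resets the potential between periods so that $N_1,\dots,N_4$ spike precisely at $\calI_{(2)},\calI_{(3)},\dots$, and a final neuron with weight $2$ from the \texttt{OR}-neuron $N_0$ and $-2$ from each $N_i$ outputs $\{\calI_{(1)}\}$. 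Without this reset stream (or an equivalent device) your layer-$5$/$6$ plan cannot be completed, so Part (i) remains unproven in your proposal — as you yourself concede.
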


    For the proof of \Cref{lem:drop_spike}$(i)$ we rely on the following lemma which we prove at the end of this subsection. %

    \begin{lemma}\label{lem:translate}%
    For $h \in (0, \infty]$, the SNN $f \in \SNN_h(1, (3,1), 3)$ defined by a single output neuron and three input neurons $T,S,C$ with weights $1,1,$ and $-1$ has the following~property:

    For infinite spike trains $\calT, \calS$, $\calC\in \TT$ for $T,S,C$, respectively, which are pairwise disjoint and with convention $\calS_{(0)}, \calC_{(0)}\coloneqq 0$, assume that one of the following conditions is met. 
    \begin{enumerate}
        \item $\calS_{(j)}< \calC_{(j)} < \calS_{(j+1)}$ with $|\calT\cap (\calC_{(j-1)}, \calS_{(j)})|\leq 1$ and $|\calT \cap (\calS_{(j)}, \calC_{(j)})| = 0$ for all $j\in \NN$. 
        \item $\calC_{(j)} < \calS_{(j)} < \calC_{(j+1)}$ with $|\calT \cap (\calS_{(j-1)}, \calC_{(j)})| = 0$ and $|\calT \cap (\calC_{(j)}, \calS_{(j)})| \leq 1$ for all $j\in \NN$.
    \end{enumerate}
    Then, the output spike train of $f$ when applied to the input spike trains $\calT, \calS, \calC$ is given by
    \begin{align*}
        f(\calT, \calS, \calC) = \{\calS_{(j)} : |\calT \cap (\calS_{(j-1)}, \calS_{(j)})|= 1\}.
    \end{align*}  
    \end{lemma}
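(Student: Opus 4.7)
The plan is to track the membrane potential $P$ of the output neuron cycle by cycle, exploiting that the structural assumptions in (i) and (ii) induce a clean reset of $P$ at the end of each cycle. By domination (Proposition \ref{prop:potential_well_defined}), the output can only spike at times in $\calT \cup \calS \cup \calC$, so it suffices to evaluate $P$ at these countably many points via the recursion \eqref{eq:potential_definition}.

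For case (i), I would set $\calC_{(0)}:=0$, partition time into cycles $C_j := (\calC_{(j-1)}, \calC_{(j)}]$, and prove by induction on $j\in\NN$ the joint invariant
\begin{align*}
P(\calC_{(j)}) = 0 \qquad \text{and} \qquad \calS_{(j)} \in f(\calT,\calS,\calC) \;\iff\; |\calT\cap(\calC_{(j-1)},\calS_{(j)})|=1.
\end{align*}
Condition (i) guarantees that the only input spikes in $C_j$ are $\calS_{(j)}$ and $\calC_{(j)}$, plus at most one $\calT$-spike at some $\tau\in(\calC_{(j-1)},\calS_{(j)})$. Starting from $P(\calC_{(j-1)})=0$, a direct evaluation of \eqref{eq:potential_definition} yields: if $\tau$ exists, then $P(\tau)=1$ (no output, by the strict threshold), $P(\calS_{(j)})=1+e^{-(\calS_{(j)}-\tau)/h}>1$ triggers an output spike and resets $P$, and finally $P(\calC_{(j)})=(-1)_+=0$; if no such $\tau$ exists, $P(\calS_{(j)})=1\not>1$ gives no output, and $P(\calC_{(j)})=(e^{-(\calC_{(j)}-\calS_{(j)})/h}-1)_+=0$. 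The same analysis shows $P\le 1$ at every input spike in $C_j$ other than $\calS_{(j)}$, ruling out spurious output spikes.

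Converting the invariant into the stated formula uses the identity $|\calT\cap(\calS_{(j-1)},\calS_{(j)})|=|\calT\cap(\calC_{(j-1)},\calS_{(j)})|$, which is immediate from the assumption $|\calT\cap(\calS_{(j-1)},\calC_{(j-1)})|=0$ (a re-indexed instance of $|\calT\cap(\calS_{(j)},\calC_{(j)})|=0$, vacuous for $j=1$) together with the disjointness $\calT\cap\calC=\emptyset$. Case (ii) then follows by an entirely analogous induction over the cycles $(\calS_{(j-1)},\calS_{(j)}]$ with $\calS_{(0)}:=0$; this time the $\calC_{(j)}$-spike inside the cycle resets the potential to $0$ before the at-most-one $\calT$-spike arrives, and the roles of $\calC$ and $\calS$ in the previous case analysis swap accordingly.

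The real obstacle to watch for, and the reason the statement is delicate, is the strict-versus-weak threshold convention (cf.\ Remark \ref{rmk:strict_threshold}): it is essential that a lone $\calS$-contribution produces $P=1$ and no output, while the additional exponentially decayed contribution from a preceding $\calT$-spike pushes $P$ strictly above $1$ uniformly in $h\in(0,\infty]$. That the weight from $\calT$ is exactly $+1$ (not larger, since that would cause $\calT$-only spikes to already exceed the threshold, and not smaller, since that could fail to push above $1$ for large interarrival times with finite $h$) is precisely what makes the construction independent of both $h$ and the precise spike timings.
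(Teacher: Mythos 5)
Your proposal is correct and takes essentially the same approach as the paper: a cycle-by-cycle evaluation of the membrane potential using the reset $P(\calC_{(k)})=0$ induced by the $-1$ weight and the strict threshold to distinguish $P=1$ from $P=1+e^{-(\calS_{(j)}-\tau)/h}>1$. The only cosmetic difference is in case $(ii)$, where the paper reduces to case $(i)$ by replacing $\calC$ with the shifted train $\calC'=\{\calC_{(j+1)}:j\in\NN\}$, whereas you run the symmetric induction directly; both are fine, and your explicit remark on converting $|\calT\cap(\calC_{(j-1)},\calS_{(j)})|$ to $|\calT\cap(\calS_{(j-1)},\calS_{(j)})|$ fills in a step the paper leaves implicit.
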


The distinctive feature of the SNN in \Cref{lem:translate} is that there is an output spike whenever there is a spike from $\calS$ that is preceded by a spike from $\calT$. One way to interpret the role of the spike train $\calC$ is to view it as a reset mechanism, as it makes sure that no potential buildup occurs between two consecutive spikes of $\calS$.

We now provide the proof of \Cref{lem:drop_spike}. 

\begin{proof}[Proof of \Cref{lem:drop_spike}]
    Without loss of generality, we may assume that $\calI$ is an infinite spike train, as the output at the $k$-th input spike only depends on the first $k$ spikes of $\calI$. 
    
 For Assertion $(i)$, we consider an SNN which implements the \texttt{CLOCK}$_4$ function. The output neurons are denoted by $O_1, O_2, O_3, O_4$ and the corresponding output spike trains are denoted by $\calO_1, \calO_2, \calO_3, \calO_4\subseteq (0, \infty)$, respectively. We then attach another layer of neurons $N_0, N_1, N_2, N_3, N_4$, where $N_0$ is connected to all $O_i$ with weight $2$, while $N_i$ for $i\in [4]$ is connected to $O_i$ with weight $1$ and to $O_{i+1}$ with weight $1$ if $i+1 \leq 4$ or to $O_{1}$ if $i+1 > 4$ with weight $1$. Further $N_i$ is connected to $O_{i+2}$ with weight $-1$ if $i+2 \leq 4$ or to $O_{i+2 - 4}$ if $i+2 > 4$ with weight $-1$. By \Cref{lem:translate}, the spike train of $N_0$ equals $\calI$ whereas the spike train of $N_i$ for $i\in [4]$ equals 
    \begin{align*}
        \{\calI_{(r+1)} : r \equiv i \mod 4, r\in \NN\}.
    \end{align*}
    We then attach another neuron to all neurons $N_0, N_1, N_2, N_3, N_4$ with weights $2$, $-2$, $-2$, $-2$, and $-2$, respectively. The output spike train of this neuron equals $\{\calI_{(1)}\} = \texttt{SPIKE}_1(\calI)$, and satisfies the asserted construction properties. %

    For Assertion $(ii)$ we rely on \Cref{thm:universal_representation_single_input} which states that there exists such an SNN since \texttt{SPIKE}$_m$ is $m$-finite and $1$-sparse. \qedhere

\end{proof}

\begin{proof}[Proof of \Cref{lem:translate}]
    Let $P$ be the potential of the output neuron. Since all spike trains $\calT, \calS, \calC$ are pairwise disjoint, and since the weight from $C$ to the output neuron is $-1$, it follows that $P(\calC_{(k)}) = 0$ for all $k \in \NN$ by Equation \eqref{eq:potential_definition}, and further $P(C_{(0)}) = P(0)=0$ by convention. In particular, it follows that $\calC\cap f(\calT, \calS, \calC) = \emptyset$.

    Under $(i)$, we have $\calC_{(j-1)}<\calS_{(j)}< \calC_{(j)} < \calS_{(j+1)}$ for all $j\in \NN$ with $|\calT\cap (\calC_{(j-1)}, \calS_{(j)})|\leq 1$ and $|\calT \cap (\calS_{(j)}, \calC_{(j)})| = 0$. If there exists $\tau \in \calT\cap (\calC_{(j-1)}, \calS_{(j)})$, then $P(\tau)=1$, hence there is no output spike at $\tau$, while $P(\calS_{(j)}) = 1+ \exp(-(\calS_{(j)} - \tau)/h) > 1$, leading to an output spike at $\calS_{(j)}$. Immediately afterward, the potential is set to zero and stays zero (at least) until $\calC_{(j)}$. If there is no such $\tau$, then $P(\calS_{(j)}) = 1$, resulting in no output spike at $\calS_{(j)}$, and the potential is reset to zero at $\calC_{(j)}$. 
    
    Meanwhile, under $(ii)$, we have $\calC_{(j)} < \calS_{(j)} < \calC_{(j+1)}< S_{(j+1)}$ for all $j\in \NN$. Now, recall that $f(\calT, \calS, \calC) \cap [0,\calC_{(1)}] = \emptyset$ with $P(\calC_{(1)}) = 0$, hence it follows for $\calC' \coloneqq \{\calC_{(j+1)} : j\in \NN\}$ that
    $f(\calT, \calS, \calC) \cap (\calC_{(1)}, \infty) = f(\calT, \calS, \calC') \cap (\calC_{(1)}, \infty)$, which reduces the analysis to case $(i)$. 
\end{proof}

\begin{lemma}[\texttt{REPRESENT}$_{\br}$ SNN] \label{lem:represent_function} Let $h\in (0, \infty]$ and $\br\in \{0,1\}^m$ for some $m \in \NN$. Then, there exists an SNN $f_{\br}\in \SNN_h(L, \bp,s)$ with architecture satisfying $L = O(1+\log(m)), \|\bp\|_1 = O(\sqrt{m})$ and $s = O(\sqrt{m} + \|\br\|_1)$ such that $f_{\br}(\calI) = \texttt{REPRESENT}_{\br}(\calI)$ for any spike train $\calI\in \TT$. 	
\end{lemma}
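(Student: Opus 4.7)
The plan is to recognize that $\texttt{REPRESENT}_{\br}$ belongs to the class $\FF_{\mathrm{fin}}(m, \|\br\|_1)$ and then invoke the single-input universal representation theorem (\Cref{thm:universal_representation_single_input}) directly, so that no new construction is required.

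First I would verify that $F \coloneqq \texttt{REPRESENT}_{\br}$ lies in the appropriate function class. By its definition $F(\calI) = \{\calI_{(k)} : k\in [m], k\leq |\calI|, r_k = 1\} \subseteq \calI$, so $F$ is input-dominated. Since $F(\calI) \cap [0,t]$ depends only on the ordered prefix of $\calI$ up to time $t$, $F$ is causal. For any strictly increasing bijection $\phi : (0,\infty) \to (0,\infty)$, applying $\phi$ preserves the ordering of spikes, hence $F(\phi(\calI)) = \phi(F(\calI))$, yielding the monotone scaling property. Finally, $F(\NN) = \{k\in[m] : r_k = 1\} \subseteq [m]$, so $F(\NN) = F([m])$, and $|F([m])| = \|\br\|_1$; thus $F \in \FF_{\mathrm{fin}}(m, \|\br\|_1)$.

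Then I would apply \Cref{thm:universal_representation_single_input} with parameters $m$ and $r = \|\br\|_1$: for any $h \in (0,\infty]$ there exists an architecture $(L,\bp)$ with $L = O(1 + \log(m))$, $\|\bp\|_1 = O(\sqrt{m})$, and sparsity bound $s = O(\sqrt{m} + \|\br\|_1)$, together with a feedforward SNN $f_{\br} \in \SNN_h(L,\bp,s)$ representing $F$ exactly, i.e.\ $f_{\br}(\calI) = \texttt{REPRESENT}_{\br}(\calI)$ for every $\calI\in\TT$. This matches the asserted bounds and completes the argument.

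There is no real obstacle here; the only care needed is the verification that all four structural properties (input-domination, causality, monotone scaling, and $m$-finiteness with the correct sparsity index) are satisfied, which is immediate from the definition. The proof is therefore essentially a corollary of \Cref{thm:universal_representation_single_input}, and no novel ingredients (e.g.\ a construction tailored to $\br$) are needed beyond pointing to the construction in Section \ref{subsec:proofs_universal_representation_single_input}, whose weight assignment in the representation block is in fact indexed precisely by the pattern vector $\br$.
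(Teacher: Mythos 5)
Your proposal is correct and matches the paper's own proof, which likewise observes that $\texttt{REPRESENT}_{\br}$ is $m$-finite and $\|\br\|_1$-sparse and then invokes \Cref{thm:universal_representation_single_input}. The additional verification of input-domination, causality, and monotone scaling is a harmless (and welcome) elaboration of what the paper leaves implicit.
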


\begin{proof}
The assertion follows immediately from \Cref{thm:universal_representation_single_input}	by noticing that the function $\texttt{REPRESENT}_{\br}\colon \TT\to \TT$ is $m$-finite and $\|\br\|_1$-sparse. 
\end{proof}

\subsection{Constructions for Memoryless Multiple Input Functions}

In this subsection, we construct SNNs for the memoryless functions from \Cref{subsec:expl:memoryless_functions}.

\begin{lemma}[Memoryless SNNs] \label{lemma:boolean_construction}
    Let $h\in (0, \infty]$. Then the following holds.
\begin{enumerate}
    \item For any $d\in \NN$ there exists $f \in  \SNN_h(1,(d,1),d)$ such that $f(\calI_1,\dots, \calI_d) = \texttt{OR}(\calI_1, \dots, \calI_d)$ for any input spike trains $\calI_1, \dots, \calI_d \in \mathbb{T}$. 
    \item There exists $f \in  \SNN_h(2,(d,2,1),d+3)$ such that $f(\calI_1, \dots, \calI_d) = \texttt{AND}(\calI_1, \dots, \calI_d)$ for any input spike trains $\calI_1, \dots, \calI_d  \in \mathbb{T}$. 
    \item There exists $f \in  \SNN_h(1,(2,1),2)$ such that $f(\calI, \calJ) = \texttt{MINUS}(\calI, \calJ)$ for any input spike trains $\calI, \calJ \in \mathbb{T}$.
    \item There exists $f \in \SNN_h(2,(2,2,1),3)$ such that $f(\calI, \calJ) =\texttt{XOR}(\calI, \calJ)$ for any input spike trains $\calI, \calJ \in \mathbb{T}$.
    \item There exists $f \in  \SNN_h(2,(4, 4,3,2,1),18)$ such that $f(\calI, \calJ) = \texttt{IS-EQUAL}(\calI, \calJ, \calD)$ for any input spike trains $\calI, \calJ, \calD \in \mathbb{T}$.
\end{enumerate}
\end{lemma}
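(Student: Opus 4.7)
The plan is to give for each of the five parts an explicit feedforward SNN with the stated architecture, and to verify by a short case analysis at each input spike time $t$ that the potential dynamics in \eqref{eq:potential_definition} produce the claimed output. All five constructions are built from the same two primitives: (a) an edge of weight $+2$ from an input forces an instantaneous potential contribution of $2>1$, so it copies the input spike train; (b) an edge of weight $-2$ cancels a simultaneous $+2$ contribution to $0$, while a lone $-2$ contribution is annihilated by the $(\cdot)_+$ clipping in \eqref{eq:potential_definition}, so no negative residual propagates. Parts (i) and (iii) are then immediate: \texttt{OR} is realized by a single output neuron with weight $+2$ from every input, and \texttt{MINUS} by a single output neuron with weight $+2$ from $\calI$ and $-2$ from $\calJ$. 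In both cases a short case analysis, using that the potential is reset whenever it exceeds $1$ (so carryover after a firing vanishes) and is clipped to $0$ otherwise, shows that the output matches the declared operation exactly.

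For (ii) \texttt{AND} I exploit the set identity $\bigcap_{j=1}^d \calI_j = \calI_1 \setminus \bigl(\calI_1 \setminus \bigcap_{j\geq 2} \calI_j\bigr)$. In the single hidden layer, place $H_1$ with weight $+2$ from $\calI_1$ alone (so $H_1=\calI_1$), and $H_2$ with weight $+2(d-1)$ from $\calI_1$ together with weight $-2$ from each of $\calI_2,\dots,\calI_d$. At an input spike $t \in \calI_1$ the instantaneous contribution to $H_2$ equals $2(d-1-k(t))$, where $k(t)$ counts how many of $\calI_2,\dots,\calI_d$ also spike at $t$; hence $H_2$ fires precisely when $k(t)<d-1$, i.e.\ when $t\in \calI_1\setminus\bigcap_{j\geq 2}\calI_j$. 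The output neuron then implements \texttt{MINUS}$(H_1,H_2)$ via two more weights, for a total of $1+d+2=d+3$ non-zero edges at depth $2$. For (iv) \texttt{XOR} the identity $\calI\triangle\calJ=(\calI\setminus\calJ)\cup(\calJ\setminus\calI)$ reduces the problem to two hidden \texttt{MINUS} units feeding an \texttt{OR} at the output, still within depth $2$. Finally, for (v) \texttt{IS-EQUAL} I use $\texttt{IS-EQUAL}(\calI,\calJ,\calD)=\calD\setminus\texttt{XOR}(\calI,\calJ)$: compose the XOR module from (iv) with a \texttt{MINUS} against a forwarded (\texttt{SKIP}) copy of $\calD$.

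The main obstacle throughout is that the exponential decay $e^{-(t-t^-)/h}$ in \eqref{eq:potential_definition} allows a downstream neuron to accumulate potential across input spikes that are close in time but not simultaneous, so a naive threshold-based construction for \texttt{AND} or \texttt{XOR} can produce false positives at nearby non-coincident events. My remedy, embodied in the specific weights above, is twofold: (a) every relevant positive contribution is at least $2$, guaranteeing an immediate firing and the ensuing reset of the potential before any harmful carryover can build up; and (b) at each \emph{forbidden} coincidence time the positive and negative weights cancel \emph{exactly} to $0$, so a long run of such events leaves the neuron quiescent rather than at a small positive residual that might later push the potential past the threshold. In the \texttt{AND} construction this is precisely why the coefficient on $\calI_1$ is set to $+2(d-1)$: at full coincidence $k(t)=d-1$ the contribution is exactly zero. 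Once this `no spurious carryover' property is verified, the depth and weight counts collapse to routine bookkeeping.
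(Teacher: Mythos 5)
Your constructions for parts (i)--(iv) coincide with the paper's own proof down to the exact weights: \texttt{OR} as a single neuron with all weights $2$, \texttt{MINUS} with weights $2$ and $-2$, \texttt{AND} via the two hidden neurons with weights $2$ and $(2d-2,-2,\dots,-2)$ followed by a \texttt{MINUS}, and \texttt{XOR} as $\texttt{OR}(\texttt{MINUS}(\calI,\calJ),\texttt{MINUS}(\calJ,\calI))$. Your explicit ``no spurious carryover'' invariant (every positive contribution is at least $2$ and hence triggers an immediate reset; every forbidden coincidence cancels exactly to $0$, so the potential never sits in $(0,1]$) is exactly the property the paper relies on but leaves implicit, and it is verified correctly. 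The only genuine divergence is part (v): you realize \texttt{IS-EQUAL} via the identity $\texttt{IS-EQUAL}(\calI,\calJ,\calD)=\calD\setminus\texttt{XOR}(\calI,\calJ)$, i.e.\ a \texttt{MINUS} of a \texttt{SKIP}-forwarded $\calD$ against the \texttt{XOR} module, whereas the paper uses $\texttt{OR}\bigl(\texttt{AND}(\texttt{MINUS}(\calD,\calI),\texttt{MINUS}(\calD,\calJ)),\,\texttt{SKIP}(\texttt{AND}_3(\calI,\calJ,\calD))\bigr)$. Your identity is correct (a direct Boolean check shows $\calD\setminus((\calI\setminus\calJ)\cup(\calJ\setminus\calI))=(\calI\cap\calJ\cap\calD)\cup((\calD\setminus\calI)\cap(\calD\setminus\calJ))$) and gives a shallower, sparser circuit than the paper's; since the lemma is only used downstream for its $O(1)$ architecture bound, the mismatch with the (internally somewhat inconsistent) stated architecture $(2,(4,4,3,2,1),18)$ is immaterial. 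In short: correct, same approach except for a cleaner decomposition in (v).
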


\begin{proof}
    For the \texttt{OR} function with $d$ inputs, we consider a single output neuron with weight $2$ from all input neurons. Whenever there is a spike from any of the input neurons, the potential of the output neuron becomes $2$ and thus triggers a postsynaptic spike.

    For the \texttt{AND} function with $d$ inputs, we consider two hidden neurons, $N_1$ and $N_2$, and one output neuron $O$. Neuron $N_1$ is connected to the first input neuron with weight $2$, neuron $N_2$ is connected to the first input neuron with weight $2d-2$ and to all other input neurons with weight $-2$. The output neuron $O$ is connected to neuron $N_1$ with weight $2$ and to neuron $N_2$ with weight $-2$. Now, if all input neurons spike, then neuron $N_1$ spikes at the first input spike and neuron $N_2$ does not spike, resulting in an output spike. If the first input neuron does not spike, then neither neuron $N_1$ nor neuron $N_2$ spike, and no output spike occurs. If one of the other input neurons does not spike, while the first input neuron spikes, then both neuron $N_1$ and neuron $N_2$ spike, leading to no output spike. 

    For the \texttt{MINUS} function, we consider a single output neuron connected to the first input neuron with weight $2$ and to the second input neuron with weight $-2$. If there is a spike from the first input neuron that is not coinciding with a spike from the second input neuron, the potential of the output neuron equals $2$ and leads to a postsynaptic spike. Meanwhile, if there is a spike from the second input neuron coinciding with a spike from the first input neuron, then the potential becomes zero and thus no postsynaptic spike occurs.

    Regarding the \texttt{XOR} function, we consider the SNN construction induced by 
    \begin{align*}
        \texttt{XOR}(\calI, \calJ) = \texttt{OR}(\texttt{MINUS}(\calI, \calJ), \texttt{MINUS}(\calJ, \calI)).
    \end{align*}
    
    For the \texttt{IS-EQUAL} function, we use the SNN construction induced via
    \begin{align*}
        \texttt{IS-EQUAL}(\calI, \calJ, \calD) = \texttt{OR}(\texttt{AND}(\texttt{MINUS}(\calD, \calI), \texttt{MINUS}(\calD, \calJ)), \texttt{SKIP}(\texttt{AND}_3(\calI, \calJ, \calD))),
    \end{align*}
    where the \texttt{SKIP} function is used to synchronize the depth of all inputs of the \texttt{OR}-function.\qedhere
\end{proof}

\subsection{Multiple-Input Compositional Functions with Bounded Memory}

In this subsection we provide the SNN constructions for all compositional functions from \Cref{subsec:expl:bounded_memory_functions} with bounded memory. We start with the \texttt{CEIL}$_m$ function. 

\begin{lemma}[\texttt{CEIL}$_m$ SNN]\label{lem:ceil_snn}
    For each $h\in (0, \infty]$ and $m \in \NN$ there exists $f\in \SNN_h(L,\bp, s)$ satisfying $L = O(1+\log(m)), \|\bp\|_1 +s = O(\sqrt{m})$ such that 
    \begin{align*}
        f(\calI, \calD) = \texttt{CEIL}_m(\calI, \calD) \quad \text{ for all } \calI, \calD \in \mathbb{T}.
    \end{align*}    
\end{lemma}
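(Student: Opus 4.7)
The plan is to build $\texttt{CEIL}_m$ recursively, halving the counting threshold $m$ at each step and reducing to a bounded-size base case at $m=1$.

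The key primitives are two reset counters: $\texttt{EVEN}_\calD(\calI,\calD)$ (resp.\ $\texttt{ODD}_\calD(\calI,\calD)$), the subsets of $\calI$ consisting of the $2$nd, $4$th, $\ldots$ (resp.\ $1$st, $3$rd, $\ldots$) $\calI$-spike in each $\calD$-interval $(\calD_{(n-1)},\calD_{(n)}]$. The first is realised by a single hidden neuron with weight $+1$ from $\calI$ and weight $-W$ ($W\ge 2$) from $\calD$: between consecutive $\calD$-spikes it reproduces the \texttt{EVEN} neuron of \Cref{lem:periodic_finite_functions} (the strict threshold $>1$ ensures firing at every second $\calI$-spike for all $h\in(0,\infty]$ since $e^{-\Delta/h}>0$ for any finite gap $\Delta$), while each $\calD$-spike clips the potential to zero. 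The second is $\texttt{MINUS}(\calI,\texttt{EVEN}_\calD(\calI,\calD))$, of bounded size by \Cref{lemma:boolean_construction}. Writing $k_n := |\calI \cap (\calD_{(n-1)},\calD_{(n)}]|$, we have $|\texttt{EVEN}_\calD(\calI,\calD) \cap (\calD_{(n-1)},\calD_{(n)}]| = \lfloor k_n/2 \rfloor$ and $|\texttt{ODD}_\calD(\calI,\calD) \cap (\calD_{(n-1)},\calD_{(n)}]| = \lceil k_n/2 \rceil$, which together with $k_n \ge 2m' \iff \lfloor k_n/2\rfloor \ge m'$ and $k_n \ge 2m'-1 \iff \lceil k_n/2\rceil \ge m'$ yield the halving identities
\[
\texttt{CEIL}_{2m'}(\calI,\calD) = \texttt{CEIL}_{m'}(\texttt{EVEN}_\calD(\calI,\calD),\calD), \qquad \texttt{CEIL}_{2m'-1}(\calI,\calD) = \texttt{CEIL}_{m'}(\texttt{ODD}_\calD(\calI,\calD),\calD).
\]
Iterating the appropriate identity $\lceil\log_2 m\rceil$ times, with $\calD$ forwarded through \texttt{SKIP} modules from \Cref{lem:periodic_finite_functions}, reduces the problem to $m=1$ at the cost of $O(1)$ additional depth, neurons, and weights per step.

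For the base case $m=1$ on disjoint inputs, a spike of $\texttt{CEIL}_1(\calI,\calD)$ occurs at $\calD_{(n)}$ iff the $\calJ$-spike (with $\calJ := \calI \cup \calD$) immediately before $\calD_{(n)}$ belongs to $\calI$; this is captured by $\texttt{AND}(\texttt{DELAY}_2(\calI,\calJ),\calD)$, where $\texttt{DELAY}_2$ is supplied by the memory module of \Cref{lem:memory_module} with $d = m = 2$ (bounded size). The general case is reduced to the disjoint one by $\texttt{CEIL}_1(\calI,\calD) = \texttt{OR}(\texttt{AND}(\calI,\calD), \texttt{CEIL}_1(\texttt{MINUS}(\calI,\calD),\calD))$ using only the bounded-size primitives of \Cref{lemma:boolean_construction}. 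Stacking the $O(\log m)$ halving steps on top of this base yields $L = O(1 + \log m)$ and $\|\bp\|_1 + s = O(\log m) \subseteq O(\sqrt{m})$, matching the claimed bounds.

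The main obstacle is that the halving identities presume $\calI \cap \calD = \emptyset$: $\texttt{EVEN}_\calD$ misclassifies simultaneous $\calI$-$\calD$ spikes because the combined incoming weight $1 - W < 0$ always clips the potential to zero, undercounting by one at every such coincidence. I would resolve this by preprocessing $\calI \leftarrow \texttt{MINUS}(\calI,\calD)$ at the top of the recursion and compensating through the decomposition $\texttt{CEIL}_m(\calI,\calD) = \texttt{OR}(\texttt{CEIL}_m(\calI \setminus \calD,\calD), \texttt{AND}(\texttt{CEIL}_{m-1}(\calI \setminus \calD,\calD), \texttt{AND}(\calI,\calD)))$, which runs two halving recursions (on thresholds $m$ and $m-1$) in parallel over disjoint inputs, keeping the total within the $O(\log m)$ budget.
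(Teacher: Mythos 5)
Your proposal is correct, but it takes a genuinely different route from the paper. The paper handles $m\geq 2$ by combining the reset mechanism (\Cref{lem:reset_mechanism}) with SNNs for $\texttt{SPIKE}_m$ and $\texttt{SPIKE}_{m-1}$ (reset at every $\calD$-spike), plus an \texttt{AND} of $\calI$ and $\calD$, and then feeds these four signals into a $2$-Markovian Boolean combiner that implements exactly the case distinction you encode in your \texttt{OR}/\texttt{AND} decomposition for coincident spikes; the $O(\sqrt{m})$ cost there is inherited entirely from $\texttt{SPIKE}_m$ via the single-input representation theorem. You instead halve the counting threshold recursively using per-interval \texttt{EVEN}/\texttt{ODD} counters that are reset by $\calD$ through a large negative weight, which is sound: the identities $k_n\geq 2m'\iff\lfloor k_n/2\rfloor\geq m'$ and $k_n\geq 2m'-1\iff\lceil k_n/2\rceil\geq m'$ are correct, the single-neuron realization of $\texttt{EVEN}_\calD$ works because the carried-over potential entering a $\calD$-spike is always at most $1$ (so the clipping to zero is guaranteed), the recursion preserves disjointness from $\calD$, and your \texttt{MINUS}-preprocessing plus the parallel $\texttt{CEIL}_{m-1}$ branch correctly accounts for the endpoint $\calD_{(n)}\in\calI$. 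What each approach buys: the paper's construction is modular, reusing already-established building blocks ($\texttt{SPIKE}_m$, the reset mechanism, \Cref{prop:boolean_function_representation}) at the price of $O(\sqrt{m})$ neurons; your recursion is more bespoke but achieves $O(\log m)$ neurons and weights, which not only satisfies the stated $O(\sqrt{m})$ bound but would actually sharpen it and improve downstream costs (e.g., the $O(m^{3/2})$ bound for $\texttt{IS-APPROX-EQUAL}_m$ would drop to $O(m\log m)$). The one point you should make fully explicit in a final write-up is the depth synchronization of the two parallel recursions on thresholds $m$ and $m-1$ (their \texttt{ODD}/\texttt{EVEN} choices differ, but both terminate in $\lceil\log_2 m\rceil$ steps, so $O(\log m)$ \texttt{SKIP} connections suffice).
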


By putting different \texttt{CEIL}$_m$ functions in parallel, and combining them with the  \texttt{IS-EQUAL} function and \texttt{AND} (which are both memoryless) we obtain a bound on the construction of the \texttt{IS-APPROX-EQUAL}$_m$ function. 

\begin{corollary}[\texttt{IS-APPROX-EQUAL}$_m$ SNN] For each $h \in (0, \infty]$ and $m \in\NN$ there exists $f\in \SNN_h(L, \bp, s)$ satisfying $L = O(1+\log(m)), \|\bp\|_1+s = O(m^{3/2})$ such~that 
\begin{align*}
    f(\calI, \calJ, \calD)= \texttt{IS-APPROX-EQUAL}_m(\calI, \calJ, \calD) \quad \text{for all } \calI, \calJ, \calD\in \TT.
\end{align*}
\end{corollary}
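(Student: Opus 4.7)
The plan is to realize the compositional definition of $\textup{\texttt{IS-APPROX-EQUAL}}_m$ directly, stacking the already-constructed modules and bounding their costs. Concretely, I would place $2m$ parallel copies of a $\textup{\texttt{CEIL}}_k$ SNN: for each $k\in [m]$, one copy takes the shared inputs $(\calI,\calD)$ and outputs $\calI_k$, and one copy takes $(\calJ,\calD)$ and outputs $\calJ_k$. By \Cref{lem:ceil_snn}, the $k$-th such block has depth $L_k = O(1+\log k)$ and $O(\sqrt{k})$ neurons plus non-zero weights. Summing over $k$ gives $\sum_{k=1}^m \sqrt{k} = O(m^{3/2})$, and doubling for the $\calI$- and $\calJ$-sides leaves the order unchanged.

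Next, I synchronize the depths. Since the deepest $\textup{\texttt{CEIL}}_k$ block has depth $L^\ast = O(1+\log m)$, I pad each shallower block (and a forwarded copy of $\calD$) with $\textup{\texttt{SKIP}}$-chains from \Cref{lem:periodic_finite_functions} so that all $2m+1$ signals emerge simultaneously at layer $L^\ast$. Each chain adds at most $L^\ast = O(\log m)$ neurons and weights, and there are $O(m)$ such chains, contributing only $O(m\log m) = o(m^{3/2})$ to the total budget.

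Then I stack the memoryless layer: $m$ parallel $\textup{\texttt{IS-EQUAL}}$ blocks, each acting on the triple $(\calI_k,\calJ_k,\calD)$, followed by a single $m$-input $\textup{\texttt{AND}}$ block. By \Cref{lemma:boolean_construction} $(v)$ each $\textup{\texttt{IS-EQUAL}}$ has constant depth and $O(1)$ parameters, and by \Cref{lemma:boolean_construction} $(ii)$ the $m$-input $\textup{\texttt{AND}}$ has depth $2$ and $O(m)$ parameters. Thus this final block adds $O(1)$ extra depth and $O(m)$ extra neurons and weights. Correctness is immediate from the compositional definition of $\textup{\texttt{IS-APPROX-EQUAL}}_m$ and the property \eqref{property_comp} of $\SNN_h$-classes under composition.

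Altogether the network has depth $L = L^\ast + O(1) = O(1+\log m)$ and $\|\bp\|_1 + s = O(m^{3/2}) + O(m\log m) + O(m) = O(m^{3/2})$, as required. There is no real obstacle: the only thing to be careful about is the harmonic-type summation $\sum_{k=1}^m \sqrt{k} \asymp m^{3/2}$ and the bookkeeping for depth synchronization via $\textup{\texttt{SKIP}}$, which must be seen to stay strictly below the $m^{3/2}$ budget.
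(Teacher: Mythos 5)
Your proposal is correct and follows essentially the same route as the paper, which proves this corollary by exactly the same one-line argument: put the $\texttt{CEIL}_k$ SNNs for $k\in[m]$ in parallel (each of size $O(\sqrt{k})$ by the $\texttt{CEIL}$ lemma, summing to $O(m^{3/2})$) and compose with the memoryless $\texttt{IS-EQUAL}$ and $\texttt{AND}$ blocks. Your additional bookkeeping on depth synchronization via $\texttt{SKIP}$ chains and the $\sum_k \sqrt{k}$ summation is a correct elaboration of details the paper leaves implicit.
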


For the proof of \Cref{lem:ceil_snn} we rely on the following mechanism which allows us to reset the potential of an SNN using an exogenous spike train. 

\begin{lemma}[Reset mechanism]\label{lem:reset_mechanism}
    For each $h \in (0, \infty]$ and $f\in \SNN_h(L, \bp, s)$ there exists $\tilde f \in \SNN_h(L, \bp + (1, \dots, 1, 0), s + \| \bp \|_1 - p_0+L-1)$ such that for any spike trains $\calI_1, \ldots, \calI_d, \calR \in \mathbb{T}$ with $\calR_{(0)} =0$ and $\calR_{(|\calR|+1)}\coloneqq \infty$ and any $i \in \NN_0$ with $i \leq |\calR|$,
    \begin{align*}
        \tilde f(\calI_1, \ldots, \calI_d, \calR) \cap (\calR_{(i)}, \calR_{(i+1)}]= 
            f\Big(\calI_1\cap  (\calR_{(i)}, \calR_{(i+1)}), \ldots, \calI_d\cap (\calR_{(i)}, \calR_{(i+1)})\Big).
    \end{align*}
\end{lemma}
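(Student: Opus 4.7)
The plan is to augment $f$ with an isolated chain of \emph{reset neurons} that fires exactly on $\calR$ and delivers a dominating inhibitory pulse to every original neuron of $f$, annulling its membrane potential at each reset time while leaving the dynamics untouched strictly between two consecutive reset times. Concretely, extend the input layer by one additional neuron that receives $\calR$, and extend each hidden layer $\ell\in\{1,\ldots,L-1\}$ by a single extra neuron; the output layer is left unchanged, so the width vector of $\tilde f$ becomes $\bp+(1,\ldots,1,0)$. All original-to-original weights are kept unchanged. In addition, I install a forward reset chain by assigning weight $+2$ from the reset input to the reset neuron in layer~$1$ and from the reset neuron in layer $\ell$ to that in layer $\ell+1$ for $\ell=1,\ldots,L-2$, contributing $L-1$ new weights for $L\geq 2$ and $0$ for $L=1$. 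Finally, I connect the reset neuron in layer $\ell-1$ (or the reset input when $\ell=1$) to every original neuron in layer $\ell\in[L]$ with weight $-C$, contributing $p_1+\cdots+p_L=\|\bp\|_1-p_0$ further weights. The total number of added weights is therefore $\|\bp\|_1-p_0+L-1$, matching the stated architecture. The constant $C$ will be chosen below.

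First I would show by induction on $\ell\in\{1,\ldots,L-1\}$ that the spike train of the reset neuron in layer $\ell$ equals $\calR$: the reset chain is decoupled from the rest of $\tilde f$, so each reset neuron has a single presynaptic connection with weight $+2>1$ from a node that, by induction, fires exactly on $\calR$, and a direct application of definition \eqref{eq:potential_definition} combined with the post-spike truncation gives the claim. The heart of the argument is then a second induction on $\ell\in[L]$ showing that, for every $i\in\{0,\ldots,|\calR|\}$, the membrane potentials of the original neurons in layer $\ell$ of $\tilde f$ on $(\calR_{(i)},\calR_{(i+1)}]$ coincide with those of the corresponding neurons of $f(\calI_1\cap(\calR_{(i)},\calR_{(i+1)}),\ldots,\calI_d\cap(\calR_{(i)},\calR_{(i+1)}))$. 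This reduces to two sub-claims: (a) at every reset time $\calR_{(i)}$, the original neurons in each layer have membrane potential $0$ and therefore do not spike, and (b) the reset neurons remain silent strictly between reset times. Claim (b) follows directly from the first induction, whereas claim (a) is established inductively in $\ell$: by the layer-$(\ell-1)$ hypothesis, no original layer-$(\ell-1)$ neuron spikes at $\calR_{(i)}$, so the only presynaptic contribution at $\calR_{(i)}$ into an original layer-$\ell$ neuron, apart from possibly concurrent raw inputs $\calI_j$ when $\ell=1$, is the reset spike with weight $-C$.

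For (a) to actually force the potential to $0$, choose $C$ strictly larger than $1+M$, where $M:=\max_{\ell\in[L],\,j\in[p_\ell]}\sum_{k}((W_\ell)_{j,k})_+$ is a finite bound on the positive row-sums of the original weight matrices. Exploiting the truncation inequality $\mathds{1}(P(\prev{t})\leq 1)\,P(\prev{t})\,e^{-(t-\prev{t})/h}\leq 1$ built into definition \eqref{eq:potential_definition}, the pre-truncation value of an original neuron's membrane potential at $\calR_{(i)}$ is bounded above by $1+M-C<0$, so the positive-part truncation forces the potential to exactly $0$. This uniform bound $M$ is precisely why a single constant $C$ suffices at every layer and every reset time, which is the main technical obstacle: it requires handling the simultaneity of the reset pulse, possibly concurrent raw input spikes, and the residual decaying potential at a single point in time. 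Once (a) and (b) are in hand, the restart of $\tilde f$ at every $\calR_{(i)}$ is, by definition \eqref{eq:potential_definition}, indistinguishable from a fresh initialization of $f$ on the inputs restricted to $(\calR_{(i)},\calR_{(i+1)})$, which propagates inductively through the layers to yield the stated spike-train equality. Non-firing of the output at $\calR_{(i+1)}$, needed because the right-hand side of the claimed identity lies in $(\calR_{(i)},\calR_{(i+1)})$ while the left-hand side is intersected with the half-open interval $(\calR_{(i)},\calR_{(i+1)}]$, is a direct consequence of the same reset bound applied at layer $L$; the boundary cases $i=0$ and $i=|\calR|$ follow from the conventions $\calR_{(0)}=0$ and $\calR_{(|\calR|+1)}=\infty$.
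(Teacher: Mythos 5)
Your construction is the same as the paper's: an extra input neuron carrying $\calR$, a \texttt{SKIP} chain of one reset neuron per hidden layer ($L-1$ extra weights), and a strongly inhibitory connection from the reset neuron into every original neuron of the next layer ($\|\bp\|_1-p_0$ extra weights), with the inhibition chosen large enough that the pre-truncation potential at each reset time is negative and hence clipped to $0$. The only cosmetic difference is that you use one uniform constant $C>1+M$ while the paper assigns each neuron the weight $-(\text{sum of its absolute incoming weights})-1$; both are valid, so the proof is correct.
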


\begin{proof}
    We construct the SNN $\tilde f$ by modifying the original SNN $f$ to include an additional input neuron $R_0$ with spike train $\calR$, and add \texttt{SKIP} connections from this new input neuron so that at every layer $\ell \in [L-1]$ there is a neuron $R_\ell$ with spike train $\calR$. This leads to $L-1$ additional neurons and non-zero weights. In addition, at every hidden and output neuron of the original SNN in layer $\ell \in [L]$ we add a connection starting at $R_{\ell-1}$ with weight $= - \, (\text{sum of all absolute incoming weights of that neuron})-1$. This leads to $\|\bp\|_1 - p_0$  additional weights. In particular, the additional connections ensure that all neurons get reset whenever there is a spike from $\calR$, and that spikes from $\calI_1, \dots, \calI_d$ that occur at the same time as a spike from $\calR$ do not contribute to the potential of any neuron in the SNN.     
\end{proof}

\begin{proof}[Proof of \Cref{lem:ceil_snn}]
    We begin the proof by considering $m=1$. For any $t \in \NN$ and $\calI, \calD \in \TTT{\NN}$ with $[t] \subseteq \calI \cup \calD$, it holds 
    $$t \in \texttt{CEIL}_1(\calI, \calD) \quad \Longleftrightarrow \quad  t \in \calI \cap \calD \text{ or } \big(t-1 \in \calI \text{ and } t-1 \notin \calD \text{ and } t \in \calD\big).$$ Hence, since $\texttt{CEIL}_1$ is input-dominated, causal and fulfills the monotone scaling property, we infer that $\FF_{\mathrm{mm}}(2,2,4)$, and yields \Cref{thm:function_representation} the assertion. In particular, by the sparsity constraint, the construction in the proof of \Cref{thm:function_representation} ensures that the SNN can be chosen such that every neuron is connected to at most $4$ neurons in the next~layer.

    For $m \geq 2$, we note that for any input spike train $\calI, \calD \in \mathbb{T}$ and $n \in \NN$ it holds, upon defining $\tilde \calI^{n}\coloneqq \calI \cap (\calD_{(n-1)}, \calD_{(n)})$, 
    \begin{align}\label{eq:ceil_case_distinction}
        \texttt{CEIL}_m(\calI, \calD) \cap (\calD_{(n-1)}, \calD_{(n)}]= \begin{cases}
         \calD_{(n)}, \!\!\!\!\!\!\!&\text{ if } |\tilde \calI^{n}| \geq m \text{, or } |\tilde \calI^{n}| = m-1 \text{ and } \calD_{(n)}\in \calI,\\
         \emptyset, \!\!\!\!\!\!\!&\text{ if } |\tilde \calI^{n}| < m-1 \text{, or } |\tilde \calI^{n}| = m-1 \text{ and } \calD_{(n)}\notin \calI. 
        \end{cases}
    \end{align}
This follows directly from the definition of the \texttt{CEIL}$_m$ function, and allows us to use the reset mechanism in \Cref{lem:reset_mechanism} to construct the desired SNN. 

Concretely, we consider two SNNs $f_1$ and $f_2$ with input spike train $\calI$ that implement the \texttt{SPIKE}$_{m}$ and \texttt{SPIKE}$_{m-1}$ function, respectively according to \Cref{lem:drop_spike}$(ii)$. Using the reset mechanism from \Cref{lem:reset_mechanism}, we extend both SNNs $f_1$ and $f_2$ such that they have input $\calI$ and are reset at every spike from $\calD$. 
Additionally, we consider an SNN $f_3$ that implements the \texttt{AND} function of $\calI$ and $\calD$ according to \Cref{lemma:boolean_construction}$(ii)$, and attach \texttt{SKIP} connections  to $f_2$ and $f_3$ to synchronize their depth with $f_1$. Finally, we also consider an SNN $f_4$ with input $\calD$ which only consists of \texttt{SKIP} connections to synchronize its depth with the other SNNs. 
The corresponding output neuron of $f_i$ is called $N_i$. So far, this requires an architecture of depth $L = O(1+\log(m))$ and $O(\sqrt{m})$ neurons and non-zero weights determined by \Cref{lem:drop_spike}. %

We now consider a $4$-input SNN $f_5$ which implements the $2$-Markovian function $F\colon \TT^4\to \TT$ defined as follows: for any input spike trains $\calT_1, \calT_2, \calT_3, \calT_4 \in \TTT{\NN}$ with $\bigcup_{i\in [4]}\calT_i= \NN$ it holds that 
\begin{align*}
    t \in  F(\calT_1, \calT_2, \calT_3, \calT_4) \quad \Longleftrightarrow \quad t \in \calT_4 \text{ and } \big( t-1 \in \calT_1 \text{ or } (t-1 \in \calT_2 \text{ and } t\in \calT_3) \big).
\end{align*} %
This can be realized according to \Cref{thm:function_representation} with a bounded architecture. 
Composing $(f_1, \dots, f_4)$ with $f_5$ then yields by the characterization in \eqref{eq:ceil_case_distinction} and the reset mechanism (\Cref{lem:reset_mechanism}) an SNN which implements the \texttt{CEIL}$_m$ function as desired. \qedhere

\end{proof}

\begin{lemma}\label{lem:if_then}
    For each $h\in (0, \infty]$,  $m \in \NN$, and reference pattern $\br \coloneqq (r_1, \ldots, r_m)\in \{0,1\}^m$ there exists $f_{\br} \in \SNN_h(L,\bp, s)$ with $L = O(1+\log(m))$, $\|\bp\|_1 = O(\sqrt{m})$, and $s = O(m)$ such that for any input spike trains $\calI, \calD \in \mathbb{T}$,
    \begin{align*}
        f_{\br}(\calI, \calD) = \texttt{IF-THEN}_m(\calI, \br, \calD).
    \end{align*}
\end{lemma}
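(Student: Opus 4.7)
The plan is to realize $f_{\br}$ exactly as the composition indicated by the definition of \texttt{IF-THEN}$_m$, instantiating each building block via an already-established SNN construction and invoking the composition property \eqref{property_comp}. Because every constituent operation in the defining formula has already been assigned a feedforward SNN with known bounds, no genuinely new analysis is required; the only thing to verify is that the architectural parameters add up to the claimed $L,\|\bp\|_1,s$.

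First, I would build in parallel three ``branch'' SNNs reading the two input spike trains $\calI,\calD$: (a) a \texttt{CEIL}$_1$ SNN (\Cref{lem:ceil_snn} with $m=1$) producing $\calI^{\calD}$; (b) a \texttt{REPRESENT}$_{\br}$ SNN (\Cref{lem:represent_function}) applied to $\calD$ producing $\calR$; and (c) a \texttt{REPRESENT}$_{\mathbf{1}_m}$ SNN (\Cref{lem:represent_function}) applied to $\calD$ producing $\calF$. The two \texttt{REPRESENT} blocks both have depth $O(1+\log m)$, whereas \texttt{CEIL}$_1$ has constant depth, so I would pad branch (a) with \texttt{SKIP} connections (\Cref{lem:periodic_finite_functions}(i)) until all three branches terminate at the same depth $L'=O(1+\log m)$. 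Next, the three synchronized outputs feed into an \texttt{IS-EQUAL} SNN (\Cref{lemma:boolean_construction}(v)) of constant depth and size, followed by a \texttt{SPIKE}$_m$ SNN (\Cref{lem:drop_spike}(ii)), which has depth $O(1+\log m)$ and width and sparsity $O(\sqrt{m})$. By \eqref{property_comp}, stacking these stages yields an $f_{\br}\in\SNN_h$ satisfying $f_{\br}(\calI,\calD)=\texttt{IF-THEN}_m(\calI,\br,\calD)$ for all $\calI,\calD\in\TT$.

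For the architectural accounting, depth is additive across sequential stages and each stage has depth $O(1+\log m)$, giving $L=O(1+\log m)$. The total width $\|\bp\|_1$ sums the widths of all branches and downstream modules: $O(1)$ for \texttt{CEIL}$_1$ and \texttt{IS-EQUAL}, $O(\sqrt{m})$ for each \texttt{REPRESENT} block and for \texttt{SPIKE}$_m$, plus $O(\log m)$ neurons contributed by \texttt{SKIP} padding; since $\log m\le\sqrt{m}$ this totals $O(\sqrt{m})$. For the sparsity, the $O(m)$ weights used by \texttt{REPRESENT}$_{\mathbf{1}_m}$ (for which $\|\mathbf{1}_m\|_1=m$) dominate all other contributions, giving $s=O(m)$. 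The only real book-keeping subtlety is ensuring the \texttt{SKIP} padding does not push the width above the advertised $O(\sqrt{m})$ bound, which is automatic from $\log m=O(\sqrt{m})$, so I do not expect any substantive obstacle beyond careful composition.
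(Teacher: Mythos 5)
Your proposal matches the paper's own proof essentially line for line: the same parallel branches (\texttt{CEIL}$_1$ on $(\calI,\calD)$, the two \texttt{REPRESENT} blocks on $\calD$), the same \texttt{SKIP} padding for depth synchronization, the same \texttt{IS-EQUAL} followed by \texttt{SPIKE}$_m$, and the same architectural accounting. The argument is correct and no further comparison is needed.
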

\begin{proof}

    We first devise a two-input SNN $f_1$ that implements the \texttt{CEIL}$_1$ function according to \Cref{lem:ceil_snn}. This can be realized with a bounded number of neurons and the input to this SNN are the spike trains $\calI$ and $\calD$.     
    Using \Cref{lem:represent_function}, we construct single-input SNNs $f_2$ and $f_3$ implementing the $m$-finite functions $\texttt{REPRESENT}_{\br}$ and $\texttt{REPRESENT}_{\mathbf{1}_m}$, which are $\|\br\|_1$- and $m$-sparse respectively. The SNNs can be realized with depth of order $O(1+\log(m))$, $O(\sqrt{m})$ neurons, and at most $O(m)$ nonzero weights. The inputs to both of these SNNs will be $\calD$.    
    We then attach $O(1+\log(m))$ layers of \texttt{SKIP} connections to the output of the first SNN $f_1$ to synchronize the depth of all three SNNs. We then attach an SNN $f_4$ that implements the \texttt{IS-EQUAL} function (\Cref{lemma:boolean_construction}$(vi)$) to the outputs of SNNs $f_1$, $f_2$, and $f_3$. The \texttt{IS-EQUAL} SNN can be implemented with a bounded architecture. 
    Finally, we attach another SNN that implements the \texttt{SPIKE}$_m$ function (\Cref{lem:drop_spike}$(ii)$) to the output of the \texttt{IS-EQUAL} SNN. This requires another $O(1+\log(m))$ layers and $O(\sqrt{m})$ neurons and weights. 
    The overall architecture then implements the \texttt{IF-THEN}$_m$ function as desired, and the asserted bounds on the architecture follow by construction.
\end{proof}

\begin{lemma}\label{lem:delay_repeat}
    Let $h \in (0, \infty]$ and $m \in \NN$. Then, there exists $f_i \in \SNN_h(L_i,\bp_i, s_i)$ with $L_1+ L_2= O(1+\log(m)), \|\bp_1\|_1+s_1 = O(m)$, and $ \|\bp_2\|_1+s_2 = O(m^2)$ such that for any two input spike trains $\calI, \calD \in \mathbb{T}$,
    \begin{align*}
        f_1(\calI, \calD) = \texttt{DELAY}_m(\calI, \calD) \quad \text{ and }\quad f_2(\calI, \calD) = \texttt{REPEAT}_m(\calI, \calD).
    \end{align*}
\end{lemma}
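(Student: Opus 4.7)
The plan is to first construct $f_1$ implementing $\texttt{DELAY}_m$ with $O(\log m)$ depth and $O(m)$ neurons via a \texttt{CLOCK}-based partition, and then obtain $f_2$ for $\texttt{REPEAT}_m$ by running $m+1$ parallel copies of the delay block and combining them via $\texttt{OR}$. The base cases $m\in\{1,2\}$ are trivial: $\texttt{DELAY}_1$ is the identity, realized by $\texttt{SKIP}$, while $\texttt{DELAY}_2$ is $2$-Markovian with constant sparsity and hence realized with constant architecture by \Cref{thm:function_representation}.

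For $m\geq 3$, set $M:=2^{\lceil\log_2 m\rceil}\in[m,2m)$, form $\calJ:=\texttt{OR}(\calI,\calD)$ via \Cref{lemma:boolean_construction}, and apply the $\texttt{CLOCK}_M$ module of \Cref{lem:periodic_finite_functions} to $\calJ$, producing $M$ pairwise disjoint spike trains $\mathcal{G}_1,\dots,\mathcal{G}_M$ with $\mathcal{G}_k=\{J_{(k+jM)}:j\in\NN_0\}$; this costs $O(m)$ neurons and $O(\log m)$ depth. In a constant-depth block I compute $\calI^{(k)}:=\texttt{AND}(\calI,\mathcal{G}_k)$ for each $k\in[M]$ (carrying $\calI$ through the \texttt{CLOCK} depth via $\texttt{SKIP}$-chains), adding $O(m)$ more neurons. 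For each $k\in[M]$ I define a \emph{source group} $k'\in[M]$ by $k':=k-(m-1)$ when $k\geq m$ (same block) and $k':=k+M-(m-1)$ when $k<m$ (previous block), and attach a constant-size block computing $B_k:=\texttt{DELAY}_2(\calI^{(k')},\mathcal{G}_k)$. The output of $f_1$ is the $\texttt{OR}$ of $B_1,\dots,B_M$, contributing another $O(m)$ weights in constant depth.

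Correctness rests on a short case analysis. Since $\calI^{(k')}\subseteq\mathcal{G}_{k'}$ and $\mathcal{G}_{k'}\cap\mathcal{G}_k=\emptyset$, the stream $\calI^{(k')}\cup\mathcal{G}_k$ is strictly alternating: in every block $j$, $\mathcal{G}_k$ fires exactly once at position $k+jM$, while $\calI^{(k')}$ fires at most once, at position $k'+jM$, and only when $J_{(k'+jM)}\in\calI$. Writing $p:=k+jM-(m-1)$ for the source position to test, the definition of $k'$ yields $p=k'+jM$ when $k\geq m$ and $p=k'+(j-1)M$ when $k<m$, so in both regimes the spike of $\calI^{(k')}\cup\mathcal{G}_k$ immediately preceding the block-$j$ occurrence of $\mathcal{G}_k$ lies in $\calI^{(k')}$ exactly when $J_{(p)}\in\calI$. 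Because $\texttt{DELAY}_2(\calI^{(k')},\mathcal{G}_k)$ fires at a $\mathcal{G}_k$-spike precisely when its predecessor in the union lies in $\calI^{(k')}$, we conclude that $B_k$ fires at $J_{(k+jM)}$ iff $J_{(p)}\in\calI$, and the final $\texttt{OR}$ recovers $\texttt{DELAY}_m(\calI,\calD)$.

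For $f_2$, I instantiate $m+1$ parallel copies of the construction above, one per $k\in[m+1]$, realizing $\texttt{DELAY}_k(\calI,\calD)$, and take the $\texttt{OR}$ of all outputs; each instance uses $O(k)\leq O(m)$ neurons at depth $O(\log m)$, yielding $O(m^2)$ neurons and depth $O(\log m)$ in total. The main obstacle I anticipate is verifying that the $\texttt{DELAY}_2$-based latch operates identically for every $h\in(0,\infty]$ even though the membrane potential decays between the source and output spikes. Fortunately, this is handled in a black-box manner: $\texttt{DELAY}_2$ is a $2$-Markovian, constant-sparsity function, so \Cref{thm:function_representation} supplies a constant-size SNN realizing it uniformly in $h$.
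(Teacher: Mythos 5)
Your construction is correct and follows essentially the same route as the paper's proof: form the union $\calI\cup\calD$, partition it into residue classes via a \texttt{CLOCK} module of size $O(m)$, intersect each class with $\calI$ using \texttt{AND}, shift each class by $m-1$ positions, and recombine with \texttt{OR}, with \texttt{REPEAT}$_m$ obtained as an \texttt{OR} of $m+1$ parallel delay blocks. The only localized difference is the per-class shift primitive: the paper uses the explicit three-input gadget of \Cref{lem:translate} (weights $1,1,-1$), whereas you invoke \Cref{thm:function_representation} to realize a constant-size \texttt{DELAY}$_2$ latch on the pair (source class, target class) --- equally valid, uniform in $h$, and non-circular since that theorem does not rely on this lemma.
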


\begin{proof} 
    For the \texttt{DELAY}$_m$ function, we attach a single neuron $J$ with weight $2$ to the two input neurons $I$ and $D$, producing the spike train $\calJ = \calI\cup \calD$. For $\overline m \coloneqq 2^{\lceil \log(m)\rceil +2}$ we then attach an SNN representing the \texttt{CLOCK}$_{\overline m}$ function to neuron $J$ and denote the output neurons by $N_1, \dots, N_{\overline m}$. In parallel, we attach \texttt{SKIP} connections to the input neuron $I$ until the depth is synchronized with the output of the \texttt{CLOCK}$_m$ SNN. We denote the output by $N_0$. We now attach to each pair $N_i$ and $N_0$ for $i\in [\overline m]$ an SNN representing the \texttt{AND} function, and call the output $M_i$. In parallel, we attach \texttt{SKIP} connections to all neurons $N_1, \dots, N_{\overline m}$ until the depth is synchronized and call the output $K_{1}, \dots, K_{\overline m}$. We now attach another layer of neurons $H_1, \dots, H_{\overline m}$ where $H_i$ is connected to $M_i$ with weight $1$, as well as $K_{i+m-1}$ if $i+m-1 \leq \overline m$ or $K_{i+m-1-\overline m}$ if $i+m-1> \overline m$ with weight $1$. In addition, we attach to $H_i$ the neuron $K_{i+m}$ if $i+m\leq \overline m$ and $K_{i+m-\overline m}$ if $i+m>\overline m$ with weight $-1$. Based on the choice of $\overline m$ it follows that $i \not \equiv i+m \mod \overline m$ and $i \not \equiv i+m+1 \mod \overline m$ since $m+1< \overline m$. By \Cref{lem:translate}, it follows that the output spike train of $H_i$ is given by 
    \begin{align*}
        \calH_{i}\coloneqq \big\{\calJ_{(k+m-1)} \colon  k \equiv i, k \in \NN_0, k+m-1\leq |\calJ|, \calJ_{(k)}\in \calI\big\}.  
    \end{align*}
    Attaching another neuron with weight $2$ to all neurons $K_i$ yields the desired output spike train for the \texttt{DELAY}$_m$ functions. Overall, this leads to depth $O(1+ \log(m))$ and $O(m)$ neurons and weights. 

    For the \texttt{REPEAT}$_m$ function, we connect $m+1$ SNNs  to the input neurons $I$ and $D$ in parallel where each SNN model one \texttt{CEIL}$_k$ functions for $k \in [m+1]$. This requires maximal depth $O(1+\log(m))$ and $O(m^2)$ neurons and non-zero weights. 
     We attach to all SNN outputs for $k\in [m]$  enough \texttt{SKIP}-connections to synchronize the depth of all SNNs, requiring another $O(m(1+\log(m)))$ neurons and non-zero weights. Finally, composing all these SNNs with a $1$-layer SNN representing the \texttt{OR} function yields the assertion. 
\end{proof}

\subsection{Explicit Constructions for Functions used in the Proof of Theorem \ref{thm:function_representation}}\label{subsec:proof:constructions}

 We now provide the proof for the construction of the \texttt{MEMORY}$_m$ function (\Cref{lem:memory_module}) and the representation of memoryless functions determined by a Boolean function (\Cref{prop:boolean_function_representation}).

\begin{proof}[Proof of \Cref{lem:memory_module}] 
As a first step of the hierarchical construction  for the \texttt{MEMORY}$_m$ function   we consider an SNN with $d$ input neurons whose spike trains are denoted by $\calI_1, \ldots, \calI_d$. If $m = 1$, then we attach a single layer of \texttt{SKIP} connections to these input neurons and are done. Hence, we assume $m \geq 2$ throughout the remainder of the proof. 

The first layer consists of $d+1$ neurons. For each $i \in [d]$, we add a neuron $v_{1,i}$ which is connected to the respective input neuron with weight $2$. In addition, we define the neuron $v_{1,0}$ which receives weight $2$ from all input neurons. This implements an \texttt{OR} function attached to all input neurons. The corresponding spike train of neuron $v_{1,0}$ is given by $\calI\coloneqq \bigcup_{i = 1}^{d} \calI_i$. This leads to $O(d)$ additional neurons and non-zero weights. 

We then attach for $\overline m \coloneqq 2^{\lceil \log(m) \rceil+2}$ a \texttt{CLOCK}$_{\overline m}$ network to the neuron $v_{1,0}$. This can be done in $O(\log(m))$ layers using the construction from \Cref{lem:periodic_finite_functions}. We denote the output neurons by $v_{2,1}, \ldots, v_{2,\overline m}$ and the corresponding spike trains by $\calI^{1}, \ldots, \calI^{\overline m}$ with $$\calI^{\ell} = \{\calI_{(\overline m k  + \ell)} \colon k \in \NN_0, \overline m k  + \ell\leq |\calI|\}\quad \text{ for }\ell \in [\overline m].$$ To the remaining neurons from the first layer, we attach \texttt{SKIP} connections until the depth is synchronized with the neurons $v_{2,j}$ for $j \in [d]$. We denote the output neurons by $v_{2,1}', \ldots, v_{2,d}'$. This leads to $O(\log(m))$ additional layers, $O(m +  d\log(m))$ additional  neurons and non-zero weights. 

We then define for each $i \in [d]$ and $\ell \in [\overline m]$ an \texttt{AND}-function with input neurons $v_{2,i}'$ and $v_{2,\ell}$. The output neurons are denoted by $v_{3,i, \ell}$ and the corresponding spike trains are given by  $$\calI_{i, \ell} = \{ \calI_{(\overline m k+\ell)} \colon \calI_{(\overline m k+\ell)} \in \calI_i, \overline m k+\ell\leq |\calI|, k \in \NN \}.$$%
In addition, we attach \texttt{SKIP} connections to the neurons $v_{2,1}, \ldots, v_{2,\overline m}$ to synchronize the depth of this layer. These output neurons are denoted by $v_{3,1}, \ldots, v_{3,\overline m}$. This leads to $O(1)$ additional layers, $O(dm)$ additional neurons and non-zero weights. 

We then attach another layer consisting of $d \overline m m$ neurons, where for each $i \in [d]$, $\ell\in [\overline m]$, and $j \in [m]$ we define a neuron $v_{4,i,\ell,j}$ to have weight $1$ from neuron $v_{3,i,\ell}$ and $v_{3,\ell+j-1}$ if $\ell+j-1\leq \overline m$ or $v_{3,\ell+j-1-\overline m}$ if $\ell+j-1> \overline m$, and additionally weight $-1$ from neuron $v_{3,\ell+j}$ if $\ell+j\leq \overline m$ or $v_{3,\ell+j-\overline m}$ if $\ell+j>\overline m$. By \Cref{lem:translate} and since $m +1\leq \overline m$, the spike train of neuron $v_{4,i,\ell,j}$, denoted by $\calI_{i, \ell,j}$, is given by
    \begin{align*}
        \calI_{i, \ell,j} = \{\calI_{(\overline m k + \ell + j -1)} \colon \calI_{(\overline m k + \ell)} \in \calI_i, \overline m k + \ell + j -1\leq |\calI|, k \in \NN_0\}.
    \end{align*}
This leads to $O(dm^2)$ additional neurons and non-zero weights.

Finally, for each $i \in [d]$ and $j \in [m]$ we attach an \texttt{OR}-function to the neurons $v_{4,i,\ell,j}$ for $\ell \in [\overline m]$. The output neurons are denoted by $v_{5,i,j}$ and their spike trains are given by 
\begin{align*}
    \calO_{i}^{j} \coloneqq \bigcup_{\ell\in [\overline m]}\calI_{i, \ell,j} = \{\calI_{(k)} \colon \calI_{(k-j+1)} \in \calI_i, k \in \NN, k - j + 1 \geq 1\}.
\end{align*}
This leads to a single additional layer, and $O(dm^2)$ additional neurons and non-zero weights. Altogether, this construction requires $O(\log(m))$ layers, $O(dm^2)$ neurons, and $O(dm^2)$ non-zero weights, which concludes the proof.
\end{proof}

\begin{proof}[Proof of \Cref{prop:boolean_function_representation}]

We construct the SNN layer by layer. The input layer consists of $d$ input neurons $v_{0,1}, \dots, v_{0,d}$ and the corresponding spike trains are denoted by $\calI_1, \ldots, \calI_d$.

The first hidden layer consists of $d+1$ neurons. The first $d$ neurons, denoted $v_{1,1}, \dots, v_{1,d}$ are \texttt{SKIP} connections from the input, i.e., each neuron $v_{1,i}$ receives weight $2$ from the corresponding input $v_{0,i}$. The $(d+1)$-th neuron, denoted by $v_{1, \cup}$, is connected to all input neurons with weight $2$. This acts as an \texttt{OR}$_d$-function of the $d$ inputs. 

The second layer consists of $2d+1$ neurons. For each $i \in [d]$, the neuron $v_{2,i}^+$ receives weight $2$ from $v_{1,i}$ and thus acts as a \texttt{SKIP}-connection, and, additionally, the neuron $v_{2,i}^-$ receives weight $2$ from the union neuron $v_{1, \cup}$ and weight $-2$ from neuron $v_{1,i}$. This acts as a \texttt{MINUS} function so that $v_{2,i}^-$ spikes only if $v_{1,\cup}$ spikes and $v_{1,i}$ does not. This ensures that $v_{2,i}^+$ spikes if and only if $v_{1, \cup}$ spikes and $v_{1,i}$ does not spike. The final neuron, $v_{2, \cup}$, in the second layer is connected to $v_{1, \cup}$ and has weight $2$, thus also serving as a \texttt{SKIP}-connection.

To construct the third layer, we consider the set $S= \{\bx\in \{0,1\}^d \colon F(\bx)=1\}$. For each $\by\in S$, we construct a neuron $v_{3,\by}$ which spikes at any $t\in \RR$ if and only $(\mathds{1}(t\in \calI_i))_{i \in [d]} = \by$. In particular, $\mathbf{0} \notin S$. Therefore, an output spike will only occur if at least one input neuron spikes at time $t$.
For the construction, we connect for each $i\in [d]$ the neuron $v_{2,i}^+$ to $v_{3,\by}$ with weight $2$ if $y_i=1$, or the neuron $v_{2,i}^-$ to $v_{3,\by}$ with weight $2$ if $y_i=0$. We additionally connect the union neuron $v_{2, \cup}$ to $v_{3,\by}$ with an inhibitory weight $2 - 2d$. These weights ensure the desired behavior and realize an \texttt{AND} function between the neurons $v_{2,i}^{\pm}$ which are connected to $v_{3,\by}$. This involves $O(r)$ neurons and $O(dr)$ non-zero weights. 

Finally, all neurons $v_{3,y}$ are connected to a single output neuron with weight $2$, which serves as an \texttt{OR}$_r$-function. Overall, the output neuron spikes at time $t$ if and only if there exists a $\by \in S$ such that $(\mathds{1}(t\in \calI_i))_{i \in [d]} = \by$, and the desired function on the spike trains determined by $F$ is realized. In particular, it holds $L = 4$, the total number of neurons is $O(d + r)$,  the total number of non-zero weights is $O(dr)$, and the architecture is independent of $h\in [0,\infty]$.
\end{proof}

\subsection{Real Weight SNNs are more Expressive than Integer Weight SNNs}

Even though the previous constructions use integer weights, real valued weights allow for additional expressiveness.

\begin{lemma}\label{thm:counterexample-integer-weights}
    Let $\II := \{(\calI, \calJ) \in \TT^2 \colon |\calI| = 3, |\calJ| = 2, \calI_{(3)} = \calJ_{(1)}\}$. Then, the function $F: \II \to \TT$ given by $F(\calI, \calJ) = \{\calI_{(3)}\}$ is causal, input dominated and has the monotone scaling property (on $\II$). Further, $F$ can be represented by an SNN in the class $\SNN_h(1,(2,1),2)$ with real valued weights but not by an SNN in $\SNN_h(1,(2,1),2)$ with integer values.
\end{lemma}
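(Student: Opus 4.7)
The plan has three parts. First, I will verify the three structural properties on $\II$: input-domination is immediate since $\calI_{(3)}\in \calI$; causality follows because the single output time $\calI_{(3)}$ is determined by the third spike of $\calI$, which already appears in $\calI\cap[0,\calI_{(3)}]$; and monotone scaling holds because any strictly increasing bijection $\phi$ preserves the rank ordering of spike times, hence preserves $\II$ and commutes with selecting the third spike of $\calI$.

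For the positive direction, I will exhibit the SNN with weights $w_1 = w_2 = 1/2$ from the two inputs to the single output neuron, and track the membrane potential $P$ via \eqref{eq:potential_definition} at the four relevant times $\tau_1 < \tau_2 < \tau_3 = \sigma_1 < \sigma_2$. A direct computation yields $P(\tau_1) = 1/2 \leq 1$ and $P(\tau_2) = \tfrac{1}{2}(1 + e^{-(\tau_2-\tau_1)/h}) \leq 1$, so no output spike is produced yet. At $\tau_3 = \sigma_1$ both inputs fire simultaneously, contributing $w_1 + w_2 = 1$ plus a strictly positive memory term from $P(\tau_2)$, so $P(\tau_3) > 1$ and the output spikes. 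Immediately afterward the potential resets, giving $P(\sigma_2) = (w_2)_+ = 1/2 \leq 1$, so no further spike. This holds uniformly over $\II$ and over $h \in (0, \infty]$.

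For the negative direction, suppose some integer-weighted SNN in $\SNN_h(1,(2,1),2)$ represents $F$, with weights $w_1, w_2 \in \mathbb{Z}$. The four conditions $P(\tau_1) \leq 1$, $P(\tau_2) \leq 1$, $P(\tau_3) > 1$, $P(\sigma_2) \leq 1$ must hold uniformly over $\II$. From $P(\tau_1) = (w_1)_+ \leq 1$ one gets $w_1 \leq 1$. Forcing $P(\tau_2) \leq 1$ in the regime $w_1 \geq 0$, either by letting $\tau_2 - \tau_1 \to 0^+$ if $h < \infty$ or using $e^{-x/h} = 1$ if $h = \infty$, yields $2 w_1 \leq 1$, hence $w_1 \leq 0$ by integrality. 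The post-reset condition $P(\sigma_2) \leq 1$ gives $w_2 \leq 1$. But with $w_1 \leq 0$, the indicator in \eqref{eq:potential_definition} zeroes out the memory contribution to $P(\tau_3)$, so $P(\tau_3) = (w_1 + w_2)_+ > 1$ demands $w_2 > 1 - w_1 \geq 1$, contradicting $w_2 \leq 1$.

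The main obstacle lies in the integer analysis: one must exploit that admissible configurations in $\II$ allow $\tau_2 - \tau_1$ to be arbitrarily small in order to sharpen $w_1 \leq 1$ to $w_1 \leq 1/2$, and then carefully split on the sign of $w_1$ so that the indicator and positive-part operations in \eqref{eq:potential_definition} do not mask a possible memory contribution. Once this tightening is carried out, integrality of $w_1$ and $w_2$ rules out every remaining weight assignment and the lemma follows.
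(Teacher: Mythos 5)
Your proposal is correct and follows essentially the same route as the paper: both track the membrane potential at the four spike times of $\calI\cup\calJ$, obtain $w_1\le 0$ and $w_2\le 1$ from the no-spike conditions, and derive the contradiction $w_2>1-w_1\ge 1$ for integer weights (your choice $w_1=w_2=1/2$ sits on the boundary of the paper's admissible region and is rescued by the strictly positive memory term at $\tau_3$, which is fine). One small imprecision: with $w_1\le 0$ the memory contribution to $P(\tau_3)$ vanishes because $P(\tau_2)=(w_1)_+=0$, not because the indicator $\mathds{1}(P(\tau_2)\le 1)$ switches off.
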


\begin{proof}%
    For $(\calI, \calJ) \in \II$ write $\calS := \calI \cup\calJ = \{\calS_{(1)},\calS_{(2)},\calS_{(3)},\calS_{(4)}\}$. Hence, $\calI = \{\calS_{(1)},\calS_{(2)}, \calS_{(3)}\}$ and $\calJ = \{\calS_{(3)}, \calS_{(4)}\}$. We want the output spike train to be $\{\calS_{(3)}\}$ for input spike trains $(\calI, \calJ)$. This can be achieved by an SNN with two inputs, one output and weights $w = (w_1,w_2)$ satisfying
    \[
    0<w_1\le\frac12,\qquad 1-w_1<w_2\le 1.
    \]
    At $\calS_{(1)}$ the potential is 
    $P(\calS_{(1)})=w_1<1$ and no spike occurs. At $\calS_{(2)}$,
    \[
        P(\calS_{(2)})= e^{-(\calS_{(2)}-\calS_{(1)})/h} w_1 + w_1
        = w_1\bigl(1+e^{-(\calS_{(2)}-\calS_{(1)})/h}\bigr)
        < 2w_1 \le 1,
    \]
    so again no spike occurs. At $\calS_{(3)}$, both input neurons spike and the potential exceeds  the threshold,
    \[
        e^{-(\calS_{(3)}-\calS_{(2)})/h} P(\calS_{(2)}) + w_1 + w_2
        \ge w_1 + w_2 > 1.
    \]
    Therefore, the output spikes. At $\calS_{(4)}$ the potential equals $w_2 \in (1-w_1,1] \subset (1/2,1]$, and no spike occurs. Thus, the output equals $\{\calS_{(3)}\}$ for all $(\calI, \calJ) \in \II$. 
    
    However, the same cannot be achieved with integer weights. To see this, consider weights $v=(v_1,v_2)\in\ZZ^2$ and suppose, for a contradiction, that the output spike train is $\{\calS_{(3)}\}$ for this input. As the output should not spike at $\calS_{(1)}$, we must have $v_1\le 1$. The absence of a spike at $\calS_{(2)}$ implies $v_1\le 0$ (for $v_1=1$ the potential at $\calS_{(2)}$ equals $1+e^{-(\calS_{(2)}-\calS_{(1)})/h}>1$ and $\calO$ would spike). At $\calS_{(3)}$ there is a spike, hence $v_2>1$. At $\calS_{(4)}$ there is no spike, hence $v_2\le1,$ contradicting $v_2>1$. Therefore, no integer vector $v\in\ZZ^2$ can realize this specific output spike pattern.
\end{proof}

\section{Expressivity Bounds of SNNs } \label{subsec:proof:SNN_expressivity}

In this section, we provide the proof for the quantitative statements on the expressiveness of SNNs (\Cref{subsec:app:mainExpressivenenessStatements}) as well as the cardinality lower bound on causal, input-dominated functions with monotone scaling property (\Cref{subsec:app:proof_cardinality}). To this end, we first establish some auxiliary statements (\Cref{subsec:app:auxiliary_expressiveness}) whose proofs are deferred to \Cref{app:proofs_technical_lemmas}.

\subsection{Auxiliary Statements for Expressiveness of SNNs}\label{subsec:app:auxiliary_expressiveness}

To quantify the expressiveness of general feedforward SNNs with a fixed architecture, we first consider the simpler case of a single SNN computational unit. To analyze its expressiveness, we start by deriving an explicit formula for the membrane potential of the output neuron. 

Recall that the membrane potential from \eqref{eq:potential_definition} at time $t$ of an SNN computational unit $\phi_{h,\mathbf{w}}$ with memory $h\in [0, \infty]$ and weight vector $\mathbf{w} \in \RR^d$ receiving $d$ input spike trains $\calI_1, \dots, \calI_d\subseteq (0, \infty)$ is defined recursively as
\begin{align}
    {P}(t) :=\Big(  \mathds{1}({P}(\prev{t})\leq 1)  {P}(\prev{t})e^{-\frac{t-\prev{t}}h}   + \sum_{j : t \in \calI_j} w_j\Big)_{+}, \label{eq:potential_definition_2}
\end{align}
where $\prev{t}$ denotes the presynaptic last spike time strictly before $t$ among all input spike trains and the output spike train. Crucial for our analysis of the expressiveness of SNN units is the following unfolding of the recursion in \eqref{eq:potential_definition_2}, which entails the contribution of each individual input neuron to the membrane potential for a given time frame $[r,t]$. Given input spike trains $\calI_1, \dots, \calI_d\in \TT$ and time points $0 < r \leq t,$ we define the vector of contributions of the $i$-th input neuron, for $i \in [d]$, to the membrane potential at time $t$ as
\begin{align}
    \bp^{[r,t]} \coloneqq (p_i^{[r,t]})_{i \in [d]} \in \mathbb{R}_{\geq 0}^d,\qquad  p_i^{[r,t]} := \textstyle \sum_{\tau \in \calI_i \cap [r,t]} e^{-\frac{t-\tau}h}. \label{eq:potential_vectors}
\end{align}

\begin{lemma}[Unfolded membrane potential]\label{lem:meta_potential_formula}%
Let $\calI_1, \dots, \calI_d\in \TT$ be $d$ spike trains and let $\phi_{h,\mathbf{w}}$ be an SNN unit with memory $h\in [0, \infty]$, $d$ input neurons, and weight  vector $\mathbf{w} = (w_1,\ldots,w_d)$. %
Enumerate the spike times of $\calI \coloneqq \bigcup_{j\in [d]} \calI_j$ by $\tau_1< \tau_2 < \dots$, set $\tau_0\coloneqq 0$, and let $r\in [|\calI|]$. Set $j\coloneqq \max\{i \in \{0,1,\ldots,r-1\}: P(\tau_i) \notin (0,1]\}$ for $P$ from \eqref{eq:potential_definition_2}. Then, the vectors $\bp^{[\tau_j, \tau_r]}$ defined in \eqref{eq:potential_vectors}, it holds that %
\begin{align*}
    P(\tau_r) = \textstyle \left(\sum_{k = j+1}^{r} \sum_{l\colon \tau_k \in \calI_l} w_l \exp\big(-(\tau_r - \tau_k)/h\big)\right)_+ = \left(\sum_{l=1}^{d} w_l p_l^{[\tau_{j+1}, \tau_r]}\right)_+ = \left(\langle \mathbf{w}, \bp^{[\tau_{j+1}, \tau_r]}\rangle\right)_+, 
\end{align*} 
where $\langle \cdot, \cdot \rangle$ denotes the standard inner product on $\RR^d$.
\end{lemma}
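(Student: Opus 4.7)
The plan is to unfold the recursion in \eqref{eq:potential_definition_2} by induction on $r$, starting from $r = j+1$. Note first that $j$ is well-defined because $P(\tau_0) = P(0) = 0 \notin (0,1]$, so the set over which the maximum is taken contains $i=0$. The defining property of $j$ is that for every intermediate index $k \in \{j+1, \ldots, r-1\}$ one has $P(\tau_k) \in (0,1]$; this is what I will exploit at every step. Concretely, it simultaneously yields two facts: $\mathds{1}(P(\tau_k) \leq 1) = 1$, so the indicator in the recursion does not vanish; and $P(\tau_k) > 0$, so the $(\cdot)_+$ truncation in \eqref{eq:potential_definition_2} at these intermediate steps acts trivially.

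For the base case $r = j+1$, the contribution $\mathds{1}(P(\tau_j) \leq 1) P(\tau_j)$ carried into the recursion vanishes: by definition of $j$, either $P(\tau_j) = 0$ (the product is zero) or $P(\tau_j) > 1$ (the indicator is zero). Substituting this into \eqref{eq:potential_definition_2} immediately yields $P(\tau_{j+1}) = (\sum_{l:\,\tau_{j+1} \in \calI_l} w_l)_+$, which matches the claimed formula since all exponential factors reduce to one.

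For the inductive step, assuming the formula for $\tau_{r-1}$, the key observation is that $r-1 \in \{j+1, \ldots, r-1\}$ forces $P(\tau_{r-1}) \in (0,1]$; in particular $P(\tau_{r-1}) > 0$, so the induction hypothesis reduces to the un-truncated identity
\begin{align*}
P(\tau_{r-1}) \;=\; \sum_{k=j+1}^{r-1} \sum_{l:\,\tau_k \in \calI_l} w_l \, e^{-(\tau_{r-1}-\tau_k)/h}.
\end{align*}
Plugging this into the recursion and using the semigroup identity $e^{-(\tau_{r-1}-\tau_k)/h}\cdot e^{-(\tau_r-\tau_{r-1})/h} = e^{-(\tau_r-\tau_k)/h}$, together with $e^0 = 1$ to absorb the fresh spikes at $\tau_r$ itself, the desired formula for $P(\tau_r)$ follows. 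Finally, identifying the double sum with the inner-product expression is purely combinatorial: reorganizing the sum by summing first over input neurons $l$ rather than spike times $\tau_k$ recovers exactly $p_l^{[\tau_{j+1},\tau_r]}$ from \eqref{eq:potential_vectors}, after which $w_l$ factors out.

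I do not foresee any substantial obstacle. The only delicate point is the degenerate regime $h \in \{0, \infty\}$, which I would handle by invoking the exponential conventions stated right after \eqref{eq:potential_definition}: for $h = \infty$ the exponentials collapse to $1$ and the argument is identical, while for $h = 0$ the only contribution to $P(\tau_r)$ comes from the $k = r$ term, so $j = r-1$ necessarily and both the base case and the induction reduce to a one-line computation.
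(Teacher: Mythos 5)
Your proof is correct and is essentially the paper's argument run in the opposite direction: the paper unrolls the recursion by reverse induction on the intermediate index $s$ while holding $P(\tau_r)$ fixed, whereas you induct forward on the time index starting at $\tau_{j+1}$, but both rest on exactly the same two observations — the term carrying $P(\tau_j)$ vanishes since $P(\tau_j)\notin(0,1]$, and all intermediate potentials lie in $(0,1]$ so the indicator and the $(\cdot)_+$ truncation act trivially. One small inaccuracy in your closing aside: for $h=0$ it is not true that $j=r-1$ necessarily (a time point can have $P(\tau_i)\in(0,1]$, e.g.\ if its incoming weights sum to $1/2$), but this is immaterial because your general induction already covers $h=0$ via the stated exponential conventions, all earlier terms being killed by $e^{-(\tau_r-\tau_k)/0}=0$.
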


The above result reveals a close relation between the membrane potential at the spike times and the vectors $\bp^{[r, t]}$. In particular, it allows us to devise a geometric condition for two SNN computational units with different weight vectors to produce identical output spike trains for the same inputs. To this end, we define the set of all possible $\bp^{[r, t]}$ for given input spike trains $\calI_1, \dots, \calI_d\in \TT$ as
    \begin{align}
        \calP(\calI_1, \dots, \calI_d) := \Big\{\bp^{[r, t]} \;\Big|\;   \text{ for } r, t\in \textstyle\bigcup_{j = 1}^{d}\calI_j, r\leq t \Big\}.\label{eq:potential_vectors_class}
    \end{align}

\begin{lemma}[Identifiability of outputs of SNN units]\label{lem:identifiability}
    Let $\calI_1, \dots, \calI_{d}\in \TT$ be spike trains and let $\phi_{h,\mathbf{w}_1}, \phi_{h,\mathbf{w}_2}$ be two SNN computational units with memory $h\in [0, \infty]$ and $d$ input neurons, which are parametrized by the respective weights vectors $\mathbf{w}_1$ and $\mathbf{w}_2$.  
    Define the function $\chi : s \mapsto \mathds{1}(s>1)- \mathds{1}(s\leq 0)$. If  
    \begin{align}\label{eq:identifiability_condition}
        \chi(\langle \mathbf{w}_1, \bp\rangle) = \chi(\langle \mathbf{w}_2, \bp\rangle) \quad \text{ for all } \bp\in \calP(\calI_1, \dots, \calI_d),
    \end{align}
    then $\phi_{h,\mathbf{w}_1}(\calI_1, \dots, \calI_d)=\phi_{h,\mathbf{w}_2}(\calI_1, \dots, \calI_d).$
\end{lemma}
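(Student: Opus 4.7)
The plan is to proceed by strong induction on the spike index $r\in\{0,1,\ldots,|\calI|\}$ of the pooled input spike train $\calI\coloneqq\bigcup_{j\in[d]}\calI_j$, showing that at each time $\tau_r$ the two units $\phi_{h,\mathbf{w}_1}$ and $\phi_{h,\mathbf{w}_2}$ behave identically. Since \Cref{prop:potential_well_defined} guarantees that the output spike train of each unit is dominated by $\calI$, it suffices to establish agreement at every $\tau_r$. The engine of the argument is \Cref{lem:meta_potential_formula}, which for the $k$-th unit writes
\begin{equation*}
P_k(\tau_r) \;=\; \bigl(\langle \mathbf{w}_k,\, \bp^{[\tau_{j_r^{(k)}+1},\,\tau_r]}\rangle\bigr)_+,
\end{equation*}
where the reset index $j_r^{(k)}=\max\{i<r:P_k(\tau_i)\notin(0,1]\}$ depends on the past only through the two-valued information ``$P_k(\tau_i)\in(0,1]$'' for $i<r$.

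The inductive invariant to carry at step $r$ is twofold: for every $i<r$, (a) $P_1(\tau_i)\in(0,1]$ if and only if $P_2(\tau_i)\in(0,1]$, and (b) $\tau_i$ lies in the output of unit $1$ if and only if it lies in the output of unit $2$. Invariant (a) forces $j_r^{(1)}=j_r^{(2)}\eqqcolon j_r$, so both units compute their potential at $\tau_r$ from the same vector $\bp^{[\tau_{j_r+1},\tau_r]}$, which belongs to $\calP(\calI_1,\ldots,\calI_d)$ by definition. Hypothesis \eqref{eq:identifiability_condition} therefore gives $\chi(\langle\mathbf{w}_1,\bp^{[\tau_{j_r+1},\tau_r]}\rangle)=\chi(\langle\mathbf{w}_2,\bp^{[\tau_{j_r+1},\tau_r]}\rangle)$. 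Recalling the trichotomy of the positive-part recursion, the value $\chi(\langle\mathbf{w}_k,\bp\rangle)=1$ encodes $P_k(\tau_r)>1$ (a spike), the value $-1$ encodes $P_k(\tau_r)=0$, and the value $0$ encodes $P_k(\tau_r)\in(0,1]$. A single equality of $\chi$-values thus simultaneously propagates both parts of the invariant to step $r$.

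The base case $r=0$ is automatic since $P_k(0)=0$ and no spike is released at $t=0$. I do not expect a serious obstacle: the core point is that the three-valued map $\chi$ is precisely tailored to encode the trichotomy of the membrane-potential recursion in \eqref{eq:potential_definition_2} --- spike versus sub-threshold carry versus non-positive clip --- and hypothesis \eqref{eq:identifiability_condition} is exactly what is needed to preserve this information along the induction. The only bookkeeping subtlety is verifying that $\tau_{j_r+1},\tau_r\in\calI$ so that the relevant contribution vector lies in $\calP(\calI_1,\ldots,\calI_d)$; this is immediate from $0\leq j_r<r\leq|\calI|$ and the definition of $\calP$ in \eqref{eq:potential_vectors_class}.
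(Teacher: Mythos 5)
Your proposal is correct and follows essentially the same route as the paper's proof: induction over the pooled input spike times, using \Cref{lem:meta_potential_formula} to reduce the potential at each $\tau_r$ to an inner product with a vector in $\calP(\calI_1,\dots,\calI_d)$, and observing that the three-valued map $\chi$ simultaneously tracks the spike decision and the reset index so that both propagate through the induction. The one bookkeeping point you flag (that $\tau_{j_r+1},\tau_r\in\calI$ so the contribution vector lies in $\calP$) is handled correctly and, if anything, slightly more carefully than in the paper.
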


We now proceed with an upper bound on the number of distinct functions that can be represented by a single SNN computational unit. 

\begin{proposition}[expressiveness of one SNN computational unit]\label{thm:single_neuron_expressivity}
    Consider a finite collection of $d$-tuples of spike trains $\II \subseteq \TT^d$,  and denote by $\phi_{h,\mathbf{w}}$ an SNN computational unit with memory $h\in [0, \infty]$ and weight vector $\mathbf{w} \in \RR^d$. Define $\calP(\II)\coloneqq \bigcup_{(\calI_1, \dots, \calI_d)\in \II} \calP(\calI_1, \dots, \calI_d)$ with $\calP(\calI_1, \dots, \calI_d)$ from \eqref{eq:potential_vectors_class} and let $H \coloneqq  |\calP(\bbI)|$. Then, it follows that
    \begin{align}\label{eq:single_neuron_expressivity}
    \begin{aligned}
       &\Big|\big\{\bbB_{\bw}\colon \bbI \to \mathbb{T}, (\calI_1, \dots, \calI_d)\mapsto \phi_{h,\mathbf{w}}(\calI_1, \dots, \calI_d) \;\colon \;\mathbf{w} \in \RR^d\big\}\Big|
       \leq (8e H)^{d}.
    \end{aligned}\end{align}
    Moreover, $H\leq |\bbI|(dT)^2$ for $T\coloneqq \max_{(\calI_1, \dots, \calI_d)\in \bbI}\max_{j \in [d]}|\calI_j|$.
\end{proposition}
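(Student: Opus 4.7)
The plan is to combine Lemma~\ref{lem:identifiability} with a classical region-counting bound for hyperplane arrangements. By that lemma, two weight vectors $\bw_1, \bw_2$ yield the same function $\bbI \to \TT$ whenever the sign vectors $(\chi(\langle \bw_i, \bp\rangle))_{\bp \in \calP(\bbI)} \in \{-1, 0, 1\}^H$ coincide. Hence the left-hand side of \eqref{eq:single_neuron_expressivity} is bounded above by the number of distinct such sign vectors realized as $\bw$ varies over $\RR^d$.

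For each $\bp \in \calP(\bbI)$, the value $\chi(\langle \bw, \bp\rangle)$ is determined by the position of $\bw$ relative to the two affine hyperplanes $\{\bw : \langle \bw, \bp\rangle = 0\}$ and $\{\bw : \langle \bw, \bp\rangle = 1\}$, giving in total an arrangement of $2H$ hyperplanes in $\RR^d$. I would first verify that every sign vector realized on a lower-dimensional face of this arrangement is already realized in some adjacent open cell. Indeed, $\chi(0) = -1$ coincides with the $\chi$-value on the side $\langle \cdot, \bp\rangle < 0$, and $\chi(1) = 0$ coincides with the value on the side $0 < \langle \cdot, \bp\rangle < 1$. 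Thus for any $\bw$ lying on some of the hyperplanes, it suffices to find a perturbation that pushes $\bw$ off all of them in the appropriate direction. The key observation is that every $\bp^{[r,t]}$ lies in the non-negative orthant and is non-zero, since $r$ (or $t$) is an actual spike time. Consequently, the perturbation $\bw \mapsto \bw - \epsilon \sum_{k} \bp_k$, with the sum running over the hyperplanes on which $\bw$ lies, strictly decreases $\langle \bw, \bp_k\rangle$ for every active $k$, placing the perturbed weight vector into an open cell with the same sign vector.

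Consequently, the number of distinct sign vectors is bounded by the number of open cells of an arrangement of $2H$ hyperplanes in $\RR^d$, which is at most $\sum_{i=0}^d \binom{2H}{i}$. For $2H \geq d$ the standard Sauer-Shelah-type bound gives $\sum_{i=0}^d \binom{2H}{i} \leq (2eH/d)^d \leq (8eH)^d$, and for $2H < d$ we have $\sum_{i=0}^d \binom{2H}{i} = 2^{2H} \leq (8eH)^d$ (assuming $H \geq 1$), which establishes the target bound \eqref{eq:single_neuron_expressivity}.

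Finally, for the bound $H \leq |\bbI|(dT)^2$, I would observe that each tuple $(\calI_1, \ldots, \calI_d) \in \bbI$ contributes vectors $\bp^{[r,t]}$ indexed by pairs $(r, t)$ with $r \leq t$ drawn from $\bigcup_{j \in [d]} \calI_j$, a set of cardinality at most $dT$. Hence each tuple contributes at most $(dT)^2$ elements to $\calP(\bbI)$, and summing over $\bbI$ yields the claim. The main obstacle is the perturbation argument in the second step: the non-negativity and non-vanishing of the contribution vectors $\bp^{[r,t]}$ are essential to construct a single direction that simultaneously escapes every active hyperplane on the correct side, ensuring that boundary sign vectors produce no new patterns beyond those counted by open cells.
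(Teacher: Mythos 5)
Your proposal is correct and follows the paper's overall strategy --- reduce to counting sign vectors $\bigl(\chi(\langle \bw,\bp\rangle)\bigr)_{\bp\in\calP(\bbI)}$ via Lemma~\ref{lem:identifiability}, then bound these by the combinatorics of the arrangement of the $2H$ hyperplanes $\{\langle\cdot,\bp\rangle=0\}$ and $\{\langle\cdot,\bp\rangle=1\}$ --- but it handles the key counting step by a genuinely different mechanism. The paper deals with weight vectors lying \emph{on} hyperplanes generically: its Lemma~\ref{lem:bound_Halfspaces_Hyperplanes} injects the non-empty closed/half-open sign regions into the faces of all dimensions of the arrangement, and then invokes the full face-count formula from \citet{halperin2017arrangements}, which it subsequently bounds by $(8eH)^d$. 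You instead exploit the specific structure of the vectors $\bp^{[r,t]}$: since they are non-negative and non-zero (the coordinate carrying the spike at time $t$ contributes $e^{0}=1$), the single perturbation direction $-\sum_k \bp_k$ over the active constraints strictly decreases every active inner product, and because $\chi(0)=-1$ and $\chi(1)=0$ agree with the values of $\chi$ just below these thresholds, the perturbed weight lands in an open cell realizing the same sign vector. This reduces the count to the number of top-dimensional cells, $\sum_{i=0}^{d}\binom{2H}{i}$, which is a slightly sharper intermediate bound than the paper's face count and makes the paper's auxiliary Lemma~\ref{lem:bound_Halfspaces_Hyperplanes} unnecessary; the price is that your argument uses the non-negative-orthant structure of $\calP(\bbI)$, whereas the paper's route works for arbitrary arrangements. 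Your Sauer--Shelah case split and the bound $H\le|\bbI|(dT)^2$ match the paper's reasoning (the paper counts pairs $r\le t$ from $\bigcup_j\calI_j$ per tuple in the same way), and both arguments share the implicit, harmless assumption $H\ge 1$.

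One small point to make explicit when writing this up: when several active constraints share the threshold value $1$, you need $\epsilon$ small enough that the perturbed inner products stay in $(0,1)$ rather than dropping to or below $0$; this is immediate since the decrease is $O(\epsilon)$, but it should be stated.
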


\subsection{Proof of the Expressivity bound}\label{subsec:app:mainExpressivenenessStatements}

\begin{proof}[Proof of \Cref{thm:SNN_expressivity}]
    We write $p_0 := d$.
    By definition, 
    $\bbf_{\theta} \in \SNN_h(L,\mathbf{p}, s)$ is parametrized by a tuple of weight matrices $\theta := (W_1,W_2,\ldots, W_L) \in \Theta$ with
    $$\Theta := \Big\{(W_1,W_2,\ldots, W_L) : W_{\ell} \in \RR^{p_{\ell} \times p_{\ell-1}} \text{ for } \ell \in [L], \textstyle \sum_{\ell\in [L]} \|W_{\ell}\|_0 \leq s \Big\}.$$
    Let $D:=\smash{\sum_{\ell\in [L]}} p_{\ell-1} p_{\ell}$ be the total number of parameters in $\SNN_h(L,\mathbf{p}, s)$. 
    For each choice of $D-s$ zero positions among the $D$ parameters, we define $\Theta_m \subset \Theta, \quad m = 1,2,\ldots, \tbinom{D}{s}$ 
    as the subset of weight tuples whose zero entries include those $D-s$ positions.
    Then, 
    \begin{align*}
        \bigcup_{m=1}^{\tbinom{D}{s}} \Theta_m = \Theta,
    \end{align*}
    and $\{\Theta_m, m \in [\tbinom{D}{s}]\}$ are not disjoint since networks with fewer than $s$ nonzero entries belong to multiple sets $\Theta_i$.   
    In the following, we will show that $m \in [\tbinom{D}{s}]$,
    \begin{align}\label{eq:to_prove_SNN_expressivity}
       \Big|\left\{\bbB_w\colon \bbI \to \mathbb{T}^{p_L}, (\calI_1, \dots, \calI_{d})\mapsto \bbf_{\theta}(\calI_1, \dots, \calI_{d}) \colon \theta \in \Theta_m  \right\}\Big|
       \leq (8 e T_{\operatorname{sum}})^{s}.
    \end{align}
    Then, from union bound, combined with $\tbinom{D}{s} \leq D^s$ and
    $$D = \textstyle \sum_{\ell=1}^{L} p_{\ell-1} p_{\ell}
    \leq \sum_{\ell=1}^{L} (p_{\ell-1} \vee p_{\ell})^2 
    \leq \left(\sum_{\ell=1}^{L} (p_{\ell-1} \vee p_{\ell})\right)^2
    \leq s^2,$$
    the assertion follows.

    To prove \eqref{eq:to_prove_SNN_expressivity}, we denote by $\calI_{\ell, j}$ the spike train of the $j$-th node of the $\ell$-th layer for $\ell \in \{0,1,\ldots,L\}$ and $j \in [p_{\ell}]$. 
    We fix the sparsity pattern $m \in [\tbinom{D}{s}].$
    For $\ell \in [L]$ and $j \in [p_{\ell}]$,
    $s_{\ell, j}$ denotes the number of non-zero parameters in the $j$-th column of $W_{\ell}$.
    Then, $\calI_{\ell,j}$ is produced by a SNN computational unit receiving input from $s_{\ell,j}$ spike trains among $\{\calI_{\ell-1,1}, \ldots, \calI_{\ell-1,p_{\ell-1}}\}$.

    Recall the definition of $\calP$ in \eqref{eq:potential_vectors_class}. %
    Since   
    $|\calP(\calI^{(i)}_1, \dots, \calI^{(i)}_{d})| \leq (T_i+1)^2$ for every $i \in [n]$, we have
    $|\calP(\mathbb{I})| \leq T_{\operatorname{sum}}.$ 
    By \Cref{thm:single_neuron_expressivity}, the number of possible $1$-layer SNN induced functions $(\calI_{0,1}, \ldots, \calI_{0,{d}}) \mapsto (\calI_{1,1}, \ldots, \calI_{1,{p_1}})$ on the domain $\mathbb{I}$ is bounded by
    $$ \prod_{j=1}^{p_1}  (8eT_{\operatorname{sum}})^{s_{1,j}} =  (8eT_{\operatorname{sum}})^{\sum_{j=1}^{p_1} s_{1,j}}.$$

    We now proceed by induction from the first to the $L$-th SNN layer.
    Assume that for $\ell \in \{1,\ldots,L-1\}$, the total number of possible SNN induced functions 
    $(\calI_{0,1}, \ldots, \calI_{0,{d}}) \mapsto (\calI_{\ell,1}, \ldots, \calI_{\ell,{p_\ell}})$ on the domain $\mathbb{I}$ based on parameter space $\Theta_m$ is upper bounded by
    $ (8 e T_{\operatorname{sum}})^{\sum_{k=1}^\ell \sum_{j=1}^{p_k} s_{k,j}}.$
    We fix one of these possible $\ell$-layer SNN induced functions, $\boldsymbol{\psi}_{\ell} \circ \ldots \circ \boldsymbol{\psi}_{1}: \mathbb{I} \to \mathbb{T}^{p_\ell}$.
    By Corollary \ref{cor:inputPotential}, for each $i \in [n]$ the spike train $\boldsymbol{\psi}_{\ell} \circ \ldots \circ \boldsymbol{\psi}_{1}(\calI^{(i)}_1, \dots, \calI^{(i)}_{d})$ is dominated by $\bigcup_{j=1}^{d} \calI^{(i)}_j$, and hence has at most $T_i$ spikes.
    We thus get
    \begin{align*}
        &\Big|\calP\big(\{ \boldsymbol{\psi}_{\ell} \circ \ldots \circ \boldsymbol{\psi}_{1}(\calI_{0,1}, \ldots, \calI_{0,{d}}) : (\calI_{0,1}, \ldots, \calI_{0,{d}}) \in \mathbb{I} \}\big)\Big|\\ 
        &\leq \sum_{i=1}^n         
        \Big|\calP\big( \boldsymbol{\psi}_{\ell} \circ \ldots \circ \boldsymbol{\psi}_{1}(\calI^{(i)}_1, \dots, \calI^{(i)}_{d}) \big)\Big|\leq \sum_{i=1}^n (T_i+1)^2 = T_{\operatorname{sum}}.
    \end{align*}
 By \Cref{thm:single_neuron_expressivity}, the number of possible $1$-layer SNN induced functions $(\calI_{\ell,1}, \ldots, \calI_{\ell,{p_\ell}}) \mapsto (\calI_{\ell+1,1}, \ldots, \calI_{\ell+1,{p_{\ell+1}}})$ on the domain $\{ \boldsymbol{\psi}_{\ell} \circ \ldots \circ \boldsymbol{\psi}_{1}(\calI_{0,1}, \ldots, \calI_{0,{d}}) : (\calI_{0,1}, \ldots, \calI_{0,{d}}) \in \mathbb{I} \}$ is upper bounded by
    $$ \prod_{j=1}^{p_{\ell+1}}  (8eT_{\operatorname{sum}})^{s_{\ell+1,j}} =  (8eT_{\operatorname{sum}})^{\sum_{j=1}^{p_{\ell+1}} s_{\ell+1,j}}.$$
   The number of $(\ell+1)$-layer SNN induced functions 
    $(\calI_{0,1}, \ldots, \calI_{0,{d}}) \mapsto (\calI_{\ell+1,1}, \ldots, \calI_{\ell+1,{p_{\ell+1}}})$ on domain $\mathbb{I}$ and parameter space $\Theta_m$ is therefore upper bounded by
    $$ (8 e T_{\operatorname{sum}})^{\sum_{k=1}^{\ell+1} \sum_{j=1}^{p_k} s_{k,j}}.$$

    Iterating this process until $\ell = L$, the total number of possible $L$-layer SNN induced functions 
    $(\calI_{0,1}, \ldots, \calI_{0,{d}}) \mapsto (\calI_{L,1},  \ldots, \calI_{L,{p_L}})$ on domain $\mathbb{I}$ and for the parameter space $\Theta_m$ is upper bounded by 
    \begin{align*}
         (8 e T_{\operatorname{sum}})^{\sum_{\ell=1}^L \sum_{j=1}^{p_\ell} s_{\ell,j}} = (8 e T_{\operatorname{sum}})^{s},
    \end{align*}
    which proves \eqref{eq:to_prove_SNN_expressivity} and hence the assertion.
\end{proof}

\subsection{Proof of the Cardinality Bound on Causal Functions}\label{subsec:app:proof_cardinality}

\begin{proof}[Proof of \Cref{lem:cardinality_causal_functions_2}] 
If $(\calI_1, \ldots, \calI_d) \in \II$, 
then for each $t \in [m]$ at least one of the spike trains $\calI_1,\ldots, \calI_d$ includes $t$, and hence the number of possible values of $(\mathds{1}(t \in \calI_j))_{j \in [d]}$ is $2^d-1$.
This yields $|\II| = (2^d-1)^m$.
 We enumerate the elements of $\II$ by $\mathbf{I}_{1}, \ldots, \mathbf{I}_{(2^d-1)^m}$.  

For $k \in [(2^d - 1)^m]$, we define a map $\nu_{k} : \FF_{\mathrm{mm}}(d,m,r) \to \{0,1\}$ as 
$$\nu_{k}(F) := \mathds{1} \Big(m \in F(\mathbf{I}_{k}) \Big).$$
Based on this perspective, it follows that 
\begin{enumerate}
    \item For any $\bb \in \{0,1\}^{(2^d - 1)^m}$ with $\|\bb\|_1 \leq r$, there exists $F \in \FF_{\mathrm{mm}}(d,m,r)$ such that $(\nu_{k}(F))_{k \in [(2^d - 1)^m]} = \bb$. 
    \item For $F_1, F_2 \in \FF_{\mathrm{mm}}(d,m,r)$, if 
    $$\big(\nu_{k}(F_1)\big)_{k \in [(2^d - 1)^m]} \neq \big(\nu_{k}(F_2)\big)_{k \in [(2^d - 1)^m]},$$
    then there exists $\bI \in \II$ such that $F_1(\bI) \neq F_2(\bI)$.
\end{enumerate}
Moreover, we have
\begin{align}
    \Big|\big\{\bb \in \{0,1\}^{(2^d - 1)^m}: \|\bb\|_1 \leq r\big\} \Big|
    &= \sum_{i=0}^r \binom{(2^d - 1)^m}{i} \nonumber \\
    &= \sum_{i=0}^r \binom{(2^d - 1)^m}{i} \Big(\mathds{1}\big(r \leq (2^{d}-1)^{m}\big) + \mathds{1}\big(r \geq (2^{d}-1)^{m}+1\big)\Big)\nonumber \\
    &\geq 
    2^r \land 2^{(2^{d}-1)^{m}}. \label{tmp_001}
\end{align}
By $(i)$, $(ii)$ and \eqref{tmp_001}, we obtain the assertion. \qedhere
\end{proof}

\subsection{Proofs for Auxiliary Statements on Expressiveness}\label{app:proofs_technical_lemmas}

\begin{proof}[Proof of \Cref{lem:meta_potential_formula}]
    The proof follows by unrolling the definition of the membrane  potential in \eqref{eq:potential_definition}. 
    Specifically, we show via (reverse) induction for every $s \in \{j+1, \dots, r\}$,  
    \begin{align*} 
        P(\tau_r) = \bigg( \mathds{1}\big(P(\tau_{s-1}) \leq 1 \big)
        P(\tau_{s-1}) \exp\Big(-\frac{\tau_r - \tau_{s-1}}h\Big) + \sum_{k =s}^{r} \sum_{l\colon \tau_k \in \calI_l} w_l \exp\Big(-\frac{\tau_r - \tau_k}h\Big)\bigg)_+.
    \end{align*}
    Choosing $s = j+1$ yields the desired result. 

    The case $s = r$ follows directly from \eqref{eq:potential_definition}. For the induction step, we assume that the claim holds for some $s\in \{j+2, \dots, r\}$, and we need to show that it also holds for $s-1$. Since $s-1 \geq j+1$ we know by definition of $j$ that $P(\tau_{s-1}) \in (0,1]$ and this asserts 
    \begin{align*}\textstyle 
        \mathds{1}\big(P(\tau_{s-1}) \leq 1 \big) P(\tau_{s-1}) &= P(\tau_{s-1}) \\*
        &= 
        \mathds{1}\big(P(\tau_{s-2}) \leq 1 \big)
        P(\tau_{s-2})\exp\Big(-\frac{\tau_{s-1}-\tau_{s-2}}h\Big) + \sum_{l\colon \tau_{s-1} \in \calI_l} w_l,
    \end{align*} 
    where we can drop the expression $(\cdot)_+$ as the argument is positive. Plugging this into the induction hypothesis yields the claim for $s-1$ and finishes the proof by mathematical induction: 
    \begin{align*}
        \textstyle  P(\tau_r) = \Bigg(&\left(\mathds{1}\big(P(\tau_{s-2}) \leq 1 \big) P(\tau_{s-2}) \exp\Big(-\frac{\tau_{s-1} - \tau_{s-2}}h\Big) + \sum_{l\colon \tau_{s-1} \in \calI_l} w_l  \right)\exp\Big(-\frac{\tau_r - \tau_{s-1}}h\Big)\\* \textstyle 
         & + \sum_{k =s}^{r}\sum_{l\colon \tau_k \in \calI_l} w_l \exp\big(-\frac{\tau_r - \tau_k}h\Big) \Bigg)_+\\\textstyle 
            = \Bigg(&\mathds{1}\big(P(\tau_{s-2}) \leq 1 \big) P(\tau_{s-2}) \exp\Big(-\frac{\tau_r - \tau_{s-2}}h\Big) + \sum_{k =s}^{r}\sum_{l\colon \tau_k \in \calI_l} w_l \exp\Big(-\frac{\tau_r - \tau_k}h\Big) \Bigg).   \qedhere
   \end{align*}
\end{proof}

\begin{proof}[Proof of \Cref{lem:identifiability}]
    We prove the claim by mathematical induction with respect to the collection of all input spike times $\calI = \bigcup_{j = 1}^{d} \calI_j \subset (0, \infty)$. Indeed, by \Cref{prop:potential_well_defined} an output spike can only occur at an input spike time $\tau \in \calI$. It thus suffices to compare whether the membrane potentials from \eqref{eq:potential_definition} at $\tau$ for $\phi_{h,\mathbf{w}_1}$ and $\phi_{h,\mathbf{w}_2}$ either both exceed $1$ or are both below $0$. We enumerate the elements of $\calI$ by $0<\tau_1 < \tau_2 < \dots$, set $\tau_0 \coloneqq 0$, and denote the potential for the output neuron associated to $\phi_{h,\mathbf{w}_i}$ as $P_i$ for each $i \in \{1, 2\}$.

    For the base case, we consider the time point $\tau_1$, and we need to show that $P_1(\tau_1)>1$ if and only if $P_2(\tau_1)>1$ and $ P_1(\tau_1)= 0$ if and only if $P_2(\tau_1)= 0$. To this end, note that 
    \begin{align*}
       \textstyle  P_1(\tau_1) =
       \left(\langle \bw_1, \bp^{[\tau_0, \tau_1]}\rangle\right)_+, \quad 
       \textstyle  P_2(\tau_1) =
       \left(\langle \bw_2, \bp^{[\tau_0, \tau_1]}\rangle\right)_+,
    \end{align*}
    and the desired equivalence follows from the assumption in \eqref{eq:identifiability_condition}. 

    Let us now suppose that $\chi( P_1(\tau_i))_{i = 1, \dots, k-1} = \chi(P_2(\tau_i))_{i = 1, \dots, k-1}$ for some $k\geq 2$, and thus that the two output spike trains are identical up to time $\tau_{k-1}$. We need to show that the output spike trains are also identical at time $\tau_k$. 
    To this end, consider 
    $$j\coloneqq \max\big(i \in \{0,1,\ldots,k-1\} \colon P_1(\tau_i) \notin (0,1]\big).$$ 
    Hence, at $\tau_j$ either $ P_1(\tau_j) = 0$ or $ P_1(\tau_j)>1$. In the first case, we have by the induction hypothesis that $ P_2(\tau_j)=0$, 
    whereas in the second case we have $ P_2(\tau_j)>1$. In both cases, we thus have that $P_2(\tau_{j}) \notin (0,1]$. Moreover, for every subsequent time point $i \in \{j+1, \ldots, k-1\}$ it holds that $P_1(\tau_i)\in (0,1]$ and thus by the induction hypothesis also $P_2(\tau_i)\in (0,1]$. 
    Hence, we obtain
    $$\max\big(i \in \{0,1,\ldots,k-1\} \colon P_2(\tau_i) \notin (0,1]\big) = j.$$     
     By applying \Cref{lem:meta_potential_formula}  for both $\phi_{h,\mathbf{w}_1}$ and $\phi_{h,\mathbf{w}_2}$ at time $\tau_k$ (i.e., $r=k$),  we get
    \begin{align*}
        P_1(\tau_k) = \left(\langle \mathbf{w}_1, \bp^{[\tau_{j+1}, \tau_k]}\rangle\right)_+, \quad P_2(\tau_k) = \left(\langle \mathbf{w}_2, \bp^{[\tau_{j+1}, \tau_k]}\rangle\right)_+.
    \end{align*}
    The desired equivalence at time $\tau_k$ thus follows from \eqref{eq:identifiability_condition}, completing the induction. 
\end{proof}

\begin{proof}[Proof of \Cref{thm:single_neuron_expressivity}]
    By \Cref{lem:identifiability}, two SNN computational units $\phi_{h,\mathbf{w}_1}$ and $\phi_{h,\mathbf{w}_2}$ with weight vectors $\mathbf{w}_1, \mathbf{w}_2 \in \RR^d$ have identical output spike trains for all combinations of input spike trains $(\calI_1, \ldots, \calI_d) \in \bbI$, and thus satisfy $\phi_{h,\mathbf{w}_1} = \phi_{h,\mathbf{w}_2}$ on $\bbI$, if 
    \begin{align}
        \chi(\langle \mathbf{w}_1, \bp\rangle) = \chi(\langle \mathbf{w}_2, \bp\rangle) \quad \text{ for all } \bp\in \calP(\bbI),\label{eq:identifiability_condition_2}
    \end{align}
    where $\chi(x) = \mathds{1}(x>1)-\mathds{1}(x\leq 0)\in \{-1,0,1\}$ and where $\calP(\bbI)$ is defined in \eqref{eq:potential_vectors_class}. 

   Condition \eqref{eq:identifiability_condition_2} has a specific geometric interpretation. For each $\bp\in \calP(\bbI)$ we associate two hyperplanes $\calH_0(\bp)$ and $\calH_1(\bp)$ in $\RR^d$ defined by $\calH_i(\bp)\coloneqq \{ \mathbf{w} \in \RR^d \colon \langle \mathbf{w}, \bp\rangle = i\}$ for $i\in \{0, 1\}$. 
   These sets define proper hyperplanes because $\bp \neq \mathbf{0}$ for all $\bp\in \calP(\bbI)$, since each vector $\bp$ involves at least one contribution from a spike.  
   The hyperplanes divide the space into three regions: the region $\calR_{-1}(\bp)\coloneqq \{\mathbf{w}\in \RR^d \colon \langle \mathbf{w}, \bp\rangle \leq 0\}$ below $\calH_0(\bp)$, 
   the region $\calR_{0}(\bp)\coloneqq \{\mathbf{w} \in \RR^d \colon 0<\langle \mathbf{w}, \bp\rangle \leq 1\}$ between the two hyperplanes, and the region $\calR_{1}(\bp)\coloneqq \{\mathbf{w} \in \RR^d\colon 1<\langle \mathbf{w}, \bp\rangle\}$ above $\calH_1(\bp)$. The condition on $\mathbf{w}_1$ and $\mathbf{w}_2$ in \eqref{eq:identifiability_condition_2} can thus be rephrased as follows: For every $\bp\in \calP(\bbI)$, the weight vectors $\mathbf{w}_1$ and $\mathbf{w}_2$ must either both lie in $\calR_{-1}(\bp)$, or both lie in $\calR_{0}(\bp)$, or both lie in $\calR_{1}(\bp)$. 
   
   Hence, the number of different SNN units $\phi_{h,\mathbf{w}}$ acting on $\bbI$ for $\mathbf{w} \in \RR^d$ is upper bounded by the number of different ways to assign each $\bp \in \calP(\bbI)$ to one of the three regions $\calR_{-1}(\bp)$, $\calR_{0}(\bp)$, or $\calR_{1}(\bp)$. Equivalently, it reduces the problem to counting the number of different intersections of the form $\bigcap_{\bp \in \calP(\bbI)} \calR_{r(\bp)}(\bp)$ for $r(\bp)\in \{-1, 0, 1\}$. Consequently, we have
    \begin{align*}
        &\left|\left\{\bbB_{\bw}\colon \bbI \to \mathbb{T}, (\calI_1, \dots, \calI_d)\mapsto \phi_{h,\mathbf{w}}(\calI_1, \dots, \calI_d), \mathbf{w} \in \RR^d\right\}\right|\\
        &\leq \left|\left\{ \bigcap_{\bp \in \calP(\bbI)} \calR_{r(\bp)}(\bp) \;\colon\; r\colon \calP(\bbI)\to \{-1,0,1\}\right\}\right|.
    \end{align*}
    The latter is upper bounded by so-called  hyperplane arrangements, see \Cref{lem:bound_Halfspaces_Hyperplanes} below. By e.g.\ \citet[Theorem 28.1.1]{halperin2017arrangements}, the number of different regions formed by the hyperplanes $\calH_0(\bp)$ and $\calH_1(\bp)$ for $\bp \in \calP(\bbI)$
    is upper bounded by 
    \begin{align*}
        \sum_{k = 0}^{d} \sum_{i = 0}^{k} \binom{d-i}{k-i} \binom{2H}{d-i}  =\sum_{i = 0}^{d} \binom{2r}{d-i}  \sum_{k = i}^{d}  \binom{d-i}{k-i}  &=\sum_{i = 0}^{d} \binom{2H}{d-i}  2^{d-i} \notag =\sum_{i = 0}^{d} \binom{2H}{i}  2^{i} \notag \\&
        \leq  2^{d} \sum_{i = 0}^{d} \binom{2H}{i}\notag
           \leq (d+1) \left(\frac{4eH}{H \land d}\right)^{d} %
         \leq (8eH)^{d},  
    \end{align*}
where for the second inequality we used that for all $i\in \{0, \dots, d\}$,
$$\binom{2H}{i}\leq \binom{2H}{d} \mathds{1}(H \geq d) + \binom{2H}{H} \mathds{1}(H < d) = \binom{2H}{H \land d} \leq \left(\frac{2eH}{H \land d}\right)^{H \land d} \leq \left(\frac{2eH}{H \land d}\right)^{d}.$$
The last inequality follows by noticing that  $(d+1)/d^{d} \leq 2^d$ and $d+1 \leq 2H^d$ for $d, H \in \NN$. 
This proves the assertion. 
\end{proof}
\begin{lemma}\label{lem:bound_Halfspaces_Hyperplanes}
The number of partition cells formed by intersections of given half-spaces $\calH_1, \dots, \calH_r$ in $\RR^d$ and their complements is upper bounded by the number of different (connected) non-empty regions formed by intersecting the hyperplanes $\partial \calH_1, \dots, \partial \calH_r$.
\end{lemma}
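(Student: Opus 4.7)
The plan is to exhibit a surjection from the faces of the hyperplane arrangement $\partial \calH_1, \ldots, \partial \calH_r$ onto the non-empty partition cells formed by the half-spaces and their complements, which immediately yields the claimed inequality.

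As a first step, I will invoke the standard decomposition of $\RR^d$ induced by the arrangement: writing $\partial \calH_i = \{w : \langle w, \bp_i\rangle = c_i\}$, the $r$ hyperplanes decompose $\RR^d$ into pairwise disjoint, relatively open, convex pieces (the faces of the arrangement), where each face $F$ is uniquely labeled by a sign vector $\sigma(F) \in \{-, 0, +\}^r$ recording the strict sign of $\langle w, \bp_i\rangle - c_i$ for any $w \in F$. Summed over all dimensions, the total number of such faces equals the quantity the lemma refers to as the number of connected non-empty regions formed by the hyperplanes.

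Next, I will show that every face $F$ is contained in exactly one partition cell $C(F) := \bigcap_{i=1}^r \calH_i^{\epsilon_i(F)}$, where $\epsilon_i(F) \in \{+1, -1\}$ depends only on $\sigma_i(F)$ together with whether $\calH_i$ is open or closed at the boundary (with $\calH_i^{+1} := \calH_i$ and $\calH_i^{-1} := \RR^d \setminus \calH_i$). When $\sigma_i(F) \in \{+, -\}$ this is immediate since the face lies strictly on one side of $\partial \calH_i$; when $\sigma_i(F) = 0$ the whole face lies in $\partial \calH_i$ and is therefore either entirely inside $\calH_i$ or entirely inside its complement, giving a unique $\epsilon_i(F)$. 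Consequently, every partition cell is a disjoint union of faces of the arrangement.

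Finally, I will verify that the map $F \mapsto C(F)$ is surjective onto the collection of non-empty partition cells: every non-empty cell contains at least one point, that point belongs to a unique face, and this face is then mapped back to the original cell. The bound $\#\{\text{non-empty partition cells}\} \leq \#\{\text{faces of the arrangement}\}$ follows immediately. I do not anticipate a serious obstacle; the argument reduces to a purely combinatorial bookkeeping with sign vectors, and the only point requiring care is the uniform treatment of the boundary case $\sigma_i(F) = 0$ across open, closed, or mixed half-spaces, which is handled by the observation that each hyperplane is entirely contained in or entirely disjoint from the given half-space.
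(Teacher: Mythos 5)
Your argument is correct and is essentially the same as the paper's: both identify the non-empty cells of the half-space/complement partition with (a subset of) the sign cells of the underlying hyperplane arrangement, using the key observation that each hyperplane $\partial\calH_i$ lies entirely inside $\calH_i$ or entirely inside its complement. The only cosmetic difference is that you package this as a surjection from faces onto non-empty cells, whereas the paper constructs the equivalent injection from non-empty cells into sign vectors in $\{-1,0,1\}^r$.
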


\begin{proof}[Proof of \Cref{lem:bound_Halfspaces_Hyperplanes}]
    For $i\in [r]$, define the sets $\calH_i^{0} \coloneqq {\calH_i^{c}},$ $\calH_i^{1}\coloneqq \calH_i,$ $\tilde \calH_i^{-1}\coloneqq \textup{int}(\calH_i^{c})$, $\tilde \calH_i^{0}\coloneqq \partial \calH_i$, and $\tilde \calH_i^{1}\coloneqq \calH_i$, where the redundant definition $\calH_i^{1} = \tilde \calH_i^{1}=\calH_i$ is chosen for notational convenience below. Next, define 
    \begin{align*}\textstyle 
        \calA \coloneqq \left\{ \alpha \in \{0,1\}^{r} \;\colon\; \bigcap_{i=1}^{r} \calH_i^{\alpha_i} \neq \emptyset \right\}, \quad \calB \coloneqq \left\{ \beta \in \{-1,0, 1\}^{r} \;\colon\; \bigcap_{i=1}^{r} \tilde \calH_i^{\beta_i} \neq \emptyset\right\},
    \end{align*}
    where $\calA$ (resp.\ $\calB$) encodes all non-empty, and thus convex and connected, intersections of $\calH_i^\gamma$ for $\gamma \in \{0,1\}$ (resp.\ $\tilde \calH_i^\gamma$ for $\gamma \in \{-1,0,1\}$). By $\calH_i^{0} = \tilde \calH_i^{0}\dot \cup \tilde \calH_i^{-1}$ for all $i \in\{1, \dots r\}$, for every $\alpha \in \calA$, we find
    \begin{align*}
        \emptyset \neq \textstyle \bigcap_{i\in [r]} \calH_i^{\alpha_i} = \bigcup_{\substack{\beta_1,\ldots,\beta_r \in \{-1,0,1\}\\ \mathds{1}(\beta_i= 1) = \alpha_i}} \, \bigcap_{i =1}^{r} \, \tilde \calH_i^{\beta_i}
    \end{align*}
    and thus there exists some $\beta_1(\alpha),\ldots,\beta_r(\alpha) \in \{-1,0,1\}$ such that $\emptyset \neq \bigcap_{i =1}^{r} \tilde \calH_i^{\beta_i}\subseteq \bigcap_{i =1}^{r} \calH_i^{\alpha_i}$. 
    Further, since two distinct $\alpha, \alpha'\in \calA$ lead to disjoint sets $\bigcap_{i =1}^{r} \calH_i^{\alpha_i}$ and $\bigcap_{i =1}^{r} \calH_i^{\alpha_i'}$, it follows that  $(\beta_1(\alpha),\ldots,\beta_r(\alpha))\neq (\beta_1(\alpha'),\ldots,\beta_r(\alpha'))$, 
    and we conclude that there is an injection $\iota \colon \calA \mapsto \calB$, asserting $|\calA| \leq |\calB|$. 
\end{proof}

\section{Proofs for Additional Statements from Section 3}\label{app:proofs_section3}

\begin{lemma}\label{lem:finite_and_markovian}
    Let $F:\TT \to\TT$ be an \inputdominated, causal function satisfying the monotone scaling property. Then, the following implications hold. 
    \begin{enumerate}
        \item If $F$ is $m$-finite, then $F$ has $(m+1)$-Markovian memory.
        \item Conversely, if $F$ has $(m+1)$-Markovian memory, then either $F$ or its input-wise complement $\overline F \colon \TT\to \TT, \calI \mapsto \calI\backslash F(\calI)$ is $m$-finite.
    \end{enumerate}
\end{lemma}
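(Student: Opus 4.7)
My plan is to reduce both parts to statements about the set $F(\NN)\subseteq\NN$, exploiting the single-input reduction~\eqref{eq:single_input_reduction}: monotone scaling forces $F(\calI)=\pi^{-1}(F(\NN)\cap[|\calI|])$ for every $\calI\in\TT$, where $\pi$ is the ordering bijection of $\calI$. Thus $F$ is entirely determined by $F(\NN)$, and the problem becomes combinatorial. A second device I will use repeatedly is that for any $t\ge m+1$ one can pick a strictly increasing bijection $\phi$ of $(0,\infty)$ that agrees with the shift $k\mapsto t-m-1+k$ on $[m+1]$, and conclude from monotone scaling that $F(\{t-m,\ldots,t\})=\phi(F([m+1]))$.

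For part~(i), I fix $t\in\NN$ and $\calI\in\TTT{\NN}$ with $\calI\cap(0,t]=[t]$ and verify the $(m+1)$-Markovian identity
\[
F(\calI)\cap\{t\}=F(\calI\cap(t-m-1,t])\cap\{t\}.
\]
For $t\le m$, the window equals $(0,t]$ and the identity follows from causality. For $t>m$ I will show both sides are empty: the left side because the reduction formula together with $m$-finiteness give $F(\calI)=F([m])\subseteq[m]$ (using that $\calI_{(k)}=k$ for $k\le m<t$), which excludes $t$; and the right side by combining causality with the reduction formula to obtain $F([m+1])=F([m])\subseteq[m]$, whence $m+1\notin F([m+1])$ and therefore $t=\phi(m+1)\notin\phi(F([m+1]))=F(\{t-m,\ldots,t\})$.

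For part~(ii), I apply the $(m+1)$-Markovian identity to $\calI=\NN$ at every $t\ge m+1$. The shift device identifies $F(\{t-m,\ldots,t\})=\phi(F([m+1]))$, so the Markovian property collapses to $t\in F(\NN)\iff m+1\in F([m+1])$; by the reduction formula the right-hand condition is equivalent to $m+1\in F(\NN)$, independent of $t$. This produces a clean dichotomy. If $m+1\notin F(\NN)$, then $F(\NN)\subseteq[m]$, and causality yields $F(\NN)=F([m])$, so $F$ is $m$-finite. Otherwise $\{m+1,m+2,\ldots\}\subseteq F(\NN)$, hence $\overline F(\NN)\subseteq[m]$. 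In the latter case I also need to check that $\overline F$ is \inputdominated, causal, and monotone scaling (the first and third are immediate from $\overline F(\calI)=\calI\setminus F(\calI)$, and the second from $\overline F(\calI)\cap[0,t]=(\calI\cap[0,t])\setminus F(\calI\cap[0,t])=\overline F(\calI\cap[0,t])$), so that $m$-finiteness is well-defined for $\overline F$; causality then gives $\overline F(\NN)=\overline F([m])$.

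The main subtlety is that for $t>m+1$ the translation $s\mapsto s+(t-m-1)$ is not itself a bijection of $(0,\infty)$, so the monotone-scaling bijection $\phi$ must be chosen to agree with the translation only on the $m+1$ positions of interest and extended arbitrarily elsewhere. This is the only place where the global structure of monotone-scaling bijections enters the argument. Once that bookkeeping is in place, the remainder is a mechanical combination of the reduction formula, causality, and the $(m+1)$-Markovian hypothesis.
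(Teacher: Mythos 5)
Your proposal is correct and follows essentially the same route as the paper's proof: reduce everything to the set $F(\NN)$ via \eqref{eq:single_input_reduction}, handle small $t$ by causality, handle $t>m$ by translating the window $F([m+1])$ with a monotone bijection (the paper writes this as $F([m+1])\oplus\{t-(m+1)\}$), and in part (ii) split on whether $m+1\in F(\NN)$. Your explicit remark that the translation must be extended to a genuine increasing bijection of $(0,\infty)$ is a small point of extra care the paper leaves implicit, but the argument is otherwise identical.
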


\begin{proof}
    By virtue of the reduction in \eqref{eq:single_input_reduction}, the function $F$ is identified by  $F(\NN)$. In particular, $m$-finiteness is then equivalent to $F(\NN) \subseteq [m]$, while $(m+1)$-Markovian memory is equivalent to the property that for all $t\in\NN$ one has $t\in F(\NN)$ if and only if $t\in F((t-(m+1), t]\cap \NN)$. 

    For $(i)$, note for any $t\in \NN$,  $t\leq m+1$ that $[t] = (t-(m+1), t] \cap \NN$, hence by causality it holds $t\in F(\NN)$ if and only if $t\in F(\{1, \dots, t\}) = F((t-(m+1), t]\cap \NN)$. 
    Moreover, by $m$-finiteness we have $t\notin F(\NN)$ for every $t\geq m+1$. Hence, it follows for $t\geq m+1$ that by monotone scaling that $t\notin F((t-(m+1), t]\cap \NN) = F([m+1])\oplus \{t-(m+1)\}$, and in particular, 
        $\mathds{1}(t\in F(\NN)) = 0 = \mathds{1}(t\in F((t-(m+1), t]\cap \NN)),$
    which confirms the $(m+1)$-Markov property.
    
    To show $(ii)$, observe that both $F$ and $\overline F$ are $(m+1)$-Markovian, and assume without loss of generality that $m+1\notin F(\NN)$ (otherwise, the argument applies to $\overline F$). Then, by $(m+1)$-Markovian memory and monotone scaling, it holds for any $t\geq m+1$ that $F(\NN) = F((t-(m+1), t]\cap \NN) = F([m+1])\oplus \{t-(m+1)\}.$
    This spike train does also not contain $t$ since $m+1\notin F([m+1]).$ Hence, $F(\NN) \subseteq [m]$. \qedhere
\end{proof}

\begin{lemma}\label{lem:mm_periodic_composition}
    Let $F_M\colon \TT^2 \to \TT$ be $m_M$-Markovian and $F_P\colon \TT\to \TT$ be $m_P$-periodic, and define the input dominated, causal, monotone scaling function $F\colon \TT \to \TT, F(\calI) := F_M(\calI, F_P(\calI))$. Then, 
    it follows 
    for all $t\ge m_M+1$ that 
    $t\in F(\NN)$ if and only if $t+m_P\in F(\NN)$.
\end{lemma}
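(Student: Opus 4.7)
The plan is to reduce both sides of the equivalence to localized evaluations of $F_M$ via its $m_M$-Markovian property, and then transport one localized evaluation to the other by means of a strictly increasing bijection of $(0,\infty)$ that realizes the shift $x\mapsto x+m_P$ on the relevant spike times.

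First, I would observe that $\calD\coloneqq F_P(\NN)=F_P([m_P])\oplus\{0,m_P,2m_P,\ldots\}$ is $m_P$-shift-stable on $\NN$; concretely, for every $s\ge 1$, $s\in\calD$ iff $s+m_P\in\calD$. Combined with $\NN+m_P\subseteq\NN$, this yields for every $t\ge m_M+1$ the identities
\[
\NN\cap(t+m_P-m_M,\,t+m_P]=\big(\NN\cap(t-m_M,\,t]\big)+m_P,\qquad \calD\cap(t+m_P-m_M,\,t+m_P]=\big(\calD\cap(t-m_M,\,t]\big)+m_P.
\]
Next, since $\calD\dom\NN$, the $m_M$-Markovian property of $F_M$ applied to $(\NN,\calD)$ gives, at time $t$, the equivalence
\[
t\in F(\NN)\ \iff\ t\in F_M(\calI_1,\calI_2),\qquad \calI_1\coloneqq\NN\cap(t-m_M,t],\ \ \calI_2\coloneqq\calD\cap(t-m_M,t],
\]
together with the analogous equivalence at time $t+m_P$, whose localized inputs are exactly $\calI_1+m_P$ and $\calI_2+m_P$ by the previous display.

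To match the two localized evaluations, I would construct an explicit strictly increasing bijection $\phi:(0,\infty)\to(0,\infty)$ with $\phi(s)=s+m_P$ for all $s\ge 1/2$ (e.g., taking $\phi$ to be linear on $(0,1/2]$ and equal to the shift on $[1/2,\infty)$). Since every spike in $\calI_1\cup\calI_2$ is an integer $\ge 1$, one has $\phi(\calI_i)=\calI_i+m_P$ for $i=1,2$. The monotone scaling property of $F_M$ then gives
\[
F_M(\phi(\calI_1),\phi(\calI_2))=\phi\big(F_M(\calI_1,\calI_2)\big).
\]
By input-domination, $F_M(\calI_1,\calI_2)\subseteq\calI_1\cup\calI_2\subseteq[1,t]$, so $\phi$ again acts as $+m_P$ on this set, which yields $t+m_P\in F_M(\calI_1+m_P,\calI_2+m_P)$ iff $t\in F_M(\calI_1,\calI_2)$. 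Chaining with the two Markovian equivalences above completes the proof.

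The only subtlety is that the pure shift $x\mapsto x+m_P$ is not a bijection of $(0,\infty)$ (it misses $(0,m_P]$), so the monotone scaling property cannot be invoked on it directly; the introduction of a bijection that coincides with the shift on the part of the half-line where, by input-domination, all the relevant spikes live is the key technical step, and everything else is bookkeeping with the two translation identities and the Markovian property.
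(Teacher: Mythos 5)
Your proof is correct and follows essentially the same route as the paper's: localize both evaluations via the $m_M$-Markovian property, use the $m_P$-periodicity of $F_P$ to identify the shifted windows, and transport one localized evaluation to the other via monotone scaling of $F_M$. The only difference is that you explicitly construct a strictly increasing bijection of $(0,\infty)$ agreeing with the shift $x\mapsto x+m_P$ on the relevant region — a technicality the paper's proof glosses over by writing $\oplus\{m_P\}$ directly — and this is a legitimate (indeed slightly more careful) patch rather than a different argument.
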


\begin{proof}
    By \eqref{eq:single_input_reduction} it suffices to consider $\calI := \NN$. For $t>m_M$ define $\calI_t := \NN\cap(t-m_M,t]$ and $\calJ_t := F_P(\NN)\cap(t-m_M,t]$. By the $m_M$-Markovian property of $F_M$ we have $t\in F_M(\NN,F_P(\NN))$ if and only if $t\in F_M(\calI_t,\calJ_t)$. Further, note $\calI_{t+m_P} = \calI_t\oplus\{m_P\}$ and, since $F_P$ is $m_P$-periodic, it holds $\calJ_{t+m_P} = \calJ_t\oplus\{m_P\}$. Moreover, 
the monotone scaling property of $F_M$ implies $F_M(\calI_{t+m_P},\calJ_{t+m_P}) = F_M(\calI_t,\calJ_t)\oplus\{m_P\}$, so $t+m_P\in F_M(\calI_{t+m_P},\calJ_{t+m_P})$ if and only if $t\in F_M(\calI_t,\calJ_t)$. Applying again the $m_M$-Markovian property at time $t+m_P$ yields $t+m_P\in F(\NN)$ if and only if $t+m_P\in F_M(\calI_{t+m_P},\calJ_{t+m_P})$. Combining these equivalences gives 
    $t\in F(\NN)$ if and only if $t+m_P\in F(\NN)$ for all $t\ge m_M+1$.
\end{proof}

\section*{Acknowledgments and Disclosure of Funding} 
Shayan Hundrieser acknowledges funding from the German National Academy of Sciences Leopoldina under grant number LPDS 2024-11. Philipp Tuchel is supported by the DFG (project number 516672205). Insung Kong and Johannes Schmidt-Hieber are supported by ERC grant A2B (grant agreement number 101124751). 
We would like to thank Niklas Dexheimer and Sascha Gaudlitz for helpful discussions in the early phase of this project.

\addcontentsline{toc}{section}{References}

\end{document}